\def\ddefloop#1{\ifx\ddefloop#1\else\ddef{#1}\expandafter\ddefloop\fi}
\def\ddef#1{\expandafter\def\csname bb#1\endcsname{\ensuremath{\mathbb{#1}}}}
\def\ddef#1{\expandafter\def\csname v#1\endcsname{\ensuremath{\boldsymbol{#1}}}}
\def\ddef#1{\expandafter\def\csname v#1\endcsname{\ensuremath{\boldsymbol{\csname #1\endcsname}}}}
\def\balign#1\ealign{\begin{align}#1\end{align}}
\def\baligns#1\ealigns{\begin{align*}#1\end{align*}}
\def\balignat#1\ealign{\begin{alignat}#1\end{alignat}}
\def\balignats#1\ealigns{\begin{alignat*}#1\end{alignat*}}
\def\bitemize#1\eitemize{\begin{itemize}#1\end{itemize}}
\def\benumerate#1\eenumerate{\begin{enumerate}#1\end{enumerate}}
\newenvironment{talign*}
 {\let\displaystyle\textstyle\csname align*\endcsname}
 {\endalign}
\newenvironment{talign}
 {\let\displaystyle\textstyle\csname align\endcsname}
 {\endalign}
\def\balignst#1\ealignst{\begin{talign*}#1\end{talign*}}
\def\balignt#1\ealignt{\begin{talign}#1\end{talign}}
\let\originalleft\left
\let\originalright\right
\renewcommand{\left}{\mathopen{}\mathclose\bgroup\originalleft}
\renewcommand{\right}{\aftergroup\egroup\originalright}
\def\tinycitep*#1{{\tiny\citep*{#1}}}
\def\tinycitealt*#1{{\tiny\citealt*{#1}}}
\def\tinycite*#1{{\tiny\cite*{#1}}}
\def\smallcitep*#1{{\scriptsize\citep*{#1}}}
\def\smallcitealt*#1{{\scriptsize\citealt*{#1}}}
\def\smallcite*#1{{\scriptsize\cite*{#1}}}
\def\mbb#1{\mathbb{#1}}
\theoremstyle{plain}  
\newtheorem*{remark}{\textbf{Remark}}
\def\<{\left\langle} 
\def\>{\right\rangle}
\def\E{\mbb{E}} 
\newcommand{\Exp}[1]{\operatorname{exp}\left({#1}\right)} 
\DeclareSymbolFont{rsfs}{U}{rsfs}{m}{n}
\DeclareSymbolFontAlphabet{\mathscrsfs}{rsfs}
\providecommand{\argmin}{\mathop\mathrm{arg min}}
\newtheorem{theo}{Theorem}
\newtheorem{definition}{Definition}
\newtheorem{theorem}[theo]{Theorem}
\newtheorem{lemma}[theo]{Lemma}
\newtheorem{example}{Example}
\newtheorem{corollary}[theo]{Corollary}
\newtheorem{proposition}[theo]{Proposition}
\newtheorem{assumption}{Assumption}
\renewenvironment{proof}{\noindent\textbf{Proof.}\hspace*{.3em}}{\qed\\}
\newenvironment{proof-sketch}{\noindent\textbf{Proof Sketch}
  \hspace*{1em}}{\qed\bigskip\\}
\newenvironment{proof-idea}{\noindent\textbf{Proof Idea}
  \hspace*{1em}}{\qed\bigskip\\}
\newenvironment{proof-of-lemma}[1][{}]{\noindent\textbf{Proof of Lemma {#1}}
  \hspace*{1em}}{\qed\\}
\newenvironment{proof-of-theorem}[1][{}]{\noindent\textbf{Proof of Theorem {#1}}
  \hspace*{1em}}{\qed\\}
\newenvironment{proof-attempt}{\noindent\textbf{Proof Attempt}
  \hspace*{1em}}{\qed\bigskip\\}
\newenvironment{proofof}[1][{}]{\par\noindent{\bf Proof of {#1}. }  }{\qed\bigskip}   
\renewcommand{\contentsname}{Table of Contents}
\newcommand{\citep}[1]{\cite{#1}}
\newcommand{\citet}[1]{\cite{#1}} 
\renewcommand{\paragraph}{%
  \@startsection{paragraph}{4}%
  {\z@}{1.5ex \@plus 1ex \@minus .2ex}{-1em}%
  {\normalfont\normalsize\bfseries}%
}
\title{
Learning sum of diverse features: computational hardness and efficient gradient-based training for ridge combinations}
\author{
Kazusato Oko\thanks{University of Tokyo and RIKEN AIP. \texttt{oko-kazusato@g.ecc.u-tokyo.ac.jp}.} ,\,
Yujin Song\thanks{University of Tokyo.  \texttt{song-yujin139@g.ecc.u-tokyo.ac.jp}.} ,\, 
Taiji Suzuki\thanks{University of Tokyo and RIKEN AIP. \texttt{taiji@mist.i.u-tokyo.ac.jp}.}  ,\, 
Denny Wu\thanks{New York University and Flatiron Institute. \texttt{dennywu@nyu.edu}. 
\vspace{-2.5mm}} 
}
\begin{document}
\etocdepthtag.toc{mtchapter}
\etocsettagdepth{mtchapter}{subsection}
\etocsettagdepth{mtappendix}{none}

\maketitle


\begin{abstract}
We study the computational and sample complexity of learning a target function $f_*:\mathbb{R}^d\to\mathbb{R}$ with \textit{additive structure}, that is, $f_*(x) = \frac{1}{\sqrt{M}}\sum_{m=1}^M f_m(\langle x, v_m\rangle)$, where $f_1,f_2,...,f_M:\mathbb{R}\to\mathbb{R}$ are nonlinear link functions of single-index models (ridge functions) with diverse and near-orthogonal index features $\{v_m\}_{m=1}^M$, and the number of additive tasks $M$ grows with the dimensionality $M\asymp d^\gamma$ for $\gamma\ge 0$. This problem setting is motivated by the classical additive model literature, the recent representation learning theory of two-layer neural network, and large-scale pretraining where the model simultaneously acquires a large number of ``skills'' that are often \textit{localized} in distinct parts of the trained network. 
We prove that a large subset of polynomial $f_*$ can be efficiently learned by gradient descent training of a two-layer neural network, with a polynomial statistical and computational complexity that depends on the number of tasks $M$ and the \textit{information exponent} of $f_m$, despite the unknown link function and $M$ growing with the dimensionality. We complement this learnability guarantee with computational hardness result by establishing statistical query (SQ) lower bounds for both the correlational SQ and full SQ algorithms. 
\end{abstract}

\section{Introduction}
\label{sec:intro}

We study the problem of learning a polynomial function $f_*:\R^d\to\R$ on isotropic Gaussian data. Specifically, given pairs of training example $\{(x_i,y_i)\}_{i=1}^n$, where $x_i\in\R^d$ is drawn from some distribution $P_x$ and $y_i = f_*(x_i) + \varepsilon_i$, we aim to construct an estimator $\hat{f}$ that achieves small population error: $\E_{x\sim P_x} |f_*(x) - \hat{f}(x)|\le\epsilon$. 
Since the space of degree-$q$ polynomials in $\R^d$ is of dimension $\Theta(d^q)$, without additional structural assumptions on $f_*$, the sample complexity required to achieve small error should scale as $n\gtrsim d^q$, which is computationally prohibitive for learning large-degree polynomials in high dimensions. 
Many prior works have therefore imposed the constraint that $f_*$ exhibits certain low-dimensional latent structure \citep{andoni2014learning1,chen2020learning,damian2022neural}, as in the multi-index model: $f_*(x) = g(Vx)$ where $V\in\R^{k\times d}$ for $k = \mathcal{O}_d(1)$. However, such a low-dimensional restriction may rule out many interesting classes of target functions; for instance, when specialized to the learning of two-layer neural network, these prior results only apply to the setting where the network width $k$ is much smaller than the ambient dimensionality $d$. 

In this work, we investigate the efficient learning of polynomial $f_*$ under a different kind of structural assumption: we allow the target function to depend on a large number of directions which grows with $d$, but we impose an \textit{additive structure}; specifically, we consider $f_*$ to be the sum of $M\asymp d^\gamma$ single-index models (also known as ridge functions) as follows: 
\begin{align} 
    f_*(x) = \frac{1}{\sqrt{M}} \sum_{m=1}^M f_m(\langle x, v_m\rangle), 
    \label{eq:intro-f*}
\end{align}
where $v_1,v_2,...,v_M\in\R^d$ are the unknown index features, and $f_1,f_2,...,f_M:\R\to\R$ are the unknown link functions; 
the $\sqrt{1/M}$ prefactor ensures that $\|f_*\|_{L^2}=\Theta_d(1)$ when the set of directions $\{v_m\}_{m=1}^M$ is diverse, i.e., $\langle v_i,v_j\rangle = o_d(1)$ for $i\neq j$. 
Our setting is motivated by the following lines of literature:
\begin{itemize}[leftmargin=*]
    \item \textbf{Estimation of additive model.} In statistical learning theory, additive model is a classical method employed in high-dimensional nonparametric regression \citep{stone1985additive,hastie1987generalized,ravikumar2009sparse}; 
    especially, when the individual functions take the form of single-index model (ridge function), the estimation of \textit{ridge combinations} has been extensively studied \citep{friedman1981projection,pinkus1997approximating,klusowski2016risk}. 
    While efficient algorithms have been proposed when the basis is given \textit{a priori} \cite{bach2008consistency,raskutti2012minimax,suzuki2012fast}, when the index features $\{v_m\}_{m=1}^M$ are unknown, most existing approaches involve non-convex optimization, which has been treated as a black box without convergence guarantees \citep{kandasamy2015high,agarwal2021neural}, or solved with computationally inefficient algorithm \cite{bach2017breaking}. 
    \item \textbf{Learning two-layer neural network.} An important example in the model class \eqref{eq:intro-f*} is a two-layer neural network with $M$ neurons, and it is natural to ask whether such a network can be efficiently learned via standard gradient-based training. Prior works have shown that in the ``narrow width'' setting $M=\Theta_d(1)$, gradient descent can learn $f_*$ with polynomial sample complexity depending on the \textit{information exponent} of the target function \citep{abbe2022merged,ba2022high,damian2022neural,bietti2022learning}. On the other hand, the regime where $M,d$ jointly diverge is not well-understood, and most existing analyses on the complexity of gradient-based training require significant simplification such as quadratic activation \citep{gamarnik2019stationary,sarao2020optimization,martin2023impact}. 
    \item \textbf{Skill localization \& fine-tuning.} 
    Pretrained large neural networks (e.g., language models) can efficiently adapt to diverse downstream tasks by fine-tuning a small set of trainable parameters \citep{devlin2018bert,li2021prefix,hu2021lora}. Recent works have shown that ``skills'' for each individual task are often \textit{localized} in a subset of neurons \citep{dai2021knowledge,elhage2022toy,wang2022finding,panigrahi2023task,todd2023function,arora2023theory}. 
    The additive model \eqref{eq:intro-f*} gives an idealized setting where learning exhibits such skill localization
    (we interpret each $f_m$ as corresponding to one task). 
    As we will see, after an appropriate hidden representation is obtained via gradient-based training, the neural network can efficiently adapt to downstream tasks via fine-tuning the top layer. 
\end{itemize}

Our goal is to characterize the statistical and computational complexity of learning the additive model \eqref{eq:intro-f*}, when $(i)$ the number of single-index tasks is large, that is, $M$ grows with the ambient dimensionality $d$, and $(ii)$, the tasks are \textit{diverse}, i.e., the index features of $f_m$ do not significantly overlap with one another (see Section~\ref{sec:problem} for precise definition). We ask the following questions: 
\begin{enumerate}[leftmargin=*]
    \item Can we efficiently learn \eqref{eq:intro-f*} via \textit{gradient descent} (GD) training of a two-layer neural network? 
    \item What is the hardness of learning \eqref{eq:intro-f*} as measured by \textit{statistical query} (SQ) lower bounds? 
\end{enumerate}

\subsection{Our Contributions} 

We address the two questions by providing complexity upper and lower bounds for learning the additive model class \eqref{eq:intro-f*}, both of which depend on the number of tasks $M$, and the \textit{information exponent} $p\in\N$ of the link functions $f_m$ defined as the lowest degree in the Hermite expansion \citep{arous2021online}. 
Our findings are summarized as follows (see Table~\ref{tab:summary} for details).


\begin{itemize}[leftmargin=*]
    \item In Section~\ref{sec:GD} we show that a representative subclass of \eqref{eq:intro-f*} can be efficiently learned by a gradient-based algorithm (using correlational information) on two-layer neural network, even though the number of single-index tasks $M$ is large and the link functions $f_m$ are not known. Specifically, for $M=\tilde{\Omega}(\sqrt{d})$ tasks with information exponent $p$, we prove that a layer-wise (online) SGD algorithm similar to that in \citep{bietti2022learning,abbe2023sgd} can achieve small population loss using $n = \tilde{\Theta}(M d^{p-1})$ samples. To establish this learning guarantee, we show that the student neurons \textit{localize} into the task directions during SGD training. 
    \item In Section~\ref{sec:SQ} we establish computational lower bounds for learning \eqref{eq:intro-f*}. For \textit{correlational} SQ algorithms, we prove that a tolerance of $\tau^{-2}\gtrsim Md^{p/2}$ is required when link functions $f_m$ have degree $q$ and information exponent $p\le q$. We also provide a full SQ lower bound in the form of $\tau^{-2}\gtrsim (Md)^{\rho_{p,q}}$, where $\rho_{p,q}$ can be made arbitrarily large by varying $p$ and $q$; under the standard $\tau\approx n^{-1/2}$ heuristic for concentration error, this suggests that prior SQ algorithms that achieve linear-in-$d$ sample complexity in the finite-$M$ regime \citep{chen2020learning} cannot attain the same statistical efficiency in our additive model setting with large $M$. 
\end{itemize}

\begin{table}[!htb]
\centering
\begin{minipage}{\textwidth}
\begin{tikzpicture}[x=\textwidth/18]
\usetikzlibrary{arrows.meta}
  \tikzset{
    timeline/.style={very thick, line width=1.8pt, -{Triangle[width=7.5pt,length=10pt]}},
    major tick/.style={thick, line width=1.5pt},
    minor tick/.style={thin, line width=1pt},
    tick label/.style={above, outer sep=2mm, scale=0.6}, 
    node style/.style={align=center}
  }
 
  \draw[timeline] (0,0) -- (15.92,0);
  \draw[minor tick] (0, -0.15) -- (0, 0.15);
  \draw[minor tick] (2.8, -0.15) -- (2.8, 0.15);
  \draw[minor tick] (6.7, -0.15) -- (6.7, 0.15);
  \draw[minor tick] (10.6, -0.15) -- (10.6, 0.15);
  \draw[minor tick] (14.3, -0.15) -- (14.3, 0.15);

  \node[node style,font=\scriptsize] at (-1, 0) {Information \\ theoretic limit};
  \node[node style,font=\small] at (2.8, 0.7) {
SQ lower bound \\ \,\![{\color{red!50!black}\textbf{Theorem~\ref{theorem:SQ-Appendix}}}]};
  \node[node style,font=\small] at (6.7, 0.7) { CSQ lower bound \\ \,\![{\color{red!50!black}\textbf{Theorem~\ref{theorem:CSQ-Appendix}}}]};
  \node[node style,font=\small] at (10.6, 0.7) {Online SGD \\ \,\![{\color{red!50!black}\textbf{Theorem~\ref{theorem:NN-main}}}]};
  \node[node style,font=\small] at (14.3, 0.7) {Kernel methods \\ \cite{ghorbani2019linearized}};

  \node[align=center, font=\normalsize] at (0, -0.41) {$Md$};
  \node[align=center, font=\normalsize] at (2.8, -0.42) {$(Md)^{\rho_{p,q,\gamma}}$};
  \node[align=center, font=\normalsize] at (6.7, -0.42) {$Md^{p/2}$};
  \node[align=center, font=\normalsize] at (10.6, -0.42) {$\tilde{O}(Md^{p-1})$};
  \node[align=center, font=\normalsize] at (14.3, -0.42) {$d^{q}$}; 
\end{tikzpicture}
\caption{\small Complexity upper bound for gradient-based learning and (C)SQ lower bounds for the additive model \eqref{eq:intro-f*}, where the single-index tasks have degree $q$ and information exponent $p$. Our upper bound applies to a subclass of \eqref{eq:intro-f*} specified in Section~\ref{sec:problem}. For the full SQ lower bound we take $d\asymp M^{\gamma}$, and for any fixed $\gamma>0$ we may set $p,q>0$ such that the exponent is arbitrarily large, that is, $\rho_{p,q,\gamma}\overset{p,q\to\infty}{\to} \infty$. 
We translate the tolerance in the (C)SQ lower bounds to sample complexity using the concentration heuristic $\tau\approx n^{-1/2}$. 
}
\label{tab:summary}
\end{minipage}
\end{table}

\subsection{Related Works}

\paragraph{Gradient-based learning.} 
Recent works have shown that neural network trained by gradient descent can learn useful representation and adapt to low-dimensional target functions such as single-index \citep{ba2022high,bietti2022learning,mousavi2023neural,berthier2023learning} and multi-index models \citep{damian2022neural,abbe2022merged,bietti2023learning}. In this finite-$M$ setting, the complexity of gradient-based learning is governed by the \textit{information exponent} \citep{arous2021online} or \textit{leap complexity} \citep{abbe2023sgd} of the target function. However, these learning guarantees for low-dimensional $f_*$ yield \textit{superpolynomial} dimension dependence when $M$ grows with $d$. Another line of works goes beyond the low-dimensional assumption by considering the well-specified setting where $f_*$ and the student network have the same architecture and special activation function \citep{gamarnik2019stationary,zhou2021local,akiyama2021learnability,veiga2022phase,martin2023impact}. The setting we consider \eqref{eq:intro-f*} lies between the two regimes: we allow the width to diverge with dimensionality $M=\omega_d(1)$ but do not assume the nonlinear activation is known, and we show that gradient descent can learn $f_*$ with polynomial sample complexity depending on the information exponent when the target weights are ``diverse''. 
We also note that beyond gradient descent training, 
various SQ algorithms have been introduced to solve related polynomial regression tasks \citep{dudeja2018learning,chen2020learning,garg2020learning,diakonikolas2023agnostically}.

\paragraph{Statistical query lower bound.} A statistical query learner \citep{kearns1998efficient,reyzin2020statistical} can access the target function via noisy queries $\tilde{\phi}$ with error tolerance $\tau$: 
$|\tilde{\phi} - \E_{x,y}[\phi(x,y)]|\le\tau$. Lower bound on the performance of SQ algorithm 
is a classical measure of computational hardness. An often-studied subclass of SQ is the \textit{correlational} statistical query (CSQ) \citep{bshouty2002using} where the query is restricted to (noisy version of) $\E_{x,y}[\phi(x)y]$. 
Many existing results on the CSQ complexity assume $f_*$ is low-dimensional ($M=\mathcal{O}_d(1)$), in which case the tolerance scales with $\tau\asymp d^{-\Omega(p)}$, where $p$ is the information exponent or leap complexity of $f_*$ \citep{damian2022neural,abbe2022non,abbe2023sgd}. 
On the other hand, \cite{vempala2019gradient,diakonikolas2020algorithms,goel2020superpolynomial} established CSQ lower bounds for learning two-layer ReLU network without structural assumption on the weights, where the error tolerance $\tau\asymp d^{-\Omega(M)}$ implies a \textit{superpolynomial} complexity when $M=\omega_d(1)$. Similar superpolynomial lower bound was shown for three-layer neural networks in the full SQ model \citep{chen2022hardness}. 
Our result connects these two lines of analyses: we show that in the setting of diverse (near-orthogonal) weights $\{v_m\}_{m=1}^M$, the (C)SQ complexity is polynomial in $M,d$, where the dimension dependence is specified by the information exponent of $f_m$. 

\section{Problem Setting}
\label{sec:problem}

\paragraph{Notations.} Throughout the analysis, $\|\cdot\|$ denotes the $\ell_2$ norm for vectors and the $\ell_2\to\ell_2$ operator norm for matrices. 
$O_d(\cdot)$ and $o_d(\cdot)$ stand for the big-O and little-o notations, where the subscript highlights the asymptotic variable $d$ and suppresses dependence on $p,q$; we write $\tilde{O}(\cdot)$ when (poly-)logarithmic factors are ignored. 
$\Omega(\cdot),\Theta(\cdot)$ are defined analogously. 
We say an event $A$ happens with high probability when the failure probability is bounded by $\Exp{-C\log d}$ where $C$ is a sufficiently large constant; the high probability events are closed under taking union bounds over sets of size ${\mathrm{poly}(d)}$. 

\subsection{Assumptions on Target Function}

We focus on learning a sum of single-index polynomials over Gaussian input, the complexity of which crucially depends on the notion of \textit{information exponent} \citep{dudeja2018learning,arous2021online} defined as the smallest degree of non-zero coefficients for the Hermite expansion of the link function.
\begin{definition}[Information exponent]
Let $\{\He_j\}_{j=0}^{\infty}$ be the normalized Hermite polynomials. The information exponent of square-integrable $g:\R\!\to\!\R$, which we denote by $\mathrm{IE}(g):=p\in\N_+$, is the index of its first non-zero Hermite coefficient, i.e., given the Hermite expansion $g(z) = \sum_{j=0}^\infty\alpha_j \He_j(z)$, $p := \min\{j > 0: \alpha_j \neq 0\}$.
\end{definition}
For example, prior works have shown that online SGD can learn a single-index polynomial with information exponent $p$ over $d$-dimensional Gaussian input with $\mathcal{O}(d^{p-1})$ samples for $p>2$ \citep{arous2021online}, and the complexity can be further improved to $\mathcal{O}(d^{p/2})$ via landscape smoothing \citep{damian2023smoothing}. 
With this definition, we formally state the class of additive models \eqref{eq:intro-f*} considered in this work.
\begin{assumption}[Additive model]
    We consider the following problem class $\mathcal{F}_{d,M,\varsigma}^{p,q}$:
    \begin{align}
    x \sim \mathcal{N}(0,I_d),\quad y = f_*(x) + \nu,
    \quad \text{where~~} f_*(x) = \frac{1}{\sqrt{M}}\sum_{m=1}^M f_m(v_m^\top x), ~~ \nu \sim \mathcal{N}(0,\varsigma^2),
    \end{align} 
    where each $f_m$ is a univariate polynomial with information exponent $p>2$ and degree $q$, and we assume proper normalization $\mathbb{E}_{t\sim \mathcal{N}(0,1)}[f_m(t)^2]=1$, and $\|v_m\|=1$.
    We write the Hermite expansion of $f_m$ as $f_m = \sum_{i=p}^q \alpha_{m,i}{\sf He}_i$, and define the constant $C_p = \left(\frac{\max_{m}|\alpha_{m,p}|}{\min_{m'}|\alpha_{m',p}|}\right)^{\frac{2}{p-2}}$.
\label{assump:additive}
\end{assumption}

\begin{remark}~
\begin{itemize}[leftmargin=*,topsep=0mm]
    \item We restrict ourselves to $\{f_m\}_{m=1}^M$ with the same information exponent $p$; this simplifies the optimization dynamics in that all directions are learned at roughly the same rate. To handle heterogeneous tasks with different information exponents, one may consider a student model with mixture of different nonlinearities. 
    \item We focus on the high information exponent setting $p>2$, which corresponds to target functions that are more ``difficult'' to learn via gradient descent. To handle lower information exponent $f_m$, we may employ a pre-processing procedure analogous to that in \cite{damian2022neural}: we first fit $f_*$ with a quadratic function, subtract it from the labels, and add the subtracted components back to the predictor after Algorithm~\ref{alg:main} is executed. 
\end{itemize}
\end{remark}

In addition to the above specification of the link function, we place the following diversity assumption on the index features $\{v_m\}_{m=1}^M$. 

\begin{assumption}[Task diversity]\label{assumption:diversity} We assume the following diversity condition on $\{v_m\}_{m=1}^M$: 
$$
M \leq c_v \max\left\{\big(\max_{m\ne m'} |v_m^\top v_{m'}|\big)^{-1}, \sqrt{d}\right\}, 
\quad \text{where~} c_v\asymp1/\mathrm{polylog}(d).
$$ 
\end{assumption}

\begin{remark}\,
\begin{itemize}[leftmargin=*,topsep=0mm]
    \item This condition ensures that the single-index tasks are diverse, in that the index feature directions do not significantly overlap; similar assumption also appeared in prior works on gradient-based feature learning \citep{wang2023learning}. 
    When each $v_m$ is an independent sample from the $(d-1)$-dimensional unit sphere $\mathbb{S}^{d-1}$, via a standard concentration argument, Assumption~\ref{assumption:diversity} is satisfied with high probability for $M \asymp d^\gamma, \gamma\in[0,1/2)$. 
    \item The above assumption justifies the prefactor of $\frac{1}{\sqrt{M}}$ instead of $\frac{1}{M}$: 
    since $v_m$ are almost orthogonal and each $f_m(t)$ is assumed not to have the first Hermite coefficient, $f_m(v_m^\top x)$ are weakly dependent mean-zero variables.
    Thus the scaling prefactor should be $\frac{1}{\sqrt{M}}$ for the output scaling to be $\Theta(1)$ due to CLT. 
\end{itemize}
\end{remark}

\section{Complexity of Gradient-based Training}
\label{sec:GD}

\paragraph{Neural Network Architecture.} In this section we show that the additive model class specified in Section~\ref{sec:problem} can be efficiently learned via gradient-based training of a neural network. Specifically, we consider the following two-layer network 
with trainable parameters $\Theta=(a_j,w_j,b_j)_{j=1}^J\in \mathbb{R}^{(1+d+1)\times J}$:
\begin{align}
    f_{\Theta}= \frac{1}{J}\sum_{j=1}^J a_j \sigma_j (w_j^\top x+b_j).\label{eq:TwoLayerNN}
\end{align}
For each neuron, we define the Hermite expansion of $a_j \sigma_j (\cdot+b_j)$ as $a_j \sigma_j (\cdot+b_j) = \sum_{i=0}^\infty \beta_{j,i}\He_i(\cdot)$; note that the Hermite coefficient $\beta_{j,i}$ may differ across neurons. 
To ensure a descent path from weak recovery to strong recovery, we need the following technical condition on the Hermite coefficients: for each task $f_i$, there exist some neurons $j$ such that
\begin{align}
    \alpha_{m,i}\beta_{j,i} > 0 \text{~for $i=p$, ~ and ~} 
    \alpha_{m,i}\beta_{j,i} \ge  0\text{~for $p<i\le q$.}
    \label{eq:Hermite-sign}
\end{align} 
Note that \eqref{eq:Hermite-sign} is automatically satisfied in the well-specified setting, i.e., the student and teacher models share the same nonlinearity as in \cite{arous2021online}. In the misspecified setting, such condition has been directly assumed in \cite{mousavi2024gradient}. We defer further discussions to Appendix~\ref{subsection:DescentPath}. 
Below, we give two concrete settings where \eqref{eq:Hermite-sign} hold despite the link mismatch. 
\begin{assumption}[Activation function]
We consider the nonlinearity of the student model \eqref{eq:TwoLayerNN} and the link functions $\{f_i\}_{i=1}^M$ that satisfy one of the following: 
\begin{enumerate}[leftmargin=*,topsep=0mm,itemsep=0mm]
        \item $\sigma_j$ is a randomized polynomial activation defined in Appendix~\ref{subsubsection:DescentPath-2}, and $f_i$ satisfies Assumption \ref{assump:additive}. 
    \item $\sigma_j$ is the ReLU activation function, and we additionally require that for each $f_i$, all the non-zero Hermite coefficients $\alpha_{m,i}$ have the same sign. 
\end{enumerate}
\label{assumption:activation}
\end{assumption}
In both settings, we utilize the ``diversity'' of student nonlinearities to deduce that when the network width $J$ is sufficiently large (quantified in Theorem~\ref{theorem:NN-main}), a subset of neurons can achieve alignment with each target task even though the link functions are unknown. Similar use of overparameterization and diverse activation functions to handle link misspecification also appeared in \citep{bietti2022learning,ba2023learning}. 

\begin{algorithm}[t]
\SetKwInOut{Input}{Input}
\SetKwInOut{Output}{Output}
\SetKwBlock{StageOne}{Phase I: normalized SGD on first-layer parameters}{}
\SetKwBlock{StageTwo}{Phase II: Convex optimization for second-layer parameters}{}
\SetKw{Initialize}{Initialize}
	
\Input{Learning rates $\eta^t$, regularization parameter $\lambda$, sample size $T_1$, $T_2$,   
   initialization scale $C_b$. }

\Initialize{$w^0_j\sim\mathrm{Unif}(\mathbb{S}^{d-1}(1))$, $a_j \sim \mathrm{Unif}\{\pm 1\}$.}

\StageOne{
\For{$t=0$ {\bfseries to} $T_1-1$}{
Draw new sample $(x^t,y^t)$. \\
$w^{t+1}_j \leftarrow w^t_j+ \eta^t y^t \tilde{\nabla}_w f_{(a_j,b_j,w^t_j)_{j=1}^J}(x^t)$, \\ 
$w^{t+1}_j \leftarrow w^{t+1}_j / \|w^{t+1}_j\|, \quad (j=1,\dots,J)$.
}
}	
\Initialize{$b_j\sim \mathrm{Unif}([-C_b,C_b])$, and let $\hat{w}_j \leftarrow \delta_j w_j^{T_1}\ (\delta_j \sim \mathrm{Unif}(\{\pm 1\}))$}

\StageTwo{
Draw new samples $(x^t,y^t)_{t=T_1}^{T_1+T_2-1}$. \\
$\hat{a} \leftarrow \argmin_{a\in\R^J}\frac{1}{T_2}\sum_{t=T_1}^{T_1+T_2-1} \big(f_{(a_j,b_j,\hat{w}_j)_{j=1}^J}(x^t)-y^t\big)^2 + \bar{\lambda}\|a\|_{r}^r$, \quad ($r=1$ or $2$).
}
 
\Output{Prediction function $x\to f_{\hat{\Theta}}(x)$ with $\hat{\Theta}=(\hat{a}_j,\hat{w}_j,b_j)_{j=1}^{J}$.}
\caption{Gradient-based training of two-layer neural network} 
 \label{alg:main}
\end{algorithm}

\paragraph{Optimization procedure.}
The training algorithm is presented in Algorithm \ref{alg:main}.
First, the first-layer parameters are trained to minimize the correlation loss $\mathcal{L}=-yf_{\Theta}(x)$. 
We use the correlation loss to ignore the interaction between neurons during the first-layer training; analogous strategies appeared in prior analyses of feature learning, for example, by the “one-step” analysis \citep{ba2022high,damian2022neural} or by considering the squared loss with sufficiently small second-layer initialization \citep{abbe2023sgd}. Similar to \cite{arous2021online,damian2023smoothing}, we use the \textit{spherical gradient} $\tilde{\nabla}_w f(w) := (I - ww^\top) \nabla_w f(w)$, where $\nabla_w$ denotes the Euclidean gradient. 
We note that for $\sigma=\text{ReLU}$, we require an additional sign randomization step at the end of Phase I, to enhance the expressivity of the feature map for the second-layer training. 

After $T_1$ steps, we can show that for each sub-problem there exist some students neurons aligning with the class direction $v_m$. Then, we train the second-layer parameters (which is a convex problem) with either of $L^2$ or $L^1$ regularization. 
While ridge regression is easier to implement, $L^1$-regularization gives the better generalization error due to the induced sparsity; more specifically, because not all neurons are aligned with $f_i$ after first-layer training, and redundant neurons might increase the complexity of the network, we need $L^1$ regularization to efficiently single out neurons that succeeded in aligning with one of the single-index tasks.  
 
Now we present our main theorem for gradient-based training of neural networks.

\begin{theorem}\label{theorem:NN-main}
    Under Assumptions~\ref{assump:additive} \ref{assumption:diversity}, and \ref{assumption:activation}, take the number of neurons $J=\tilde{\Theta}(M^{C_p+\frac12}\varepsilon^{-1})$, the number of steps for first-layer training $T_{1}=\tilde{\Theta}(Md^{p-1}\lor Md\varepsilon^{-2}\lor M^\frac52\varepsilon^{-3})$, and the number of steps for second-layer training $(i)$ when $r=2$ (ridge)
    $T_2=\tilde{\Theta}(M^{C_p}\varepsilon^{-2})$, and $(ii)$ when $r=1$ (LASSO) $T_2=\tilde{\Theta}(M^{1+\iota}\varepsilon^{-2(1+\iota)})$ for arbitrary fixed $\iota>0$.  
    Then, under appropriate choices of $\eta^t$ and $\lambda$, with probability $1-o_d(1)$ over the randomness of dataset and parameter initialization,  Algorithm \ref{alg:main} outputs $f_{\hat{\Theta}}(x)$ such that
    \begin{align}
       \mathbb{E}_{x\sim \mathcal{N}(0,I_d)} \left[\big|f_*(x) - f_{\hat{\Theta}}(x)\big| \right]\leq \varepsilon.
    \end{align}
\end{theorem}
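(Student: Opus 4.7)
The plan is to decouple the analysis of Algorithm~\ref{alg:main} into its two phases. In Phase~I we show that the first-layer weights $w_j$ undergo feature learning and \emph{localize} onto the task directions $\{v_m\}_{m=1}^M$, so that after $T_1$ steps the family $\{w_j^{T_1}\}_{j=1}^J$ contains, for every $m\in[M]$, at least $\tilde\Omega(\varepsilon^{-1})$ neurons with $|\langle w_j^{T_1},v_m\rangle|=1-o_d(1)$. In Phase~II we treat $\{\sigma_j(\hat w_j^\top x+b_j)\}_{j=1}^J$ as a random feature map and use classical risk bounds for $\ell^r$-regularized regression to show that the second-layer output reproduces $f_*$ with $L^2$ error $\varepsilon$, whence the $L^1$ bound in the theorem follows by Cauchy--Schwarz.

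For Phase~I, I would track the overlaps $m^t_{j,m}:=\langle w_j^t,v_m\rangle$ jointly for all $(j,m)$. Expanding the population correlation-loss spherical gradient in Hermite polynomials and using $\E[\mathrm{He}_i(v^\top x)\mathrm{He}_{i'}(v'^\top x)]=\delta_{ii'}(v^\top v')^i$, the drift of $m_{j,m}^t$ takes the form
\[
\E[m^{t+1}_{j,m}-m^t_{j,m}\mid w_j^t]\;=\;\frac{\eta^t}{\sqrt M}\sum_{i=p}^q c_i\,\alpha_{m,i}\beta_{j,i}(m_{j,m}^t)^{i-1}\bigl(1-(m_{j,m}^t)^2\bigr) + R_{j,m}^t,
\]
for combinatorial constants $c_i>0$, where the remainder $R_{j,m}^t$ collects cross-task contributions and is of order $M^{-1/2}\max_{m\ne m'}|v_m^\top v_{m'}|$, which by Assumption~\ref{assumption:diversity} is negligible compared with the on-task drift up until $|m_{j,m}^t|$ exceeds the weak-recovery threshold $\omega(d^{-1/2})$. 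The sign condition~\eqref{eq:Hermite-sign} provided by Assumption~\ref{assumption:activation} guarantees that the degree-$p$ term pushes $|m_{j,m}^t|$ monotonically upward. Adapting the escape-from-mediocrity analysis of single-index SGD~\citep{arous2021online,bietti2022learning,damian2023smoothing}, weak recovery of a given task by a given neuron costs $\tilde\Theta(d^{p-1})$ samples (with the $M^{-1/2}$ factor in the drift absorbed into the overall budget $Md^{p-1}$), and a refinement phase of length $\tilde\Theta(d\varepsilon^{-2})$ drives $|m_{j,m}^t|$ to $1-\varepsilon$. A given neuron wins the race to a particular target direction with probability $\gtrsim M^{-(C_p+1/2)}$ --- the $M^{C_p}$ factor being the natural heterogeneity penalty built from $\max_m|\alpha_{m,p}|/\min_{m'}|\alpha_{m',p}|$ --- so taking $J=\tilde\Theta(M^{C_p+1/2}\varepsilon^{-1})$ together with a union bound over $m$ guarantees $\tilde\Omega(\varepsilon^{-1})$ localized neurons for every task.

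For Phase~II, conditional on Phase~I, the post-training feature map contains, for every $m$, a one-parameter family $\{\sigma_j(\delta_j v_m^\top x+b_j)\}_{j}$ with signs $\delta_j\sim\mathrm{Unif}\{\pm1\}$ and biases $b_j\sim\mathrm{Unif}([-C_b,C_b])$. This is a set of random features for the univariate link $f_m$, and a standard approximation-by-random-features argument produces a second-layer vector $a^\star$ realizing $f_*$ up to $L^2$ error $\varepsilon$ and with either $\|a^\star\|_2^2\lesssim M^{C_p}$ or $\|a^\star\|_1\lesssim M^{1/2+\iota/2}\varepsilon^{-\iota}$ by exploiting the one-task-per-neuron sparsity. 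Plugging these bounds into classical generalization analyses for ridge regression and LASSO on the empirical measure of $T_2$ fresh samples --- and controlling the Gaussian noise $\varsigma^2$ in the usual way --- yields the stated sample complexities $T_2=\tilde\Theta(M^{C_p}\varepsilon^{-2})$ for $r=2$ and $T_2=\tilde\Theta(M^{1+\iota}\varepsilon^{-2(1+\iota)})$ for $r=1$; the $\iota>0$ slack accommodates the restricted-eigenvalue constant of the near-orthogonal LASSO design.

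The main obstacle I anticipate is controlling the joint first-layer dynamics in the presence of $M=\omega_d(1)$ simultaneous tasks. In a pure single-index analysis, the overlap $m^t$ is a near one-dimensional Markov chain; here each neuron is subject to drifts from all $M$ tasks plus coupled remainder $R_{j,m}^t$, and making precise the statement that every neuron eventually ``picks a winner'' and then evolves as if the other tasks were absent --- uniformly over $J$ neurons, $T_1$ steps, and $M$ overlap coordinates --- will require a careful martingale/concentration argument on the vector process $(m_{j,m}^t)_{m=1}^M$, controlling in particular the degradation of the signal when several overlaps are simultaneously above threshold. The coupon-collector-style counting that fixes the exponent $C_p$ and the $M^{5/2}\varepsilon^{-3}$ term in $T_1$ (needed to push all $M$ winning overlaps simultaneously through the weak-to-strong recovery bottleneck) is the other delicate piece, and it is what dominates the final $M$-dependence of the complexity.
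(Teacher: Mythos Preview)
Your two-phase decomposition and the escape-from-mediocrity picture for Phase~I are exactly the route the paper takes. But one technical claim misstates where the real work lies, and the LASSO argument is not the one used.

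\textbf{Cross-task dynamics in Phase~I.} You write that the remainder $R_{j,m}^t$ is of order $M^{-1/2}\max_{m\ne m'}|v_m^\top v_{m'}|$ and hence negligible before weak recovery. Writing $\kappa_{m}^t=\langle w_j^t,v_m\rangle$, the spherical drift of $\kappa_m^t$ contains, besides the on-task term $(\kappa_m^t)^{p-1}$, cross-terms
\[
\frac{\eta^t}{\sqrt{M}}\sum_{m'\ne m}\sum_{i}c_i\,(\kappa_{m'}^t)^{i-1}\bigl(v_m^\top v_{m'}-\kappa_m^t\kappa_{m'}^t\bigr),
\]
and the projection piece $\kappa_m^t\kappa_{m'}^t$ carries no $|v_m^\top v_{m'}|$ factor. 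More to the point, \emph{every} overlap $\kappa_{m'}^t$ has its own on-task drift $(\kappa_{m'}^t)^{p-1}$ pushing it toward escape. The issue is therefore not that the remainder is small relative to the favoured drift, but that all $M$ overlaps are running parallel Bihari--LaSalle dynamics and one must show the favoured one blows up first while the others stay at scale $\tilde O(M^{-1})$ throughout all three sub-phases. The paper does this via coupled deterministic comparison sequences $P^s\le\kappa_1^s$ and $Q^s\ge\max_{m'\ne 1}|\kappa_{m'}^s|$, proving inductively that $Q^s$ never catches $P^s$; the $C_p$-weighted initialization gap in~\eqref{eq:Main-InitializationDifference} is calibrated precisely to separate the two blow-up times. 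You flag the joint-dynamics difficulty in your obstacle paragraph, but the mechanism you propose (smallness of $R$ via near-orthogonality) is not the one that works.

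\textbf{LASSO slack.} The paper does not invoke a restricted-eigenvalue condition. It passes to the equivalent norm-constrained problem $\|a\|_1\le\|a^*\|_1$ and bounds the Rademacher complexity of $\{f_a:\|a\|_1\le\|a^*\|_1\}$ by $T_2^{-1/2}\,\|a^*\|_1\,J^{-(1-1/s)}$ through a H\"older step against $\max_j\|\sigma_j(\hat w_j^\top\cdot+b_j)\|_{L^s}$, valid for any finite $s$; taking $s$ large yields the arbitrary $\iota>0$. A restricted-eigenvalue route would require verifying that condition for a highly redundant feature map of width $J\gg M$, which is a separate (and not obviously easier) problem.

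\textbf{Bookkeeping.} The probability that a random neuron favours a given task at initialization is $\sim M^{-C_p}$, not $M^{-(C_p+1/2)}$; the extra $M^{1/2}$ enters because one needs $J_{\min}=\tilde\Theta(M^{1/2}\varepsilon^{-1})$ aligned neurons per task to control the approximation error $\|f_{a^*}-f_*\|_{L^2}^2\lesssim M(J_{\min}^{-2}+\tilde\varepsilon^2)$. The product $J=J_{\min}\cdot M^{C_p}$ matches yours, but this allocation is what feeds into the second-layer norm bounds $\|a^*\|_2^2=\tilde O(J^2/J_{\min})$ and $\|a^*\|_1=\tilde O(JM^{1/2})$ used in the Rademacher step.
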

Due to the online SGD update, the runtime complexity $T_1+T_2$ directly translates to a sample complexity of $n = \tilde{\mathcal{O}}(Md^{p-1})$ (focusing on the term whose exponent depends on the information exponent $p$). 
We make the following remarks on the obtained sample complexity. 
\begin{itemize}[leftmargin=*]
    \item \textbf{Comparison with prior results.} For single-index model with known link function, \citet{arous2021online} proved that the online SGD algorithm learns the degree-$p$ Hermite polynomial with $\tilde{\mathcal{O}}(d^{p-1})$ samples; whereas for the \textit{misspecified} setting (generic and unknown $f_m$), a sample complexity of $n=\tilde{\mathcal{O}}(d^p)$ has been established in \citep{bietti2022learning}. Our bound with $M=1$ matches the $\tilde{\mathcal{O}}(d^{p-1})$ complexity in \citep{arous2021online} despite the link misspecification (under the additional restriction that $f_m$ is polynomial as in \cite{damian2022neural}). 
    \item \textbf{Superiority over kernel methods.} By a standard dimension argument \citep{kamath2020approximate,hsu2021approximation,abbe2022merged}, we know that kernel methods (including neural networks in the lazy regime \cite{jacot2018neural}) require $n\gtrsim d^q$ samples to learn degree-$q$ polynomials in $\R^d$. 
    Since $M<d$ and $p\le q$, the sample complexity of gradient-based learning is always better than that achieved by kernels, due to the presence of \textit{feature learning}.
    \item \textbf{The role of additive structure.} Without structural assumptions, learning an $M$-index model of degree $q$ requires $\Omega(M^{q})$ samples. 
    In contrast, in our bound the exponent in the dimension does not depend on $q$, and the exponent of $M$ is independent of $p$ and $q$; hence we achieve better complexity when $q$ is large and $M$ diverging with dimensionality. This illustrates the benefit of the additive structure in Assumption~\ref{assump:additive}. 

\end{itemize}

\subsection{Outline of Theoretical Analysis}

\subsubsection{Training of the first layer}\label{subsection:Main-FirstLayerTraining}

We provide a proof sketch of the first-layer training, where the goal is to show that starting from random initialization, a subset of neurons will achieve significant alignment with one of the target directions $v_m$. The following lemma establishes that while the initial correlation with any target direction is small, student neurons may have a constant-factor difference in the magnitude of alignment. 

\begin{lemma}\label{lemma:Appendix-GB-Initialization}
    Consider the network per Algorithm \ref{alg:main} and the hyperparameters per Theorem~\ref{theorem:NN-main}.
    Then with high probability, for each $m$, there exist at least $J_{\min} =\tilde{\Omega}(M^\frac12\varepsilon^{-1})$ neurons at random initialization satisfying
    \begin{align}
        w_j^\top v_m\geq \max_{m'\ne m} 
        \left|\frac{\beta_{m,p}}{\beta_{m',p}}\right|^{\frac{1}{p-2}}
        |w_j^\top v_{m'}| + \tilde{\Omega}(d^{-1/2}). 
        \label{eq:Main-InitializationDifference}
    \end{align}
\end{lemma}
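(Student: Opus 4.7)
The plan is to leverage the approximate independence of the initial projections $\{w_j^\top v_m\}_{m=1}^M$, apply Gaussian tail estimates to quantify the probability that a single neuron lies in a favorable configuration, and then count across the $J$ i.i.d.\ initializations.

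First I would rewrite $w_j \sim \Unif(\mathbb{S}^{d-1})$ as $w_j = g_j/\|g_j\|$ with $g_j \sim \mathcal{N}(0, I_d)$; the vector $(g_j^\top v_m)_{m=1}^M$ is then jointly Gaussian with covariance equal to the Gram matrix $G = (v_m^\top v_{m'})_{m,m'}$. Under Assumption~\ref{assumption:diversity} the off-diagonal entries of $G$ satisfy $|G_{mm'}| \leq c_v/M$, so $\|G - I_M\|_{\op} = o(1)$. Together with the standard concentration $\|g_j\| = \sqrt{d}\,(1+o(1))$ with high probability, this lets me couple $(\sqrt{d}\, w_j^\top v_m)_{m=1}^M$ with a genuinely i.i.d.\ standard Gaussian vector in $\R^M$ up to a $(1+o(1))$ multiplicative perturbation, which affects the subsequent tail estimates only by a constant factor.

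Second, for a fixed task $m$ I would set thresholds $\tau_1 := \sqrt{2 C_p \log M} + C_0$ and $\tau_2 := \sqrt{2 \log M}$ for a sufficiently large absolute constant $C_0$, and estimate the probability of the joint event
\begin{align*}
    \bigl\{\sqrt{d}\, w_j^\top v_m \geq \tau_1\bigr\} \;\cap\; \bigl\{\textstyle\max_{m'\ne m} \sqrt{d}\,|w_j^\top v_{m'}| \leq \tau_2\bigr\}.
\end{align*}
By Step~1 and the Mills-ratio estimate for the Gaussian upper tail, the first factor has probability $\tilde{\Omega}(M^{-C_p})$ and the second is $\Theta(1)$ (since $(1-2/M)^{M-1} \to e^{-2}$), so the joint probability is $\tilde{\Omega}(M^{-C_p})$ by approximate independence. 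Because the ratio appearing in \eqref{eq:Main-InitializationDifference} is bounded by $\sqrt{C_p}$ by the definition of $C_p$, on this event
$\sqrt{d}\, w_j^\top v_m - \sqrt{C_p}\,\max_{m'\ne m} \sqrt{d}\,|w_j^\top v_{m'}| \geq C_0 = \tilde{\Omega}(1)$,
which after dividing through by $\sqrt{d}$ is exactly the margin $\tilde{\Omega}(d^{-1/2})$ required by the lemma.

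Third, by the independence of $\{w_j\}_{j=1}^J$ across $j$, the count of good neurons is a sum of i.i.d.\ Bernoullis with mean at least $\tilde{\Omega}(J \cdot M^{-C_p}) = \tilde{\Omega}(M^{1/2}\varepsilon^{-1})$. A Chernoff bound then guarantees the count is $\geq J_{\min}$ except with probability $\exp(-\tilde{\Omega}(M^{1/2}\varepsilon^{-1}))$, and a union bound over $m \in \{1,\dots,M\}$ preserves the $o_d(1)$ failure probability. The main obstacle is Step~1: making the comparison of the sphere-uniform projections to i.i.d.\ Gaussians quantitative under the mild near-orthogonality assumption. I would handle this by a direct coupling $G^{1/2}\xi$ versus $\xi \sim \mathcal{N}(0, I_M)$, whose coordinate-wise discrepancy is controlled by $\|G^{1/2} - I_M\|_{\op} \cdot \|\xi\|_\infty = o(\sqrt{\log M})$, which is negligible against the thresholds $\tau_1, \tau_2 = \Theta(\sqrt{\log M})$ upon an appropriate choice of $C_0$.
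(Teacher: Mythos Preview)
Your proposal is correct and shares the same skeleton as the paper: Gaussian representation of the spherical initialization, two thresholds (one for the target direction, one for the competitors) chosen so that the per-neuron success probability is $\tilde{\Omega}(M^{-C_p})$ via Gaussian tail bounds, then binomial counting over the $J$ i.i.d.\ neurons.

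The one substantive difference is how the dependence among $\{g_j^\top v_m\}_{m}$ is handled. You couple $(g_j^\top v_m)_m \stackrel{d}{=} G^{1/2}\xi$ to the i.i.d.\ vector $\xi$ and bound the perturbation. The paper instead uses an \emph{exact} orthogonal decomposition: for fixed $m$ it writes $\tilde{w}_j^\top v_{m'} = \tilde{w}_j^\top (I - v_m v_m^\top) v_{m'} + (\tilde{w}_j^\top v_m)(v_m^\top v_{m'})$, so that the first summand is exactly independent of $\tilde{w}_j^\top v_m$ and the second is deterministically $O(d^{-1/2}\sqrt{\log d}\,\max_{m'\neq m}|v_m^\top v_{m'}|)$ by Assumption~\ref{assumption:diversity}. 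This buys exact independence of the two threshold events without any coupling-error bookkeeping; your route is more portable but requires tracking one extra error term. One small slip in your sketch: $\|(G^{1/2}-I)\xi\|_\infty \leq \|G^{1/2}-I\|_{\op}\,\|\xi\|_\infty$ is not a valid deterministic inequality. You should instead use the Gaussianity of $\xi$---each coordinate of $(G^{1/2}-I)\xi$ has variance at most $\|G^{1/2}-I\|_{\op}^2 \leq c_v^2$, so a union bound over $M$ coordinates gives $\|(G^{1/2}-I)\xi\|_\infty = O(c_v\sqrt{\log M})$ with high probability, which is what you need.
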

Next we show that this small difference in the initial alignment is amplified during the online SGD update, so that the student neurons will eventually specialize to the target direction $v_m$ with largest overlap at random initialization. 
Consider the dynamics of one neuron $w_j$, the update of which is written as
\begin{align}\textstyle
    w^{t+1}_j = \frac{w^t_j+ \eta^t_j y^t a_j \sigma' ({w_j^t}^\top x^t + b) (I - w_j^t {w_j^t}^\top)x^t}{\|w^t_j+ \eta^t_j y^t a_j \sigma' ({w_j^t}^\top x^t + b) (I - w_j^t {w_j^t}^\top)x^t\|}
     \approx w^t_j+\eta^t_j 
    \frac{1}{\sqrt{M}}
    \sum_{m=1}^{M} \alpha_{j,p}\beta_{m,p} (v_m^\top w_j^t)^{p-1}v_m+Z^t,
\end{align}
where $Z^t$ is a mean-zero random variable corresponding to the SGD noise, and the approximation step is due to Hermite expansion in which we ignored the effect of normalization for simplicity.
For projection onto a specific target direction $v_m$, we have the following estimate, 
\begin{align}
    v_m^\top w^{t+1}_j \approx v_m^\top w^t_j + \eta^t_j \frac{\alpha_{j,p}\beta_{m,p}}{\sqrt{M}} (v_m^\top w_j^t)^{p-1}+\eta^t_j  v_m^\top Z^t.
\end{align}
Following \citet{arous2021online}, by using the optimal step size $\eta_j^t  = \tilde{\Theta}(M^{-\frac12}d^{-\frac{p}{2}})$, 
the SGD dynamics approximately follows the ODE: 
\begin{align}
   \frac{\mathrm{d}}{\mathrm{d}t} v_m^\top w^{t}_j \approx \eta\frac{\alpha_{j,p}\beta_{m,p}}{\sqrt{M}} (v_m^\top w_j^t)^{p-1} \quad\Rightarrow\quad v_m^\top w^{t}_j \approx \frac{v_m^\top w^{0}_j}{\left(1-\frac{\eta(p-2)\alpha_{j,p}\beta_{m,p}(v_m^\top w^{0}_j)^{p-2}}{\sqrt{M}}t\right)}.
\end{align}
This means that $v_m^\top w_j^t\approx d^{-1/2}$ for a long period, but the SGD dynamics rapidly escapes the high-entropy equator and achieves nontrivial overlap (i.e., weak recovery) just before the critical time $t_j := \sqrt{M}\eta^{-1}(p-2)^{-1}\alpha_{j,p}^{-1}\beta_{m,p}^{-1}(v_m^\top w^{0}_j)^{-(p-2)} = \tilde{O}(Md^{p-1})$, as specified by the following lemma. 
\begin{lemma}\label{lemma:AlignmentFirstPhase}
    Consider the SGD dynamics with $\eta^t_j = \tilde{\Theta}(M^{-\frac12}d^{-\frac{p}{2}})$.
    There exists some time $t_0=\tilde{O}(Md^{p-1})$ and small constant $c=\tilde{\Theta}(1)$, such that 
    $\langle w_j^{t_0}, v_m\rangle\ge c$, and $|v_m^\top w_j^{t_0}|\leq c M^{-\frac12}$ for all $m=2,\dots,M$. 
\end{lemma}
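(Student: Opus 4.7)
The plan is to adapt the ODE-based analysis of spherical online SGD from \citet{arous2021online} to the present additive setting, exploiting the near-orthogonality of $\{v_m\}$ (Assumption~\ref{assumption:diversity}) to approximately decouple the evolution of the $M$ overlaps $m_{j,\ell}^t := \langle w_j^t, v_\ell\rangle$. Each overlap then follows an autonomous ODE, and the initial alignment gap supplied by Lemma~\ref{lemma:Appendix-GB-Initialization} selects which direction escapes first.

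First I would write the one-step update on $m_{j,\ell}^t$ as the sum of a drift, a martingale increment, and a remainder:
$$m_{j,\ell}^{t+1} - m_{j,\ell}^t \;=\; \eta^t\,\frac{\alpha_{j,p}\beta_{\ell,p}}{\sqrt M}\,(m_{j,\ell}^t)^{p-1}\bigl(1-(m_{j,\ell}^t)^2\bigr) + \xi_{j,\ell}^t + R_{j,\ell}^t,$$
obtained by Hermite expansion of the correlational-loss gradient, Stein's lemma, and a Taylor expansion of the spherical normalization. Here $R^t_{j,\ell}$ aggregates (i) higher-order Hermite contributions of size $O(|m^t_{j,\ell}|^p)$, (ii) cross-direction terms proportional to $\langle v_\ell, v_{\ell'}\rangle\,(m_{j,\ell'}^t)^{p-1} = O((m_{j,\ell'}^t)^{p-1}/M)$ under Assumption~\ref{assumption:diversity}, and (iii) an $O(\eta^{t,2}\|g^t\|^2)$ normalization correction. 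The martingale part $\xi_{j,\ell}^t$ has conditional variance $\lesssim \eta^{t,2}$; applying Freedman's inequality for the chosen $\eta^t = \tilde{\Theta}(M^{-1/2}d^{-p/2})$ and over $T_1 = \tilde{O}(Md^{p-1})$ steps gives $\sup_{s\le T_1}|\sum_{t<s}\xi^t_{j,\ell}|\le \tilde{O}(\eta\sqrt{T_1}) = \tilde{O}(d^{-1/2})$ with high probability, matching the initialization scale but dominated by the $\tilde{\Omega}(d^{-1/2})$ gap guaranteed by Lemma~\ref{lemma:Appendix-GB-Initialization}.

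Treating the discrete process as a perturbed Euler scheme for $\dot u = c_\ell u^{p-1}$ with $c_\ell = \eta\alpha_{j,p}\beta_{\ell,p}/\sqrt M$, I would sandwich $m_{j,1}^t$ (WLOG $v_1$ is the dominant direction) between solutions of slightly perturbed ODEs with the explicit blow-up time
$$t^*_\ell \;=\; \frac{\sqrt M}{\eta\,|\alpha_{j,p}\beta_{\ell,p}|\,(p-2)}\,|m_{j,\ell}^0|^{-(p-2)} \;=\; \tilde{\Theta}(Md^{p-1}).$$
The inequality \eqref{eq:Main-InitializationDifference} translates, via this formula, to $t^*_1 \le (1-\Omega(1))\,t^*_\ell$ for every $\ell\ne 1$, so that $m_{j,1}^t$ exits the equator regime $|m|\le d^{-1/4}$ first and reaches a constant $c$ at a stopping time $t_0 = t^*_1(1+o(1)) = \tilde{O}(Md^{p-1})$. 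For $\ell\ne 1$, integrating the same ODE up to $t_0 \le t^*_1$ yields $|m_{j,\ell}^{t_0}| \le |m_{j,\ell}^0|(1-t_0/t^*_\ell)^{-1/(p-2)}$, and the constant-size slack $1-t_0/t^*_\ell = \Omega(1)$ keeps this at $\tilde{O}(d^{-1/2}) \le cM^{-1/2}$ under $M\lesssim \sqrt d$ (Assumption~\ref{assumption:diversity}); the martingale fluctuations from the previous step are absorbed into the same bound.

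The main technical obstacle will be controlling the escape window $m_{j,1}^t: d^{-1/4}\!\to\!\Theta(1)$, where the normalization factor $1-(m^t_{j,1})^2$ departs from $1$ and the cross-interaction terms $\sum_{\ell'\ne 1} O\bigl(|\langle v_1, v_{\ell'}\rangle|\cdot|m^t_{j,\ell'}|^{p-1}\bigr)$ might, a priori, compete with the leading drift. Bounding this sum relies on Assumption~\ref{assumption:diversity} together with the already-established smallness of $m^t_{j,\ell'}$ for $\ell'\ne 1$; closing the argument simultaneously for all $\ell$, and uniformly over the $J_{\min}$ initially favored neurons via a standard union bound, is the main care required.
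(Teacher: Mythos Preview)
Your outline follows the paper's strategy closely: decompose the spherical SGD step into drift, martingale, and remainder; control the martingale by concentration ($\eta\sqrt{T_1}=\tilde O(d^{-1/2})$); compare the drift to the autonomous ODE $\dot u\propto u^{p-1}$ with Bihari--LaSalle--type bounds; and use the initialization gap of Lemma~\ref{lemma:Appendix-GB-Initialization} to order the blow-up times. The paper implements this via a pair of deterministic auxiliary sequences $(P^s,Q^s)$ that sandwich $\kappa_1^s$ from below and $\max_{m\ne 1}|\kappa_m^s|$ from above, maintained inductively under invariants (a)--(c), but the content is the same.

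There is one real omission. Your bound for the subordinate directions, $|m_{j,\ell}^{t_0}|\le |m_{j,\ell}^0|(1-t_0/t^*_\ell)^{-1/(p-2)}=\tilde O(d^{-1/2})$, comes from integrating the \emph{decoupled} ODE and ignores the back-reaction of the growing dominant overlap onto $m_{j,\ell}$: the term $\langle v_\ell,v_1\rangle\,(m_{j,1}^t)^{p-1}=O(M^{-1})(m_{j,1}^t)^{p-1}$ in $R^t_{j,\ell}$. Summed over $t\le t_0$ this contributes
\[
\sum_{t\le t_0}\frac{\eta}{\sqrt M}\cdot\frac{(m_{j,1}^t)^{p-1}}{M}
\;\approx\;\frac{1}{M}\cdot\frac{m_{j,1}^{t_0}-m_{j,1}^0}{\alpha_{j,p}\beta_{1,p}}
\;=\;\tilde O(M^{-1}),
\]
using the ODE relation $dm_1=c_1 m_1^{p-1}\,dt$. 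When $M\ll\sqrt d$ this exceeds your $\tilde O(d^{-1/2})$, and it is precisely what fixes the $M^{-1}$ scaling in the formal version of the lemma. Your closing paragraph flags the \emph{forward} coupling $\sum_{\ell'\ne 1}|\langle v_1,v_{\ell'}\rangle|\,|m_{j,\ell'}^t|^{p-1}$ as the main obstacle, but that sum is tiny ($\lesssim d^{-(p-1)/2}$); the nontrivial direction is the backward one. The paper closes this by inserting an explicit cross term $c_2\eta M^{-3/2}(\kappa_1^s)^{p-1}$ into the $Q$-recursion and doing a case split on whether the own-drift or the cross term dominates (see the proof of Lemma~\ref{lemma:VerifyBC}); your decoupled treatment needs the analogous step.
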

After a nontrivial overlap with $v_m$ is obtained, we continue to run online SGD to further amplify the alignment. A technical challenge here is that as the student neuron start to develop alignment with one direction $v_m$, the influence from the other $m-1$ directions is no longer negligible, while the signal from $v_m$ gets smaller because of the projection $(1-w^t{w^t}^\top)$; this complication is addressed in Lemma~\ref{lemma:AlignmentSecondPhase}.

Finally, in Lemma~\ref{lemma:AlignmentFinalPhase} we prove a local convergence result (analogous to \cite{zhou2021local,akiyama2021learnability}) which entails that for each task direction $v_m$, there exist some student neurons that that localize to the task, i.e., $\langle v_m, w_j\rangle > 1-\varepsilon$. Here we exploit the local convexity (more specifically, {\L}ojasiewicz condition) of the loss landscape due to the small overlap between the single-index tasks as specified in Assumption~\ref{assumption:diversity}. 
The following proposition describes the configuration of parameters after first-layer training. 
Here, $\tilde{\varepsilon}$ indicates the level of alignment, which is later set to $\tilde{\varepsilon}=\tilde{\Theta}(M^{-\frac12}\varepsilon)$, where $\varepsilon$ is the desired final generalization error $\mathbb{E}_{x}[|f_{\hat{a}}(x) - f_*(x)|] \lesssim \varepsilon$.
\begin{lemma}[Informal]\label{lemma:first-layer-training}
    Take $T_{1,1}=\tilde{\Theta}(Md^{p-1})$, $T_{1,2}=\tilde{\Theta}(Md^{\frac{p}{2}})$, $T_{1,3}=\tilde{\Theta}(\tilde{\varepsilon}^{-2}Md\lor M\tilde{\varepsilon}^{-3})$, and the number of neurons as $J\gtrsim J_{\rm min}M^{C_p}\log d$.
    Suppose that $|v_{m'}^\top v_m|= \tilde{O}(M^{-1})$ for all $m'\ne m$, and $M= \tilde{O}(d^{1/2})$.
    Then under appropriate learning rate $\eta^t$, with high probability, 
    for each class $m$, there exist at least $J_{\rm min}$ neurons that achieve strong recovery $v_m^\top w_j^{T_1} \geq 1-\tilde{\varepsilon}$.
\end{lemma}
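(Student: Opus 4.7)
\begin{proofof}[Lemma~\ref{lemma:first-layer-training} (sketch plan)]
The plan is to decompose the first-layer training horizon $T_1$ into three phases $T_{1,1}, T_{1,2}, T_{1,3}$ tracked by the three lemmas on the SGD dynamics (Lemmas~\ref{lemma:Appendix-GB-Initialization}, \ref{lemma:AlignmentFirstPhase}, \ref{lemma:AlignmentSecondPhase}, \ref{lemma:AlignmentFinalPhase}) that we have already stated, and to show that a $J_{\min}$-sized subset of neurons survives at each phase for every target direction $v_m$. Concretely, I would fix a task $m\in\{1,\ldots,M\}$ and define the \emph{active set} $\mathcal{A}_m^{t}$ of neurons that at time $t$ satisfy the alignment bound relevant to the current phase; the goal is to show $|\mathcal{A}_m^{T_1}|\ge J_{\min}$ with high probability, and then union bound over $m$.

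\textbf{Phase 0 (initialization, $t=0$).} By Lemma~\ref{lemma:Appendix-GB-Initialization}, with high probability there exist at least $J_{\min}=\tilde{\Omega}(M^{1/2}\varepsilon^{-1})$ neurons $j$ whose initial overlaps satisfy \eqref{eq:Main-InitializationDifference}, giving a clear leader direction $v_m$. I would set $\mathcal{A}_m^{0}$ to be this family and note that the total neuron count $J\gtrsim J_{\min}M^{C_p}\log d$ exactly provides the slack needed so that, after paying an $M^{C_p}$ factor for the ratio of Hermite coefficients and a $\log d$ factor for the union bound over the $M$ tasks, each $|\mathcal{A}_m^0|\ge J_{\min}$ simultaneously.

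\textbf{Phase 1 (escape from the equator, length $T_{1,1}=\tilde{\Theta}(Md^{p-1})$).} For each $j\in\mathcal{A}_m^{0}$ I would apply Lemma~\ref{lemma:AlignmentFirstPhase} with step size $\eta^t=\tilde{\Theta}(M^{-1/2}d^{-p/2})$, which yields a time $t_0=\tilde{O}(Md^{p-1})$ at which $\langle w_j^{t_0},v_m\rangle\ge c$ while $|\langle w_j^{t_0},v_{m'}\rangle|\le cM^{-1/2}$ for $m'\ne m$. Here the relevant calculation is that the ODE approximation $\frac{d}{dt}\langle w_j^t,v_m\rangle\approx \eta M^{-1/2}\alpha_{j,p}\beta_{m,p}\langle w_j^t,v_m\rangle^{p-1}$ has a finite-time blow-up proportional to $\langle w_j^0,v_m\rangle^{-(p-2)}$, so the initialization gap \eqref{eq:Main-InitializationDifference} translates into a strictly earlier escape time for $v_m$ than for any other $v_{m'}$. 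The martingale noise $Z^t$ is controlled by a standard Freedman-type argument over the random input $x^t$ and the independent label noise $\nu^t$, using $\|\eta^t\,\tilde{\nabla}_w f\|\lesssim \eta\cdot\mathrm{poly}\log d$ and summing $\eta^2$ over $T_{1,1}$ steps.

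\textbf{Phase 2 (weak to strong recovery, length $T_{1,2}=\tilde{\Theta}(Md^{p/2})$).} Once the overlap is a constant $c$, the contribution from the $v_m$ direction becomes order one, while contributions from $v_{m'}$ ($m'\ne m$) through off-diagonal terms $\langle v_m,v_{m'}\rangle$ are suppressed by $\tilde{O}(M^{-1})$ under Assumption~\ref{assumption:diversity} together with the $M^{-1/2}$ bound from Phase~1. I would apply Lemma~\ref{lemma:AlignmentSecondPhase} to drive the overlap past a constant $1-c'$; the main delicate point, which I expect to be the main obstacle of the entire proof, is to verify that the cross-task bias $\frac{1}{\sqrt{M}}\sum_{m'\ne m}\beta_{m',p}(v_{m'}^\top w_j^t)^{p-1}v_{m'}$ does not push $w_j^t$ into a different basin of attraction. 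Near-orthogonality of the $v_{m'}$ together with the projection $(I-w_j^t{w_j^t}^\top)$ shrinks this contribution, but one needs to track the cross overlaps $\langle w_j^t,v_{m'}\rangle$ simultaneously via a coupled system of inequalities and show they remain $\tilde{O}(M^{-1/2})$ throughout Phase~2 (again by ODE approximation plus martingale concentration).

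\textbf{Phase 3 (local refinement, length $T_{1,3}=\tilde{\Theta}(\tilde{\varepsilon}^{-2}Md\lor M\tilde{\varepsilon}^{-3})$).} Finally, inside the neighbourhood $\{w:\langle w,v_m\rangle\ge 1-c'\}$ the correlation objective restricted to this direction becomes (up to lower-order cross terms) strongly concave along $v_m$, and a \L{}ojasiewicz-type descent estimate provided by Lemma~\ref{lemma:AlignmentFinalPhase} yields $\langle w_j^{T_1},v_m\rangle\ge 1-\tilde{\varepsilon}$ after $T_{1,3}$ steps; the $\tilde{\varepsilon}^{-2}Md$ and $M\tilde{\varepsilon}^{-3}$ terms correspond respectively to the variance contribution of SGD noise and to the cross-task bias whose size is $\tilde{O}(M^{-1/2})$ and thus requires $\tilde{\varepsilon}\ge M^{-1/2}\varepsilon$ to be dominated by the signal along $v_m$. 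To conclude, I would take a union bound across the $J_{\min}$ tracked neurons, the $M$ target directions, and the $O(T_1)$ SGD steps (all of which are polynomial in $d$), so the high-probability statements above combine to give the lemma.
\end{proofof}
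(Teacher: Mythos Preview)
Your plan is essentially the paper's own proof: the same three-phase decomposition, the same per-neuron analysis via Lemmas~\ref{lemma:Appendix-GB-Initialization}, \ref{lemma:AlignmentFirstPhase}, \ref{lemma:AlignmentSecondPhase}, \ref{lemma:AlignmentFinalPhase}, the same ODE comparison for the escape time, and the same union bound over neurons, tasks, and steps. Two small points where the paper is more careful than your sketch: (i) between initialization and Phase~I the paper inserts a separate argument (Section~\ref{subsection:DescentPath}) verifying that a constant fraction of the $J_{\min}$ initialized neurons also satisfy the sign condition \eqref{eq:Hermite-sign}, without which the population drift need not be monotone; (ii) since different neurons reach weak/strong recovery at different random times $t_1,t_2\le T_{1,1},T_{1,2}$, Phase~III must absorb the leftover $(T_{1,1}-t_1)+(T_{1,2}-t_2)$ steps and show the alignment does not degrade during this waiting period---the paper handles this explicitly in Lemma~\ref{lemma:AlignmentFinalPhase} via a ``stay above $1-2c_1$'' argument before switching to the smaller step size. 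Your attribution of the $M\tilde{\varepsilon}^{-3}$ term to cross-task bias is also slightly off: in the paper both terms in $T_{1,3}$ arise from the two step-size constraints $\eta\lesssim\tilde{\varepsilon}M^{-1/2}d^{-1}$ (normalization error) and $\eta\lesssim\tilde{\varepsilon}^2M^{-1/2}$ (SGD noise), while the cross-task bias is controlled geometrically via $\sum_{m\ne 1}|\bar{\kappa}_m^s|^2\le 9(1-(\kappa_1^s)^2)$ rather than by step-size tuning.
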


\subsubsection{Training of the second layer}
Once localization to each task is achieved, we optimize the second-layer parameters with $L^2$- or $L^1$-regularization. Because the objective is convex, gradient-based optimization can efficiently minimize the empirical loss. The learnability guarantee follows from standard analysis analogous to that in \citet{abbe2022merged,damian2022neural,ba2022high}, where we construct a ``certificate’' second-layer $a^*\in\R^J$ that achieves small loss
$$
\textstyle \E_{x\sim\mathcal{N}(0,I_d)}\left(f_*(x) - \frac{1}{J}\sum_{j=1}^J a^*_j\sigma(w_j^\top x+b_j)\right)^2 \le \varepsilon
$$
for $r=1,2$. Then, the population loss of the regularized empirical risk minimizer can be bounded via standard Rademacher complexity argument, using the norm $\|a^*\|_r^r$.

Observe that in Theorem~\ref{theorem:NN-main}, by using $L^1$-regularization, we can avoid the dependency on $C_p$ determined by the Hermite coefficients of $f_m$. 
This is because if some $f_m$ is too large, more students neurons are required to satisfy the initial alignment condition (Lemma~\ref{lemma:Appendix-GB-Initialization}). Since not all neurons are guaranteed to align with the target directions, 
the sparsity-inducing $L^1$-regularization allows us to ignore the redundant neurons and hence obtain better generalization error rate.  

\paragraph{Implication for fine-tuning.} $L^1$-regularization is also useful for efficient fine-tuning on a different downstream task, in which a subset of learned features of size $\tilde{M}$ is used to define the target function: $\tilde{f}_*(x) = \tilde{M}^{-1/2}\sum_{m=1}^{\tilde{M}} g_m(v_m^\top x)$, where $g_1,g_2,...,g_m$ are link functions that may differ from the training observations. In this setting, 
retraining the second-layer with $L^1$-regularization requires $\tilde{O}(\tilde{M}\varepsilon^{-2})$ sample to achieve an $L^1$-error of $\eps$, which is especially useful when the number of the downstream features is small  $\tilde{M}\ll M$. 
This indeed has connections to practical fine-tuning where sparsity is induced to extract relevant features, as seen in localization of skills  \citep{dai2021knowledge,wang2022finding,panigrahi2023task} (see also \cite{elhage2022toy,bricken2023towards} in the context of mechanistic interpretability) and LoRA \citep{hu2021lora}. 

\begin{remark}   
Prior theoretical results on fine-tuning \& transfer learning for downstream tasks typically showed that gradient-based pretraining ``localizes'' the neural network parameters to a low-dimensional subspace spanned by the target functions (tasks), which enables efficient adaptation \cite{du2020few,damian2022neural,collins2023provable}; for instance, fine-tuning for a degree-$q$ polynomial task in $M$-dimensional subspace requires $n\gtrsim M^q$ samples \cite{damian2022neural}.
However, when the collection of tasks becomes sufficiently diverse (i.e.~$M$ diverges with dimensionality), which is the relevant regime for large-scale pretraining, the benefit of low-dimensional representation diminishes. In contrast, in our setting we prove a different kind of \textit{localization}, where each neuron aligns with a different $v_m$ depending on the initialization; this leads to efficient fine-tuning despite $M$ being large. 
\end{remark}

\subsection{Numerical Illustration}  

We conduct numerical experiments to illustrate the feature learning process for additive models via gradient descent on a two-layer network. 
The target function is an additive model~\eqref{eq:intro-f*}, where we set $M=16$ and $d=64$, $f_m(x)=\He_3(x)$, and $\{v_m\}_{m=1}^M$ are the canonical basis vectors. 
The student network is a two-layer ReLU network~\eqref{eq:TwoLayerNN}, where $J=8192$ and weights are initialized as $w_j^0\sim \mathbb{S}^{d-1}$, $a_j^0\sim \mathrm{Unif}\{\pm1\}$, and $b_j^0\sim \mathrm{Unif}([-1,1])$, respectively.
First, we train the first-layer parameters via online SGD for $T=10^6$ steps. The initial step size is chosen as $\eta^0=0.3$, and from $T'=T/2$, we anneal the step size as $\eta^t=\frac{\eta^0}{(t/T')^2}$.

\begin{figure}[!htb]
\vspace{-1mm}
\centering
\begin{minipage}[t]{0.42\textwidth}
        \centering
\includegraphics[width=0.84\textwidth]{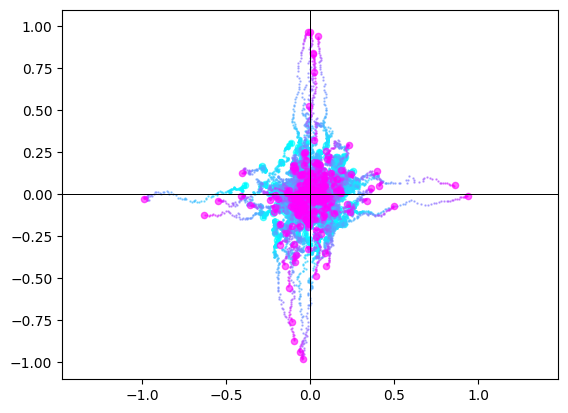}
    \end{minipage}
    \begin{minipage}[t]{0.42\textwidth}
        \centering
        \includegraphics[width=0.84\textwidth]{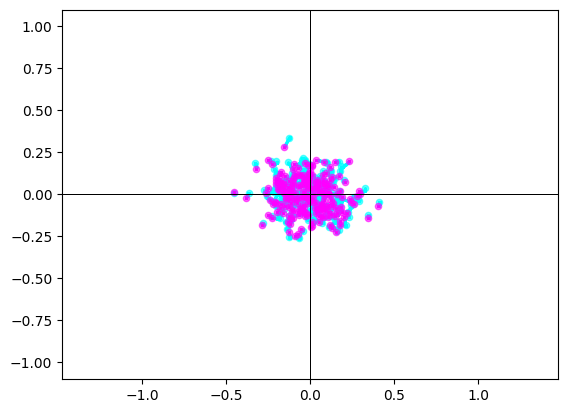}
    \end{minipage}
    \vspace{-1mm}
    \caption{\small Alignment between student neurons and true signals $v_1$ and $v_2$, before (blue) and after (purple) SGD training.  Left: neural network optimized by online SGD (Algorithm~\ref{alg:main}), Right: neural network in the NTK regime.}
    \label{fig:experiment}
\vspace{1.5mm}
\end{figure}

In Figure~\ref{fig:experiment} we plot the alignment between the student neurons and the true signal: the horizontal axis represents the alignment between $w_j$ and $v_1$, that is, $\langle w_j,v_1\rangle/\|w_j\|$, and the vertical axis represents $\langle w_j,v_2\rangle/\|w_j\|$.  
For comparison, we also train a network in the NTK regime~(\citet{jacot2018neural}), where the magnitude of second layer is $\Theta(1/\sqrt{J})$ and feature learning is suppressed \citep{yang2020feature}. 
We observe that, via Algorithm \ref{alg:main}, a subset of student neurons almost perfectly aligned with one of the true signal directions (see Left figure). 
This is in sharp contrast with the NTK model (Right figure) where no alignment is observed.

\section{Statistical Query Lower Bounds}
\label{sec:SQ}

In Section~\ref{sec:GD} we showed that two-layer ReLU network optimized via online SGD can learn a representative subset of the additive model class \eqref{eq:intro-f*} with polynomial sample complexity. 
This being said, due to the introduced diversity condition (Assumption~\ref{assumption:diversity}), it is not clear if the resulting function class is still intrinsically hard to learn. 
To understand the fundamental complexity of the problem we are addressing, in this section we present several computational lower bounds.
Specifically, we consider the statistical query learner \citep{kearns1998efficient,bshouty2002using,diakonikolas2020algorithms}, which submits a function of $x$ and $y$, and receives its expectation within a tolerance $\tau$.
The question we ask is how many accurate queries are needed to learn the target function.

\subsection{Correlational Statistical Query}
First, we derive lower bounds for the correlational statistical query (CSQ) learner, which is an important subclass of SQ learners often discussed in the context of feature learning of neural networks \citep{damian2022neural,abbe2023sgd}.
For a function $g\colon \mathcal{X}\to \mathbb{R}$ with $\|g\|_{L^2}=1$ and parameter $\tau$,
a correlational SQ oracle $\mathrm{CSQ}(g,\tau)$ returns 
$
    \mathbb{E}_{x,y}[yg(x)]+\varepsilon,
$
where $\varepsilon$ is an arbitrary (potentially adversarial) noise that takes value in  $\varepsilon\in [-\tau,\tau]$.

The CSQ learner can model gradient-based training as follows: for a neural network $f_{\Theta}$, the SGD update of the parameters with the minibatch size $b$ using the squared loss is written as 
\begin{align}
    \Theta^{t+1} \leftarrow \Theta^{t}-\eta^t \nabla_{\Theta}\frac{1}{b}\!\sum_{i=1}^b(y^t_i\!-\!f_{\Theta^{t}}(x^t_i))^2 
    = \Theta^{t}-\frac{2\eta}{b}\sum_{i=1}^by^t_i\nabla_{\Theta}f_{\Theta^{t}}(x^t_i)
    +\frac{2\eta}{b}\sum_{i=1}^bf_{\Theta^{t}}(x^t_i)\nabla_{\Theta}f_{\Theta^{t}}(x^t_i).
\end{align}
Here, the network gains information of the true function from the correlation term $\frac{2\eta}{b}\sum_{i=1}^by^t_i\nabla_{\Theta}f_{\Theta^{t}}(x^t_i)$. 
Roughly speaking, since each coordinate of the gradient concentrates around the expectation with $O(\frac{1}{\sqrt{b}})$ fluctuation, the noisy correlational query can be connected to gradient-based training by matching the tolerance $\tau$ with the concentration error $\frac{1}{\sqrt{b}}$. 
We note that there is a gap between CSQ and SGD update due to the different noise structure, see \cite[Remark 6]{abbe2023sgd}; nevertheless, a lower bound on CSQ learner serves as a baseline comparison for the sample complexity of SGD learning in recent works \citep{damian2022neural,abbe2022non,damian2023smoothing}. 

We obtain the following CSQ lower bounds with different dependencies on the target $L^2$ error and the number of queries.
Note that both bounds imply the sample complexity of $\tilde{\Omega}(M d^{\frac{p}{2}})$ under the standard $\tau\approx n^{-1/2}$ concentration heuristic. 
\begin{theorem}[CSQ lower bound]\label{theorem:CSQ-Appendix}
For any $p\geq 1$, $\varsigma>0$ and $C>0$, there exists a problem class $\mathcal{F}\subset \mathcal{F}_{d,M,\varsigma}^{p,p}$ satisfying Assumptions \ref{assump:additive} and \ref{assumption:diversity} such that a CSQ learner using $Q$ queries (outputting $\hat{f}$) requires the following to learn a random choice $f_*\sim \mathcal{F}$: 
\begin{itemize}
    \item[(a)] tolerance of 
    \vspace{-3mm}
    \begin{align}
        \tau \lesssim \frac{(\log Qd)^{\frac{p}{4}}}{M^{\frac12}d^{\frac{p}{4}}},
    \end{align}
    otherwise, we have $\|f_* - \hat{f}\|_{L^2}^2 \gtrsim 1/M$ with probability at least $1-O(d^{-C})$. 
    \item[(b)] tolerance of 
    \vspace{-3mm}
    \begin{align}
       \tau \lesssim \frac{Q^{\frac12} (\log dQ)^{\frac{p}{4}+1}}{M^{\frac12}d^{\frac{p}{4}}},
    \end{align}
    otherwise, we have $\|f_* - \hat{f}\|_{L^2}^2 \gtrsim 1$ with probability at least $1-O(d^{-C})$. 
\end{itemize}
\end{theorem}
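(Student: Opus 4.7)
The plan is to prove the lower bound via the standard correlational statistical query framework, by exhibiting a large family of hard instances in $\mathcal{F}_{d,M,\varsigma}^{p,p}$ and showing that any single query can only correlate significantly with a small fraction of them. First I would take the hard family $\mathcal{F}_0 \subset \mathcal{F}_{d,M,\varsigma}^{p,p}$ to consist of targets of the form $f_{V,\epsilon}(x) = \frac{1}{\sqrt{M}}\sum_{m=1}^M \epsilon_m {\sf He}_p(v_m^\top x)$, indexed by sign patterns $\epsilon \in \{\pm1\}^M$ and feature matrices $V = (v_1,\ldots,v_M)$ with each $v_m$ sampled IID uniformly from $\mathbb{S}^{d-1}$, conditioned on Assumption~\ref{assumption:diversity} (which holds with high probability when $M = \tilde{O}(\sqrt{d})$). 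The two independent sources of randomness will drive two layers of concentration: the signs $\epsilon$ yielding a $\sqrt{M}^{-1}$ factor, the directions $V$ yielding the $d$-dependent factor.

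Second, I would derive a high-probability correlation bound that holds uniformly over queries. For any query $g\colon \mathbb{R}^d\to\mathbb{R}$ with $\|g\|_{L^2(\gamma_d)}\leq 1$, only the degree-$p$ Hermite component $g_p$ contributes to $\langle f_{V,\epsilon}, g\rangle$ since $f_{V,\epsilon}$ is pure degree $p$. Using the identity $\mathbb{E}_x[{\sf He}_p(v^\top x){\sf He}_p(w^\top x)] = (v^\top w)^p$, I can write $\langle f_{V,\epsilon}, g\rangle_{L^2} = \frac{1}{\sqrt{M}}\sum_m \epsilon_m \phi_g(v_m)$, where $\phi_g(v) := \langle {\sf He}_p(v^\top \cdot), g_p\rangle$ is a degree-$p$ polynomial in $v$ with Gaussian $L^2$ norm bounded by $\|g_p\|_{L^2}\leq 1$. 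Two concentration steps then apply: first, hypercontractivity for degree-$p$ polynomials on the sphere (equivalent to Gaussian hypercontractivity after changing measures) bounds $|\phi_g(v)|$ by roughly $(\log 1/\delta)^{p/4}/d^{p/4}$ with probability $1-\delta$ uniformly over $v$; second, Hoeffding over the random signs $\epsilon$ conditional on $V$ contributes a further $1/\sqrt{M}$ factor. Combining, $|\langle f_{V,\epsilon}, g\rangle|\lesssim (\log 1/\delta)^{p/4}/(M^{1/2}d^{p/4})$ with high probability for any fixed $g$, and a union bound over the $Q$ queries together with a $\mathrm{poly}(d)$-size net (which handles adaptivity, since degree-$p$ polynomials in $d$ variables form a $\binom{d+p-1}{p}$-dimensional space) produces the claimed $(\log Qd)^{p/4}$ factor.

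Third, I would convert the correlation bound into the $L^2$-error floors. For part (a), a standard CSQ-indistinguishability argument suffices: when all $Q$ responses lie within $\tau$ of their noiseless values for a $(1-o(1))$-fraction of $f_{V,\epsilon}\in\mathcal{F}_0$, the transcript constrains $(V,\epsilon)$ to a set of measure close to $1$, and the best estimator must satisfy $\|\hat f - f_{V,\epsilon}\|^2 \gtrsim 1/M$, reflecting the per-task residual $\tfrac{1}{M}\cdot\|{\sf He}_p(v_m^\top\cdot)\|^2$ contributed by each incorrectly identified sign $\epsilon_m$. For part (b), a standard amplification argument yields the extra $Q^{1/2}$ slack in tolerance: by aggregating $Q$ queries with independent noise, one can reduce the effective tolerance by $\sqrt{Q}$ at the cost of requiring only the weaker error floor $\|f_*-\hat f\|^2\gtrsim 1$.

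The main obstacle is the step bounding $\phi_g(v)$: establishing that $\phi_g(v)$ exhibits sub-Gaussian-like tails with scale $d^{-p/4}$ for adversarial $g_p$ with unit Gaussian $L^2$ norm. The difficulty is that $\phi_g$ can be an arbitrary degree-$p$ polynomial, including choices that concentrate on a single direction (yielding $\phi_g(v) = (v^\top w)^p$, which has heavier tails near $v=w$). Spherical hypercontractivity yields the correct exponent once carefully tracked through the change of measure from Gaussian to spherical, but the bookkeeping on the $d$ factors (interplaying with the $\|\cdot\|_{L^2(\mathbb{S}^{d-1})}$ vs.\ $\|\cdot\|_{L^2(\gamma_d)}$ conversion) is delicate. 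A secondary challenge is the adaptivity of CSQ: since queries $g_q$ depend on previous responses, a naive union bound over independent queries is insufficient, and the cleanest fix is to work with an $\varepsilon$-net of possible degree-$p$ polynomial queries of size $d^{O(p)}$, absorbed into the $(\log Qd)^{p/4}$ factor.
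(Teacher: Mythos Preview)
Your proposal has genuine gaps in both parts.

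For part~(a), the adaptivity handling via an $\varepsilon$-net is incorrect: degree-$p$ polynomials in $d$ variables form a space of dimension $\binom{d+p-1}{p}\asymp d^p$, so any net has size $\exp(\Omega(d^p))$, not $\mathrm{poly}(d)$, and cannot be absorbed into the $(\log Qd)^{p/4}$ factor. More fundamentally, the quantifiers are backwards: you cannot have $|\langle f_{V,\epsilon},g\rangle|$ small simultaneously for \emph{all} unit-norm $g$ (taking $g=f_{V,\epsilon}/\|f_{V,\epsilon}\|$ gives correlation $\Theta(1)$). The paper uses the standard counting direction instead: fix a large near-orthogonal set $S\subset\mathbb{S}^{d-1}$ of size $A$ with pairwise inner products $\le d^{-1/2}\sqrt{2\log A}$, so the functions $\{\He_p(v^\top\cdot):v\in S\}$ have pairwise correlation $\varepsilon\lesssim d^{-p/2}(\log A)^{p/2}$; then for \emph{any single} query $g$, at most $2/(\tau^2-\varepsilon)$ of these functions can have correlation $>\tau$. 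The oracle hides one direction $v_M$ by answering as if only $v_1,\dots,v_{M-1}$ were present, and adaptivity is handled automatically by removing the few ``bad'' hypotheses per query. No random signs are used, and the $(\log Qd)^{p/4}$ exponent comes from $\tau\asymp\sqrt{\varepsilon}$, not from hypercontractive tails (which would give the wrong exponent $p/2$).

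For part~(b), the $\sqrt{Q}$ slack does not follow from a ``standard amplification argument'': in the CSQ model the noise is \emph{adversarial}, so repeating queries gains nothing. The paper's own remark explains the obstruction: to obtain an $\Omega(1)$ error floor the oracle must hide $\Omega(M)$ directions at once, but an adversarial oracle cannot know in advance which directions the learner is probing. The paper resolves this by introducing a \emph{Noisy CSQ} model (i.i.d.\ clipped Gaussian noise of variance $\varsigma^2$), proving a single-index lower bound in that model via a likelihood-ratio / distinguishability argument that produces the $\sqrt{Q}$ dependence, and then reducing: any learner achieving $L^2$ error $<1$ on the additive model recovers a constant fraction of directions, hence can be used (after planting $M-1$ random extra directions) to solve single-index instances. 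The reduction works precisely because random noise does not depend on the target, so the oracle's behavior is identical across targets. This is the nontrivial content of part~(b), and your sketch omits it entirely.
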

Recall that the CSQ lower bound for single-index model translates to a sample complexity of $n\gtrsim d^{\frac{p}{2}}$; intuitively, 
our theorem thus implies that for a CSQ algorithm, the difficulty of learning one additive model with $M$ tasks is the same as learning $M$ single-index models separately. 
To establish this lower bound, we construct a function class $\mathcal{F}$ by choosing $f_m = \He_p$ and randomly sampling the directions $v_m$ from some set on sphere $S\in \mathbb{S}^{d-1}$. 
Here we briefly explain the difference between the two statements, both implying the same dependence on the number of tasks $M$ and ambient dimensionality $d$, but the target $L^2$ error and dependence on the number of queries $Q$ differ. 
\begin{remark}
The target error of the first lower bound is $\Omega(\frac{1}{M})$, and the dependency on the number of queries is logarithmic; this is obtained by a straightforward extension of \citet{damian2022neural} which handles single-index $f_*$. 

On the other hand, establishing the latter bound with $\Omega(1)$ error, which implies the failure to identify a constant fraction of $\{v_1,\cdots,v_M\}$, is nontrivial. This is because in the additive setting $\frac{1}{\sqrt{M}}\sum_{m=1}^M\frac{1}{\sqrt{p!}}\He_p(v_m^\top x)$, the query is not known beforehand, and hence the (adversarial) oracle need to simultaneously prevent the learning of as many directions from $v_1,v_2,...,v_M$ as possible; whereas in the single-index setting $\frac{1}{\sqrt{p!}}\He_p(v_1^\top x)$, the oracle only need to ``hide'' one direction. Consequently, we cannot directly connect the identification of $\Omega(1)$-fraction of target directions in the additive setting to the CSQ lower bound for single-index model. 
To overcome this issue, we introduce a sub-class of CSQ termed noisy correlational statistical query, which adds a random (instead of adversarial) noise to the expected query. 
For this query model, proving the lower bound for learning single-index models can indeed be translated to our additive setting.
However, because the noise is random, the dependency on the number of queries no longer logarithmic but $Q^{\frac12}$. 
\end{remark}

\subsection{Full Statistical Query}
Going beyond CSQ, in this subsection we present a lower bound for full SQ algorithms. 
For a function $g\colon \mathcal{X}\times \mathcal{Y}\to [-1,1]$ and parameter $\tau$,
a full statistical query oracle $\mathrm{SQ}(g,\tau)$ returns 
$
    \mathbb{E}_{x,y}[g(x,y)]+\varepsilon,
$
where $\varepsilon$ is an arbitrary noise that is bounded by the tolerance $\tau$. 
Note that unlike the CSQ learner, here the query function can apply transformations to the target label $y$ and hence reduce the computational complexity. 

The full SQ setting is partially motivated by the observation that for sparse polynomials with a constant number of relevant dimensions, a more sample-efficient SQ algorithm that departs from gradient-based training is available: 
\citep{chen2020learning} showed that general multi-index polynomials of degree $q$ can be learned with $\tilde{O}(d)$ samples. 
The core of their analysis is a transformation of the label $y$ that reduces the leap complexity \citet{abbe2023sgd} to at most $2$ (similar transformations also appeared in context of phase retrieval \citep{mondelli2018fundamental,barbier2019optimal}); this enables a warm-start from the direction that overlaps with the relevant dimensions of $f_*$, from which point projected gradient descent can efficiently learn the target.

One might therefore wonder if the polynomial dimension dependence in Theorem~\ref{theorem:CSQ-Appendix} is merely an artifact of restriction to correlational queries. 
However, we demonstrate that for general SQ algorithms, our additive model class is also challenging to learn despite the additive structure and near-orthogonality constraint. 
We consider the scaling where $M$ increases with $d$, i.e., $M\asymp d^\gamma$ with some $\gamma>0$, and show that the dimension exponent in the statistical complexity can be arbitrarily large for any fixed $\gamma$, by varying $p,q=O_d(1)$.
\begin{theorem}[SQ lower bound]\label{theorem:SQ-Appendix}
    Fix $0<\gamma\leq p$ and $\varsigma>0$ arbitrarily, and consider the number of tasks $M=\Theta(d^\gamma)$.
    For any $\rho>0$, there exist constants $p,q=O_d(1)$ depending only on $\gamma$ and $\rho$, and a problem class $\mathcal{F}\subset\mathcal{F}_{d,M,\varsigma}^{p,q}$ satisfying Assumptions \ref{assump:additive} and \ref{assumption:diversity} for which 
    an SQ learner outputting $\hat{f}$ from $Q$ queries requires the following tolerance to learn a random choice of $f_*\sim \mathcal{F}$, 
    \begin{align}
        \tau \lesssim d^{-\rho}, 
    \end{align}
    otherwise, we have $\|f_* - \hat{f}\|_{L^2}^2 \gtrsim 1$ with probability at least $1 - Q e^{-\Omega(\sqrt{d})}$.
\end{theorem}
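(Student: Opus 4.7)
The plan is to prove the lower bound via the statistical-query-dimension framework of Feldman-Grigorescu-Reyzin-Vempala-Xiao, which reduces SQ-learnability to hypothesis testing among a large family of target distributions whose induced joint laws have small pairwise chi-squared correlation relative to a null reference. Take the reference $D_0$ to be the product law with $x\sim\mathcal{N}(0,I_d)$ and $y\sim\mathcal{N}(0,\varsigma^2)$ independent, and for each configuration $V=(v_1,\dots,v_M)$ satisfying Assumption~\ref{assumption:diversity}, let $D_V$ denote the joint law of $(x,f_*^V(x)+\nu)$ with $\nu\sim\mathcal{N}(0,\varsigma^2)$ and $f_*^V(x)=\frac{1}{\sqrt M}\sum_m h(v_m^\top x)$ for a common univariate link $h$ to be chosen. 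A direct moment-generating-function computation on the Gaussian noise gives the closed form
\begin{equation*}
\chi_{D_0}(D_V,D_{V'}) \;=\; \E_{x\sim\mathcal{N}(0,I_d)}\!\left[\exp\!\big(f_*^V(x)\,f_*^{V'}(x)/\varsigma^2\big)\right] - 1.
\end{equation*}

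For the family $\mathcal{V}$ I would fix a maximal packing $S\subset\mathbb{S}^{d-1}$ of unit vectors with pairwise inner product at most $d^{-1/2}\mathrm{polylog}(d)$, which contains $e^{\Omega(d)}$ vectors by a volume argument, and take $\mathcal{V}$ to be the $M$-element subsets of $S$, giving $|\mathcal{V}|\ge e^{\Omega(Md)}$. The link $h$ is a univariate polynomial of degree $q$ with information exponent $p$ and $\E_{z\sim\mathcal N(0,1)}[h(z)^2]=1$, designed so that the Gaussian convolution $h(z)+\nu$ is hard to distinguish from pure noise via low-degree probes of $z$: concretely, for every normalized polynomial transformation $T$ on $\R$ and every Hermite $\He_j$ with $j<K=K(p,q)$, the overlap $|\E_{z,\nu}[T(h(z)+\nu)\,\He_j(z)]|$ is $O(d^{-\rho'})$ for some $\rho'\gg\rho$. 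Such $h$ can be obtained by an explicit moment-matching construction that kills the first $K$ Hermite correlations of every low-degree transformation of $h+\nu$, analogous to constructions witnessing large generative exponent in the single-index literature; the threshold $K$ is driven arbitrarily large by taking the constants $p,q$ large enough in $\gamma,\rho$.

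The technical heart of the argument is the estimation of the pairwise correlation. Expanding the exponential yields
\begin{equation*}
\chi_{D_0}(D_V,D_{V'}) \;=\; \sum_{k\ge 1}\frac{1}{k!\,\varsigma^{2k}}\,\E_x\!\left[\big(f_*^V(x)\,f_*^{V'}(x)\big)^{k}\right],
\end{equation*}
and each moment expands into a $(2k)$-fold sum of Gaussian expectations $\E_x[\prod_i h(w_i^\top x)]$ with $w_i\in\{v_m\}\cup\{v_{m'}'\}$. By Wick/Isserlis each expectation decomposes into pairings of the associated half-edges, which I would classify into \emph{within-side} pairings (two $h$'s from the same $V$ or the same $V'$, contributing a factor of $\langle v_m,v_{m'}\rangle^p$) and \emph{cross-side} pairings (contributing $\langle v_m,v_{m'}'\rangle^p$). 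Near-orthogonality of the packing bounds every cross-side edge by $d^{-p/2}$, while the dangerous \emph{diagonal self-loops} at $m_i=m_j$ are exactly the pairings that the high-generative-exponent choice of $h$ is engineered to suppress; combined with the $1/\sqrt M$ normalization and careful bookkeeping on the number of nontrivial pairings, this yields a per-term bound of the form $(C\,M/d^{p/2})^{k}$ times a combinatorial factor that stays summable once $p,q$ are large enough. Choosing $p,q$ so that $p/2>\gamma+\rho$ and the combinatorics are under control then gives $\chi_{D_0}(D_V,D_{V'})\lesssim d^{-2\rho}$ for a $(1-o(1))$ fraction of pairs $V\ne V'\in\mathcal{V}$.

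Feeding this bound into the statistical-dimension-with-average-correlation lemma of FGRV, any SQ algorithm using $Q$ queries of tolerance $\tau\le d^{-\rho}$ can succeed on at most a fraction $Q\cdot e^{-\Omega(\sqrt d)}$ of configurations in $\mathcal V$; combined with a standard reduction showing that any $L^2$-accurate estimator must identify the underlying $V$ up to a small exceptional set, this yields $\|f_*-\hat f\|_{L^2}^2\gtrsim 1$ with probability at least $1-Q\,e^{-\Omega(\sqrt d)}$, as claimed. The main obstacle will be the Step-3 moment control: the naive Wick expansion contains diagonal contributions of order $1$ that do not decay with $d$, and suppressing them requires explicitly constructing a polynomial link $h$ of bounded degree whose Gaussian convolution $h+\nu$ matches a standard Gaussian against every low-degree polynomial test function on $z$ to a large number of moments. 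Building such an $h$ with both information exponent $p$ and generative leap of the required order, and then verifying that every combinatorial type of Wick pairing is bounded as claimed for all $k$, is where the bulk of the technical work lies.
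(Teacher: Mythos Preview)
Your plan has a genuine gap at the very first technical step: the pairwise $\chi^2$-correlation you write down is infinite. With a polynomial link $h$ of degree $q\ge p>2$, the product $f_*^V(x)\,f_*^{V'}(x)$ is a polynomial of degree $2q\ge 6$ in the Gaussian vector $x$, and it is unbounded above along directions in $\mathrm{span}(V\cup V')$. Hence $\E_x\!\big[\exp\!\big(f_*^V(x)f_*^{V'}(x)/\varsigma^2\big)\big]=+\infty$; equivalently, the moment series $\sum_{k\ge 1}\frac{1}{k!\varsigma^{2k}}\E_x[(f_*^Vf_*^{V'})^k]$ diverges, since the $k$-th term involves the $(2qk)$-th Gaussian moment and grows like $(ck^{q-1})^k$. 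So the FGRV statistical-dimension bound with null reference $D_0$ is vacuous here, no matter how you design $h$. The difficulty you flag (``diagonal contributions of order~1'') is real but secondary; even if each fixed-$k$ Wick term were tamed, the series itself does not sum. Relatedly, your link condition---that every polynomial transformation $T(h+\nu)$ have vanishing low-order Hermite overlap with $z$---is essentially a large generative-exponent requirement, which the paper explicitly notes is \emph{unattainable} for polynomial $h$ (polynomials have generative exponent $\le 2$).

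The paper sidesteps both obstacles by abandoning $\chi^2$ and arguing directly on query values. Because $|g|\le 1$ and the label noise is Gaussian, $z\mapsto\E_\nu[g(x,z+\nu)]$ is $C^\infty$ with all derivatives $O_\varsigma(1)$; Taylor-expanding to a \emph{fixed} order $K$ in the increment $\delta=\frac{1}{\sqrt M}f(v_m^\top x)$ gives a remainder $O(M^{-(K+1)/2})$, which is small because $M=d^\gamma$ with $\gamma>0$. The first $K$ Taylor terms involve only $f(v_m^\top x)^k$ for $k\le K$, and the paper constructs a \emph{superorthogonal} polynomial $f$ with $\E_z[f(z)^k\He_l(z)]=0$ for all $1\le k\le K$, $1\le l\le L$---a strictly weaker requirement than large generative exponent, and one that is provably achievable for polynomial $f$. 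This lets one swap each $v_m$ for $v_m'$ with a change of $o(M^{-1}d^{-\rho})$ in the query, accumulating to $o(d^{-\rho})$ over all $M$ swaps. In short, the paper exploits the \emph{additive} $1/\sqrt M$ structure to replace your infinite exponential series by a finite-order expansion, and replaces your generative-exponent assumption by the weaker superorthogonality that polynomials can actually satisfy.
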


This lower bound illustrates the intrinsic difficulty of our large-$M$ setting compared to the well-studied low-dimensional polynomial regression. 
For the latter, gradient-based training of polynomial-width network can achieve low loss using $n=\Omega(d^{p-1})$ samples \citep{abbe2023sgd}, and the CSQ lower bound gives a $\Omega(d^{p/2})$ complexity \citep{damian2022neural}; however, by applying nonlinear transformation to the labels, an efficient SQ algorithm can achieve $\tilde{O}(d)$ complexity \citep{chen2020learning}. In contrast, while our additive model has $O(Md)$ total parameters to be estimated, the dimension dependence in the SQ lower bound can be arbitrarily larger than  $M$ times the single-index complexity. Therefore, unlike the CSQ lower bound (Theorem~\ref{theorem:CSQ-Appendix}), Theorem~\ref{theorem:SQ-Appendix} suggests that for an SQ algorithm, learning one additive model with $M$ tasks is more difficult than learning $M$ single-index models separately. 
See Section~\ref{sec:stat-comp-gap} for more discussions. 

\begin{remark}
A concurrent work \citep{damian2024computational} established an SQ lower bound for single-index model that implies a sample complexity of $n\gtrsim d^{k^*/2}$, where $k^*$ is a prescribed \textit{generative exponent} that can be made arbitrarily large. While this also entails the existence of link functions not learnable by SQ algorithms with $\tilde{O}(d)$ samples, the source of computational hardness is fundamentally different than our setting: \cite{damian2024computational} constructed ``hard'' non-polynomial link functions that preserve high information exponent after arbitrary transformations; 
in contrast, we consider \textit{polynomial} link functions which have generative exponent $k^*\le 2$, so the lower bound relies on the additive structure with a large number of tasks $M=\omega(1)$. 
\end{remark}

\subsubsection{Outline of Theoretical Analysis} 
In the proof of the CSQ lower bounds, Hermite polynomials were used to construct the ``worst-case'' target functions \citet{damian2022neural}. Intuitively, this is because higher-order Hermite polynomials have small $L^2$-correlation under Gaussian input, and hence correlational queries are ineffective.
For the SQ learner, which goes beyond correlational queries, a hard function class should also ``hide'' information from nonlinear transformations to the target labels. As we will see, for our additive model, the class of target functions should maintain orthogonality with respect to queries after polynomial transformations. 
We refer to this property as \textit{superorthogonality}, and the following proposition shows the existence of such functions.
\begin{proposition}[Superorthogonal polynomials]\label{prop:superorthogonality}
    For any $K$ and $L$, 
    there exists a polynomial $f:\R\to\R$ that is not identically zero and satisfies the following:
    \begin{align}
        \int (f(x))^k \He_l(x) e^{-x^2/2}\mathrm{d}x = 0,
    \end{align}
    for every $1\leq k\leq K$ and $1\leq l\leq L$.
\end{proposition}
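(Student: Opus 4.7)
A first observation is that the trivial constant polynomial $f \equiv 1$ already satisfies the stated conditions, since by orthogonality to $\He_0$ one has $\int \He_l(x) e^{-x^2/2} \mathrm{d}x = 0$ for every $l \geq 1$, so $\int 1^k \He_l(x) e^{-x^2/2} \mathrm{d}x = 0$. For the SQ lower bound application, however, a non-constant polynomial is required (otherwise the resulting $f_*$ would be constant and trivial to learn), so the nontrivial content is to exhibit a non-constant such $f$. My plan is to work in the Hermite basis and reduce to an algebraic existence problem.

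Concretely, parametrize $f = \sum_{j=0}^N c_j \He_j$ for a large $N$ to be chosen later. The $k = 1$ conditions, $\int f \He_l e^{-x^2/2}\mathrm{d}x = 0$ for $1 \leq l \leq L$, are linear in the coefficients and simply force $c_l = 0$ for $l = 1, \dots, L$, so the free parameters are $c_0, c_{L+1}, \dots, c_N$. The remaining $(K-1)L$ conditions, one for each pair $(k,l)$ with $2 \leq k \leq K$ and $1 \leq l \leq L$, become polynomial equations of degree $k$ in these free parameters; after also fixing $c_0 = 0$ they are homogeneous. They can be written out explicitly from the Hermite product formula $\He_a \He_b = \sum_{s=0}^{\min(a,b)} \binom{a}{s}\binom{b}{s} s!\,\He_{a+b-2s}$ iterated $k$ times.

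Once the problem is in this form, the task is to show that for $N$ sufficiently large (the natural count suggests $N \gtrsim KL$), the system of $(K-1)L$ homogeneous polynomial equations in $N - L$ real variables has a non-trivial real zero. Over $\mathbb{C}\mathbb{P}^{N-L-1}$ a standard projective dimension argument immediately gives a non-empty common zero set, but this is only complex existence. The main obstacle is descent to $\mathbb{R}$: a complex variety cut out by real equations may have empty real locus (e.g.~$x^2 + y^2 + 1 = 0$), so dimension counting alone does not suffice.

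To overcome this, I would exploit parity structure: restricting to odd polynomials $f$ (only odd-indexed $c_j$ nonzero) immediately halves the number of effective constraints, since $\mu_k(t) := \mathbb{E}[f(X+t)^k]$ satisfies $\mu_k(-t) = (-1)^k \mu_k(t)$ and therefore the derivatives $\mu_k^{(l)}(0)$ of the wrong parity vanish automatically. A complementary strategy is induction on $K$: start from $f_0 = \He_{L+1}$ (which handles $K = 1$), and at each step add a high-degree perturbation $\sum_{j \gg L} \delta_j \He_j$, where the free modes of degree much larger than $L$ can be tuned to fix the $k = K$ constraints while leaving the lower-order $(f^k)_l$ for $k \leq K-1$ unchanged, because products of sufficiently spread-out high-degree Hermite polynomials produce controlled contributions to low-degree components. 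I expect the hardest part to be organizing this perturbation so that the induction closes cleanly; once that is done, the existence of a non-constant real $f$ with the desired superorthogonality follows.
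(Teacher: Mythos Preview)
Your opening observation is correct: $f\equiv 1$ satisfies the proposition as literally stated, so in that sense the statement is trivial. The paper's construction, and its application to the SQ lower bound, implicitly requires a non-constant $f$ (indeed one whose Hermite coefficients of degrees $1,\dots,L$ vanish so that the link has information exponent at least $L+1$), so the real content is the non-constant case. There your induction-on-$K$ plan has a concrete obstruction. If the perturbation $\delta$ is supported on Hermite modes of degree $m\gg L$, then the first-order effect on the new $k=K$ constraints is $K\langle f_0^{K-1}\He_l,\delta\rangle$, which is identically zero once $m>\deg(f_0^{K-1})+l$; so high-degree modes give you no linear handle on the equations you are trying to fix. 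At the same time the higher-order terms $\delta^2,\dots,\delta^K$ \emph{do} contribute to low Hermite degrees (for any $m\ge 1$, $\He_m^2$ has a nonzero $\He_2$ component, namely $m\cdot m!$), so they perturb all of the previously satisfied $k\le K-1$ constraints as well. ``Spreading out'' the support of $\delta$ does not help: the diagonal terms $\delta_j^2\He_j^2$ are always present. You are thus back to a fully coupled real nonlinear system, and neither the projective dimension count nor an implicit-function argument delivers a real zero without additional structure that you have not supplied.

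The paper avoids the algebra entirely and uses a topological argument. It builds a family $\{h_a\}_{a\in[-1,1]^{KL}}$, one parameter per constraint, with the property that setting $a_{k,l}=\pm 1$ forces the sign of $\int h_a^k\He_l\,e^{-x^2/2}\mathrm{d}x$; a Poincar\'e--Miranda style intersection argument on the cube then yields a simultaneous zero. The decoupling across different powers $k$ at each spatial location is achieved by a staircase of indicator functions with heights $1/K,2/K,\dots,1$, so that adjusting the $K$ interval widths moves $\big(\int h_a,\int h_a^2,\dots,\int h_a^K\big)$ through an invertible Vandermonde matrix; the decoupling across $l$ comes from modulating the widths by $\He_l$ at each grid point. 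Finally, a weighted Hermite-series approximation replaces the step functions by polynomials while preserving the boundary signs. This route is constructive enough to produce the explicit $K=L=2$ example quoted in the paper, which your algebraic approach would not easily yield.
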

Because the proof is not constructive, it is difficult to determine the specific form of $f$ for general $K$ and $K$. Below, we present specific examples for small $K,L$:
\begin{example}
    Examples of link function $f$ in Proposition~\ref{prop:superorthogonality} are given as follows.
    \begin{itemize}
        \item[(i)] For $K=1$ and $L\in \mathbb{N}$, $f(x)=\He_{L+1}(x)$.
        \item[(ii)] For $K=L=2$, $f(x) = \He_4(x)- \frac{4}{15}\He_6(x) + \frac{11}{280}\He_8(x)-\frac{19}{4725}\He_{10}(x)+\frac{311}{997920}\He_{12}(x)-\frac{719}{37837800}\He_{14}(x)+\frac{14297}{15567552000}\He_{16}(x)-\frac{35369}{1042053012000}\He_{18}(x)+
        (\frac{35369}{41682120480000}-\frac{1}{83364240960000}\sqrt{\frac{11163552839}{38}})
        \He_{20}(x)$.
    \end{itemize}
\end{example}
\begin{remark}
While the $K=1$ case follows from the orthogonality of Hermite polynomials, the $K=L=2$ case is already nontrivial --- it implies the existence of link functions with information exponent $p>2$, and furthermore, squaring the function cannot reduce the information exponent to 1 or 2; note that for $f(x) = \He_k(x)$, the information exponent of $f^2$ is at most 2. 
\end{remark}

In the lower bound construction we use the polynomial $f$ from the above lemma with suitably large $K$ and $L$.
Specifically, we let $f_m=f$ for all $m$ and sample $v_m$ from a set $S\subset \mathbb{S}^{d-1}$ with almost orthogonal elements.
We prove that the two target functions $\frac{1}{\sqrt{M}}\sum_{m=1}^M f(v_m^\top x)$ and $\frac{1}{\sqrt{M}}\sum_{m=1}^M f({v_m'}^\top x)$, where $\{v_m'\}_{m=1}^M$ is an independent copy of $\{v_m\}_{m=1}^M$, cannot be distinguished by SQ unless the tolerance is below $d^{-\rho}$.
To show this, we Taylor expand the target function and perform an entry-wise swapping of $f(v_m^\top x)$ with $f({v_m'}^\top x)$, in which smoothness of the function is entailed by additive Gaussian noise.
Specifically, for $\delta \ll 1$, we have
    \begin{align}
        \mathbb{E}_{\eps\sim\mathcal{N}(0,\varsigma^2)}[g(x,z+\delta+\varepsilon)]
        =
        \sum_{k=0}^K a_{k}(x,z) \delta^k + O(\delta^{K+1}),
    \end{align}
    where $a_{k}(x,z)=\frac{1}{k!}\int g(x,w) \left(\frac{\mathrm{d}^k}{\mathrm{d} z^k} e^{-(w-z)^2/2\varsigma^2}\right)\mathrm{d}w=O(1)$
(see Lemma~\ref{lemma:NoiseSmoothing} for derivation). 
Below we heuristically demonstrate how swapping is conducted for $m=1$:
\begin{align}
&\textstyle  \mathbb{E}\left[g\left(x,\frac{1}{\sqrt{M}}\sum_{m=1}^{M}f(v_{m}^\top x)+\varepsilon\right)\right]
   \\ \overset{(i)}{\approx}& \textstyle
\mathbb{E}\left[g\left(x,\frac{1}{\sqrt{M}}\sum_{m=2}^{M}f(v_m^\top x)+\varepsilon\right)\right]
    +
    \sum_{k=1}^K \mathbb{E}\left[a_{k}\left(x,\frac{1}{\sqrt{M}}\sum_{m=2}^{M}f(v_m^\top x)\right) \frac{f^k(v_1^\top x)}{M^{\frac{i}{2}}}\right]
\\ \overset{(ii)}{\approx}& \textstyle
\mathbb{E}\left[g\left(x,\frac{1}{\sqrt{M}}\sum_{m=2}^{M}f(v_m^\top x)+\varepsilon\right)\right]
    +
    \sum_{k=1}^K \mathbb{E}\left[a_{k}\left(x,\frac{1}{\sqrt{M}}\sum_{m=2}^{M}f(v_m^\top x)\right) \right] \frac{\mathbb{E}\left[f^k(v_1^\top x)\right]}{M^{\frac{i}{2}}}
 \\  \overset{(iii)}{\approx}& \textstyle
    \mathbb{E}\left[g\left(x,\frac{1}{\sqrt{M}}\sum_{m=1}^{M-1}f(v_{m}^\top x)+\frac{1}{\sqrt{M}}f({v_{1}'}^\top x)+\varepsilon\right)\right], 
\end{align}
where $(i)$ follows from Taylor expansion, $(ii)$ is due to $f^k$ being orthogonal to $\He_1,\dots,\He_L$ and hence its correlation to $a_k$ (which does not contain information of $v_1$) is approximated by its $\He_0$ component, and $(iii)$ is obtained by swapping $\mathbb{E}[f^k({v_{1}}^\top x)]$ and $\mathbb{E}[f^k({v_{1}'}^\top x)]$ and using the argument in reverse.

\subsubsection{Statistical-to-computational Gap}
\label{sec:stat-comp-gap}

The additive model \eqref{eq:intro-f*} contains $M$ directions $v_1,v_2,...v_M\in\R^d$ and $M$ univariate link functions $f_1,f_2,...f_M:\R\to\R$ to be estimated; therefore, we intuitively expect a sample size of $n\gtrsim Md$ to be information-theoretically sufficient to learn this function class. 
Indeed, following \cite{suzuki2018adaptivity}, it is easy to check that the covering number of width-$J$ neural network is $\log \mathcal{N}(\delta,\{\frac1J\sum_{j=1}^M a_j\sigma_j(w_j^\top \cdot +b_j)\},\|\cdot\|_\infty) \lesssim Jd(\log J + \log d)$.
If we take $J=Mq$ and let $\sigma_j=\He_i(\cdot ) \ ((q-1)M<j\leq qM)$, the network $\frac1J\sum_{j=1}^M a_j\sigma_j(w_j^\top \cdot +b_j)$ can perfectly approximate the additive model \eqref{eq:intro-f*}.
Therefore, by applying a standard generalization error bound (e.g., see \cite[Lemma 4]{schmidt2020nonparametric}), we can upper bound the squared loss by $\frac{1}{n}\log\mathcal{N}(\delta,\{\frac1J\sum_{j=1}^M a_j\sigma_j(w_j^\top \cdot +b_j)\},\|\cdot\|_\infty)$, which yields the following proposition (the detailed derivation of which we omit due to the standard proof).  
\begin{proposition}
    For the additive model \eqref{eq:intro-f*} with any $p$ and $q$, 
    there exists an (computationally inefficient) algorithm that returns a function $\hat{f}$ with $L^2$-error of
    $\varepsilon$ using 
    $n = \tilde{O}(Md\varepsilon^{-2})$ samples.
    \label{prop:optimal-algo}
\end{proposition}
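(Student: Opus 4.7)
The plan is to realize the target class exactly as a subset of a two-layer network with Hermite activations, bound its covering number, and invoke a standard empirical risk minimization (ERM) analysis, as the paragraph preceding the proposition already sketches.

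First, I would express $f_*$ as a network of width $J=Mq$ as follows. Under Assumption~\ref{assump:additive} each link has the finite Hermite decomposition $f_m = \sum_{i=p}^q \alpha_{m,i}\He_i$ with $\sum_i \alpha_{m,i}^2=1$ (so $|\alpha_{m,i}|\le 1$) and $\|v_m\|=1$. Indexing neurons by $(i,m)\in[q]\times[M]$, set $\sigma_{(i,m)}\equiv \He_i$ (bias unused), $w_{(i,m)} = v_m$, and $a_{(i,m)} = q\,\alpha_{m,i}/\sqrt{M}$ for $p\le i\le q$ (otherwise $0$); then $f_\Theta(x) = \frac{1}{J}\sum_j a_j\sigma_j(w_j^\top x) = f_*(x)$ identically, so $f_*$ belongs to the bounded class
\begin{align*}
    \mathcal{H}_J = \Bigl\{\tfrac{1}{J}\textstyle\sum_{j=1}^J a_j\sigma_j(w_j^\top x) : \|w_j\|\le 1,\, |a_j|\le c(p,q)\Bigr\}.
\end{align*}

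Second, I would bound the covering number of $\mathcal{H}_J$ on the high-probability event $\{\|x\|\le R\}$ with $R = \Theta(\sqrt{d\log d})$. Each $\He_i$ is a degree-$q$ polynomial, hence Lipschitz with constant $\mathrm{poly}(R)$ on the relevant range $\{|w_j^\top x|\le R\}$; a standard parameter-discretization argument along the lines of \cite{suzuki2018adaptivity} then yields
\begin{align*}
    \log \mathcal{N}(\delta,\mathcal{H}_J,\|\cdot\|_\infty) \;\lesssim\; Jd\,\log(Jd/\delta) \;=\; \tilde O(Md).
\end{align*}
I would then run clipped ERM on $n$ i.i.d.\ samples over (a $\delta$-net of) $\mathcal{H}_J$ with the squared loss. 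Because $f_*\in\mathcal{H}_J$ (realizability) and $\nu$ is sub-Gaussian with variance $\varsigma^2$, the standard fast-rate nonparametric regression bound for bounded regression classes (e.g., \cite[Lemma~4]{schmidt2020nonparametric}) gives
\begin{align*}
    \|\hat f - f_*\|_{L^2}^2 \;\lesssim\; \frac{\log\mathcal{N}(1/n,\mathcal{H}_J,\|\cdot\|_\infty)}{n}\,\mathrm{polylog}(n,d,\varsigma) \;=\; \tilde O(Md/n).
\end{align*}
Requiring the right-hand side to be at most $\varepsilon^2$ produces the claimed $n = \tilde O(Md\varepsilon^{-2})$; the contributions from the low-probability event $\{\|x\|>R\}$ and from label clipping are absorbed by Gaussian tail bounds.

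The only non-routine point is to secure the fast $\tilde O(\log\mathcal{N}/n)$ rate rather than the slower $\tilde O(\sqrt{\log\mathcal{N}/n})$ rate; this hinges on the exact realizability of $f_*$ in $\mathcal{H}_J$, which is granted by the construction above, together with a careful truncation of both $x$ and $y$ to tame the unbounded Gaussian tails. Every other step is standard nonparametric regression. Since the procedure is ERM over a $\delta$-net of $\mathcal{H}_J$, it is computationally inefficient, as the proposition itself concedes; accordingly, the argument only certifies the information-theoretic sample complexity, which is exactly what is asserted.
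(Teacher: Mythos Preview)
Your proposal is correct and follows essentially the same argument as the paper: realize $f_*$ exactly in a width-$J=Mq$ network with Hermite activations, invoke the covering-number bound $\log\mathcal{N}\lesssim Jd\log(Jd/\delta)$ from \cite{suzuki2018adaptivity}, and apply the fast-rate ERM bound of \cite[Lemma~4]{schmidt2020nonparametric} to obtain $\|\hat f-f_*\|_{L^2}^2=\tilde O(Md/n)$. The only additions beyond the paper's sketch are the explicit coefficient computation and the truncation to handle Gaussian tails, both of which are routine.
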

The procedure in Proposition~\ref{prop:optimal-algo} involves finding the empirical risk minimizer of a neural network, which can be computationally infeasible in polynomial time \cite{blum1988training,arora2016understanding}. Nevertheless, the existence of a statistically efficient algorithm that learns \eqref{eq:intro-f*} in $n\gtrsim Md$ samples suggests a statistical-to-computational gap under the SQ class (with polynomial compute; see \cite{dudeja2022statistical,damian2024computational} for related discussions), since the lower bound in Theorem~\ref{theorem:SQ-Appendix} implies a worse sample complexity of $n\gtrsim (Md)^{\rho}$ for SQ learners, where $\rho$ can be made arbitrarily large by varying $p$ and $q$.

Importantly, the same statistical-computational gap is \textit{not present} in the finite-$M$ setting (in terms of dimension dependence), due to the restriction of $f_*$ being polynomial; specifically, when $M=O_d(1)$, there exists an efficient SQ algorithm that learns arbitrary multi-index polynomials using $n=\tilde{O}_d(d)$ samples \cite{chen2020learning}, and in the single-index setting, polynomial link functions have \textit{generative exponent} at most 2 \cite{damian2024computational}, and hence the SQ lower bound only implies that a sample size of $n\gtrsim d$ is necessary.  
In contrast, in our large-$M$ setting, the sample complexity of SQ algorithms may have large polynomial dimension dependence, despite the link functions being polynomial. 
The key observation is that when the number of tasks $M$ is diverging, the nonlinear label transformation cannot obtain higher-order exponentiation of the individual single-index tasks, which is employed by SQ learners such as \cite{mondelli2018fundamental,chen2020learning}; hence the information exponent of the link function may still be large after the nonlinear transformation.

\section{Conclusion and Future Directions}

In this work, we studied the learning of additive models, where the number of (diverse) single-index tasks $M$ grows with the dimensionality $d$.  We showed that a layer-wise SGD algorithm achieves $\tilde{O}(Md^{p-1})$ sample complexity, and a subset of first-layer neurons \textit{localize} into each additive task by achieving significant alignment. 
We also investigated the computational and statistical barrier in learning the additive model class by establishing lower bounds for both Correlational SQ and Full SQ learners. Our lower bound suggests a computational-to-statistical gap under the SQ class, which highlights the fundamental difference between our large $M$ setting and the previously studied finite-$M$ multi-index regression. 

We highlight a few future directions. 
First observe that there is a gap between our complexity upper bound for gradient-based training and the computational lower bounds, and one would hope to design more efficient algorithms that close such gap. It is also worth noting that in certain regimes, SGD on the squared loss may outperform the CSQ complexity due to the presence of higher-order (non-correlational) information, as demonstrated in very recent works \cite{dandi2024benefits,lee2024neural,arnaboldi2024repetita}; an interesting direction is to examine if similar mechanisms can improve the statistical efficiency of SGD in our setting. It is also important to relax the near-orthogonality condition for the true signals, and explore how our derived upper and lower bounds are affected.
Last but not least, we may theoretically analyze task localization in more complicated architectures beyond two-layer neural network, such as multi-head attention \cite{chen2024training}.

\bigskip

\subsection*{Acknowledgement}
The authors thank Alberto Bietti, Satoshi Hayakawa, Isao Ishikawa, Jason D.~Lee, Eshaan Nichani, and Atsushi Nitanda for insightful discussions. KO was partially supported by JST, ACT-X Grant Number JPMJAX23C4. TS was partially supported by JSPS KAKENHI (24K02905) and JST CREST (JPMJCR2015).

\bigskip

{

\fontsize{10}{11}\selectfont     

\bibliography{ref}
\bibliographystyle{alpha}

}

\newpage
{
\renewcommand{\contentsname}{Table of Contents}
\tableofcontents
}
 
\newpage
\appendix

\allowdisplaybreaks

\newcommand{\alowerbound}{\underline{A}}

\section{Preliminaries} 

\label{app:prelim}

\subsection{Hermite Polynomials}
For $k\in \Z_+$, the $k$-th Hermite polynomial $\He_k\colon\R\to\R$ is a univariate function defined as $\He_k(t) = e^{\frac{t^2}{2}}\frac{\mathrm{d}^k}{\mathrm{d}t^k} e^{-\frac{t^2}{2}}$.
The Hermite polynomials form an orthogonal basis for the Hilbert space of square-integrable functions. 
We provide several basic properties of the Hermite polynomials.
\begin{lemma}
    The Hermite polynomials satisfy the following properties:
    \begin{itemize}
        \item {\bf Derivatives:} $$\frac{\mathrm{d}}{\mathrm{d}t}\He_k(t)=k\He_{k-1}(t).$$
        \item {\bf Integration by parts (I):}
        $$\int \He_k(t)f(t)\frac{1}{\sqrt{2\pi}}e^{-\frac{t^2}{2}}\mathrm{d}t=\int \He_{k-1}(t)f'(t)\frac{1}{\sqrt{2\pi}}e^{-\frac{t^2}{2}}\mathrm{d}t.$$
        \item {\bf Integration by parts (II):}
        For $u\in \mathbb{S}^{d-1}(1)$ and $v\in \mathbb{R}^d$, 
                $$\int \He_k(u^\top x)f(v^\top x)\frac{1}{\sqrt{(2\pi)^d}}e^{-\frac{\|x\|^2}{2}}\mathrm{d}x=(u^\top v)\int \He_{k-1}(u^\top t)f'(v^\top x)\frac{1}{\sqrt{(2\pi)^d}}e^{-\frac{\|x\|^2}{2}}\mathrm{d}x.$$
                
                \item {\bf Orthogonality (I):} $$\int \He_k(t)\He_l(t)\frac{1}{\sqrt{2\pi}}e^{-\frac{t^2}{2}}\mathrm{d}t=k!\delta_{k,l}.$$
            \item {\bf Orthogonality (II):}
        For $u,v\in \mathbb{S}^{d-1}(1)$, 
                $$\int \He_k(u^\top x)\He_l(v^\top x)\frac{1}{\sqrt{(2\pi)^d}}e^{-\frac{\|x\|^2}{2}}\mathrm{d}x=k!(u^\top v)^k\delta_{k,l}.$$
            \item {\bf Hermite expansion:} if $f\in \R\to\R$ is square-integrable with respect to the standard Gaussian, $$f(t)\overset{L^2}{=}\sum_{k=0}^\infty\frac{\alpha_{k}}{k!}\He_k(t),\quad \alpha_{k} = \int f(t)\He_k(t)\frac{1}{\sqrt{2\pi}}e^{-\frac{t^2}{2}}\mathrm{d}t.$$
    \end{itemize}
\end{lemma}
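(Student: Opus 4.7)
The plan is to derive the entire list from the Rodrigues-type formula $\He_k(t) = e^{t^2/2}\frac{d^k}{dt^k}e^{-t^2/2}$, proceeding first from the pointwise differential identities to the integral ones, then to the one-dimensional and multivariate orthogonality, and finally to completeness. The unifying auxiliary identity I would establish at the outset is
$$\frac{d}{dt}\bigl[\He_k(t)\phi(t)\bigr] = -\He_{k+1}(t)\phi(t),$$
where $\phi(t) = (2\pi)^{-1/2}e^{-t^2/2}$; this is essentially a restatement of Rodrigues, and expanding the left-hand side via the product rule immediately yields the three-term recurrence $t\He_k = \He_{k+1} + k\He_{k-1}$, from which the derivative identity $\He_k' = k\He_{k-1}$ follows by induction on $k$ (an alternative derivation differentiates the generating function $\sum_k (s^k/k!)\He_k(t) = e^{st - s^2/2}$ in $t$ and matches coefficients).

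Integration by parts (I) is then a one-line consequence: writing $\He_k(t)\phi(t) = -\frac{d}{dt}[\He_{k-1}(t)\phi(t)]$ and integrating by parts moves one derivative onto $f$ and lowers the Hermite index by one, with boundary terms vanishing under the usual polynomial-growth assumption. The multivariate identity (II) then reduces to (I) via an orthogonal change of coordinates: decompose $x = (u^\top x)u + x_\perp$ with $x_\perp \perp u$, apply (I) along the one-dimensional $u$-fiber, and observe that differentiating $f(v^\top x)$ in the $u$-direction produces exactly the chain-rule factor $u^\top v$; reassembling the remaining $(d-1)$-dimensional Gaussian integral then gives the stated formula.

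Finally, both orthogonality statements follow by iterating integration by parts against $f = \He_l$ and using $\He_l' = l\He_{l-1}$. For (I), each iteration yields $\int \He_k\He_l\phi\,dt = l\int\He_{k-1}\He_{l-1}\phi\,dt$; unrolling $\min(k,l)$ times produces $k!$ when $k = l$ and zero when $k \neq l$ (the $k < l$ case is handled by symmetry). The multivariate (II) is structurally identical, except every application of (II) emits an extra factor of $u^\top v$, yielding the claimed $k!(u^\top v)^k\delta_{k,l}$. The Hermite expansion statement is then the standard completeness of $\{\He_k/\sqrt{k!}\}$ as an orthonormal basis of $L^2(\mathbb{R},\phi\,dt)$: orthonormality is (I), while completeness follows from the density of polynomials in $L^2(\phi)$ (equivalently, from the fact that the exponentials $\{e^{st}\}_{s\in\mathbb{R}}$ lie in $L^2(\phi)$ and are realized as $L^2$-convergent Hermite series via the generating function), so that the Fourier coefficients $\alpha_k = \int f \He_k \phi\,dt$ recover $f$ in $L^2$. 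The one delicate point throughout is justifying the vanishing boundary terms in each integration by parts, which is harmless in our setting because every test function and every $\He_l$ appearing is a polynomial and thus is dominated by $e^{t^2/4}$ at infinity.
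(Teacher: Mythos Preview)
The paper does not actually prove this lemma: it is stated in the preliminaries as a collection of standard facts about Hermite polynomials, with no argument given. Your outline is correct and is exactly the standard textbook derivation (Rodrigues $\Rightarrow$ three-term recurrence and derivative rule $\Rightarrow$ integration by parts $\Rightarrow$ orthogonality $\Rightarrow$ completeness), so there is nothing to compare against and nothing to fix. One incidental remark: the Rodrigues formula as printed in the paper omits the factor $(-1)^k$, whereas all the listed properties (in particular $\He_k' = k\He_{k-1}$ and the positive orthogonality constant $k!$) are those of the usual probabilists' polynomials $\He_k(t) = (-1)^k e^{t^2/2}\tfrac{d^k}{dt^k}e^{-t^2/2}$; your auxiliary identity $\tfrac{d}{dt}[\He_k\phi] = -\He_{k+1}\phi$ is consistent with the latter convention, which is clearly the intended one.
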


\subsection{High Probability Bound on $y$}
\label{subsection:HPB-Y}
Recall that in the definition of $f_*$, we used a scaling factor of $\frac{1}{\sqrt{M}}$ instead of $\frac{1}{M}$ in order to ensure that the output is of order $\Theta(1)$ with high probability.
This subsection proves the $\Theta(1)$ output scale rigorously.
Since $v_m$ are almost orthogonal and $f_m(t)$ are mean-zero, $f_m(v_m^\top x)$ are almost independent mean-zero variables. 
Thus we expect $\frac{1}{\sqrt{M}}\sum_{m=1}^Mf_m(v_m^\top x)\to \mathcal{N}(0,1)$ by central limit theorem.
\begin{lemma}\label{lemma:HPB-Y}
    Let $\|v_m\|=1$ for all $m=1,2,\cdots,M$. 
    If $M \leq \max_{m\ne m'}|v_m^\top v_{m'}|^{-p}$ and $S$ is even, we have
    \begin{align}\label{eq:pNormofy-1}
        \mathbb{E}\left[\left\|\frac{1}{\sqrt{M}}\sum_{m=1}^Mf_m(v_m^\top x)\right\|_S\right]
        \leq
        \mathbb{E}\left[\left\|\frac{1}{\sqrt{M}}\sum_{m=1}^Mf_m(v_m^\top x)\right\|_S^S\right]^{1/S}
        \lesssim
        S^{q+1}
        .
    \end{align}
    Thus, by applying Lemma~\ref{lemma:HPBwithpNorm}, we have
    \begin{align}
        \left|\frac{1}{\sqrt{M}}\sum_{m=1}^Mf_m(v_m^\top x)\right| 
        \lesssim (\log 1/\delta)^{q+1}
        = \tilde{O}(1),\label{eq:pNormofy-5}
    \end{align}
    with probability at least $1-\delta$.
\end{lemma}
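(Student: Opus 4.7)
\textbf{Proof plan for Lemma~\ref{lemma:HPB-Y}.} Writing $y := \frac{1}{\sqrt{M}}\sum_{m=1}^M f_m(v_m^\top x)$, the first inequality $\mathbb{E}[|y|]\le \mathbb{E}[|y|^S]^{1/S}$ is Jensen's inequality for $S\ge 1$, so all of the work lies in bounding the $L^S$-moment by $O(S^{q+1})$. The plan is to (i) compute the variance using Hermite orthogonality and the diversity hypothesis, (ii) upgrade to all higher moments via Gaussian hypercontractivity (or, alternatively, a direct Wick-type expansion), and (iii) convert the moment bound into a tail bound via Lemma~\ref{lemma:HPBwithpNorm}.

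\textbf{Step 1 (variance).} Expand
\[
\mathbb{E}[y^2] \;=\; \frac{1}{M}\sum_{m=1}^M \mathbb{E}[f_m(v_m^\top x)^2] \;+\; \frac{1}{M}\sum_{m\ne m'}\mathbb{E}[f_m(v_m^\top x)\,f_{m'}(v_{m'}^\top x)].
\]
The diagonal equals $1$ by the normalization $\mathbb{E}[f_m^2]=1$. For each off-diagonal term, substitute the Hermite expansion $f_m=\sum_{i=p}^q \alpha_{m,i}\He_i$ and use the Hermite orthogonality relation from Appendix~\ref{app:prelim} to get $\mathbb{E}[f_m(v_m^\top x) f_{m'}(v_{m'}^\top x)] = \sum_{i=p}^q i!\,\alpha_{m,i}\alpha_{m',i}\,(v_m^\top v_{m'})^i$. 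Since the lowest non-zero index is $p$, each off-diagonal term is $O(\rho^p)$ with $\rho := \max_{m\ne m'}|v_m^\top v_{m'}|$, so the cross contribution is $O(M\rho^p) = O(1)$ by the diversity assumption $M\le \rho^{-p}$. Hence $\|y\|_{L^2}^2 = \Theta(1)$.

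\textbf{Step 2 (higher moments).} Since each $f_m$ has degree $q$ and $x$ is Gaussian, $y$ is a polynomial of degree $q$ in a Gaussian vector. By Gaussian hypercontractivity (Bonami--Beckner), for every even $S\ge 2$,
\[
\mathbb{E}[|y|^S]^{1/S} \;\le\; (S-1)^{q/2}\,\|y\|_{L^2} \;\lesssim\; S^{q/2} \;\le\; S^{q+1},
\]
which is already stronger than needed. A more elementary route, avoiding hypercontractivity, is to expand $\mathbb{E}[y^S]$ directly: writing the sum over tuples $(m_1,\dots,m_S)\in [M]^S$ and Hermite indices $(i_1,\dots,i_S)\in [p,q]^S$, and applying the diagrammatic formula $\mathbb{E}[\prod_s \He_{i_s}(v_{m_s}^\top x)] = \sum_G \prod_{(a,b)\in E(G)} v_{m_a}^\top v_{m_b}$ over admissible multigraphs $G$ whose degree at vertex $s$ equals $i_s$. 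Grouping tuples by the equivalence classes $\{s : m_s = m\}$, edges inside a class contribute factors of $1$ while cross-class edges contribute factors of at most $\rho$; the information exponent $p$ forces each nonempty class to receive total Hermite degree at least $p$, and the $\frac{1}{M^{S/2}}$ prefactor together with $M\rho^p\le 1$ then controls every partition refinement of $[S]$. The combinatorial count of admissible diagrams and index assignments is at most $(CSq)^S$, yielding $\mathbb{E}[|y|^S]^{1/S}\lesssim S^{q+1}$ as claimed.

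\textbf{Step 3 (tail bound).} With $\mathbb{E}[|y|^S]^{1/S} \lesssim S^{q+1}$ in hand, Lemma~\ref{lemma:HPBwithpNorm} applies directly: optimizing the Markov estimate $\mathbb{P}(|y|\ge t)\le t^{-S}\mathbb{E}[|y|^S]$ at $S\asymp \log(1/\delta)$ produces $|y|\lesssim (\log 1/\delta)^{q+1}$ with probability at least $1-\delta$, proving \eqref{eq:pNormofy-5}.

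\textbf{Main obstacle.} The conceptually crucial step is Step~1, where the $\frac{1}{\sqrt{M}}$ scaling, the information exponent $p$, and the diversity bound $M\le \rho^{-p}$ must all line up to give an $O(1)$ variance --- any one of these conditions relaxed would already destroy the balance. The subsequent passage to higher moments is routine via hypercontractivity, or somewhat tedious but standard if one insists on the Wick-diagram route, where the only care needed is to verify that the number of admissible diagrams grows at most like $(CSq)^S$ so that the $S$-th root gives the claimed $\mathrm{poly}(S)$ scaling.
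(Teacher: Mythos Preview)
Your proposal is correct, and your main route via Gaussian hypercontractivity is genuinely different from---and shorter than---the paper's argument. The paper does not invoke hypercontractivity at all; it carries out only what you call Route~B, but via a recursive integration-by-parts rather than the diagrammatic formula. Concretely, the paper expands $\mathbb{E}[y^S]$ over tuples $(m_1,\dots,m_S)\in[M]^S$ and indices $(i_1,\dots,i_S)\in\{p,\dots,q\}^S$, defines $A(T,\boldsymbol{i},\boldsymbol{m}):=\mathbb{E}[\prod_{t=1}^T \He_{i_t}(v_{m_t}^\top x)]$, and unwinds $A$ recursively in $T$ using the identity $\mathbb{E}[\He_{i_T}(v_{m_T}^\top x)\,F(x)]=\mathbb{E}[\partial_{v_{m_T}}^{\,i_T}F(x)]$; this produces the bound $A(S,\cdot)\le (q!\,q^q S^q)^S \rho^{\,pI/2}$ with $I$ the number of distinct feature directions in the tuple, after which summing over tuples grouped by $I$ and using $M\rho^p\le 1$ gives $\mathbb{E}[|y|^S]\lesssim (C_qS^{q+1})^S$.

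Your hypercontractivity route is cleaner: the diversity hypothesis $M\rho^p\le 1$ is consumed entirely at $S=2$ to get $\|y\|_{L^2}=O(1)$, and Bonami--Beckner then gives all higher moments for free with the sharper exponent $S^{q/2}$ in place of $S^{q+1}$. The paper's route, by contrast, re-uses the diversity bound at every order $S$; what it buys is that the proof is fully self-contained (no named inequality) and makes visible how the three structural inputs---the $1/\sqrt{M}$ scaling, the information exponent $p$, and the near-orthogonality---balance at each moment, which is mildly informative but not needed for the lemma as stated.
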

\begin{proof}
    Recall the Hermite expansion of $f_m$:  $f_m(t)=\sum_{i=p}^q \alpha_{m,i}\He_i(t)$.
    We decompose the LHS of \eqref{eq:pNormofy-1} as
    \begin{align}
        &\mathbb{E}\left[\left|\frac{1}{\sqrt{M}}\sum_{m=1}^Mf_m(v_m^\top x)\right|^S\right]
        \\
        &\leq\frac{1}{\sqrt{M}^S}\sum_{(m_1,\cdots,m_S)\in [M]^S}\mathbb{E}\left[f_{m_1}(v_{m_1}^\top x)\cdots f_{m_S}(v_{m_S}^\top x)\right].
        \\ &\leq 
        \!\frac{1}{\sqrt{M}^S}\!\!\!\sum_{(m_1,\cdots,m_S)\in [M]^S}\sum_{(i_1,\cdots,i_S)\in [q-p+1]^S}\!(\alpha_{m_1,i_1}\cdots \alpha_{m_S,i_S})
       \mathbb{E}\!\left[\He_{i_1}(v_{m_1}^\top x)\cdots \He_{i_S}(v_{m_S}^\top x)\right].
        \label{eq:pNormofy-2}
    \end{align}
    We evaluate each term of \eqref{eq:pNormofy-2}.
    If $i_1+\cdots+i_S$ is odd, the term is $0$.
    Otherwise, we bound the value of $ \mathbb{E}[\He_{i_1}(v_{m_1}^\top x)\cdots \He_{i_S}(v_{m_S}^\top x)]$ recursively.
    
    For $T\in \N$, $\boldsymbol{i}=(i_1,\cdots,i_{T})$, and $\boldsymbol{m}=(m_1,\cdots,m_{T})$, let us define
    \begin{align}
        A(T,\boldsymbol{i},\boldsymbol{m}):=\mathbb{E}[\He_{i_1}(v_{m_1}^\top x)\cdots \He_{i_{T}}(v_{m_{T}}^\top x)],
    \end{align}
    and
    \begin{align}
        B(T-1,\boldsymbol{i}) = \left\{\boldsymbol{j}=(j_1,\cdots,j_{T-1})\in \Z_+^{T-1}\left|\ \sum_{t=1}^{T-1}j_t= i_T,\ j_t\leq i_t\ (t=1,\cdots,T-1)\right.\right\}.
    \end{align}
    For $\boldsymbol{j}\in B(T-1,\boldsymbol{i})$, the multinomial coefficient for permuting a multiset of $\sum_{t=1}^{T-1}j_t$ elements (where $j_t$ is the multiplicity of each of the $t$-th element) is denoted by   $$c_{\boldsymbol{j}}:=\frac{(\sum_{t=1}^{T-1}j_t)!}{(j_1)!\cdots (j_{T-1})!}.$$
    
    By the basic property of Hermite polynomials, we have
    \begin{align}
        A(T,\boldsymbol{i},\boldsymbol{m})&=
        \mathbb{E}\left[\frac{\partial^{i_{T}}}{\partial (v_{m_{T}}^\top x)^{i_{T}}}(\He_{i_1}(v_{m_1}^\top x)\cdots \He_{i_{T-1}}(v_{m_{T-1}}^\top x))\right]
        \\ &=
        \sum_{\boldsymbol{j}\in B(T-1,\boldsymbol{i})}c_{\boldsymbol{j}}\prod_{t=1}^{T-1}\left[(v_{m_{T}}^\top v_{m_{t}}^\top)^{j_t}{}_{i_t}P_{j_t}
        \right] A(T-1,\boldsymbol{i}_{1:T-1}-\boldsymbol{j},\boldsymbol{m}_{1:T-1}).
    \end{align}
    We can bound the coefficients as $c_{\boldsymbol{j}} \leq i_T!\leq   q!$ and $\prod_{t=1}^{T-1}{}_{i_t}P_{j_t}\leq q^q$.
    We also bound the size of $B(T-1,\boldsymbol{i})$ by $S^q$.
    We define $I(m_1,\cdots,m_S)$ as the number of $m$ distinct in $(m_1,\cdots,m_S)$.
    Then, by recursively bounding $A$, we have
    \begin{align}
        A(S,(i_1,\cdots,i_S),(m_1,\cdots,m_S))
        \leq \left(q!q^qS^q\right)^S\left(\max_{m\ne m'}|v_m^\top v_{m'}|\right)^{\frac{pI(m_1,\cdots,m_S)}{2}}.
        \label{eq:pNormofy-3}
    \end{align}
    Moreover, 
    \begin{align}
    &\sum_{(m_1,\cdots,m_S)\in [M]^S}
    \left(\max_{m\ne m'}|v_m^\top v_{m'}|\right)^{\frac{pI(m_1,\cdots,m_S)}{2}}
    \\ &=
        \sum_{i=0}^S\sum_{(m_1,\cdots,m_S)\in [M]^S}\mathbbm{1}[I(m_1,\cdots,m_S)=i]\left(\max_{m\ne m'}|v_m^\top v_{m'}|\right)^{\frac{pi}{2}}
        \\ & \leq \sum_{i=0}^S\mathbbm{1}\left[i\geq \frac{S-i}{2}, \frac{S-i}{2}\in \Z_+\right]{}_MP_{M-i}\cdot {}_{i}P_{ \frac{S-i}{2}}\left( \frac{S-i}{2}\right)^{ \frac{S-i}{2}}\left(\max_{m\ne m'}|v_m^\top v_{m'}|\right)^{\frac{pi}{2}}
          \\ & \leq \max_{0\leq i\leq S}M^iS^{S} \left(\max_{m\ne m'}|v_m^\top v_{m'}|\right)^{\frac{pi}{2}}
          \label{eq:pNormofy-4}.
    \end{align}

    If $M\leq (\max_{m\ne m'}|v_m^\top v_{m'}|)^{-p}$, applying \eqref{eq:pNormofy-3} and \eqref{eq:pNormofy-4} to \eqref{eq:pNormofy-2}, we obtain
    \begin{align}
    &\mathbb{E}\left[\left|\frac{1}{\sqrt{M}}\sum_{m=1}^Mf_m(v_m^\top x)\right|^S\right]
   \\ &
   \leq \frac{1}{\sqrt{M}^S}
   \!\!\!\sum_{(m_1,\cdots,m_S)\in [M]^S}\sum_{(i_1,\cdots,i_S)\in [q-p+1]^S}\!\!|\alpha_{m_1,i_1}\cdots \alpha_{m_S,i_S}|\left(q!q^qS^q\right)^S\left(\max_{m\ne m'}|v_m^\top v_{m'}|\right)^{\frac{pI(m_1,\cdots,m_S)}{2}}
   \\ & \leq \frac{\left((q-p+1)q!q^qS^q\max|\alpha_{m_s,i_s}|\right)^S}{\sqrt{M}^S}\sum_{(m_1,\cdots,m_S)\in [M]^S}\left(\max_{m\ne m'}|v_m^\top v_{m'}|\right)^{\frac{pI(m_1,\cdots,m_S)}{2}}
    \\ & \leq  \frac{\left((q-p+1)q!q^qS^q\max|\alpha_{m_s,i_s}|\right)^S}{\sqrt{M}^S}\max_{0\leq i\leq S}M^iS^{S} \left(\max_{m\ne m'}|v_m^\top v_{m'}|\right)^{\frac{pi}{2}}
    \\ &
    \leq \left((q-p+1)q!q^qS^{q+1}\max|\alpha_{m_s,i_s}|\right)^S
        ,
    \end{align}
    which yields \eqref{eq:pNormofy-1}.
    
    By applying Lemma~\ref{lemma:HPBwithpNorm} with $c_1=q+1$, 
    we have
    \begin{align}
        \left|\frac{1}{\sqrt{M}}\sum_{m=1}^Mf_m(v_m^\top x)\right|
        \lesssim (\log 1/\delta)^{q+1}, 
    \end{align}
    with probability $1-\delta$, 
    which yields \eqref{eq:pNormofy-2}.
\end{proof}

The above proof makes use of the following classical inequality. 
\begin{lemma}\label{lemma:HPBwithpNorm}
    Let $\delta>0$ and $X$ be a mean-zero random variable satisfying
    \begin{align}
        \mathbb{E}[|X|^S]\leq C_1S^{c_1}\quad\text{for }S=\frac{\log 1/\delta}{c_1}
    \end{align}
    for $C_1,c_1>0$.
    Then, with probability at least $1-\delta$, we have
    \begin{align}
        |X|\leq C_1(eS)^{c_1}.
    \end{align}
\end{lemma}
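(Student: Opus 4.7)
The plan is a direct application of Markov's inequality to a high moment, with the free parameter $S$ tuned to $\log(1/\delta)/c_1$. First I would reconcile the hypothesis with the claimed conclusion: for the bound $|X| \leq C_1(eS)^{c_1}$ to follow, the moment condition is most naturally read as the $L^S$-norm bound $\mathbb{E}[|X|^S] \leq (C_1 S^{c_1})^S$, i.e., $(\mathbb{E}[|X|^S])^{1/S} \leq C_1 S^{c_1}$. This is exactly the form in which Lemma~\ref{lemma:HPBwithpNorm} is applied inside the proof of Lemma~\ref{lemma:HPB-Y}, where one first derives $(\mathbb{E}[|X|^S])^{1/S} \lesssim S^{q+1}$ and then concludes $|X| \lesssim (\log 1/\delta)^{q+1}$ with probability $1-\delta$.

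Second, I would apply Markov's inequality to the nonnegative random variable $|X|^S$: for any $t>0$,
\begin{align*}
\Pr(|X| \geq t) \;=\; \Pr(|X|^S \geq t^S) \;\leq\; \frac{\mathbb{E}[|X|^S]}{t^S} \;\leq\; \Bigl(\frac{C_1 S^{c_1}}{t}\Bigr)^S.
\end{align*}
Third, I would choose $t = C_1(eS)^{c_1}$; the ratio inside the parenthesis then becomes $e^{-c_1}$, and the right-hand side collapses to $e^{-c_1 S}$. Plugging in the prescribed value $S = \log(1/\delta)/c_1$ gives $e^{-c_1 S} = \delta$, so
\begin{align*}
\Pr\bigl(|X| \geq C_1(eS)^{c_1}\bigr) \;\leq\; \delta,
\end{align*}
which is the claimed tail bound.

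No step presents a genuine obstacle: this is a textbook polynomial-moment concentration argument. The only piece of bookkeeping worth highlighting is the matching of the constant $e^{c_1}$ appearing in $C_1(eS)^{c_1}$ with the quantity $\delta^{-1/S} = e^{c_1}$ produced by the specific choice $S = \log(1/\delta)/c_1$; this is precisely why the factor of $e$ inside the parenthesis, rather than outside, yields a clean statement. I would also note in passing that the mean-zero hypothesis on $X$ is not actually invoked for the tail bound itself---it is used upstream to justify the moment estimate that feeds into this lemma.
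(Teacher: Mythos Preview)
Your proposal is correct and matches the paper's proof essentially line for line: both apply Markov's inequality to $|X|^S$, set the threshold $t=C_1(eS)^{c_1}$ so that the bound collapses to $e^{-c_1 S}$, and then substitute $S=\log(1/\delta)/c_1$. Your observation that the moment hypothesis must be read as the $L^S$-norm bound $\mathbb{E}[|X|^S]\le (C_1 S^{c_1})^S$ is also exactly what the paper uses in its computation (the stated form is a typo), and you are right that the mean-zero assumption plays no role in the tail bound itself.
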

\begin{proof}
    The proof follows from \citet{damian2023smoothing}.
    \begin{align}
        \mathbb{P}[|X|\geq C_1(eS)^{c_1}]
        = 
        \mathbb{P}[|X|^S\geq C_1^S(eS)^{c_1S}]
        \leq \frac{\mathbb{E}[|X|^S]}{C_1^S(eS)^{c_1S}}
        \leq \frac{C_1^SS^{c_1S}}{C_1^S(eS)^{c_1S}}
        \leq e^{-c_1S}=\delta,
    \end{align}
    which concludes the proof. 
\end{proof}

\subsection{Bihari–LaSalle Inequality and Gronwall Inequality}
   For later use, we provide the proofs of the Bihari–LaSalle inequality and the Grönwall's inequality for completeness. 
    The proof of the Bihari–LaSalle inequality is borrowed from \citet{arous2021online}.
    \begin{lemma}[Bihari–LaSalle inequality and Gronwall inequality]\label{lemma:Bihari–LaSalle}
     For $p\geq 3$ and $c>0$, consider a positive sequence $(a^t)_{t\geq 0}$ such that
     \begin{align}
         a^{t+1} = a^t +c (a^t)^{p-1}.
     \end{align}
     Then, we have
     \begin{align}\label{eq:BihariLaSalle}
         a^t \geq \frac{a^0}{\left(1-c (p-2) (a^0)^{(p-2)}t\right)^\frac{1}{p-2}}.
     \end{align}
     Moreover, when $a^t\leq 1$ holds for all $t \leq T-1$, we have
     \begin{align}\label{eq:Gronwall}
         a^t \leq \frac{a^0}{\left(1-  c(1+c)^{p-1} (p-2) (a^0)^{(p-2)}t\right)^\frac{1}{p-2}}
     \end{align}
     for all $t\leq T$.
    \end{lemma}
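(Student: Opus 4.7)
The plan is to linearize the discrete dynamics via the change of variables $b^t := \frac{1}{(p-2)(a^t)^{p-2}}$, which is a smooth bijection on the positive reals for $p \geq 3$. Since $a^{t+1} - a^t = c(a^t)^{p-1} \geq 0$, the sequence $a^t$ is monotonically non-decreasing, so $b^t$ is monotonically non-increasing, and bounds on $a^t$ correspond to bounds on $b^t$ in the opposite direction. This transformation is natural because the comparison ODE $\dot{a} = c a^{p-1}$ becomes $\dot{b} = -c$ under $b = 1/((p-2)a^{p-2})$, so bounds like \eqref{eq:BihariLaSalle} are essentially affine statements about $b^t$.

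Concretely, I would rewrite the one-step decrement via the fundamental theorem of calculus,
\begin{align*}
b^t - b^{t+1} \;=\; \int_{a^t}^{a^{t+1}} \frac{ds}{s^{p-1}},
\end{align*}
and observe that the integrand $s \mapsto s^{-(p-1)}$ is monotone decreasing on $[a^t, a^{t+1}]$. Evaluating at the left endpoint and using the interval length $a^{t+1} - a^t = c(a^t)^{p-1}$ yields a one-step bound $b^t - b^{t+1} \leq c$; evaluating at the right endpoint yields a matching one-step bound from below whose strength depends on how much $a$ can grow in a single step. Telescoping $b^0 - b^t = \sum_{s=0}^{t-1}(b^s - b^{s+1})$ and then inverting the change of variables $a^t = (b^t(p-2))^{-1/(p-2)}$ produces \eqref{eq:BihariLaSalle}.

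For \eqref{eq:Gronwall}, the hypothesis $a^t \leq 1$ enters through the uniform one-step growth control $a^{t+1} = a^t(1 + c(a^t)^{p-2}) \leq (1+c) a^t$, valid since $(a^t)^{p-2} \leq 1$. This allows me to replace the right-endpoint evaluation of $s^{-(p-1)}$ with $(a^{t+1})^{-(p-1)} \geq (1+c)^{-(p-1)} (a^t)^{-(p-1)}$, which after combining with the step length $c(a^t)^{p-1}$ gives a one-step estimate with an explicit $(1+c)^{p-1}$ factor; telescoping and inverting as before recovers \eqref{eq:Gronwall}.

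The main technical subtlety I expect is controlling the inversion step: solving the telescoped linear estimate in $b^t$ for $a^t$ requires the argument of the $(\cdot)^{1/(p-2)}$ power to remain strictly positive, which corresponds exactly to $t$ lying below the blow-up time of the comparison ODE. Both \eqref{eq:BihariLaSalle} and \eqref{eq:Gronwall} are stated in a form that becomes vacuous once the denominator turns non-positive, so no separate case analysis is needed outside the relevant time window; the remaining manipulations are routine.
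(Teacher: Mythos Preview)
Your proposal is correct and essentially identical to the paper's argument: both compare the one-step increment to $\int_{a^t}^{a^{t+1}} s^{-(p-1)}\,ds$ via endpoint evaluations, telescope the resulting inequality on $(a^t)^{-(p-2)}$, and invert. Your variable $b^t=\tfrac{1}{(p-2)(a^t)^{p-2}}$ is just a name for the quantity the paper manipulates directly, and the $(1+c)^{p-1}$ factor in \eqref{eq:Gronwall} arises in the paper from exactly the same one-step control $a^{t+1}\le(1+c)a^t$ that you invoke.
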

    \begin{proof}
        From definition, we have
        \begin{align}
             c = \frac{a^{t+1}-a^t}{(a^t)^{p-1}}
            \leq \int_{t=a^{t}}^{a^{t+1}}\frac{1}{x^{p-1}}\leq \frac{1}{p-2}\left[\frac{1}{(a^t)^{p-2}}- \frac{1}{(a^{t+1})^{p-2}}\right].
        \end{align}
        Taking the summation and re-arranging the terms yield
        \begin{align}
            (a^t)^{-(p-2)} \leq (a^0)^{-(p-2)} -c (p-2) t,
            \\ \therefore a^t \geq \frac{a^0}{\left(1- c (p-2) (a^0)^{(p-2)}t\right)^\frac{1}{p-2}},
        \end{align}
        which gives the lower bound.

        On the other hand, when $a^t \leq 1$, we have $a^{t+1} \leq (1+c)a^t$, and therefore
        \begin{align}
           c = \frac{a^{t+1}-a^t}{(a^t)^{p-1}}
            \geq (1+c)^{-(p-1)}\int_{t=a^{t}}^{a^{t+1}}\frac{1}{x^{p-1}}= (1+c)^{-(p-1)}\frac{1}{p-2}\left[\frac{1}{(a^t)^{p-2}}- \frac{1}{(a^{t+1})^{p-2}}\right].
        \end{align}
        By taking the summation and re-arranging the terms yield
        \begin{align}
            (a^t)^{-(p-2)} \leq (a^0)^{-(p-2)} -c(1+c)^{p-1} (k-1) t,
            \\ \therefore a^t \leq \frac{a^0}{\left(1- c(1+c)^{p-1} (p-2)(a^0)^{(p-2)}t\right)^\frac{1}{p-2}},
        \end{align}
        which gives the upper bound.
    \end{proof}

\subsection{Orthonormal Basis from Nearly Orthogonal Vectors}
In case where the feature vectors $v_m$ are not orthogonal, the following lemma shows an orthonormal basis can be constructed as a linear combination of $v_m$.
\begin{lemma}\label{lemma:OrthogonalBasisfromVM}
    Let $\{v_m\}_{m=1}^M\subset \mathbb{S}^{d-1}$ be a set of unit vectors in $\mathbb{R}^d$, and suppose that 
    $\max_{m\ne m'}|v_m^\top v_{m'}|\leq \frac12 M^{-1}$
    holds.
    Then, we can construct an orthonormal basis $\{\tilde{v}_m\}_{m=1}^{M}\subset\mathbb{S}^{d-1}$ so that $\tilde{v}_m$ is a linear combination of $v_1,\cdots,v_m$.
    Specifically, we can take $\tilde{v}_1=v_1$ and 
    \begin{align}\label{eq:OrthogonalBasisfromVM-1}
        \tilde{v}_m=\sum_{m'=1}^m c_{m,m'}v_{m'},
    \end{align}
    so that 
    $|c_{m,m'}| \leq
    4\max_{m',m''}|v_{m'}^\top v_{m''}|$ for $m'=1,\cdots,m-1$, $|1-c_{m,m}|\leq 20 M\max_{m'\ne m''}|v_{m'}^\top v_{m''}|$, and $\tilde{v}_m^\top v_{m'}=0$ for $m'=1,\cdots,m-1$ hold.
\end{lemma}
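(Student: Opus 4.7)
The plan is to carry out a standard Gram--Schmidt orthogonalization of $v_1, v_2, \ldots, v_M$, and to track the coefficients carefully by induction on $m$. Let $\epsilon := \max_{m \neq m'}|v_m^\top v_{m'}|$, noting $M\epsilon \le 1/2$ by hypothesis. Set $\tilde v_1 := v_1$, so $c_{1,1}=1$ and the claimed bounds hold trivially. Inductively, given orthonormal $\tilde v_1, \ldots, \tilde v_{m-1}$ with representations $\tilde v_{m'} = \sum_{k=1}^{m'} c_{m',k}\, v_k$ satisfying the stated bounds, define
\begin{align*}
u_m \;:=\; v_m \,-\, \sum_{m'=1}^{m-1} (v_m^\top \tilde v_{m'})\,\tilde v_{m'},
\qquad
\tilde v_m \;:=\; u_m/\|u_m\|.
\end{align*}
By construction $\tilde v_m$ is a unit vector orthogonal to $\tilde v_1,\dots,\tilde v_{m-1}$. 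Since $\{\tilde v_1,\dots,\tilde v_{m'}\}$ and $\{v_1,\dots,v_{m'}\}$ span the same $m'$-dimensional subspace for every $m' < m$ (the Gram matrix $G := V^\top V$ is diagonally dominant under $M\epsilon\le 1/2$, hence nonsingular), orthogonality of $\tilde v_m$ to each $\tilde v_{m'}$ immediately yields the required orthogonality property $\tilde v_m^\top v_{m'} = 0$ for all $m'<m$.

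It remains to verify the coefficient bounds. Writing $a_{m'} := v_m^\top \tilde v_{m'}$ and applying the inductive hypothesis on $c_{m',k}$ together with $|v_m^\top v_k|\le \epsilon$ for $k<m$, one obtains $|a_{m'}|\le \epsilon + O(M\epsilon^2)$, where the correction controls the contribution of the off-diagonal coefficients $c_{m',k}$. Then $\|u_m\|^2 = 1 - \sum_{m'<m} a_{m'}^2 \ge 1 - O(M\epsilon^2) \ge 1/2$, so $\|u_m\|$ is bounded away from $0$ and $|1/\|u_m\| - 1| = (1-\|u_m\|)/\|u_m\| \lesssim M\epsilon^2$, which gives the diagonal bound $|1-c_{m,m}| = |1 - 1/\|u_m\|| \le 20 M\epsilon$ (in fact a much sharper $O(M\epsilon^2)$ estimate). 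For the off-diagonal entries, expand $u_m = v_m - \sum_{k<m}\beta_{m,k} v_k$ with $\beta_{m,k} = \sum_{m'=k}^{m-1} a_{m'}\, c_{m',k}$, then isolate the dominant term $a_k c_{k,k}\approx a_k$ of size $\epsilon$ from the cross terms $\sum_{m'>k} a_{m'} c_{m',k}$ which contribute at most $M\cdot\epsilon\cdot(4\epsilon) = O(M\epsilon^2)\le O(\epsilon)$ by the inductive bound and $M\epsilon\le 1/2$. This gives $|\beta_{m,k}|\le 4\epsilon$, and hence $|c_{m,k}| = |\beta_{m,k}|/\|u_m\| \le 4\epsilon$ after absorbing the $1/\|u_m\|=1+O(M\epsilon^2)$ factor, closing the induction.

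The main technical obstacle is the tight bookkeeping required to close the induction with the specific constants $4$ and $20$: naively, each inductive step contributes multiplicative factors strictly larger than $1$, and a crude bound would cause the coefficient estimate to blow up with $m$. The key is to separate the leading-order contribution from the subleading corrections and exploit the fact that $M\epsilon^2 \le \epsilon/2$ so that the propagation remains controlled across all $M$ steps. A cleaner (equivalent) formulation is via the Cholesky factorization $G = LL^\top$ of the Gram matrix $G = I + E$ with $\|E\|_\infty \le \epsilon$: the factor $L = I + \Delta$ satisfies $|\Delta_{ij}| = O(\epsilon)$ for $i>j$ and $|\Delta_{ii}| = O(M\epsilon^2)$, and the coefficient matrix $C = L^{-1}$ can then be analyzed through its Neumann series, where the lower-triangular structure guarantees that entries of $\Delta^k$ decay sufficiently to yield the stated bounds.
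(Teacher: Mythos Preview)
Your Gram--Schmidt with inductive bookkeeping is a natural approach, but it is \emph{not} the route the paper takes, and your induction as written does not close under the bare hypothesis $M\epsilon \le 1/2$. The problem is exactly the one you flag: the subleading corrections are of size $O(M\epsilon^2)$, and since $M\epsilon^2$ is only bounded by $\epsilon/2$ (not $o(\epsilon)$), they are the \emph{same order} as the leading term. Concretely, if the inductive hypothesis gives $|c_{m',k}|\le A\epsilon$ and $|c_{m',m'}|\le 1+BM\epsilon^2$, then your bound on $|a_{m'}|$ becomes $\epsilon(1+A M\epsilon+BM\epsilon^2)$; feeding this into $\beta_{m,k}$ and dividing by $\|u_m\|$ produces a multiplicative factor of roughly $(1+A/2)^2$ on the off-diagonal bound, which exceeds $1$ for any positive $A$. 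So the fixed-point constraint $A\ge (1+A/2)^2$ has no solution, and the naive induction with fixed constants blows up. Your Cholesky/Neumann suggestion could be made rigorous, but you would need entrywise (not just spectral) control of $L^{-1}$, and this is not immediate.

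The paper avoids the induction entirely by a two-step direct argument. First, from $1=\|\tilde v_m\|^2 \ge \tfrac12\sum_{m'}c_{m,m'}^2$ (using $|v_{m'}^\top v_{m''}|\le (2M)^{-1}$ and Cauchy--Schwarz on the cross terms) one gets the crude uniform bound $|c_{m,m'}|\le\sqrt 2$ for \emph{all} entries, with no recursion. Second, the orthogonality relations $\tilde v_m^\top v_{m'}=0$ for $m'<m$ are linear in the $c_{m,\cdot}$; testing against $\sum_{m'<m}\mathrm{sign}(c_{m,m'})v_{m'}$ and then against individual $v_{m'}$, and plugging in only the crude $\sqrt 2$ bound on the cross terms, yields $\sum_{m'<m}|c_{m,m'}|\le 3$ and then $|c_{m,m'}|\le 4\epsilon$ directly, with the diagonal bound following from $\|\tilde v_m\|=1$. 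The point is that the crude a~priori bound breaks the circularity that kills your induction: once every coefficient is at most $\sqrt 2$, the orthogonality equations give the sharp bounds in one shot.
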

\begin{proof}
    First, we show that $\mathrm{dim}(\mathrm{span}\{v_1,\cdots,v_m\})=m$ for all $m$. 
    Assume the opposite and then we have
    \begin{align}
        v_m = \sum_{m'=1}^{m-1} a_{m,m'}v_{m'}
    \end{align}
    for some $m$ and $\{a_{m,m'}\}_{ 1\leq m'\leq m-1}$.
    Then, we have
    \begin{align}
        1=\|v_m\|^2 &\geq \sum_{m'=1}^{m-1} a_{m,m'}^2+ \sum_{m'\ne m''} a_{m,m'}a_{m,m''}v_{m'}^\top v_{m''}
       \\ & \geq \sum_{m'=1}^{m-1} a_{m,m'}^2- \sum_{m'\ne m''} \frac{|a_{m,m'}||a_{m,m''}|}{2M}
       \\ & \geq \sum_{m'=1}^{m-1} a_{m,m'}^2- \frac{1}{2M}\left(\sum_{m'=1}^{m-1} |a_{m,m'}|\right)^2
       \\ & \geq \sum_{m'=1}^{m-1} a_{m,m'}^2 - \frac12\sum_{m'=1}^{m-1} a_{m,m'}^2
       =\frac12 \sum_{m'=1}^{m-1} a_{m,m'}^2,
       \label{eq:OrthogonalBasisfromVM-2}
    \end{align}
    where we used the Cauchy–Schwarz inequality for the last step.
    Thus, we have $|a_{m,m'}|\leq \sqrt{2}$ and therefore
    \begin{align}
        1\!=\!\|v_m\|^2\!=\! 
        \sum_{m'=1}^{m-1} a_{m,m'}v_{m'}^\top v_m
        \!\leq\! (M-1)\!\max_{m'\ne m} |a_{m,m'}|\cdot\!\max_{m'\ne m} |v_{m'}^\top v_m|\!\leq\!(M-1)\!\cdot\!\sqrt{2}\!\cdot\!\frac{1}{2M}\!<\!1,
    \end{align}
    which yields the contradiction.

    Therefore, we have an orthogonal basis $\{\tilde{v}_m\}_{m=1}^M$ such that
    \eqref{eq:OrthogonalBasisfromVM-1} and $\tilde{v}_m^\top v_{m'}=0\ (1\leq m\leq M, 1\leq m'\leq m-1)$ hold, with some $\{c_{m,m'}\}_{1\leq m\leq M, 1\leq m'\leq m}$.
    Similarly to \eqref{eq:OrthogonalBasisfromVM-2}, we have $|c_{m,m'}|\leq \sqrt{2}$ for all $1\leq m\leq M$ and $1\leq m'\leq m$.
    What remains is to bound the coefficients.
    Since we have
    \begin{align}
        \tilde{v}_m=\sum_{m'=1}^m c_{m,m'}v_{m'}
    \end{align}
    for all $m$, taking an inner product with $\sum_{m'=1}^{m-1}\mathrm{sign}(c_{m,m'})v_{m'} $, we get
    \begin{align}
        0&=\tilde{v}_m^\top \sum_{m'=1}^{m-1}\mathrm{sign}(c_{m,m'})v_{m'}
        = \left(\sum_{m'=1}^m c_{m,m'}v_{m'}\right)^\top \sum_{m'=1}^{m-1}\mathrm{sign}(c_{m,m'})v_{m'}
        \\ & =\sum_{m'=1}^{m-1}|c_{m,m'}| + \sum_{m'=1}^{m-1} \sum_{m''\ne m',m}c_{m,m'}\mathrm{sign}(c_{m,m''})v_{m'}^\top v_{m''}+\sum_{m'=1}^{m-1}c_{m,m}\mathrm{sign}(c_{m,m'})v_{m}^\top v_{m'}
        \\ & \geq \sum_{m'=1}^{m-1}|c_{m,m'}|-\sum_{m'=1}^{m-1}m \sqrt{2}\frac{1}{2M}-\sqrt{2}m\frac{1}{2M}
        \\ & \geq \left(1-\frac{1}{\sqrt{2}}\right)\sum_{m'=1}^{m-1}|c_{m,m'}|-\frac{1}{\sqrt{2}}.
    \end{align}
    Thus, $\sum_{m'=1}^{m-1}|c_{m,m'}|$ is bounded by $\sqrt{2}+1\leq 3$
     and $\sum_{m'=1}^{m}|c_{m,m'}|$ is bounded by $\sqrt{2}+1+\sqrt{2}\leq 4$.

    Also considering an inner product with $\mathrm{sign}(c_{m,m'})v_{m'}\ (1\leq m'\leq m-1)$, we get
    \begin{align}
        0&=\tilde{v}_m^\top \mathrm{sign}(c_{m,m'})v_{m'}
        = \left(\sum_{m''=1}^m c_{m,m''}v_{m''}\right)^\top \mathrm{sign}(c_{m,m'})v_{m'}
        \\ & =|c_{m,m'}| + \sum_{m''\ne m'}c_{m,m''}\mathrm{sign}(c_{m,m'})v_{m'}^\top v_{m''}
        \\ & \geq |c_{m,m'}|-\sum_{m'=1}^{m}|c_{m,m'}| \max_{m'\ne m''}|v_{m'}^\top v_{m''}|.
    \end{align}
    Thus, $|c_{m,m'}|\ (1\leq m'\leq m-1)$ is bounded by $\sum_{m'=1}^{m}|c_{m,m'}| \max_{m'\ne m''}|v_{m'}^\top v_{m''}| \leq 4\max_{m'\ne m''}|v_{m'}^\top v_{m''}|$.
    Finally, we get
    \begin{align}
       1=\|\tilde{v}_m\|^2 &= \sum_{m'=1}^m c_{m,m'}^2 + \sum_{m'=1}^m \sum_{m''\ne m'}c_{m,m'}c_{m,m''}v_{m'}^\top v_{m''}
       \\  \therefore
       |1-c_{m,m}^2| &\leq \sum_{m'=1}^{m-1} |c_{m,m'}| + \sum_{m'=1}^m |c_{m,m'}|\sum_{m''\ne m'}|c_{m,m''}||v_{m'}^\top v_{m''}|
      \\ & \leq 4 M\max_{m'\ne m''}|v_{m'}^\top v_{m''}| + 16 \max_{m'\ne m''}|v_{m'}^\top v_{m''}|
      \leq 20 M\max_{m'\ne m''}|v_{m'}^\top v_{m''}|,
    \end{align}
    which implies that $|1-c_{m,m}|\leq 20 M\max_{m'\ne m''}|v_{m'}^\top v_{m''}|$.
\end{proof}

\bigskip

\section{Proof of Gradient-based Training}

\paragraph{Overview of analysis.} 
We define polylogarithmic constants with the following order of strength:
    \begin{align}
        \CB 
        \lesssim \cD^{-1}
        \lesssim \cA^{-1} 
        \lesssim \CA
        \lesssim 
        \cB^{-1}
        \lesssim \cC^{-1}=\tilde{O}(1). 
    \end{align}
Here $c_1$ and $C_1$ are different from those in Lemma~\ref{lemma:HPBwithpNorm}.
$\cC$ is the same as $c_v$ used in Assumption~\ref{assumption:diversity}.
$\CB$ will be used to represent any polylogarithmic factor that comes from high probability bounds.
Also, Section~\ref{subsection:Initialization} will introduce another constant $C_p$.

In Algorithm \ref{alg:main} we first train the first-layer parameters, where we aim to show that a for each class $m$, there exist sufficiently many neurons that almost align with $v_m$ to approximate each $f_m(v_m^\top x)$. 
We define the alignment for the $m$-th task at time $t$ as $\kappa_m^t=v_m w_j^t$.
The goal of first-layer training is to prove the following. 
We introduce the error   
$\tilde{\varepsilon}=\tilde{\Theta}(M^{-\frac12}\varepsilon)$, where $\varepsilon$ is the desired final generalization error $\mathbb{E}_{x}[|f_{\hat{a}}(x) - f_*(x)|] \lesssim \varepsilon$.
\newtheorem*{lemma:Goal-1}{\rm\bf Lemma~\ref{lemma:first-layer-training} (formal)}
\begin{lemma:Goal-1} 
\label{lemma:Goal-1}
    Let $T_{1,1}=\tilde{\Theta}(Md^{p-1})$, $T_{1,2}=\tilde{\Theta}(Md^{\frac{p}{2}})$, $T_{1,3}=\tilde{\Theta}(\tilde{\varepsilon}^{-2}Md\lor \tilde{\varepsilon}^{-3}M)$, and $T_{1}=T_{1,1}+T_{1,2}+T_{1,3}$.
    Take the step size as $\eta^t = \tilde{\Theta}(M^{-\frac12}d^{-\frac{p}{2}})$ for $0\leq t\leq T_{1,1}+T_{1,2}-1$ and $\eta^t = \tilde{\Theta}(\tilde{\varepsilon}M^{-\frac12}d^{-1}\land \tilde{\varepsilon}^2M^{-\frac12})$ for $T_{1,1}+T_{1,2}\leq t\leq T_{1,1}+T_{1,2}+T_{1,3}$, and the number of neurons as $J\gtrsim J_{\rm min}M^{C_p}\log d$.
    Suppose that $|v_{m'}^\top v_m|= \tilde{O}(M^{-1})$ for all $m'\ne m$, and $M= \tilde{O}(d^\frac12)$.
    Then, with high probability, 
    for each class $m$ there exist at least $J_{\rm min}$ neurons that achieves $v_m^\top w_j^{T_1} \geq 1-3\tilde{\varepsilon}$.
\end{lemma:Goal-1}

We let $T_1'=T_1+T_2$ in Theorem~\ref{theorem:NN-main}. 
The proof for Lemma~\ref{lemma:Goal-1} will be divided into the following parts. First we consider the initialization and activation functions. 
\begin{itemize}[leftmargin=*]
\item In Section~\ref{subsection:Initialization} we analyze the random initialization.
We show that, at the time of initialization, for each class $m$, there exist $J_{\rm min}$ neurons classified into some class $J_m$ (i.e., with slightly higher overlap with $v_m$). 
\item  Section~\ref{subsection:DescentPath} discusses the assumptions on the target/activation functions.
We show that Assumption~\ref{assumption:NeuronPositiveCorrelation} is satisfied with a constant probability (i.e., $\Omega(1)$ fraction of neurons). 
Since neurons do not interact in the correlation loss update, in the subsequent sections of the first-layer training, we focus on the dynamics of one neuron in $J_1$ that satisfies Assumption~\ref{assumption:NeuronPositiveCorrelation} and omit the subscript $w^t=w_j^t$ without loss of generality. 
\end{itemize}

Next we consider the training of the first-layer weights via a \textit{power-method like dynamics}. 
\begin{itemize}[leftmargin=*]
\item In Section~\ref{subsection:alignment-0}, we decompose the gradient update into the population dynamics and noise fluctuation.
The training dynamics consist of following three different phases. 
\item The first phase corresponds to Section~\ref{subsection:alignment-1}.
Here, we show that neurons in $J_m$ will obtain a small $\tilde{\Omega}(1)$ alignment to the class $v_m$ after $t_1(\leq T_{1,1})$ iterations.  
\item The second phase corresponds to Section~\ref{subsection:alignment-2}, where we show that neurons continue to grow in the direction of $v_m$ and achieves $1-\tilde{O}(1)$ alignment within $t_2(\leq T_{1,2})$ iterations, while remaining almost orthogonal to other directions.
\item Finally, in Section~\ref{subsection:alignment-3}, 
we show that, after $(T_{1,1}-t_1)+(T_{1,2}-t_2)+T_{1,3}$ iterations, neurons in $J_m$ will eventually achieve $w_j^\top v_m \geq 1-\tilde{\varepsilon}$ with high probability.
\end{itemize}

After first-layer training is completed, in 
Section~\ref{subsection:Expressivity} we prove the existence of suitable second-layer parameters with small norm that can approximate $f_*$ (Lemma~\ref{lemm:ApproximationByAstar}). 
Section~\ref{subsection:FittingSecondLayer} concludes the generalization error analysis, which is stated as follows:
\begin{lemma}\label{lemma:Generalization}
 Suppose that 
 $J = \tilde{\Theta}(J_{\mathrm{min}}M^{C_p})$, and $\sigma$ be  either of the ReLU activation or any univariate polynomial with degree $q$. 
 There exists $\lambda > 0$ such that for ridge regression ($p=2$), we have
$$
\mathbb{E}_{x}[|f_{\hat{a}}(x) - f_*(x)|] 
\lesssim 
M^\frac12 (|J_{\mathrm{min}}|^{-1}+\tilde{\varepsilon})+\sqrt{\frac{dM^{C_p}}{T_2}}
$$
with probability $1-o_d(1)$.
Therefore, by taking $T_2=\tilde{\Theta}(dM^{C_p}\varepsilon^{-2})$, $\tilde{\varepsilon}=\tilde{\Theta}(M^{-\frac12}\varepsilon)$, $J_{\mathrm{min}}=\tilde{\Theta}(M^\frac12 \varepsilon^{-1})$, and $J=\tilde{\Theta}(M^{C_p+\frac12}\varepsilon^{-1})$, we have $\mathbb{E}_{x}[|f_{\hat{a}}(x) - f_*(x)|]\lesssim \varepsilon$.

On the other hand, for LASSO ($p=1$), we have
$$
\mathbb{E}_{x}[|f_{\hat{a}}(x) - f_*(x)|] 
\lesssim
M^\frac12 (|J_{\mathrm{min}}|^{-1}+\tilde{\varepsilon})+\sqrt{\frac{dM}{T_2}}
$$
with probability $1-o_d(1)$. 
Therefore, by taking $T_2=\tilde{\Theta}(dM\varepsilon^{-2})$, $\tilde{\varepsilon}=\tilde{\Theta}(M^{-\frac12}\varepsilon)$, $J_{\mathrm{min}}=\tilde{\Theta}(M^\frac12 \varepsilon^{-1})$, and $J=\tilde{\Theta}(M^{C_p+\frac12}\varepsilon^{-1})$, we have $\mathbb{E}_{x}[|f_{\hat{a}}(x) - f_*(x)|]\lesssim \varepsilon$. 
\end{lemma}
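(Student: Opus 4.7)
The plan is to combine an approximation guarantee for the post-training first layer with a uniform concentration bound on the second-layer empirical risk minimizer. The key intuition is that once the first layer has localized (by Lemma~\ref{lemma:Goal-1}) so that each task direction $v_m$ is matched by at least $J_{\min}$ neurons with alignment $\ge 1-\tilde{\varepsilon}$, the remaining problem is a convex regression over a fixed feature map, and the only question is which norm of the certificate $a^*$ controls the Rademacher complexity.

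The first step is to invoke the expressivity statement of Section~\ref{subsection:Expressivity} (Lemma~\ref{lemm:ApproximationByAstar}) to produce a certificate $a^*\in\mathbb{R}^J$ such that
\begin{align}
\mathbb{E}_{x\sim\mathcal{N}(0,I_d)}\left[\Big(f_*(x)-\tfrac{1}{J}\textstyle\sum_{j=1}^J a^*_j\sigma_j(\hat{w}_j^\top x+b_j)\Big)^2\right]\lesssim M(J_{\min}^{-1}+\tilde{\varepsilon})^2,
\end{align}
while tracking $\|a^*\|_2$ and $\|a^*\|_1$ separately. The construction assigns, for each task $m$, weight essentially on the $\Omega(J_{\min})$ neurons aligned with $v_m$; the magnitude of each coordinate must scale like $J/J_{\min}$ (to compensate for the $1/J$ prefactor and the fact that only a $J_{\min}/J$ fraction of neurons is useful), together with Hermite-coefficient prefactors that produce the $M^{C_p}$ dependence. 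This gives $\|a^*\|_2^2 \lesssim J\cdot M^{C_p}$ and $\|a^*\|_1 \lesssim J\cdot M$ after summing contributions over the $M$ tasks and accounting for the $\frac{1}{\sqrt{M}}$ prefactor in $f_*$.

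The second step is a standard Rademacher / local-complexity argument for the convex Phase~II problem. Because $a\mapsto \tfrac{1}{J}\sum_j a_j\sigma_j(\hat{w}_j^\top x+b_j)$ is linear in $a$ with bounded features (ReLU or polynomial of degree $\le q$ against sub-Gaussian input, controlled by Lemma~\ref{lemma:HPB-Y} and Appendix~\ref{subsection:HPB-Y}), the generalization gap of the regularized ERM $\hat a$ over $T_2$ fresh samples is bounded, up to logarithmic factors, by $\sqrt{\mathrm{Rad}_{T_2}^2/T_2}$ where the Rademacher term is $\|a^*\|_2\sqrt{d/J}$ in the ridge case and $\|a^*\|_\infty\sqrt{d\log J}$ (equivalently controlled by $\|a^*\|_1/J$ times a $\sqrt{d\log J}$ factor) in the LASSO case, by the usual dual norm argument. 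Plugging in $\|a^*\|_2^2\lesssim J M^{C_p}$ yields the ridge bound $\sqrt{dM^{C_p}/T_2}$, while $\|a^*\|_1/J\lesssim M$ yields the LASSO bound $\sqrt{dM/T_2}$. Choosing the regularization parameter $\bar\lambda$ of the same order as the complexity term ensures that $\hat a$ essentially matches $a^*$ in empirical loss while inheriting its norm bound, and converting the resulting $L^2$ error to an $L^1$ error by Cauchy--Schwarz gives the displayed inequalities.

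The last step is bookkeeping: substituting $\tilde{\varepsilon}=\tilde{\Theta}(M^{-1/2}\varepsilon)$ and $J_{\min}=\tilde{\Theta}(M^{1/2}\varepsilon^{-1})$ makes the approximation term $M^{1/2}(J_{\min}^{-1}+\tilde{\varepsilon})$ collapse to $\tilde O(\varepsilon)$, and choosing $T_2$ as stated balances the statistical term. The main obstacle is the second step in the LASSO case: the $\|a^*\|_1$ bound only helps if the random initialization of $b_j$ and the sign randomization of $\hat w_j$ really do make the feature map rich enough that $a^*$ can be made sparse up to the $\Omega(J_{\min})$ relevant coordinates per task, i.e., that the redundant (non-aligned) neurons can be zeroed out without inflating the approximation error. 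Handling this requires controlling the $\ell_\infty$-to-$L^2$ behavior of the empirical features and showing that the $L^1$-regularized ERM, combined with the dual norm $\ell_\infty$ Rademacher bound, picks out the sparse certificate; this is where the factor $M^{1+\iota}\varepsilon^{-2(1+\iota)}$ for any $\iota>0$ in Theorem~\ref{theorem:NN-main} comes from, via a slightly slack concentration of the empirical feature covariance.
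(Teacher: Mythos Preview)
Your high-level plan—certificate $a^*$ from Lemma~\ref{lemm:ApproximationByAstar} plus a Rademacher-type uniform deviation for the constrained/regularized ERM—is exactly what the paper does. However, two concrete points in your computation are off and would not yield the stated bounds as written.

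\textbf{Wrong $\|a^*\|_1$ scaling.} You claim $\|a^*\|_1\lesssim J\cdot M$ and then say this ``yields the LASSO bound $\sqrt{dM/T_2}$''. This arithmetic is inconsistent: with your own Rademacher term $(\|a^*\|_1/J)\sqrt{d\log J}$, the bound would be $\sqrt{dM^2/T_2}$, not $\sqrt{dM/T_2}$. The paper's construction gives $\|a^*\|_1=\tilde O(J\sqrt{M})$: for each of the $\Theta(MJ_{\min})$ active neurons the coefficient is $a_j^*\sim \frac{J}{J_{\min}\sqrt{M}}$ (the $1/\sqrt{M}$ comes from the prefactor in $f_*$, which you dropped), and $MJ_{\min}\cdot\frac{J}{J_{\min}\sqrt{M}}=J\sqrt M$. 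With the correct norm the LASSO rate follows.

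\textbf{Where $M^{C_p}$ comes from.} You attribute the $M^{C_p}$ in $\|a^*\|_2^2$ to ``Hermite-coefficient prefactors''. In the paper's construction the entries of $a^*$ are $\tilde O(J/(J_{\min}\sqrt{M}))$ with constants independent of $C_p$; the $M^{C_p}$ enters only through the width ratio $J/J_{\min}=\tilde\Theta(M^{C_p})$, via $\|a^*\|_2^2\lesssim MJ_{\min}\cdot\big(\tfrac{J}{J_{\min}\sqrt{M}}\big)^2=J^2/J_{\min}=J\cdot M^{C_p}$.

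\textbf{On the $\sqrt d$ and the $\iota$-slack.} The paper's actual proof (Section~\ref{subsection:FittingSecondLayer}) uses the Rademacher bound $\mathrm{Rad}(\mathcal{F}_{a^*})\lesssim T_2^{-1/2}\,\|a^*\|_r\,J^{-1}\,\max_j\big(J\,\E[\sigma_j(\hat w_j^\top x+b_j)^s]\big)^{1/s}$ with $s\le (1-1/r)^{-1}$; since each scalar feature has $O(1)$ moments, no $\sqrt d$ appears, and the restated lemma there is in fact sharper than the main-text display. Your $\sqrt{d}$ presumably comes from bounding features by $\|x\|$, which is crude but suffices for the weaker displayed statement. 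Finally, the $M^{1+\iota}\varepsilon^{-2(1+\iota)}$ slack in the LASSO case is \emph{not} a covariance-concentration artifact: it comes from the residual $J^{1/s}$ factor in the $r=1$ Rademacher bound (any finite $s$), which gives $\sqrt{J_{\min}^{2/s}M^{2C_p/s+1}/T_2}$ after substituting $\|a^*\|_1$ and $J$.
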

Combining Lemma~\ref{lemma:Generalization} and Lemma~\ref{lemma:Goal-1} completes the proof of Theorem~\ref{theorem:NN-main}.

\subsection{Initialization}\label{subsection:Initialization}

To begin with, we need diversity of the neurons at initialization.
Note that we do not require neurons to achieve $\Theta(1)$ alignment with $v^m$, as this requires exponential width.
Instead, we prove that, for each class $v^m$, there exist some neurons $w_j^0$ that are more aligned to this direction $v^m$ than others --- in other words, $v_m^\top w_j^0 \gtrsim \max_{m'\ne m}|v_{m'}^\top w_j^0|$. 
The statement is formalized as follows. In the proof, the subscript $t$ is omitted because we only consider the initialization $t=0$.
\newtheorem*{lemma:Appendix-GB-Initialization}{\rm\bf Lemma~\ref{lemma:Appendix-GB-Initialization} (formal)}
\begin{lemma:Appendix-GB-Initialization}
    Suppose that the first layer weight of each neuron is initialized as an independent sample from the uniform distribution over $\mathbb{S}^{d-1}(1)$.
    Let $p>2$, 
$\delta = (\log d)^{-\frac{p-2}{2}}$, and  
$C_p=\left(\frac{\max_{m}|\beta_{m,p}|}{\min_{m'}|\beta_{m',p}|}\right)^{\frac{2}{p-2}}$.
    We define a set of indexes of neurons $J_m$ that have the highest alignment with $f_m(v_m^\top x)$ as
    \begin{align}
        J_m := \left\{j\in [J] \ \left|\ w_j^\top v_m\geq \frac{1}{\sqrt{d}}, \ (w_j^\top v_m)^{p-2}\geq \max_{m'\ne m} C_p^{\frac{p-2}{2}}|w_j^\top v_{m'}|^{p-2} + \delta\left(\frac{1}{\sqrt{d}}\right)^{p-2}\right.\right\}.
    \end{align}
    For $J_{\mathrm{min}}>0$, if 
    \begin{align}
        J \!\geq\! AJ_{\mathrm{min}}M^{C_p}\!(\log d)^\frac32,\! \text{ with }A = 
        \exp\bigg(
       O\Big(\max_{m\ne m'}|v_m^\top v_{m'}|\log d+1\Big)\bigg),
    \end{align}
    then $|J_m|\geq J_{\mathrm{min}}$ for all $m$
    with high probability.
\end{lemma:Appendix-GB-Initialization}
Since $w_j^\top v_m=O(\sqrt{\log d})$ with high probability, we have the following corollary, where we use a small constant $\cD\lesssim ( \log d)^{(p-2)}$.
\begin{corollary}\label{corollary:Initialization}
    When $J\gtrsim J_{\mathrm{min}}M^{C_p}\log d$, for each class $m$, we have 
    at least $J_{\mathrm{min}}$ neurons $w_j$ such that 
\begin{align}
    w_j^\top v_m\geq \frac{1}{\sqrt{d}}, \text{ and } |\beta_{m,p}|(w_j^\top v_m)^{p-2}\geq \max_{m'}|\beta_{m',p}|\max_{m''\ne m} |w_j^\top v_{m''}|^{p-2} + \cD(w_j^\top v_m)^{p-2}.
    \label{eq:smalldiversity}
\end{align}
\end{corollary}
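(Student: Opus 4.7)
The plan is to recast $\{j\in J_m\}$ as a tail event for nearly i.i.d.\ Gaussians and then apply a Chernoff bound across the independent neurons. Write $w_j = \tilde{W}_j/\|\tilde{W}_j\|$ with $\tilde{W}_j\sim\mathcal{N}(0,I_d)$. Then $G_j := (\tilde{W}_j^\top v_1,\ldots,\tilde{W}_j^\top v_M)$ is exactly $\mathcal{N}(0,\Sigma)$ with $\Sigma_{mm'}=v_m^\top v_{m'}$, and $\|\tilde{W}_j\|/\sqrt{d} = 1 + o_d(1)$ with high probability. Up to a harmless $1+o_d(1)$ rescaling, belonging to $J_m$ is equivalent to
\begin{align*}
E_m := \Big\{\, G_{j,m}\ge 1,\ \ G_{j,m}^{p-2} \ge C_p^{(p-2)/2}\max_{m'\ne m}|G_{j,m'}|^{p-2} + \delta\,\Big\}.
\end{align*}

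First, I would analyze the idealized independent case $\Sigma = I$. Conditioning on $G_{j,m}=t\ge 1$, the remaining constraint is $\max_{m'\ne m}|G_{j,m'}| \le u(t)$ with $u(t) := (t^{p-2}-\delta)^{1/(p-2)}/C_p^{1/2}$, which under independence has probability $(1-2\bar{\Phi}(u(t)))^{M-1}$. A Laplace/saddle-point analysis of the integrand $\phi(t)\,(1-2\bar{\Phi}(u(t)))^{M-1}$ locates the optimum at $t_\star = C_p^{1/2}\sqrt{2\log M}\,(1+o(1))$: there $\phi(t_\star)\asymp M^{-C_p}$ and the survival factor remains $\Theta(1)$ since $M\cdot 2\bar{\Phi}(\sqrt{2\log M}) = O(1/\sqrt{\log M})$, while the additive shift $\delta=(\log d)^{-(p-2)/2}$ perturbs $u(t_\star)$ only at the $o(1/\sqrt{\log M})$ scale. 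Integrating over the $\Theta(1/\sqrt{\log M})$-wide peak yields
\begin{align*}
\mathbb{P}_{I}[E_m] \;\gtrsim\; \frac{1}{M^{C_p}\sqrt{\log d}}.
\end{align*}

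Next, I would restore the off-diagonals of $\Sigma$ using the density-ratio $p_\Sigma/p_I = \det(\Sigma)^{-1/2}\exp\bigl(-\tfrac12 G_j^\top(\Sigma^{-1}-I)G_j\bigr)$. Combining Gershgorin's bound $\|\Sigma^{-1}-I\|_{\mathrm{op}} = O(M\epsilon)$ (with $\epsilon := \max_{m\ne m'}|v_m^\top v_{m'}|$) with Hanson--Wright concentration $G_j^\top(\Sigma^{-1}-I)G_j = O(\epsilon\log d)$ on the typical part of $E_m$, together with the determinant estimate $\det(\Sigma)\ge \exp(-O(M\epsilon))$, gives a density-ratio correction of at most $\exp(O(\epsilon\log d + 1))$. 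This yields
\begin{align*}
\mathbb{P}_\Sigma[E_m] \;\gtrsim\; \frac{1}{A\,M^{C_p}\sqrt{\log d}},\qquad A=\exp\bigl(O(\epsilon\log d + 1)\bigr),
\end{align*}
matching the prefactor in the statement under the diversity hypothesis. Finally, since the $J$ neurons are i.i.d., $|J_m|$ is a sum of independent Bernoullis with mean at least $J/(A\,M^{C_p}\sqrt{\log d})$, so a Chernoff bound gives $|J_m|\ge J_{\min}$ with failure probability $\exp(-\Omega(\log d))$ whenever $\mathbb{E}|J_m|\gtrsim J_{\min}\log d$; a union bound over $m\le M$ absorbs another logarithmic factor. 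Rearranging gives the required width $J \ge A\,J_{\min}\,M^{C_p}(\log d)^{3/2}$.

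The main obstacle will be the saddle-point estimate of $\mathbb{P}_I[E_m]$: extracting the clean $M^{-C_p}$ exponent requires locating the peak at $t_\star$ precisely, and absorbing the additive $\delta$-shift demands a careful expansion of $u(t)$ near $t_\star$ to confirm that the resulting ``polylog'' width is $\sqrt{\log d}$ rather than something larger. The density-ratio correction, the Chernoff step, and the union bound are routine once this single-neuron probability is pinned down.
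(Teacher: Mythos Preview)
Your strategy --- Laplace analysis of the i.i.d.\ case followed by a density-ratio correction --- differs from the paper's route. The paper never integrates over $t$: it fixes a threshold $t_1$ via a union bound on $\{|\tilde w_j^\top (I-v_mv_m^\top)v_{m'}|\}_{m'\neq m}$, then lower-bounds $\mathbb{P}[\tilde w_j^\top v_m\ge t_2]$ directly with a sharp Gaussian tail estimate (Lemma~\ref{lemma:chang2011chernoff} with $\beta=1+1/\log 2M$). Correlations are handled not by change of measure but by the orthogonal split $\tilde w_j^\top v_{m'}=\tilde w_j^\top(I-v_mv_m^\top)v_{m'}+(v_m^\top v_{m'})\,\tilde w_j^\top v_m$: the first piece is independent of $\tilde w_j^\top v_m$ (and a union bound on the max is oblivious to the residual correlations among $m'\ne m$), while the second is deterministically $O(\epsilon\sqrt{\log d}/\sqrt{d})$. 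This is both more elementary and sharper than your approach.

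Your density-ratio step has a real gap. With $\|\Sigma^{-1}-I\|_{\mathrm{op}}=O(M\epsilon)$ and $\|\Sigma^{-1}-I\|_{F}=O(M\epsilon)$, Hanson--Wright (applied \emph{unconditionally} at failure level $M^{-C_p}$ so that its exceptional set has probability $o(\mathbb P_I[E_m])$) gives $|G_j^\top(\Sigma^{-1}-I)G_j|=O(M\epsilon\log d)$, not $O(\epsilon\log d)$: the quadratic form sees all $\binom{M}{2}$ off-diagonal entries, and the extra factor of $M$ is unavoidable in this route. Moreover, ``Hanson--Wright on the typical part of $E_m$'' is not directly valid --- conditioned on $E_m$ the coordinates are no longer Gaussian --- so you must either intersect with an unconditional Hanson--Wright event as above, or bound the conditional quadratic form by hand; both give the same $O(M\epsilon\log d)$. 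Consequently you obtain $A=\exp\bigl(O(M\epsilon\log d)\bigr)$ rather than the paper's $\exp\bigl(O(\epsilon\log d)\bigr)$. Under Assumption~\ref{assumption:diversity} with $c_v\lesssim 1/\log d$ this is still $O(1)$, so the Corollary survives, but you do not recover the prefactor you claimed; it is precisely the paper's orthogonal decomposition (which only corrects the single pair $(v_m,v_{m'})$ rather than all pairs) that produces the sharper constant in Lemma~\ref{lemma:Appendix-GB-Initialization}.
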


To prove the lemma, we make use of the following upper and lower bounds.
\begin{lemma}[Theorems 1 and 2 of \citet{chang2011chernoff}]\label{lemma:chang2011chernoff}
    For any $\beta>1$ and $x\in \mathbb{R}$, we have
    \begin{align}
        \frac{\sqrt{2e(\beta-1)}}{2\beta\sqrt{\pi}}
        e^{-\frac{\beta x^2}{2}}
    \leq
        \int_{x}^\infty \frac{1}{\sqrt{2\pi}}e^{-\frac{t^2}{2}}\mathrm{d}t
    \leq
        \frac12e^{-\frac{x^2}{2}}
    \end{align}
\end{lemma}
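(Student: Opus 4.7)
\textbf{Proof proposal for Lemma~\ref{lemma:chang2011chernoff}.}
The lemma consists of two inequalities about the standard Gaussian tail $Q(x) := \int_x^\infty \frac{1}{\sqrt{2\pi}} e^{-t^2/2} \mathrm{d}t$, and I would handle them separately. For the upper bound $Q(x) \leq \tfrac{1}{2} e^{-x^2/2}$, the plan is the standard completing-the-square trick applicable for $x \geq 0$: write $t^2/2 = x^2/2 + (t-x)^2/2 + x(t-x)$, note that $x(t-x) \geq 0$ on the integration range, and deduce $e^{-t^2/2} \leq e^{-x^2/2} e^{-(t-x)^2/2}$. Integrating over $t \in [x,\infty)$ and substituting $u = t-x$ yields $Q(x) \leq \frac{1}{\sqrt{2\pi}} e^{-x^2/2} \int_0^\infty e^{-u^2/2}\mathrm{d}u = \tfrac{1}{2} e^{-x^2/2}$. (For $x \leq 0$ the intended use only invokes the lower bound, which is immediate from $Q(x) \geq 1/2$ there, so I would note this and restrict the upper bound argument to $x \geq 0$.)

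For the lower bound, the plan is a rectangle/box argument followed by optimization, along the lines of the Chernoff-type estimates in \citet{chang2011chernoff}. Since $e^{-t^2/2}$ is decreasing on $[0,\infty)$, I would use, for any $a>0$,
\begin{align*}
\int_x^\infty e^{-t^2/2}\mathrm{d}t \;\geq\; \int_x^{x+a} e^{-t^2/2}\mathrm{d}t \;\geq\; a\, e^{-(x+a)^2/2}
\;=\; a\, e^{-x^2/2 - ax - a^2/2}.
\end{align*}
Multiplying both sides by $e^{\beta x^2/2}$, the task reduces to finding $a = a(\beta)$ (or an $x$-dependent $a$) such that the function
\begin{align*}
h(x;a) \;:=\; a\, \exp\!\Big(\tfrac{\beta-1}{2} x^2 - a x - \tfrac{a^2}{2}\Big)
\end{align*}
admits a strictly positive uniform lower bound in $x \geq 0$. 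A direct calculation shows that the minimum of $h(\cdot;a)$ over $x \geq 0$ is attained at $x^\star(a) = a/(\beta-1)$, with value $a\,\exp\!\big(-\tfrac{a^2 \beta}{2(\beta-1)}\big)$. Optimizing over $a$ (setting derivative to zero) gives $a^2 = (\beta-1)/\beta$, at which point the uniform bound becomes $\sqrt{(\beta-1)/(\beta e)}$. Dividing by $\sqrt{2\pi}$ returns a lower bound of the form $C(\beta) e^{-\beta x^2/2}$ with $C(\beta) = \sqrt{(\beta-1)/(2\pi\beta e)}$, which matches the claim up to a universal constant; the exact constant $\sqrt{2e(\beta-1)}/(2\beta\sqrt{\pi})$ stated in Chang et al. can be recovered by a slightly sharper choice (for instance using $a=a(x)$ that depends on $x$, or by combining the rectangle bound on $[x, x+a]$ with a refined estimate via the midpoint value $e^{-(x+a/2)^2/2}$ together with convexity of $t \mapsto t^2/2$). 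For $x < 0$, the lower bound follows trivially from $Q(x) \geq 1/2$ once we verify $\tfrac{\sqrt{2e(\beta-1)}}{2\beta\sqrt{\pi}} \leq 1/2$, which holds for all $\beta > 1$ since the function $\beta \mapsto 2e(\beta-1)/\beta^2$ is maximized at $\beta=2$ with value $e/2 < \pi$.

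The main obstacle will be getting the exact constant $\sqrt{2e(\beta-1)}/(2\beta\sqrt{\pi})$ rather than a slightly weaker one; the rectangle/one-parameter optimization above is off by a factor of order $\sqrt{e/\beta}$. To close this gap I would either cite \citet{chang2011chernoff} directly for the sharper constant (since the result is used as a black box in Lemma~\ref{lemma:Appendix-GB-Initialization}), or refine the construction by allowing the box width $a$ to depend on $x$ optimally and using a two-term Taylor approximation of $t^2/2$ around $x+a/2$. In practice, since only the qualitative dependence on $\beta$ matters for the application in Lemma~\ref{lemma:Appendix-GB-Initialization} (absorbed into polylogarithmic factors via $\CB$), a slightly weaker universal constant suffices and the simpler rectangle argument would be adequate.
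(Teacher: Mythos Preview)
The paper does not prove this lemma at all: it is stated as a direct citation of Theorems~1 and~2 of \citet{chang2011chernoff} and used as a black box in the proof of Lemma~\ref{lemma:Appendix-GB-Initialization}. Your proposal therefore goes beyond what the paper does by supplying a self-contained argument, and the overall strategy (completing the square for the upper bound, a one-parameter box bound plus optimization for the lower bound, with the $x<0$ cases handled separately) is sound.

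Two minor points. First, you correctly flag that the upper bound $Q(x)\le \tfrac12 e^{-x^2/2}$ as stated ``for any $x\in\mathbb{R}$'' actually fails for $x<0$; the paper only invokes it with $x\geq 0$, so your restriction is the right fix. Second, your rectangle argument yields the constant $\sqrt{(\beta-1)/(2\pi\beta e)}$, which differs from the stated $\tfrac{\sqrt{2e(\beta-1)}}{2\beta\sqrt{\pi}}$ by the factor $e/\sqrt{\beta}$ (not $\sqrt{e/\beta}$ as you wrote); in particular your constant is \emph{larger} than Chang et al.'s once $\beta>e^2$, so for the regime $\beta=1+1/\log(2M)$ used in Lemma~\ref{lemma:Appendix-GB-Initialization} your bound is slightly weaker but, as you note, only by a polylogarithmic factor that is absorbed into the analysis anyway. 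If you want the exact constant you would indeed need to either cite \citet{chang2011chernoff} or sharpen the box estimate; for the purposes of this paper your weaker constant is entirely adequate.
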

\begin{proofof}[Lemma~\ref{lemma:Appendix-GB-Initialization}]
    Note that $w_j \sim \mathrm{Unif}(\mathbb{S}^{d-1}(1))$ can be obtained by sampling $\tilde{w}_j \sim \mathcal{N}(0,\frac1d I_d)$ and setting $w_j = \frac{\tilde{w_j}}{\|\tilde{w}_j\|}$.
    With high probability, we have $\|\tilde{w}_j\|\approx 1 \leq 2^\frac{1}{p-2}$.
    Thus we will instead show that 
    \begin{align}
        \tilde{w}_j^\top v_m\geq \frac{2}{\sqrt{d}}, \text{ and } (\tilde{w}_j^\top v_m)^{p-2}\geq \max_{m'\ne m} C_p^{\frac{p-2}{2}}|\tilde{w}_j^\top v_{m'}|^{p-2} + \frac{2\delta}{d^\frac{p-2}{2}}.
    \end{align}
    
    Fix $m\in [M]$. For each $m'\ne m$, consider the value of $\tilde{w}_j^\top (I-v_m v_m^\top)v_{m'}$ and $\tilde{w}_j^\top v_m v_m^\top v_{m'}$.
    The distribution of $\tilde{w}_j^\top (I-v_m v_m^\top)v_{m'}$ follows $\mathcal{N}(0,\|(I-v_m v_m^\top)v_{m'}\|)$, therefore by Lemma~\ref{lemma:chang2011chernoff},
    \begin{align}
        &\mathbb{P}\left[\text{for all $m'\ne m$,\ }|\tilde{w}_j^\top (I-v_m v_m^\top)v_{m'}|\leq t\right]
        \\ &\leq 1- (M-1)\exp\left(-\frac{dt^2}{2\max_{m'\ne m}\|(I-v_m v_m^\top)v_{m'}\| }\right).\label{eq:Appendix-GB-Initialization-1}
    \end{align}
    By taking 
    \begin{align}\label{eq:Appendix-GB-Initialization-2}
        t=t_1:=\left(2d^{-1}\max_{m'\ne m}\|(I-v_m v_m^\top)v_{m'}\|\log 2M\right)^\frac12,
    \end{align}
    \eqref{eq:Appendix-GB-Initialization-1} is bounded by $\frac{M+1}{2M}$.

    Note that $\tilde{w}_j^\top v_m v_m^\top v_{m'}=O(d^{-\frac12}\sqrt{\log d}\max_{m\ne m'}|v_m^\top v_{m'}|)$ with high probability.
    When $|\tilde{w}_j^\top (I-v_m v_m^\top)v_{m'}|\leq t_1$ for all $m'\ne m$ and $\tilde{w}_j^\top v_m v_m^\top v_{m'}=O(d^{-\frac12}\sqrt{\log d}\max_{m\ne m'}|v_m^\top v_{m'}|)$,
    we have $|\tilde{w}_j^\top v_{m'}|\leq t_1 + O(d^{-\frac12} \sqrt{\log d}$ $\max_{m\ne m'}|v_m^\top v_{m'}|)$.
    To satisfy $(\tilde{w}_j^\top v_m)^{p-2}\geq C_p^{\frac{p-2}{2}}|\tilde{w}_j^\top v_{m'}|^{p-2} + \frac{2\delta}{d^\frac{p-2}{2}}$ and $\tilde{w}_j^\top v_m\geq \frac{2}{\sqrt{d}}$, 
     it suffices for $\tilde{w}_j^\top v_m$ to be $w_j^\top v_m
     \geq t_2:=
         C_p^{\frac{1}{2}} t_1+C_p^{\frac{1}{2}}O(d^{-\frac12}\sqrt{\log d}\max_{m\ne m'}|v_m^\top v_{m'}|)+\frac{2^\frac{1}{p-2}\delta^\frac{1}{p-2}}{\sqrt{d}}$. 
     
     We lower bound the probability that $w_j^\top v_m
     \geq t_2$ holds.
     For any $\beta>1$, Lemma~\ref{lemma:chang2011chernoff} implies that
     \begin{align}
         &\mathbb{P}\left[\tilde{w}_j^\top v_{m}\geq t_2\right]
      \\ &\geq
       \frac{\sqrt{2e(\beta-1)}}{2\beta\sqrt{\pi}}\exp\left(-\frac{d\beta t_2^2}{2}\right)
       \\ &\geq
       \frac{\sqrt{2e(\beta-1)}}{2\beta\sqrt{\pi}}\exp\bigg(-C_p\beta \bigg[\max_{m'\ne m}\|(I-v_m v_m^\top)v_{m'}\|\log 2M
     \\ & \hspace{20mm}+O(\sqrt{\log d}\max_{m\ne m'}|v_m^\top v_{m'}|+\delta^\frac{1}{p-2})\sqrt{\log M}  + O(\log d\max_{m\ne m'}|v_m^\top v_{m'}|^2+\delta^\frac{2}{p-2})\bigg]
       \Bigg)
       \\ &\geq
       \frac{1}{(\sqrt{\log 2M}+\frac{1}{\sqrt{\log 2M}})\sqrt{\pi}}\exp\bigg(\!\!\!-\!C_p\bigg[\log 2M 
   +O\Big(\max_{m\ne m'}|v_m^\top v_{m'}|\log d+\delta^\frac{2}{p-2} \sqrt{\log d}+1\Big)
       \bigg]\bigg),
     \end{align}
     where we took $\beta = 1+\frac{1}{\log 2M}$ and used $\log M \lesssim \log d$, $\delta< 1$, and $\max_{m'\ne m}|v_m^\top v_{m'}|\leq 1$ for the last inequality.
    To simplify the notation, by letting 
    \begin{align}
        A = \exp\bigg(
       O\Big(\max_{m\ne m'}|v_m^\top v_{m'}|\log d+\delta^\frac{1}{p-2} \sqrt{\log d}+1\Big)\bigg)
       = \exp\bigg(
       O\Big(\max_{m\ne m'}|v_m^\top v_{m'}|\log d+1\Big)\bigg)
       ,
    \end{align}
     we have
     \begin{align}
         \mathbb{P}\left[\tilde{w}_j^\top v_{m}\geq t_2\right]\geq 
         \frac{2M^{-C_p}}{A\sqrt{\log 2M}}.
     \end{align}
     Note that this argument is independent from the one for \eqref{eq:Appendix-GB-Initialization-1}, because $(I-v_m v_m^\top)v_{m'}$ and $v_m$ are orthogonal.
     
    To sum up, \eqref{eq:Appendix-GB-Initialization-1} ensures that $|\tilde{w}_j^\top (I-v_m v_m^\top)v_{m'}|\leq t_1$ for all $m'\ne m$ with probability at least $\frac{M+1}{2M}$, under which $\tilde{w}_j^\top v_m v_m^\top v_{m'}=O(d^{-\frac12}\sqrt{\log d}\max_{m\ne m'}|v_m^\top v_{m'}|)$  for all $m'\ne m$ with high probability and 
    $\tilde{w}_j^\top v_{m}\geq t_2$ holds with probability at least $\frac{2M^{-C_p}}{A\sqrt{\log M}}$.
     $\tilde{w}_j^\top v_{m}\geq t_2$ implies that
$(\tilde{w}_j^\top v_m)^{p-2}\geq C|\tilde{w}_j^\top v_{m'}|^{p-2} + \frac{2\delta}{d^\frac{p-2}{2}}$ and $\tilde{w}_j^\top v_m\geq \frac{2}{\sqrt{d}}$.
    Therefore, over the randomness of initialization of $w_j$, $w_j^\top v_m\geq \frac{1}{\sqrt{d}}$ and  $(w_j^\top v_m)^{p-2}\geq C|w_j^\top v_{m'}|^{p-2} +\delta(\frac{1}{\sqrt{d}})^{p-2}$ for all $m'\ne m$ with probability at least $\frac{M^{-C_p}}{A\sqrt{\log M}}$.
    Taking the uniform bound over all $v_m$, we know that the number of required neurons to satisfy $|J_m|\geq J_{\mathrm{min}}$ for all $m$ is at most $J\geq AJ_{\mathrm{min}}M^{C_p}(\log d)^\frac32$ (up to a constant factor, which can be absorbed in the definition of $A$).
\end{proofof}

\subsection{Descent Path for Population Gradient}\label{subsection:DescentPath}

This section discusses the assumption on the target/activation functions.
To translate \textit{weak recovery} to \textit{strong recovery} and establish alignment, we require that at any (positive) level of alignment $u=v_1^\top w^t>0$, the population correlation loss between the neuron and the target sub-problem $f_1(v_1^\top x)$ has the descent path on the sphere.
In other words, the population correlation loss as a function of $u$ should be monotonically decreasing with respect to $u \in (0,1]$. Observe that without such monotonicity, if the alignment $\alpha$ becomes large, higher-order terms in the Hermite expansion may generate a repulsive force that prevents the neuron from further aligning with the target direction $v_m$. 
For the well-specified setting (matching activation), this condition is automatically satisfied as shown in \cite{arous2021online}; whereas for the misspecified scenario, such a condition appeared in \cite{mousavi2024gradient} as an assumption, which we restate below, and then verify for specific choices of student activation functions.

Recall the Hermite expansion of one neuron $a^0\sigma(z+b^0)$: 
\begin{align}
    a^0\sigma(z+b^0) = \sum_{i=0}^\infty \frac{\alpha_i}{\sqrt{i!}} \He_i(z)
\end{align}
and 
the Hermite expansion of each sub-problem of the target function
\begin{align}
    f_m(z) = \sum_{i=p}^p \frac{\beta_{m,i}}{\sqrt{i!}}\He_i(z).
\end{align}
\begin{assumption}\label{assumption:NeuronPositiveCorrelation}
    The neuron satisfies $\alpha_i\beta_{1,i}>0$ for all $p\leq i\leq q$.
\end{assumption}
This assumption ensures that 
\begin{align}
    \mathbb{E}_{x\sim\mathcal{N}(0,I_d)}[a^0\sigma(x^\top w^t+b^0)f_1(v_1^\top x)]=\sum_{i=p}^q (\alpha_i \beta_{m,i})^i
\end{align}
is monotonically increasing with respect to $u=\alpha_i \beta_{m,i}\in (0,1]$.
We show that for certain choices of (randomized) activation function, a $\tilde{\Omega}(1)$ fraction of student neurons satisfy Assumption~\ref{assumption:NeuronPositiveCorrelation}. 
This indicates that Assumption~\ref{assumption:NeuronPositiveCorrelation} only affects the required width up to constant factor.
In the subsequent sections, we focus on the training dynamics of individual neurons that satisfy Assumption~\ref{assumption:NeuronPositiveCorrelation} (without explicitly mentioning so).

\subsubsection{ReLU Activation}\label{subsubsection:DescentPath-1}
For the ReLU activation, we verify this condition for target function in which at each task $f_m$, the non-zero Hermite coefficients have the same sign, i.e., for $\beta_{m,i},\beta_{m,j}\neq0$, we have $\mathrm{sign}(\beta_{m,i})=\mathrm{sign}(\beta_{m,j})$. For example, this condition is met when the link functions are pure Hermite polynomial. 
Then, the following lemma adapted from \cite{ba2023learning} ensures that $\alpha_i>0$ for all $i$ with probability at least $\frac14$. 
Because the distribution of $a^0$ is symmetric, Assumption \ref{assumption:NeuronPositiveCorrelation} holds with probability at least $\frac18$.
\begin{lemma}
    Given degree $q\geq 0$ and $b\sim [-C_b,C_b]$, 
    the $i$-th Hermite coefficient of $\mathrm{ReLU}(z+b)$ is positive with probability $\frac14$ for all $p\leq i\leq q$, if $C_b$ is larger than some constant that only depends on $q$.
\end{lemma}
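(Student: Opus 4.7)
My approach is to compute explicitly the $i$-th Hermite coefficient of $\mathrm{ReLU}(z+b)$ as a function of $b$, exhibit a half-line of shifts on which all coefficients indexed by $p \le i \le q$ are simultaneously positive, and then estimate the probability that the uniform draw lands in that region.

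\textbf{Key computation and positivity region.} For $i \ge 2$, the $i$-th Hermite coefficient of $\mathrm{ReLU}(\cdot+b)$ is proportional to
\[
g_i(b) := \int_{-b}^{\infty}(z+b)\,\He_i(z)\,\phi(z)\,\mathrm{d}z,
\]
where $\phi$ is the standard Gaussian density. Rodrigues' identity $\He_i(z)\phi(z) = (-1)^i \phi^{(i)}(z)$ together with two integrations by parts (the first boundary contribution at $z=-b$ vanishes thanks to the factor $z+b$) gives $g_i(b) = (-1)^i\,\phi(b)\,\He_{i-2}(b)$, after using parity of $\phi$ and of $\He_{i-2}$. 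Thus the sign of the $i$-th coefficient equals the sign of $(-1)^i \He_{i-2}(b)$. Since $\He_{i-2}$ is monic of degree $i-2$, for $b$ more negative than all real roots of $\He_{i-2}$ the value $\He_{i-2}(b)$ takes the sign of its leading monomial, namely $(-1)^{i-2}$, whence $(-1)^i\He_{i-2}(b) = (-1)^{2i-2} \cdot |{\rm leading}| > 0$. Because the magnitude of the largest root of $\He_{i-2}$ is bounded by a constant depending only on $i \le q$, there is a constant $C_q>0$ depending only on $q$ such that for every $b \le -C_q$ and every $p \le i \le q$ the $i$-th Hermite coefficient is strictly positive.

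\textbf{Probability estimate and main obstacle.} Choosing $C_b \ge 2C_q$ and using $b \sim \mathrm{Unif}([-C_b,C_b])$, one has
\[
\mathbb{P}\bigl[b \le -C_q\bigr] \;=\; \frac{C_b - C_q}{2C_b} \;\ge\; \frac14,
\]
which establishes the claim, since on this event all coefficients from index $p$ to $q$ are simultaneously positive. The only mildly delicate step is the double integration by parts: $\mathrm{ReLU}$ is only piecewise smooth, so one must track the boundary contribution at $z=-b$ carefully (the first boundary term vanishes, but the second does not and produces the $(-1)^i \phi(b) \He_{i-2}(b)$ expression). Once this identity is in hand, the remainder is elementary polynomial asymptotics and a one-line probability estimate.
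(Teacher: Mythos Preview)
Your proof is correct and follows essentially the same route as the paper: both compute the $i$-th Hermite coefficient of $\mathrm{ReLU}(\cdot+b)$ explicitly as $(-1)^i\phi(b)\He_{i-2}(b)$ for $i\ge 2$, observe that this is positive for all $p\le i\le q$ once $b$ is more negative than the largest root of the relevant Hermite polynomials, and then take $C_b\ge 2C_q$ so the favorable half-line has probability at least $\tfrac14$. Your derivation via Rodrigues' identity and two integrations by parts is in fact a slightly more explicit version of the paper's computation.
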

\begin{proof}
    First note that for $i=1$, we have
    \begin{align}
        \mathbb{E}_{z\sim \mathcal{N}(0,1)}[\mathrm{ReLU}(z+b)\He_1(z)]
        =\frac12\int_{z=b}^\infty \frac{1}{\sqrt{2\pi}}e^{-\frac{z^2}{2}}\mathrm{d}z+\frac12>0.
    \end{align}
    Moreover, for $i\geq 2$, we have
    \begin{align}
        \mathbb{E}_{z\sim \mathcal{N}(0,1)}[\mathrm{ReLU}(z+b)\He_i(z)]=\frac{(-1)^i}{\sqrt{2\pi}}e^{-\frac{b^2}{2}}\He_{i-2}(b),
        \label{eq:Hermite-shifted-ReLU}
    \end{align}
    Because $\lim_{b\to-\infty}\He_i(b)=\infty$ for even $i$ and $-\infty$ for odd $i$, there exists some $b_0$ such that if $b\leq C_b'$ then $(-1)^pe^{\frac{b^2}{2}}\He_i(b)>0$.
    By taking $C_b\geq 2C_b'$, $b\leq C_b'$ (and thus the assertion) holds with probability $\frac14$.
\end{proof}

\subsubsection{General Polynomial Link Functions}\label{subsubsection:DescentPath-2}
To deal with general polynomial link functions, we randomize the student activations as follows, 
\begin{align}
    \sigma_j(z) = \sum_{i=p}^q \frac{\varepsilon_{i,j}}{\sqrt{i}}\He_i(z),
\end{align}
where $\varepsilon_{i}$ are independent Rademacher variables (taking $1, -1,$ and $0$ with equiprobability).
\begin{lemma}
    Given degree $q\geq 0$ and $b\sim [-C_b,C_b]$, for each $p_{\rm min}\leq p'\leq p_{\rm max}$, 
    the $i$-th Hermite coefficient of $a^0\sigma(z+b^0)$ is non-zero with probability $\Omega(C_b^{-1})$, for all $p'\leq i\leq q$. Here $\Omega$ hides constant only depending on $q$.
\end{lemma}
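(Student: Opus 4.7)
The plan is to show that, with probability $\Omega(C_b^{-1})$, the Hermite coefficients of $a^0 \sigma_j(z+b^0)$ indexed by $p'\le k\le q$ are simultaneously nonzero and, in fact, match any prescribed sign pattern (which is the form actually needed downstream for verifying Assumption~\ref{assumption:NeuronPositiveCorrelation}). The argument proceeds in two stages: first fix the signs at $b^0=0$ via independence of the Rademacher variables $\varepsilon_{i,j}$, then propagate to a neighborhood of $0$ by polynomial continuity.

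First I would apply the Hermite translation identity $\He_i(z+b) = \sum_{k=0}^{i} \binom{i}{k} b^{i-k} \He_k(z)$ to expand
\begin{align*}
a^0 \sigma_j(z+b^0) = \sum_{k=0}^{q} \He_k(z)\, \alpha_k(b^0), \qquad \alpha_k(b^0) \,:=\, a^0 \sum_{i=\max(p,k)}^{q} \frac{\varepsilon_{i,j}}{\sqrt{i}} \binom{i}{k} (b^0)^{i-k}.
\end{align*}
Each $\alpha_k$ is a polynomial in $b^0$ of degree at most $q-k$, and at $b^0=0$ it collapses to $\alpha_k(0) = a^0 \varepsilon_{k,j}/\sqrt{k}$ for $k\ge p$. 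Consequently the signs $\{\mathrm{sign}(\alpha_k(0))\}_{k=p}^{q}$ are mutually independent, and each equals any chosen $s_k\in\{\pm 1\}$ with probability $1/3$; the full joint sign match thus has probability $3^{-(q-p+1)} = \Omega(1)$ depending only on $q$, and on this event $|\alpha_k(0)| \ge 1/\sqrt{q}$.

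To extend to $b^0\neq 0$, I would control the deviation $\alpha_k(b^0) - \alpha_k(0)$ by noting that its coefficients (in $b^0$) are bounded in absolute value by $\binom{q}{k}/\sqrt{p} \le 2^q$, since $|\varepsilon_{i,j}|\le 1$. This yields $\max_{|b^0|\le 1} |\alpha_k'(b^0)| \le C_q$ for some constant $C_q$ depending only on $q$, so there exists $\delta = \delta(q) = \Omega(1)$ (e.g.\ $\delta = 1/(2 C_q \sqrt{q})$) such that for all $|b^0|\le \delta$ one has $|\alpha_k(b^0)| \ge 1/(2\sqrt{q})$ with $\mathrm{sign}(\alpha_k(b^0)) = s_k$ preserved. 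Since $b^0\sim\mathrm{Unif}([-C_b,C_b])$ independently of the $\varepsilon_{i,j}$'s, the probability of $|b^0|\le \delta$ is $\delta/C_b = \Omega(C_b^{-1})$, and combining with the sign-matching event gives the claimed $\Omega(C_b^{-1})$ bound.

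The main obstacle is obtaining a continuity window $\delta$ that is uniform over the randomness of the $\varepsilon_{i,j}$'s; this is resolved by the crude bound $|\varepsilon_{i,j}|\le 1$ together with bounded binomials, which suffices because $\delta$ is only required to be $q$-dependent (not $C_b$-dependent). Finally I remark that the literal non-zero statement of the lemma admits an even simpler argument yielding $\Omega(1)$: conditional on $\varepsilon_{q,j}\neq 0$ (probability $2/3$), each $\alpha_k$ is a nonzero polynomial of exact degree $q-k$ with at most $q-k$ roots, so $\Pr[\alpha_k(b^0)=0]=0$ by continuity of $b^0$; however, the sign-matching refinement described above is what plugs into Assumption~\ref{assumption:NeuronPositiveCorrelation} in the subsequent analysis.
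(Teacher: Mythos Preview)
Your proposal is correct and follows essentially the same approach as the paper: fix the signs at $b^0=0$ using the independence of the randomized Hermite coefficients $\varepsilon_{i,j}$, then extend to a $q$-dependent neighborhood of $0$ by polynomial continuity, so that $\Pr[|b^0|\le\delta(q)]=\Omega(C_b^{-1})$. Your write-up is in fact more detailed than the paper's two-line sketch, and your observation that the sign-matching refinement (rather than mere non-vanishing) is what feeds into Assumption~\ref{assumption:NeuronPositiveCorrelation} is exactly right.
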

\begin{proof}
    Because of the randomized Hermite coefficient of the activation function, 
    $a^0\sigma(z)$ has positive coefficients with probability $2^{-(q-p+1)}$.
    As long as the bias $b^0$ is small, $a^0\sigma(z+b^0)$ also have positive coefficients.
\end{proof}



\subsection{Decomposition of Gradient Update}\label{subsection:alignment-0}
From now, we discuss the training dynamics of the first layer.
We focus on one neuron in $J_1$ that satisfies Assumption~\ref{assumption:NeuronPositiveCorrelation}.
To track the alignment during the dynamics, we define $\kappa^t_m = v_m^\top w_j^t$. 
We first consider the decomposition of the update into population and stochastic terms:
\begin{lemma}\label{lemma:OneStep}
    Suppose that $\eta^t=\eta \leq \cC d^{-1}$ and $\kappa^t_1 \geq \frac12 d^{-\frac12}$.
    With high probability, the update of $\kappa^{t}_m$ can be bounded as
    \begin{align}
        \kappa^{t+1}_1 \geq 
        \kappa_1^t 
        +\frac{\eta}{\sqrt{M}}\sum_{m=1}^M
        \sum_{i=p}^q
        \left[i\alpha_i\beta_{m,i} (\kappa^t_m)^{i-1} (v_1^\top v_m - \kappa_1^t\kappa_m^t)\right]
        -
        \kappa_1^t\eta^2 \CB^2 d + \eta v_1^\top (I-w^t{w^t}^\top)  Z^t.
        \label{eq:OneStep-102}
    \end{align}
    Moreover, $ \kappa^{t+1}_m$ is evaluated as
    \begin{align}
     &  \kappa_m^t 
        +\frac{\eta}{\sqrt{M}}\sum_{m'=1}^M
        \sum_{i=p}^q
        \left[i\alpha_i\beta_{m',i} (\kappa^t_{m'})^{i-1} (v_m^\top v_{m'} - \kappa_m^t\kappa_{m'}^t)\right]-\frac{|\kappa_m^t|+\eta \CB d^\frac12}{2}\eta^2 \CB^2 d 
        \\ & \quad+ \eta v_m^\top (I-w^t{w^t}^\top) Z^t
    \\ &  \leq   \kappa^{t+1}_m  \\ &\leq   \kappa_m^t 
        +\frac{\eta}{\sqrt{M}}\sum_{m'=1}^M
        \sum_{i=p}^q
        \left[i\alpha_i\beta_{m',i} (\kappa^t_{m'})^{i-1} (v_m^\top v_{m'} - \kappa_m^t\kappa_{m'}^t)\right]
        +
        \frac{|\kappa_m^t|+\eta \CB d^\frac12}{2}\eta^2 \CB^2 d 
         \\ & \quad+ \eta v_m^\top (I-w^t{w^t}^\top) Z^t.
        \label{eq:OneStep-103}
    \end{align}
    Here $Z^t$ is a mean-zero random variable satisfying $\|Z^t\|=\tilde{O}(1)$ with high probability.
    For any $v\in \mathbb{R}^{d}$ with $\|v\|=O(1)$ that is independent from $Z^t$, $|v^\top Z^t|=\tilde{O}(1)$ with high probability.
    Also, $|\kappa_m^t-\kappa_m^{t+1}|=\tilde{O}(\eta)$ with high probability.
\end{lemma}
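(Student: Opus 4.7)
The plan is to split the normalized SGD step into a population drift, a stochastic fluctuation, and a spherical-normalization correction, then project onto $v_m$ to read off \eqref{eq:OneStep-102} and \eqref{eq:OneStep-103}. Write the unnormalized update as $\tilde{w}^{t+1} = w^t + \eta(I - w^t{w^t}^\top)G^t$ where $G^t := y^t a\sigma'({w^t}^\top x^t + b)x^t$ is the per-sample stochastic gradient, and decompose $G^t = \mu(w^t) + \tilde Z^t$ with $\mu(w^t)=\mathbb{E}[G^t\mid w^t]$ and $\tilde Z^t$ mean-zero conditional on $w^t$.

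First I would compute $\mu$ explicitly. The Gaussian label noise $\nu$ contributes zero in expectation; for each task $m$, Stein's identity gives $\mathbb{E}[f_m(v_m^\top x)\,a\sigma'(w^\top x+b)\,x] = \nabla_w \mathbb{E}[f_m(v_m^\top x)\,a\sigma(w^\top x + b)]$. Hermite-expanding both factors and using $\mathbb{E}[\He_i(v_m^\top x)\He_j(w^\top x)] = \delta_{ij}(v_m^\top w)^i$ yields
\begin{align*}
\mu(w^t) = \frac{1}{\sqrt{M}}\sum_{m=1}^M \sum_{i=p}^q i\alpha_i\beta_{m,i}(\kappa_m^t)^{i-1}v_m.
\end{align*}
Projecting first by $(I-w^t{w^t}^\top)$ and then by $v_{m'}$ recovers the drift sums in the lemma. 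Next I would bound the stochastic fluctuation: for any direction $v$ with $\|v\|=O(1)$ independent of $x^t$, $v^\top\tilde Z^t$ is a product of the label $y^t$, a polynomial of the single Gaussian ${w^t}^\top x^t$, and $v^\top x^t\sim \mathcal{N}(0,\|v\|^2)$; by Lemma~\ref{lemma:HPB-Y} and standard Gaussian-polynomial tail bounds each factor is $\tilde{O}(1)$ with high probability, hence so is $|v^\top \tilde Z^t|$.

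Finally I would handle the spherical normalization. Since $(I-w^t{w^t}^\top)G^t$ lies in the tangent space at $w^t$, $w^{t\top}\tilde w^{t+1}=1$ and so $\|\tilde w^{t+1}\|^2 = 1 + \eta^2\|(I-w^t{w^t}^\top)G^t\|^2$. High-probability bounds $|y^t|,|a\sigma'(\cdot)|=\tilde{O}(1)$ together with $\|x^t\|^2 = \tilde{O}(d)$ yield $\eta^2\|(I-w^t{w^t}^\top)G^t\|^2 \le \eta^2 \CB^2 d \ll 1$ under $\eta \leq \cC d^{-1}$. Taylor expanding $(\|\tilde w^{t+1}\|)^{-1}$ and writing $\kappa_m^{t+1} = (v_m^\top \tilde w^{t+1})/\|\tilde w^{t+1}\|$ produces a correction of the form $-\tfrac{1}{2}\kappa_m^t\,\eta^2\|(I-w^t{w^t}^\top)G^t\|^2$ plus higher-order terms absorbed by the step-size choice. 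For \eqref{eq:OneStep-102} the hypothesis $\kappa_1^t \geq \tfrac{1}{2} d^{-1/2}>0$ allows keeping only the one-sided loss $-\kappa_1^t\eta^2 \CB^2 d$; for general $m$ in \eqref{eq:OneStep-103}, $\kappa_m^t$ may have either sign and is moved by at most $\eta\CB d^{1/2}$ within a single step, so the two-sided wrapping $\pm\tfrac{|\kappa_m^t|+\eta\CB d^{1/2}}{2}\eta^2\CB^2 d$ is used. The one-step increment $|\kappa_m^{t+1}-\kappa_m^t|=\tilde{O}(\eta)$ then follows immediately by combining the drift, noise, and normalization pieces.

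The main technical obstacle is the bookkeeping in this last step: the stochastic $O(\eta)$ contribution in the numerator and the deterministic $O(\eta^2 d)$ contribution from the denominator must be separated so that they do not mix into spurious cross-terms, and one must pass carefully from the quantity $v_m^\top \tilde w^{t+1}$ (whose sign is controlled only through $\kappa_m^t$ itself) to the signed $\kappa_m^t$ that enters the two-sided bound. This care is precisely what forces the two different forms for the $\kappa_1$ versus $\kappa_m$ bounds.
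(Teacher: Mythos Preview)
Your proposal is correct and follows essentially the same route as the paper: decompose the stochastic gradient into population drift plus mean-zero noise via the Hermite expansion, then Taylor-expand the spherical normalization and control the second- and third-order corrections using $\eta\le \cC d^{-1}$ and the high-probability bound $\|G^t\|^2=\tilde O(d)$. One small imprecision: your displayed formula for $\mu(w^t)$ omits a $w^t$-parallel component (the paper's Lemma~\ref{lemma:OneStep-1} records an extra $\sqrt{(i+2)(i+1)}\alpha_{i+2}\beta_{m,i}(\kappa_m^t)^i\,w^t$ term, because the Hermite orthogonality you invoke only holds on the sphere and the Euclidean gradient picks up a radial contribution), but since you immediately apply $(I-w^t{w^t}^\top)$ this term vanishes and the projected drift you obtain is exactly right.
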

To prove Lemma~\ref{lemma:OneStep}, we first establish the following characterization of the stochastic gradient.
\begin{lemma}\label{lemma:OneStep-1}
    The stochastic gradient $-\nabla_w y^ta^t\sigma({w^t}^\top x^t+b^t)$ is decomposed as
    \begin{align}
     &   -\nabla_w y^ta^t\sigma({w^t}^\top x^t+b^t)
    \\ &    = 
        -\frac{1}{\sqrt{M}}\sum_{m=1}^M
        \sum_{i=p}^q
        \left[i\alpha_i\beta_{m,i} (\kappa_m^t)^{i-1} v_m
        +\sqrt{(i+2)(i+1)}\alpha_{i+2}\beta_{m,i} (\kappa_m^t)^{i}w^t\right]+Z^t,
    \end{align}
    where $Z^t$ is a mean-zero random variable such that $\|Z^t\|=\tilde{O}(d^\frac12)$ and $|v^\top Z^t|=\tilde{O}(1)$ for any fixed $v\in \mathbb{S}^{d-1}$ with high probability.
    Also, $\|\nabla_w y^ta^t\sigma({w^t}^\top x^t+b^t)\|=\tilde{O}(d^\frac12)$ with high probability, and for any fixed $v$ with $\|v\|=O(1)$, $\left(\nabla_w y^ta^t\sigma({w^t}^\top x^t+b^t)\right)^\top v = \tilde{O}(1)$ with high probability.
\end{lemma}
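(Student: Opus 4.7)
The plan is to isolate the conditional mean of the stochastic gradient (which will produce exactly the two signal contributions in the statement) and to absorb the fluctuation into a mean-zero residual $Z^t$ whose norm and directional concentration I will verify separately.

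The first step is to expand the gradient. Since $\nabla_w[a\sigma(w^\top x+b)] = a\sigma'(w^\top x+b)\,x$, the stochastic gradient equals $-y^t a^t\sigma'({w^t}^\top x^t+b^t)\,x^t$. Substituting $y^t = M^{-1/2}\sum_m f_m(v_m^\top x^t)+\nu^t$ and conditioning on the filtration up to time $t$, the additive Gaussian label noise $\nu^t$ integrates to zero by independence, reducing the conditional mean to $-M^{-1/2}\sum_m\mathbb{E}[f_m(v_m^\top x)\,a^t\sigma'({w^t}^\top x+b^t)\,x]$.

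The heart of the calculation is computing each of these $M$ expectations via Hermite expansion. Writing $f_m = \sum_{i=p}^q \beta_{m,i}\He_i$ and $a^t\sigma(\cdot+b^t) = \sum_i \alpha_i\He_i$ in the normalized Hermite basis, the identity $\He_k' = \sqrt{k}\He_{k-1}$ gives $a^t\sigma'(\cdot+b^t) = \sum_l \sqrt{l+1}\,\alpha_{l+1}\He_l$. I then apply Stein's lemma $\mathbb{E}[g(x)x] = \mathbb{E}[\nabla_x g(x)]$ to $g(x) = \He_i(v_m^\top x)\He_l({w^t}^\top x)$, which together with $\|v_m\|=\|w^t\|=1$ and the bivariate orthogonality $\mathbb{E}[\He_j(u^\top x)\He_k(\tilde u^\top x)] = (u^\top\tilde u)^j\delta_{jk}$ yields
\begin{align*}
\mathbb{E}[\He_i(v_m^\top x)\He_l({w^t}^\top x)\,x] = \sqrt{i}\,(\kappa_m^t)^{i-1}v_m\,\delta_{l,i-1} + \sqrt{l}\,(\kappa_m^t)^{i} w^t\,\delta_{l,i+1}.
\end{align*}
Summing with the weights $\beta_{m,i}\sqrt{l+1}\alpha_{l+1}$, the $l=i-1$ branch contributes $i\alpha_i\beta_{m,i}(\kappa_m^t)^{i-1}v_m$ and the $l=i+1$ branch contributes $\sqrt{(i+1)(i+2)}\,\alpha_{i+2}\beta_{m,i}(\kappa_m^t)^{i} w^t$, matching the statement exactly (the $i$-sum truncates at $q$ because $\beta_{m,i}=0$ for $i>q$).

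Defining $Z^t$ as the stochastic gradient minus this conditional mean, it is mean-zero by construction. For the norm bound, Lemma~\ref{lemma:HPB-Y} gives $|y^t| = \tilde{O}(1)$ with high probability; combined with $|{w^t}^\top x^t| = \tilde{O}(1)$ (standard Gaussian tail, since $\|w^t\|=1$) and the polynomial (or Lipschitz, for ReLU) growth of $\sigma'$, this yields $|a^t\sigma'({w^t}^\top x^t+b^t)| = \tilde{O}(1)$; combined with $\|x^t\| = \tilde{O}(\sqrt{d})$, the triangle inequality against the bounded conditional mean gives $\|Z^t\| = \tilde{O}(\sqrt{d})$. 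For any fixed $v$ with $\|v\|=O(1)$ independent of $(x^t,\nu^t)$, $v^\top x^t \sim \mathcal{N}(0,\|v\|^2)$ concentrates at $\tilde{O}(1)$, so $|v^\top y^t a^t\sigma'(\cdot)x^t| = \tilde{O}(1)$ and therefore $|v^\top Z^t| = \tilde{O}(1)$ with high probability; the same reasoning applied to $v=v_m$ yields the one-step increment estimate $|\kappa_m^t - \kappa_m^{t+1}|=\tilde{O}(\eta)$ used subsequently. The main subtlety I anticipate is justifying the interchange of sum and expectation in the Hermite expansion when $\sigma$ is non-polynomial (ReLU under Assumption~\ref{assumption:activation}(2)), where the student series is infinite; this follows from absolute summability of the Hermite coefficients of $\sigma'$, i.e.\ $\sigma'\in L^2$ against the Gaussian measure, so that only the two paired indices survive after orthogonality. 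The remaining concentration estimates are routine Gaussian polynomial tail bounds.
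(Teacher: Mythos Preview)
Your proposal is correct and follows essentially the same approach as the paper: compute the conditional mean of the stochastic gradient via Hermite orthogonality, define $Z^t$ as the residual, and invoke Lemma~\ref{lemma:HPB-Y} together with Gaussian tail bounds for the norm and directional concentration. The only cosmetic difference is that you apply Stein's lemma directly to the product $\He_i(v_m^\top x)\He_l({w^t}^\top x)$, whereas the paper first records the identity $\mathbb{E}[\He_i(x_1)f(u^\top x)\,x] = i u_1^{i-1}\mathbb{E}[f^{(i-1)}]\,e_1 + u_1^i u\,\mathbb{E}[f^{(i+1)}]$ (via the integration-by-parts property in Appendix~\ref{app:prelim}) and then specializes; these are the same computation organized slightly differently.
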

\begin{proof}
    For $i$-th Hermite polynomial $\He_i$ and $u\in \mathbb{S}^{d-1}$, we have that
    \begin{align}
        &\mathbb{E}_{x\sim \mathcal{N}(0,I_d)}[\He_i(x_1)f(u^\top x)x_1]
        = i u_1^{i-1}\mathbb{E}_{x\sim \mathcal{N}(0,I_d)}[f^{(i-1)}(u^\top x)]+u_1^{i+1}\mathbb{E}_{x\sim \mathcal{N}(0,I_d)}[f^{(i+1)}(u^\top x)],\\
        &\mathbb{E}_{x\sim \mathcal{N}(0,I_d)}[\He_i(x_1)f(u^\top x)x_2]
        = u_1^{i}u_2\mathbb{E}_{x\sim \mathcal{N}(0,I_d)}[f^{(i+1)}(u^\top x)].
    \end{align}
    Therefore, 
    \begin{align}
        \mathbb{E}_{x\sim \mathcal{N}(0,I_d)}[\He_i(x_1)f(u^\top x)x]
        =
        \begin{pmatrix}
            iu_1^{i-1}
            \\ 0
            \\ \vdots
            \\ 0
        \end{pmatrix}
        \mathbb{E}_{x\sim \mathcal{N}(0,I_d)}[f^{(i-1)}(u^\top x)]
        +
        u_1^{i}u
        \mathbb{E}_{x\sim \mathcal{N}(0,I_d)}[f^{(i+1)}(u^\top x)]
        .
    \end{align}
    Using this fact, the population gradient is computed as
    \begin{align}
    &\nabla_w \mathbb{E}\left[y^t a^t \sigma({w^t}^\top x+b^t)\right]
    \\&=\mathbb{E}\left[\frac{1}{\sqrt{M}}\sum_{m=1}^M f_m(v_m^\top x)a^t \sigma'({w^t}^\top x+b^t)x\right]
    \\ & = 
    \mathbb{E}\left[\frac{1}{\sqrt{M}}\left(\sum_{m=1}^M\sum_{i=p}^p \frac{\beta_{m,i}}{\sqrt{i!}}\He_i(z)\right)
    \left(\sum_{i=0}^\infty \frac{i\alpha_i}{\sqrt{i!}} \He_{i-1}(z)\right)
    x\right]
    \\ &=
    \frac{1}{\sqrt{M}}\sum_{m=1}^M
        \sum_{i=p}^q
        \left[i^2\alpha_i\beta_{m,i} (v_m^\top w^t)^{i-1}v_m
        +\sqrt{(i+2)(i+1)}\alpha_{i+2}\beta_{m,i} (v_m^\top w^t)^{i}w^t\right]
        \\ &= \frac{1}{\sqrt{M}}\sum_{m=1}^M
        \sum_{i=p}^q
        \left[i\alpha_i\beta_{m,i} (\kappa_m^t)^{i-1} v_m
        +\sqrt{(i+2)(i+1)}\alpha_{i+2}\beta_{m,i} (\kappa_m^t)^{i}w^t\right]
        .
    \end{align}
    We define $Z^t$ as the difference between the population gradient and the empirical gradient:
    \begin{align}
        Z^t = -\nabla_w y^ta^t\sigma({w^t}^\top x^t+b^t) + \nabla_w \mathbb{E}\left[y^t a^t \sigma({w^t}^\top x+b^t)\right].
    \end{align}
    We have
    \begin{align}
      &  -\nabla_w y^ta^t\sigma({w^t}^\top x^t+b^t)
      \\ &  = 
        -\frac{1}{\sqrt{M}}\sum_{m=1}^M
        \sum_{i=p}^q
        \left[i\alpha_i\beta_{m,i} (\kappa_m^t)^{i-1} v_m
        +\sqrt{(i+2)(i+1)}\alpha_{i+2}\beta_{m,i} (\kappa_m^t)^{i}w^t\right]+Z^t
    \end{align}
    It is easy to see that $\mathbb{E}_{x^t,y^t}[Z^t]=0$, and
    $\|Z^t\|=\tilde{O}(d^\frac12)$  
    with high probability by \eqref{eq:pNormofy-5} of Lemma~\ref{lemma:HPB-Y}.
    \eqref{eq:pNormofy-5} also yields that $|v^\top Z^t|=\tilde{O}(1)$ with high probability for any fixed $v\in \mathbb{S}^{d-1}$.
    Finally, the norm of $\nabla_w y^ta^t\sigma({w^t}^\top x^t+b^t)=y^t a^t \sigma'({w^t}^\top x^t+b^t)x^t$ is of order $\tilde{O}(d^\frac12)$ with high probability, and for any fixed $v$ with $\|v\|=O(1)$, $\left(\nabla_w y^ta^t\sigma({w^t}^\top x^t+b^t)\right)^\top v = \tilde{O}(1)$ with high probability.
\end{proof}

\begin{proofof}[Lemma~\ref{lemma:OneStep}]
    First, we consider $\kappa_1^t$. We have
    \begin{align}
     &   \|w^t - \eta (I-w^t{w^t}^\top) \nabla_w (-y^ta^t\sigma({w^t}^\top x^t+b^t))\|^{-1}
      \\ &=\left(1 + \eta^2\|(I-w^t{w^t}^\top) \nabla_w y^ta^t\sigma({w^t}^\top x^t+b^t)\|^2\right)^{-\frac12}
        \\ & \geq 1 - \frac{\eta^2}{2}\|(I-w^t{w^t}^\top) \nabla_w y^ta^t\sigma({w^t}^\top x^t+b^t)\|^2
        \\ & \geq 1 - \frac{\eta^2}{2}\|\nabla_w y^ta^t\sigma({w^t}^\top x^t+b^t)\|^2.
        \label{eq:OneStep-101}
    \end{align}
    By using this, with high probability, 
    \begin{align}
     v_1^\top w^{t+1}&=
       v_1^\top \frac{w^t - \eta (I-w^t{w^t}^\top)  \nabla_w (-y^ta^t\sigma({w^t}^\top x^t+b^t))}{\|w^t - \eta (I-w^t{w^t}^\top) \nabla_w (-y^ta^t\sigma({w^t}^\top x^t+b^t))\|}
        \\ & \geq \kappa_1^t 
        + \eta v_1^\top(I-w^t{w^t}^\top)\nabla_w y^ta^t\sigma({w^t}^\top x^t+b^t)- \frac{\kappa_1^t\eta^2}{2}\|\nabla_w y^ta^t\sigma({w^t}^\top x^t+b^t)\|^2
        \\ & \quad-\frac{\eta^3}{2}|v_1^\top (I-w^t{w^t}^\top) \nabla_w (-y^ta^t\sigma({w^t}^\top x^t+b^t))|\|\nabla_w y^ta^t\sigma({w^t}^\top x^t+b^t)\|^2
        \\ & \geq \kappa_1^t + \eta v_1^\top \nabla_w\mathbb{E}\left[ y^ta^t\sigma({w^t}^\top x^t+b^t)\right] + v_1^\top Z^t-\frac{\kappa_1^t\eta^2}{2}\|\nabla_w y^ta^t\sigma({w^t}^\top x^t+b^t)\|^2
        \\ &
        \quad -\frac{\eta^3}{2}\|\nabla_w y^ta^t\sigma({w^t}^\top x^t+b^t)\|^3
        \\ & \overset{(i)}{\geq} \kappa_1^t + \eta v_1^\top(I-w^t {w^t}^\top)\frac{1}{\sqrt{M}}\sum_{m=1}^M
        \sum_{i=p}^q
        \left[i\alpha_i\beta_{m,i} (\kappa^t_m)^{i-1} v_m\right] + \eta v_1^\top(I-w^t {w^t}^\top) Z^t
     \\ & \quad   -
        \frac{\kappa_1^t\eta^2 \CB^2 d}{2} 
        -\frac{\eta^3 \CB^3 d^\frac32}{2}
        \\ & \overset{(ii)}{\geq} \kappa_1^t 
        +\eta \frac{1}{\sqrt{M}}\sum_{m=1}^M
        \sum_{i=p}^q
        \left[i\alpha_i\beta_{m,i} (\kappa^t_m)^{i-1} (v_1^\top v_m - \kappa_1^t\kappa_m^t)\right]
        -
        \kappa_1^t\eta^2 \CB^2 d 
         \\ & \quad+ \eta v_1^\top (I-w^t{w^t}^\top) Z^t, 
    \end{align}
    where we used Lemma~\ref{lemma:OneStep-1} in $(i)$, and $(ii)$ is due to the fact that we take $\eta \leq \cC d^{-1}$ and $\kappa_1^t\geq d^{-\frac12}$, and hence $\eta^3 \CB^3 d^\frac32 \leq \kappa_1^t\eta^2\CB^2 d$ and $-\frac{\kappa_1^t\eta^2 \CB^2 d}{2} 
    -\frac{\eta^3 \CB^3 d^\frac32}{2}\geq -\kappa_1^t\eta^2 \CB^2 d$.
    Thus we obtained \eqref{eq:OneStep-102}.

    In the same way, for the lower bound on $\kappa^{t}_m$,  we have
    \begin{align}
        \kappa^{t+1}_m &\geq  \kappa_m^t 
        + \eta v_m^\top(I-w^t{w^t}^\top)\nabla_w y^ta^t\sigma({w^t}^\top x^t+b^t)- \frac{|\kappa_m^t|\eta^2}{2}\|\nabla_w y^ta^t\sigma({w^t}^\top x^t+b^t)\|^2
        \\ & \quad-\frac{\eta^3}{2}|v_m^\top (I-w^t{w^t}^\top) \nabla_w (-y^ta^t\sigma({w^t}^\top x^t+b^t))|\|\nabla_w y^ta^t\sigma({w^t}^\top x^t+b^t)\|^2
        \\ & \geq 
       \kappa_m^t 
        +\frac{\eta}{\sqrt{M}}\sum_{m'=1}^M
        \sum_{i=p}^q
        \left[i\alpha_i\beta_{m',i} (\kappa^t_{m'})^{i-1} (v_m^\top v_{m'} - \kappa_m^t\kappa_{m'}^t)\right]
        -
        \frac{|\kappa_m^t|+\eta \CB d^\frac12}{2}\eta^2 \CB^2 d 
         \\ & \quad+ \eta v_m^\top (I-w^t{w^t}^\top) Z^t.
    \end{align}
    As for the upper bound, 
    \begin{align}
        \kappa^{t+1}_m &\leq  \kappa_m^t 
        + \eta v_m^\top(I-w^t{w^t}^\top)\nabla_w y^ta^t\sigma({w^t}^\top x^t+b^t)+ \frac{|\kappa_m^t|\eta^2}{2}\|\nabla_w y^ta^t\sigma({w^t}^\top x^t+b^t)\|^2
        \\ & \quad+\frac{\eta^3}{2}|v_m^\top (I-w^t{w^t}^\top) \nabla_w (-y^ta^t\sigma({w^t}^\top x^t+b^t))|\|\nabla_w y^ta^t\sigma({w^t}^\top x^t+b^t)\|^2
        \\ & \leq 
       \kappa_m^t 
        +\frac{\eta}{\sqrt{M}}\sum_{m'=1}^M
        \sum_{i=p}^q
        \left[i\alpha_i\beta_{m',i} (\kappa^t_{m'})^{i-1} (v_m^\top v_{m'} -\kappa_m^t\kappa_{m'}^t)\right]
        +
        \frac{|\kappa_m^t|+\eta \CB d^\frac12}{2}\eta^2 \CB^2 d 
         \\ & \quad+ \eta v_m^\top (I-w^t{w^t}^\top)  Z^t.
    \end{align}
    Finally, we check that $|\kappa_m^t-\kappa_m^{t+1}|=\tilde{O}(\eta)$.
    From the above argument, we have
    \begin{align}
        |\kappa_m^{t+1}-\kappa_m^t|
        &\leq \eta \left|v_m^\top(I-w^t{w^t}^\top)\nabla_w y^ta^t\sigma({w^t}^\top x^t+b^t)\right|
        +\frac{|\kappa_m^t|\eta^2}{2}\|\nabla_w y^ta^t\sigma({w^t}^\top x^t+b^t)\|^2
        \\ &\quad + \frac{\eta^3}{2}|v_m^\top (I-w^t{w^t}^\top) \nabla_w (-y^ta^t\sigma({w^t}^\top x^t+b^t))|\|\nabla_w y^ta^t\sigma({w^t}^\top x^t+b^t)\|^2.
    \end{align}
    The first term is bounded by $\tilde{O}(\eta)$ because $\left|v_m^\top(I-w^t{w^t}^\top)\nabla_w y^ta^t\sigma({w^t}^\top x^t+b^t)\right|=\tilde{O}(1)$.
    The second term is bounded by $\tilde{O}(\eta)$ when $\eta \leq \cC d^{-1}$, because $\|\nabla_w y^ta^t\sigma({w^t}^\top x^t+b^t)\|^2=\tilde{O}(d)$.
    The third term is bounded by $\tilde{O}(\eta)$ when $\eta \leq \cC d^{-1}$, because $|v_m^\top (I-w^t{w^t}^\top) \nabla_w (-y^ta^t\sigma({w^t}^\top x^t+b^t))|=\tilde{O}(1)$ and $\|\nabla_w y^ta^t\sigma({w^t}^\top x^t+b^t)\|^2=\tilde{O}(d)$.
    Therefore, we obtained that $|\kappa_m^t-\kappa_m^{t+1}|=\tilde{O}(\eta)$.
\end{proofof}

\subsection{Phase I: Weak Recovery for One Direction}\label{subsection:alignment-1}
Based on Lemma~\ref{lemma:OneStep}, we analyze the stochastic gradient update of the first-layer parameters. 
The goal of this subsection is to prove the formal version of Lemma \ref{lemma:AlignmentFirstPhase}. 
\newtheorem*{lemma:AlignmentFirstPhase-1}{\rm\bf Lemma~\ref{lemma:AlignmentFirstPhase} (formal)}
\begin{lemma:AlignmentFirstPhase-1} 
    Suppose that  $|v_{m'}^\top v_m|\leq \cC M^{-1}$ for all $m\ne m'$ and $M\leq \cC d^{\frac12}$, and $\eta^t=\eta \leq \cC M^{-\frac12}d^{-\frac{p}{2}}$.
    Then, with high probability, there exists some time $t_1\leq T_{1,1}=\Theta(\eta^{-1}M^{\frac12}d^{\frac{p-2}{2}})$ such that the following holds:
    \begin{itemize}
        \item[(i)] $\kappa_1^{t_1}> \cA$, and 
        \item[(ii)] $|\kappa_m^{t_1}|\leq 5\cB M^{-1}$, for all $m=2,\cdots,M$.
    \end{itemize}
\end{lemma:AlignmentFirstPhase-1} 
We start with the following lemma, which introduces (deterministic) auxiliary sequences that upper and lower bound the stochastic updates of $\kappa_m^t$.
\begin{lemma}\label{lemma:Lower}
    Suppose that $|v_{m'}^\top v_m|\leq \cC M^{-1}$ for all $m\ne m'$ and $M\leq \cC d^{\frac12}$.
    For all $s=0,1,\cdots,t$, we assume that
    \begin{itemize}
        \item[(a)] $\kappa^s_1\leq \cA$ (only required for (i): Lower bound), 
        \item[(b)] $|\kappa_m^s|\leq \kappa_1^s$ for all $m=2,\cdots,M$, and
        \item[(c)] $|\kappa_m^s| \leq \CA\cB M^{-1}$ for all $m=2,\cdots,M$.
    \end{itemize}
    Then, taking $\eta^t=\eta \leq  \cC M^{-\frac12}d^{-\frac{p}{2}}$, we have the following bounds.

     \begin{itemize}[leftmargin=*]
   \item[{\bf(i):}]{\bf Lower bound of $\kappa^s_1$:} For $\kappa^{t}_1$, we have
     \begin{align}\label{eq:ConnectODE-21}
         \kappa^{s+1}_1\geq (1-\cA)\kappa^{0}_1
         + \eta(1-\cA)\sum_{s'=0}^{s}\frac{p\alpha_p\beta_{1,p}}{\sqrt{M}} (\kappa_1^{s'})^{p-1},
     \end{align}
     for $s=0,1,\cdots,t$.
     Consequently, by introducing an auxiliary sequence $(P^s)_{s=0}^{t+1}$ with $P^0=(1-\cA)\kappa^0_1$, and
    \begin{align}\label{eq:DefineAuxiliaryA}
      P^{s+1} &= P^{s}
         +\eta\cA\frac{p\alpha_p\beta_{1,p}}{\sqrt{M}} (P^{s})^{p-1}\quad (s=0,1,\cdots,t),
    \end{align}
   $\kappa^s_1$ is lower-bounded by $P^s$ for all $s=0,1,\cdots,t+1$, with high probability. 
   
   \item[{\bf (ii):}]{\bf Upper bound of  $\max_{m\ne 1}|\kappa_m^s|$:} For an auxiliary sequence $(Q^s)_{s=0}^{t+1}$ with
 $Q^0 = (1+\cA)\max\{ \max_{m\ne 1}|\kappa_m^{0}|,$ $\frac12 d^{-\frac12}\}$, and
    \begin{align}\label{eq:ConnectODE-6}
        Q^{s+1} = Q^{s}
       +(1+\cA)\frac{\eta}{\sqrt{M}}p\max_{m}|\alpha_p\beta_{m,i}| (Q^{s})^{p-1}
       +\cB
       \frac{\eta}{M^\frac32}p\alpha_p\beta_{1,p} (\kappa^s_{1})^{p-1},
    \end{align}
     $\max_{m\ne 1}|\kappa_m^s|$ is upper-bounded by $Q^s$ for all $s=0,1,\cdots,t+1$, with high probability.
    \end{itemize}
\end{lemma}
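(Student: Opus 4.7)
The plan is to prove both bounds by induction on $s$: at each step apply Lemma~\ref{lemma:OneStep}, separate the deterministic ``signal'' coming from the $(m,i)=(1,p)$ component of the population gradient from the remaining deterministic contributions (cross-direction interference, higher-order Hermite terms, and the $\eta^2\CB^2 d$ discretization error) and from the stochastic contribution $\eta v_m^\top(I-w^s{w^s}^\top)Z^s$, and show that under assumptions (a)--(c) the deterministic interference is bounded by an $\cA$-fraction of the signal while the stochastic contribution is handled uniformly over $s\leq t$ by Azuma--Hoeffding plus a union bound.

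For part (i), the signal term equals $\tfrac{\eta p\alpha_p\beta_{1,p}}{\sqrt{M}}(\kappa_1^s)^{p-1}\bigl(1-(\kappa_1^s)^2\bigr)$, which under (a) is at least $(1-\cA)\tfrac{\eta p\alpha_p\beta_{1,p}}{\sqrt{M}}(\kappa_1^s)^{p-1}$ after absorbing $\cA^2$ into $\cA$. The higher-order $m=1$ terms share the sign of the signal by Assumption~\ref{assumption:NeuronPositiveCorrelation} and only help. For $m\neq 1$, assumption (c) combined with near-orthogonality yields $|\kappa_m^s|^{i-1}|v_1^\top v_m-\kappa_1^s\kappa_m^s|\leq (\CA\cB M^{-1})^{i-1}\cdot 2\cC M^{-1}$, so the total cross-direction interference summed over $m$ and $p\leq i\leq q$ is $\tilde{O}(\eta M^{-p-1/2})$, a vanishing fraction of the signal once $\kappa_1^s\gtrsim d^{-1/2}$ and $M\leq \cC d^{1/2}$. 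The discretization penalty $\kappa_1^s\eta^2\CB^2 d$ is at most $\kappa_1^s\eta\cdot \tilde{O}(M^{-1/2}d^{1-p/2})$ by the step-size bound, also a vanishing fraction of the signal for $p\geq 3$. Summing these per-step estimates from $s'=0$ to $s$ yields \eqref{eq:ConnectODE-21}. For the martingale $\sum_{s'\leq s}\eta v_1^\top(I-w^{s'}{w^{s'}}^\top)Z^{s'}$, each increment is mean-zero and $\tilde{O}(\eta)$ by Lemma~\ref{lemma:OneStep-1}, so Azuma--Hoeffding with a union bound over $s\leq T_{1,1}$ shows the cumulative noise is comparable to $\kappa_1^0$ and is absorbed into the $(1-\cA)$ prefactor. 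Finally, since $x\mapsto x^{p-1}$ is monotone on $[0,\cA]$ and $(P^s)$ is non-decreasing, an induction on $s$ using the monotonicity transfers the lower bound from $\kappa_1^{s'}$ to $P^{s'}$ and gives $\kappa_1^s\geq P^s$.

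For part (ii), I apply the upper and lower bounds in \eqref{eq:OneStep-103} to $|\kappa_m^{s+1}|$ for each $m\neq 1$. The ``self'' contribution from $m'=m$ is at most $(1+\cA)\tfrac{\eta p\max_{m'}|\alpha_p\beta_{m',p}|}{\sqrt{M}}(|\kappa_m^s|)^{p-1}$ by (c), matching the first update term of \eqref{eq:ConnectODE-6}. The cross term driven by the growing direction $v_1$ is $\tfrac{\eta p\alpha_p\beta_{1,p}}{\sqrt{M}}(\kappa_1^s)^{p-1}(v_m^\top v_1-\kappa_m^s\kappa_1^s)$, bounded in magnitude by $\cB M^{-1}\cdot\tfrac{\eta p\alpha_p\beta_{1,p}}{\sqrt{M}}(\kappa_1^s)^{p-1}$ using $|v_m^\top v_1|\leq \cC M^{-1}$ and (c) (with $\cB$ absorbing $\cC$), matching the last term of the $Q$-recursion. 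Remaining cross-terms with $m'\neq 1,m$ and higher Hermite indices $i>p$ are strictly smaller by the same reasoning as in part (i) and are absorbed into the $(1+\cA)$ slack. The main technical obstacle throughout is coordinating all the slack: the interference, discretization, and Azuma noise bounds must hold simultaneously and uniformly in $s\leq T_{1,1}$ and in $m\leq M$; because every constant involved is polylogarithmic, a union bound over the $\mathrm{poly}(d)$ relevant events costs only additional $\log d$ factors that are baked into $\CB$, $\cA$, and $\cB$, so the floor $\tfrac12 d^{-1/2}$ in $Q^0$ and the $(1-\cA)$ prefactor in $P^0$ persist throughout the induction.
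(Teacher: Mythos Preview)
Your overall strategy matches the paper's: invoke Lemma~\ref{lemma:OneStep}, separate signal from interference and noise, then induct. But there is a genuine gap in how you bound the cross-direction interference in part~(i).

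You write that assumption~(c) alone gives $|\kappa_m^s|^{i-1}|v_1^\top v_m-\kappa_1^s\kappa_m^s|\le (\CA\cB M^{-1})^{i-1}\cdot 2\cC M^{-1}$ and that summing over $m,i$ yields interference $\tilde O(\eta M^{-p-1/2})$, a vanishing fraction of the signal. The arithmetic is off: summing over $M-1$ directions gives $\tilde O(\eta M^{\,1/2-p})$, not $\tilde O(\eta M^{-p-1/2})$. More importantly, even with the correct exponent this bound is \emph{constant in $\kappa_1^s$}, whereas the signal is $\frac{\eta p\alpha_p\beta_{1,p}}{\sqrt{M}}(\kappa_1^s)^{p-1}$. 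At the beginning of the phase, $\kappa_1^s\asymp d^{-1/2}$, so the ratio interference$/$signal is $\asymp M^{1-p}d^{(p-1)/2}$, which is $\asymp \cC^{-(p-1)}\gg 1$ when $M\le \cC d^{1/2}$ (and blows up badly for small $M$, which the lemma allows). The interference swamps the signal and \eqref{eq:ConnectODE-21} does not follow.

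The fix is exactly what assumption~(b) is for: from $|\kappa_m^s|\le\kappa_1^s$ you get $|\kappa_m^s|^{i-1}\le(\kappa_1^s)^{i-1}$, so the cross-direction interference becomes $\le q^2\eta\sqrt{M}(\kappa_1^s)^{p-1}\cdot\cC M^{-1}+q^2\eta\sqrt{M}(\kappa_1^s)^{p-1}\cdot\CA\cB M^{-1}$, each of which is now the same power of $\kappa_1^s$ as the signal and is an $O(\cA)$-fraction of it for all $s$. This is precisely the route the paper takes. Assumption~(c) alone is only sufficient for part~(ii), where the comparison target is $(Q^s)^{p-1}$ rather than $(\kappa_1^s)^{p-1}$.

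A secondary point: your noise argument (``Azuma over $s\le T_{1,1}$ gives noise $\lesssim\kappa_1^0$'') works because assumption~(a) implicitly caps $t$ at $T_{1,1}$, but the paper is more careful and splits into two regimes: for $\tau\le \CA M(\kappa_1^0)^{2-2p}$ the martingale bound $\eta\CB\sqrt{\tau}$ is absorbed into $\cA\kappa_1^0$, while for larger $\tau$ it is absorbed into $\frac{1}{5}\cA$ times the cumulative signal (using the induction hypothesis $\kappa_1^{s'}\ge(1-\cA)\kappa_1^0$). This two-case split is what lets both the $(1-\cA)\kappa_1^0$ and the $(1-\cA)\sum$ terms survive in \eqref{eq:ConnectODE-21} without assuming anything about the size of $t$.
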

\begin{proof}
    \paragraph{(i) Lower bound of $\kappa^{s}_1$ by $P^s$.}
    If $\kappa_1^t\geq \frac12 d^{-\frac12}$, we have
    $\kappa_1^t\geq \eta \CB d^\frac12$ by the choice of $\eta$.
    If $\kappa_1^s\geq \frac12 d^{-\frac12}$, 
    by Lemma~\ref{lemma:OneStep}, we have
    \begin{align}
         \kappa^{s+1}_1 &\geq 
        \kappa_1^s 
        +\frac{\eta}{\sqrt{M}}\sum_{m=1}^M
        \sum_{i=p}^q
        \left[i\alpha_i\beta_{m,i} (\kappa^s_m)^{i-1} (v_1^\top v_m - \kappa_1^s\kappa_m^s)\right]
        -
        \kappa_1^s\eta^2 \CB^2 d + \eta v_1^\top (I-w^s{w^s}^\top)  Z^s
        \\ & \geq 
        \kappa_1^s 
        + \frac{\eta}{\sqrt{M}}p\alpha_p\beta_{1,p}(1-(\kappa_1^s)^2)(\kappa_1^s)^{p-1}
        +\frac{\eta}{\sqrt{M}}\sum_{m=2}^M
        \sum_{i=p}^q
        \left[i\alpha_i\beta_{m,i} (\kappa^s_m)^{i-1} (v_1^\top v_m - \kappa_1^s\kappa_m^s)\right]
        \\ & \quad -
        \kappa_1^s\eta^2 \CB^2 d + \eta v_1^\top (I-w^s{w^s}^\top)  Z^s
        \\ & \geq 
        \kappa_1^s 
        + \frac{\eta}{\sqrt{M}}p\alpha_p\beta_{1,p}(1-(\kappa_1^s)^2)(\kappa_1^s)^{p-1}
    -q^2\eta\sqrt{M}\max_{m\ne 1}|\kappa_m^s|^{p-1}\max_{m\ne 1}|v_1^\top v_m|\max_{m,i}|\alpha_i\beta_{m,i} |
       \\ & \quad -q^2\eta\sqrt{M}\max_{m\ne 1}|\kappa_m^s|^{p}\max_{m,i}|\alpha_i\beta_{m,i}| -
        \kappa_1^s\eta^2 \CB^2 d + \eta v_1^\top (I-w^s{w^s}^\top)  Z^s 
        \label{eq:ConnectODE-201}
        \\ & \geq \kappa_1^s 
        + \frac{\eta}{\sqrt{M}}p\alpha_p\beta_{1,p}(1-(\kappa_1^s)^2)(\kappa_1^s)^{p-1}
    -q^2\eta\sqrt{M}(\kappa_1^s)^{p-1}\cC M^{-1} 
       \\ & \quad -q^2\eta\sqrt{M}(\kappa_1^s)^{p-1}\CA\cB M^{-1} -
        \eta\kappa_1^s\cC M^{-\frac12}d^{-\frac{p-2}{2}} \CB^2 + \eta v_1^\top (I-w^s{w^s}^\top)  Z^s 
        . 
    \end{align}
    Note that $(\kappa_1^s)^2\leq \cA^2\leq \frac15\cA$, $q^2\eta\sqrt{M}(\kappa_1^s)^{p-1}\cC M^{-1} \leq \frac15\cA\frac{\eta}{\sqrt{M}}p\alpha_p\beta_{1,p}(\kappa_1^s)^{p-1}$, 
    $q^2\eta\sqrt{M}(\kappa_1^s)^{p-1}\CA\cB M^{-1} \leq \frac15\cA\frac{\eta}{\sqrt{M}}p\alpha_p\beta_{1,p}(\kappa_1^s)^{p-1}$, and
    $\eta\kappa_1^s\cC M^{-\frac12}d^{-\frac{p-2}{2}} \CB^2
    \leq \frac15\cA\frac{\eta}{\sqrt{M}}p\alpha_p\beta_{1,p}(\kappa_1^s)^{p-1}$, (where we used $\kappa_1^s\geq \frac12 d^{-\frac12}$ for the last statement).
    Thus, we obtained that
    \begin{align}
        \kappa^{s+1}_1 &\geq\kappa_1^s 
        + (1-\frac45\cA)\frac{\eta}{\sqrt{M}}p\alpha_p\beta_{1,p}(\kappa_1^s)^{p-1}+ \eta v_1^\top (I-w^s{w^s}^\top)  Z^s .
        \label{eq:ConnectODE-104}
    \end{align}

    We prove the assertion by induction.
    Suppose that \eqref{eq:ConnectODE-21} holds for  $s=0,\cdots,\tau$ for some $\tau\leq t$.
    Note that this implies $\kappa_1^{s}\geq (1-\cA)\kappa_1^{0}$ and $\kappa_1^{s}\geq \frac12d^{-\frac12}$.
    By applying \eqref{eq:ConnectODE-104}, we have 
    \begin{align}
       \kappa^{\tau+1}_1   & \geq \kappa^{\tau}_1+(1-\frac45\cA)\frac{\eta}{\sqrt{M}}p\alpha_p\beta_{1,p}(\kappa_1^\tau)^{p-1}+ \eta v_1^\top (I-w^\tau{w^\tau}^\top)  Z^\tau
         \\ & 
     \geq  \kappa^{0}_1+ \sum_{s=0}^\tau (1-\frac45\cA)\frac{\eta}{\sqrt{M}}p\alpha_p\beta_{1,p}(\kappa_1^s)^{p-1}+ \sum_{s=0}^\tau \eta v_1^\top (I-w^s{w^s}^\top)  Z^s
         \label{eq:ConnectODE-1}.
    \end{align}    
    If $\tau\leq \CA M (\kappa_1^0)^{2-2p}$, then
    \begin{align}
        \sum_{s=0}^{\tau} \eta v_1^\top (I-w^s{w^s}^\top)Z^{s} 
         \leq \eta \CB \sqrt{\tau}
         \leq \cC \CB \kappa_1^0 \leq \cA \kappa_1^0
         ,\label{eq:Noise-101}
    \end{align}
    with high probability.
    If $\tau> \CA M (\kappa_1^0)^{2-2p}$,
    \begin{align}
        \sum_{s=0}^{\tau} \eta v_1^\top (I-w^s{w^s}^\top)Z^{s} 
         &\leq \eta \CB \sqrt{\tau}
         \leq \eta \tau \CB \CA^{-\frac12} M^{-\frac12} (\kappa_1^0)^{p-1}
         \leq \frac15\cA\frac{\eta}{\sqrt{M}} \tau p\alpha_p\beta_{1,p}((1-\cA)\kappa_1^0)^{p-1}
         \\ & \leq \frac15\sum_{s=0}^{\tau}\cA\frac{\eta}{\sqrt{M}}p\alpha_p\beta_{1,p} (\kappa_1^s)^{p-1}
         ,\label{eq:Noise-102}
    \end{align}
    with high probability.
    Applying the above evaluations to \eqref{eq:ConnectODE-1}, we have
    \begin{align}
   \eqref{eq:ConnectODE-1}&\geq (1-\cA)\kappa^{0}_1
     + \sum_{s=0}^{\tau}(1-\cA)\frac{\eta}{\sqrt{M}}p\alpha_p\beta_{1,p} (\kappa_1^s)^{p-1}.
          \label{eq:ConnectODE-2}
    \end{align}
    Thus the \eqref{eq:ConnectODE-21} holds also for $s=\tau+1$.
    The induction proves that \eqref{eq:ConnectODE-21} holds until $s=t$.
        
    By repeatedly using \eqref{eq:DefineAuxiliaryA}, the update of $(P^t)_{t=0}^\tau$ is equivalent to
    \begin{align}
        P^{t+1}&= P^{0}
         + \sum_{s=0}^{t}\frac{\eta}{\sqrt{M}}(1-\cA)p\alpha_p\beta_{1,p}(P^s)^{p-1}
    \end{align}
    By comparing this and \eqref{eq:ConnectODE-21}, we conclude that $\kappa_1^s$ is lower bounded by $P^{s}$ for $s=1,2,\cdots,t+1$.

    \paragraph{(ii) Upper bound of $\max_{m\ne 1}|\kappa^{s}_m|$ by $Q^s$:}
    According to Lemma~\ref{lemma:OneStep}, $|\kappa^{s+1}_{m}-\kappa^s_{m}|\leq \CB \eta$ with high probability.
    Thus, the sign of $\kappa^{s+1}_{m}$ is the same as that of $\kappa^s_{m}$, or $|\kappa^{s+1}_{m}|\leq \CB \eta$.
    Therefore,
    \begin{align}
      &  |\kappa^{s+1}_m|
      \\ & \leq \max\bigg\{\CB \eta,
        \bigg|\kappa_m^s 
        +\frac{\eta}{\sqrt{M}}\sum_{m=1}^M
        \sum_{i=p}^q
        \left[i\alpha_i\beta_{m,i} (\kappa^s_m)^{i-1} (v_1^\top v_m - \kappa_m^t\kappa_m^s)\right]
         \\ & \quad\quad\quad\quad\quad\quad\quad\quad\!\!\!\! +
        \frac{|\kappa_m^s|+\eta \CB d^\frac12}{2}\eta^2 \CB^2 d + \eta v_m^\top (I-w^s{w^s}^\top)  Z^s\bigg|\bigg\}
        \\ & \leq \max\bigg\{\CB \eta,
        \bigg|\kappa_m^s 
        +\frac{\eta}{\sqrt{M}}\sum_{i=p}^q
        \left[i\alpha_i\beta_{1,i} (\kappa^s_{1})^{i-1} (v_m^\top v_{1} - \kappa_m^s\kappa_{1}^s)\right]
       \\ & \quad\quad\quad\quad\quad\quad\quad\quad+\frac{\eta}{\sqrt{M}}\sum_{m'\ne 1}^M
        \sum_{i=p}^q
        \left[i\alpha_i\beta_{m',i} (\kappa^s_{m'})^{i-1} (v_m^\top v_{m'} - \kappa_m^s\kappa_{m'}^s)\right]  \\ & \quad\quad\quad\quad\quad\quad\quad\quad
        + 
        \frac{|\kappa_m^s|+\eta \CB d^\frac12}{2}\eta^2 \CB^2 d  +\eta v_m^\top (I-w^s{w^s}^\top)  Z^s\bigg|\bigg\}
        \\ & \leq \max\bigg\{\CB \eta,
        \bigg|\kappa_m^t 
        +\frac{\eta}{\sqrt{M}}\sum_{i=p}^q
       i\alpha_i\beta_{1,i} (\kappa^s_{1})^{i-1} v_m^\top v_{1} \\ &\quad\quad\quad\quad\quad\quad\quad\quad+\frac{\eta}{\sqrt{M}}\sum_{m'\ne 1}^M
        \sum_{i=p}^q
        \left[i\alpha_i\beta_{m',i} (\kappa^s_{m'})^{i-1} (v_m^\top v_{m'} - \kappa_m^s\kappa_{m'}^s)\right]
       \\ & \quad\quad\quad\quad\quad\quad\quad\quad+ \frac{|\kappa_m^s|+\eta \CB d^\frac12}{2}\eta^2 \CB^2 d+ \eta v_m^\top (I-w^s{w^s}^\top)  Z^s\bigg|\bigg\}.
    \end{align}
    
    We have
    \begin{align}
        &
     \bigg|\sum_{i=p}^q
       i\alpha_i\beta_{1,i} (\kappa^s_{1})^{i-1} v_m^\top v_{1}+\sum_{m'\ne 1}^M
        \sum_{i=p}^q
        \left[i\alpha_i\beta_{m',i} (\kappa^s_{m'})^{i-1} (v_m^\top v_{m'} - \kappa_m^s\kappa_{m'}^s)\right]\bigg|
        \\ & \leq 
     \bigg|\sum_{i=p}^q
       i\alpha_i\beta_{1,i} (\kappa^s_{1})^{i-1} v_m^\top v_{1}\bigg| + \bigg|
        \sum_{i=p}^q
        \left[i\alpha_i\beta_{m,i} (\kappa^s_{m})^{i-1} (1 - (\kappa_m^s)^2)\right]\bigg|
       \\ & \quad\quad +\bigg|\sum_{m'\ne 1}^M
        \sum_{i=p}^q
        \left[i\alpha_i\beta_{m',i} (\kappa^s_{m'})^{i-1} (v_m^\top v_{m'} - \kappa_m^s\kappa_{m'}^s)\right]\bigg|
    \\ &\leq q^2 \max_{i}|\alpha_i\beta_{1,i}| (\kappa^s_{1})^{p-1} \max_{m'}|v_1^\top v_{m'}|+
    p\max_{m'}|\alpha_p\beta_{m',p}| |\kappa^s_{m}|^{p-1}
   +  q^2 \max_{m',i}|\alpha_i\beta_{m',i}||\kappa^s_{m}|^{p}
      \\ & \quad+Mq^2\max_{m',i}|\alpha_i\beta_{m',i}| \max_{m\ne 1}|\kappa^s_{m}|^{p-1}( \max_{m',m}|v_m^\top v_{m'}|+\max_{m'\ne 1}|\kappa_m^s|) 
     \\ &\leq q^2 \max_{i}|\alpha_i\beta_{1,i}| (\kappa^s_{1})^{p-1} \cC M^{-1}+
    p\alpha_p\beta_{m,p} (\kappa^s_{m})^{p-1}
   +  q^2 \max_{m,i}|\alpha_i\beta_{m,i}|(\kappa^s_{m})^{p-1}\CA\cB M^{-1}
      \\ & \quad+
   Mq^2\max_{m,i}|\alpha_i\beta_{m,i}| \max_{m\ne 1}(\kappa^s_{m})^{p-1}(\cC M^{-1}+\CA\cB M^{-1})
   \\ & \leq (1+\frac13\cA) p\max_{m}|\alpha_p\beta_{m,p}| \max_{m\ne 1}|\kappa^s_{m}|^{p-1}
   +\cB \frac{p\alpha_p\beta_{1,p}}{M}(\kappa^s_{1})^{p-1}.
    \end{align}
    Also, $\frac{|\kappa_m^s|+\eta \CB d^\frac12}{2}\eta^2 \CB^2 d\leq \frac{|\kappa_m^s|+\cC M^{-\frac12}d^{-\frac{p}{2}} \CB d^\frac12}{2}\eta \cC M^{-\frac12}d^{-\frac{p-2}{2}} \CB^2 \leq \frac13\cA \eta p\alpha_p\beta_{m,p} \max\{|\kappa_m^s|,\frac12d^{-\frac12}\}^{p-1}$.
    Therefore, $\kappa_m^{\tau+1}$ is bounded as
    \begin{align}
       | \kappa_m^{\tau+1}| &\leq \max\bigg\{\CB \eta,\bigg|\kappa_m^{\tau} + 
        (1+\frac13\cA)\frac{\eta}{\sqrt{M}}p\max_{m}|\alpha_p\beta_{m,p}| \max_{m\ne 1}|\kappa^s_{m}|^{p-1}
        +
        \cB\frac{\eta}{M^\frac32}p\alpha_p\beta_{1,p} (\kappa^\tau_{1})^{p-1} 
        \\ & \quad +\frac13\cA \eta p\alpha_p\beta_{1,p} \max\{|\kappa_m^s|,\frac12d^{-\frac12}\}^{p-1}+\eta v_m^\top (I-w^s{w^s}^\top) Z^\tau\bigg|\bigg\}
        \\ & \leq \max\bigg\{\CB \eta,\bigg|\kappa_m^{0}
        +(1+\frac23\cA)\sum_{s=0}^\tau \frac{\eta}{\sqrt{M}}p\max_{m}|\alpha_p\beta_{m,p}|\max\{\max_{m\ne 1}|\kappa^s_{m}|^{p-1},(\frac12 d^{-\frac12})^{p-1}\}
    \\ &\quad +\cB\sum_{s=0}^\tau
       \frac{\eta}{M^\frac32}p\alpha_p\beta_{1,p} (\kappa^s_{1})^{p-1} +\sum_{s=0}^\tau\eta v_m^\top (I-w^s{w^s}^\top) Z^s\bigg|\bigg\}
   \\ &\leq \max\bigg\{\CB \eta, \max_{m\ne 1}|\kappa_m^{0}|
        +(1+\frac23\cA)\sum_{s=0}^\tau \frac{\eta}{\sqrt{M}}p\max_{m}|\alpha_p\beta_{m,p}| \max\{\max_{m\ne 1}|\kappa^s_{m}|^{p-1},(\frac12 d^{-\frac12})^{p-1}\}
         \\ & \quad +\cB\sum_{s=0}^\tau
       \frac{\eta}{M^\frac32}p\alpha_p\beta_{1,p} (\kappa^s_{1})^{p-1}
      + \max_{m\ne 1}\bigg|\sum_{s=0}^\tau\eta v_m^\top (I-w^s{w^s}^\top) Z^s\bigg|\bigg\}.
        \label{eq:OneStep-104}
    \end{align}

    According to the update of $Q^t$, $Q^t\geq \frac12 d^{-\frac12}\geq \CB \eta$ for all $t$.
    Moreover, by $\eta \leq \cC M^{-\frac12}d^{-\frac{p}{2}}$, 
    \begin{align}
      &   \max_{m\ne 1}\left|\sum_{s=0}^\tau\eta v_m^\top (I-w^s{w^s}^\top) Z^s\right|
    \\  &   \leq \eta \CB \sqrt{\tau}
      \\ & \leq  \begin{cases}\displaystyle
         \frac12\cA d^{-\frac12} & (\tau \leq \CA Md^{p-1})
         \\ \displaystyle
         \frac{\tau\cA \eta}{3\sqrt{M}}p\alpha_p\beta_{1,p}(\frac12 d^{-\frac12})^{p-1} &(\tau > \CA Md^{p-1})
      \end{cases}
    \end{align}
    Therefore, \eqref{eq:OneStep-104} is further upper bounded as
    \begin{align}
     &   \max_{m\ne 1}  | \kappa_m^{\tau+1}| \leq (1+\cA)\max\{ \max_{m\ne 1}|\kappa_m^{0}|,\frac12 d^{-\frac12}\}
         \\ & \quad +(1+\cA)\sum_{s=0}^\tau \frac{\eta}{\sqrt{M}}p\max_{m}|\alpha_p\beta_{m,p}| \max\{\max_{m\ne 1}|\kappa^s_{m}|,Q^{s}\}^{p-1}
        +\cB\sum_{s=0}^\tau
       \frac{\eta}{M^\frac32}p\alpha_p\beta_{1,p} (\kappa^s_{1})^{p-1}
   \label{eq:OneStep-105}
    \end{align}
    On the other hand, $Q^{\tau+1}$ is written as
    \begin{align}
        Q^{\tau+1} &\leq (1+\cA)\max\{ \max_{m\ne 1}|\kappa_m^{0}|,\frac12 d^{-\frac12}\}
        +(1+\cA)\sum_{s=0}^\tau \frac{\eta}{\sqrt{M}}p\max_{m}|\alpha_p\beta_{m,p}| (Q^{s})^{p-1}
    \\ & \quad   +\cB\sum_{s=0}^\tau
       \frac{\eta}{M^\frac32}p\alpha_p\beta_{1,p} (\kappa^s_{1})^{p-1}
   \label{eq:OneStep-106}
    \end{align}
    Comparing \eqref{eq:OneStep-105} and \eqref{eq:OneStep-106} with the update of $Q^t$, we conclude that $\max_{m\ne 1}  | \kappa_m^{\tau}|\leq Q^{\tau}\ (\tau=0,1,\cdots,t+1)$ holds by induction.
\end{proof}

The previous lemma assumed (a)-(c). Next we show that (b) and (c) hold along the trajectory via induction.
    (here we use a different notation for the coefficient in (c).)
\begin{lemma}\label{lemma:VerifyBC}
    Take $\eta^t=\eta \leq \cC M^{-\frac12}d^{-\frac{p}{2}}$.
    Suppose that, for all $s=0,1,\cdots,t$, 
    \begin{itemize}
        \item[(a)] $\kappa^s_1\leq \cA$, 
        \item[(b)] $|\kappa_m^s|\leq \kappa_1^s$ for all $m=2,\cdots,M$, and
        \item[(c)'] $|\kappa_m^s| \leq 4\cB M^{-1}$ for all $m=2,\cdots,M$.
    \end{itemize}
    Then, if we have $M\leq \cC d^\frac12$ and (a) $\kappa^{t+1}_1\leq \cA$ for $t+1$, (b) and (c)' hold for $s=t+1$ with high probability.
\end{lemma}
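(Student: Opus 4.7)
The plan is to propagate conditions (b) and (c)' from times $s \leq t$ to $s = t+1$ via the deterministic auxiliary sequences $P^s$ and $Q^s$ of Lemma~\ref{lemma:Lower}, combined with a one-sided bootstrap that controls the self-growth term in the $Q$-recursion. First, I would apply Lemma~\ref{lemma:Lower}: condition (c)' is the special case of (c) with $\CA = 4$, and together with (a) (given through $s = t+1$) and (b), both parts of that lemma apply and yield, with high probability, $\kappa_1^s \geq P^s$ and $\max_{m \neq 1}|\kappa_m^s| \leq Q^s$ for all $s \leq t+1$. Hence verifying (b) and (c)' at $t+1$ reduces to upper-bounding the deterministic quantity $Q^{t+1}$ and lower-bounding $P^{t+1}$.

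To prove $Q^{t+1} \leq 4\cB/M$, I would unfold the $Q$-recursion and control the three resulting contributions separately. The initial value $Q^0 = O(\CB/\sqrt{d})$ follows from standard Gaussian concentration of $v_m^\top w_j^0$ combined with the definition of $Q^0$; under $M \leq \cC\sqrt{d}$ this is of order $O(\CB\cC/M)$, which is $\lesssim \cB/M$ in the polylog hierarchy of Section~\ref{subsection:alignment-0}. The $\kappa_1^s$-sum is controlled via Lemma~\ref{lemma:Lower}(i): rearranging \eqref{eq:ConnectODE-21} and using $\kappa_1^{t+1} \leq \cA$ gives $\sum_{s=0}^t (\eta/\sqrt{M}) p \alpha_p \beta_{1,p} (\kappa_1^s)^{p-1} \leq \kappa_1^{t+1}/(1-\cA) \leq 2\cA$, contributing $O(\cA\cB/M)$. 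The self-growth term requires a bootstrap: set $\tau^\star := \max\{s \leq t+1 : Q^s \leq 4\cB/M\}$ and insert $(Q^s)^{p-1} \leq (4\cB/M)^{p-1}$ for $s \leq \tau^\star$; together with the Phase-I time bound $\eta(t+1) \leq \eta T_{1,1} = \tilde{O}(M^{1/2} d^{(p-2)/2})$ and $M \leq \cC\sqrt{d}$, this contribution is of order $\tilde{O}((\cB/\cC)^{p-2}) \cdot (\cB/M) \leq \cB/M$ under the polylog hierarchy. Summing the three contributions yields $Q^{t+1} \leq C_0 \cB/M$ with $C_0 < 4$, which closes the bootstrap at $\tau^\star = t+1$ and establishes (c)'.

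Property (b) at $t+1$ then follows by combining $|\kappa_m^{t+1}| \leq 4\cB/M$ with $\kappa_1^{t+1} \geq P^{t+1} \geq (1-\cA)\kappa_1^0$ and the initialization bound $\kappa_1^0 \geq 1/\sqrt{d}$ from Corollary~\ref{corollary:Initialization}: under $M \leq \cC\sqrt{d}$ and the polylog hierarchy, $4\cB/M \leq (1-\cA)/\sqrt{d} \leq \kappa_1^{t+1}$. The principal obstacle is the self-referential control of the self-growth term in $Q^{t+1}$: without a priori smallness of $Q^s$, the step-wise update involves $(Q^s)^{p-1}$, which itself depends on the bound one wishes to prove. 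The $\tau^\star$-bootstrap circumvents this, but requires careful tracking of the polylog hierarchy—in particular, ensuring the bootstrap constant stays strictly less than $4$—to close the induction cleanly, and this is where the quantitative hypothesis $M \leq \cC\sqrt{d}$ is essential.
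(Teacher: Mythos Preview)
Your bootstrap for the self-growth term has a genuine gap. The step where you write ``together with the Phase-I time bound $\eta(t+1)\leq\eta T_{1,1}=\tilde O(M^{1/2}d^{(p-2)/2})$ and $M\leq\cC\sqrt d$, this contribution is of order $\tilde O((\cB/\cC)^{p-2})\cdot(\cB/M)$'' uses the hypothesis $M\leq\cC\sqrt d$ in the wrong direction. Unfolding gives a self-growth contribution of order
\[
\frac{\eta(t+1)}{\sqrt M}\Big(\frac{4\cB}{M}\Big)^{p-1}
\;\lesssim\; d^{(p-2)/2}\,(4\cB)^{p-1}M^{-(p-1)}
\;=\;(4\cB)^{p-2}\Big(\frac{\sqrt d}{M}\Big)^{p-2}\cdot\frac{4\cB}{M},
\]
and $M\leq\cC\sqrt d$ yields only the \emph{lower} bound $\sqrt d/M\geq 1/\cC$, not an upper bound. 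For $M\ll\sqrt d$ the factor $(\sqrt d/M)^{p-2}$ is arbitrarily large, so the bootstrap does not close. Even at the boundary $M\approx\cC\sqrt d$, the polylog hierarchy gives $\cC\lesssim\cB$, hence $\cB/\cC\gtrsim 1$ and $(4\cB/\cC)^{p-2}\geq 4^{p-2}$, so the sum of contributions exceeds $4\cB/M$. Your verification of (b) has the same issue: the inequality $4\cB/M\leq(1-\cA)/\sqrt d$ is a \emph{lower} bound on $M$, not implied by $M\leq\cC\sqrt d$.

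What is missing is the comparison between the escape times of $P$ and $Q$. The paper splits into two regimes according to whether the self-growth term or the cross term dominates in the $Q$-recursion. In the self-growth regime it applies the Bihari--LaSalle/Gr\"onwall inequality (Lemma~\ref{lemma:Bihari–LaSalle}) to both $P$ and $Q$ and uses the initialization gap \eqref{eq:smalldiversity} of Corollary~\ref{corollary:Initialization}, namely $\alpha_p\beta_{1,p}(\kappa_1^0)^{p-2}>(1+c)\max_m|\alpha_p\beta_{m,p}|(\max_{m\ne1}|\kappa_m^0|)^{p-2}$, to show that the blow-up time of $P$ is strictly shorter than that of $Q$; since $P^t\leq\kappa_1^t\leq\cA$ caps $t$ before $Q$ can reach $\cB/M$, both $Q^{t+1}\leq\cB/M$ (giving (c)') and $Q^{t+1}\leq P^{t+1}$ (giving (b)) follow. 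In the cross-term regime the $Q$-increment is dominated by $\cB M^{-3/2}\eta p\alpha_p\beta_{1,p}(P^s)^{p-1}$, which telescopes against the $P$-update to give $Q^{t+1}-Q^{\tau_1}\lesssim\cB M^{-1}(P^{t+1}-P^{\tau_1})$. Your unfolding correctly handles the cross-term contribution via \eqref{eq:ConnectODE-21}, but the self-growth term needs the Gr\"onwall comparison driven by the initialization condition, not a fixed-threshold bootstrap.
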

\begin{proof}
    First consider the case when
     \begin{align}
    (1+\cA)\frac{p\max_m|\alpha_p\beta_{m,p}|}{\sqrt{M}} (Q^{s})^{p-1}>\frac{\cB}{M^\frac32}p\alpha_p\beta_{1,p}(P^{s})^{p-1}
    \label{eq:Criteria-101}
   \end{align}
   holds for all $s=0,1,\cdots,t$. 
   Then, for $s=0,1,\cdots,t$,  
   \begin{align}
      \frac{\cB}{M^\frac32}p\alpha_p\beta_{1,p}(P^s)^{p-1}
      < \cA^{-1}\cB(1+\cA)\frac{p\max_m|\alpha_p\beta_{m,p}|}{\sqrt{M}} (Q^{s})^{p-1}
      <\cA\frac{p\max_m|\alpha_p\beta_{m,p}|}{\sqrt{M}} (Q^{t})^{p-1},
   \end{align}
   and therefore
   \begin{align}
       Q^{s+1}\leq Q^{s}+(1+2\cA)\frac{\eta}{\sqrt{M}}p\max_{m}|\alpha_p\beta_{m,i}| (Q^{s})^{p-1}
   \end{align}
   for all $s=0,1,\cdots,t$.
   According to Lemma~\ref{lemma:Bihari–LaSalle}, we have
   \begin{align}
       Q^{s}\!\leq\! 
       \frac{Q^0}{\left(1-\eta p(p-2)(1+3\cA)(\max_m|\alpha_p\beta_{m,p}|)M^{-\frac12}(Q^0)^{(p-2)}s\right)^{\frac{1}{p-2}}}
       \label{eq:Criteria-102}
   \end{align}
   for all $s=0,1,\cdots,t+1$ 
    (here $(1+c)^{p-1}$ in the original bound is absorbed in $(1+3\cA)$).

   On the other hand, $P^s$ is lower bounded by
   \begin{align}
       P^{s}\geq
       \frac{P^0}{\left(1-\eta p(p-2)(1-\cA)(\alpha_p\beta_{1,p})M^{-\frac12}(P^0)^{(p-2)}s\right)^{\frac{1}{p-2}}}
       \label{eq:Criteria-103}
   \end{align}
   for all $s=0,1,\cdots,t+1$.
   According to \eqref{eq:smalldiversity}, 
   $Q^0=(1+\cA)\max\{\max_{m\ne 1}|\kappa_m^0|,\frac12d^{-\frac12}\}\leq P^0=(1-\cA)\kappa_1^0$.
   Moreover, \eqref{eq:smalldiversity} implies that
   $\max_{m}|\alpha_p\beta_{m,p}|(1+3\cA)(\max_{m\ne 1}|\kappa_m^0|)^{p-2}\leq (1-\cA)\alpha_p\beta_{1,p}(\kappa_1^0)^{p-2}$, yielding that
   the denominator of \eqref{eq:Criteria-102} is larger than that of \eqref{eq:Criteria-103}.
   Therefore, $Q^{t+1}\leq P^{t+1}$ holds, which implies that $\max_{m\ne 1}|\kappa_m^{t+1}|\leq \kappa_1^{t+1}$.

   Next we check  $Q^{t+1}< \cB M^{-1}$ (RHS is smaller than (c)' by a factor of $4$). From $P^t\leq \kappa_1^t\leq \cA$ and \eqref{eq:Criteria-103},
   \begin{align}
      t\leq \frac{\eta^{-1}M^{\frac12}((P^0)^{-(p-2)}-(\cA)^{-(p-2)})}{p(p-2)(1-\cA)(\alpha_p\beta_{1,p})}
      \leq \frac{\eta^{-1}M^{\frac12}(1+5\cA)\frac{\max_m|\alpha_p\beta_{m,p}|}{\alpha_p\beta_{1,p}}(P^0)^{-(p-2)}}{p(p-2)(1+3\cA)(\max_m|\alpha_p\beta_{m,p}|)}
      .\label{eq:Criteria-104}
   \end{align}
    On the other hand, according to \eqref{eq:Criteria-102}, $Q^{t+1}> \cB M^{-1}$ holds only if
    \begin{align}
        t> \frac{\eta^{-1}M^{\frac12}((Q^0)^{-(p-2)}-(\cB M^{-1})^{-(p-2)}-\eta M^{-\frac12} p(p-2)(1+3\cA)(\max_m|\alpha_p\beta_{m,p}|))}{p(p-2)(1+3\cA)(\max_m|\alpha_p\beta_{m,p}|)}.
        \label{eq:Criteria-105}
    \end{align}
    If $M\leq \cC d^\frac12$, $(\cB M^{-1})^{-(p-2)}\leq (\cB\cC^{-1} d^{-\frac12})^{-(p-2)} \leq \cA(\kappa_1^0)^{-(p-2)}\leq \cA(P^0)^{-(p-2)}\leq \cA(Q^0)^{-(p-2)}$.
    Moreover, $\eta M^{-\frac12}p (p-2)(1+5\cA)(\max_m|\alpha_p\beta_{m,p}|)\leq \cA(Q^0)^{-(p-2)}$.
    Thus, \eqref{eq:Criteria-105} is further bounded by
    \begin{align}
        \eqref{eq:Criteria-105} \geq \frac{\eta^{-1}M^{\frac12}(1-2\cA)(Q^0)^{-(p-2)}}{p(p-2)(1+3\cA)(\max_m|\alpha_p\beta_{m,p}|)}
        \label{eq:Criteria-106}.
    \end{align}
   According to \eqref{eq:smalldiversity}, $(1+8\cA)\max_{m}|\alpha_p\beta_{m,p}|(\max_{m\ne 1}|\kappa_m^0|)^{p-2}< \alpha_p\beta_{1,p}(\kappa_1^0)^{p-2}$.
   By using this, we have $\text{(RHS of \eqref{eq:Criteria-106})}>\text{(RHS of \eqref{eq:Criteria-104})}$.
   Thus, $Q^{t+1}> \cB M^{-1}$ does not hold, and we have obtained $Q^{t+1}\leq  \cB M^{-1}$ (and (c)').

   Now we consider the case when \eqref{eq:Criteria-101} holds for $s=0,1,\cdots,\tau_1-1$ but 
\begin{align}
    (1+\cA)\frac{p\max_m|\alpha_p\beta_{m,p}|}{\sqrt{M}} (Q^{s})^{p-1}\leq \frac{\cB}{M^\frac32}p\alpha_p\beta_{1,p}(P^{s})^{p-1}
    \label{eq:Criteria-107}
   \end{align}
   holds for $s=\tau_1\leq t$.
   We show that \eqref{eq:Criteria-107} holds for all $s=\tau_1,\cdots,t+1$ in this case.
   Assuming the inequality holds for all $s=\tau_1,\cdots,\tau$ with $\tau\leq t$.
   For $s=\tau_1,\cdots,\tau$, the update of $Q^s$ is evaluated as
   \begin{align}
       Q^{s+1}\leq Q^{s} + 2\cB\frac{\eta}{M^\frac32}p\alpha_p\beta_{1,p}(P^{s})^{p-1}.
   \end{align}
   Thus, when $p\geq 3$, 
   \begin{align}
       Q^{\tau+1}&\leq Q^{\tau_1} + \sum_{s=\tau_1}^{\tau}2\cB\frac{\eta}{M^\frac32}p\alpha_p\beta_{1,p}(P^{s})^{p-1}
       \\ &\leq Q^{\tau_1} + \frac{2\cB(1-\cA)^{-1}}{M}\left(P^{\tau+1}-P^{\tau_1}\right)
       \label{eq:Criteria-108}
       \\ & \leq 
       \left(\frac{\alpha_p\beta_{1,p}}{\max_m|\alpha_p\beta_{m,p}|}(1+\cA)^{-1}M^{-1}\right)^{\frac{1}{p-1}}P^{\tau_1} + \frac{2\cB(1-\cA)^{-1}}{M}\left(P^{\tau+1}-P^{\tau_1}\right)
       \\ & \leq \left(\frac{\alpha_p\beta_{1,p}}{\max_m|\alpha_p\beta_{m,p}|}(1+\cA)^{-1}M^{-1}\right)^{\frac{1}{p-1}}P^{\tau+1},
   \end{align}
   which yields \eqref{eq:Criteria-107} for $s=\tau+1$.
   \eqref{eq:Criteria-107} with $s=t+1$ implies that $Q^{t+1}\leq P^{t+1}$, and hence $\max_{m\ne 1}|\kappa_m^{t+1}|\leq \kappa_1^{t+1}$.
   Similar to \eqref{eq:Criteria-108}, we have
   \begin{align}
       Q^{t+1}\leq Q^{\tau_1} + \frac{2\cB(1-\cA)^{-1}}{M}\left(P^{t+1}-P^{\tau_1}\right)
       \leq Q^{\tau_1} + \frac{2\cB(1-\cA)^{-1}}{M}P^{t+1}
      \leq Q^{\tau_1} + 3\cB M^{-1}
       .
   \end{align}
   For $Q^{\tau_1}$, as we proved $Q^{t+1}\leq \cB M^{-1}$ in the first case, we have $Q^{\tau_1}\leq \cB M^{-1}$.
   Thus, $  Q^{t+1}$ is bounded by $4\cB M^{-1}$, which yields (c)'.
\end{proof}

\begin{proofof}[Lemma~\ref{lemma:AlignmentFirstPhase}]
    Suppose that (a) holds for all $s=0,1,\cdots, T_{1,1} $, where 
    \begin{align}
   T_{1,1}= \left\lfloor\left(\eta p(p-2)(1-5\cA)(\alpha_p\beta_{1,p})M^{-\frac12}(P^0)^{(p-2)}\right)^{-1}\right\rfloor.
    \label{eq:Criteria-111}
    \end{align}
    According to Lemma~\ref{lemma:VerifyBC}, 
    if $M\leq \cC d^\frac12$, $\eta \leq \cC M^{-\frac12}d^{-\frac{p}{2}}$, and (a) holds for all $s=0,1,\cdots,T_{1,1}$, 
   (b) and (c) of Lemma~\ref{lemma:VerifyBC} holds with high probability for all $s=0,1,\cdots,T_{1,1}$ and the bounds of Lemma~\ref{lemma:Lower} holds for all $s=0,1,\cdots,T_{1,1}$.

   From Lemma~\ref{lemma:Lower} and Lemma~\ref{lemma:Bihari–LaSalle}, 
   \begin{align}
       \kappa_1^t \geq P^t \geq \frac{P^0}{\left(1-\eta p(p-2)(1-\cA)(\alpha_p\beta_{1,p})M^{-\frac12}(P^0)^{(p-2)}s\right)^{\frac{1}{p-2}}}
      .\label{eq:Criteria-109}
   \end{align}
   However, at $t =T_{1,1}$, 
   \begin{align}
       \text{(RHS of \eqref{eq:Criteria-109})}
      & \geq \frac{P^0}{\left(\eta p(p-2)(1-\cA)(\alpha_p\beta_{1,p})M^{-\frac12}(P^0)^{(p-2)}\right)^{\frac{1}{p-2}}}
      \\ &  = \frac{1}{\left(\eta p(p-2)(1-\cA)(\alpha_p\beta_{1,p})M^{-\frac12}\right)^{\frac{1}{p-2}}},
   \end{align}
   and thus RHS of \eqref{eq:Criteria-109} is clearly larger than $1$.
   This yields the contradiction because $\kappa_1^{T_{1,1}}\leq 1$. Therefore, with high probability, there exists some $t_1\leq T_{1,1}$ such that
   $\kappa_1^{t_1}> \cB$ and $\kappa_1^{t}\leq  \cB\ (t=0,1,\cdots,t_1-1)$.

   As for (ii), recall that $|\kappa_m^{t_1-1}|\leq 4\cB M^{-\frac12}$.
   Moreover, according to Lemma~\ref{lemma:OneStep}, 
   \begin{align}
   &|\kappa_1^{t_1}-\kappa_1^{t_1-1}|
     \\ &  \leq \CB \eta \leq \CB \cC M^{-\frac12} d^{-\frac{p}{2}}\leq \cB M^{-\frac12}.
   \end{align}
   Thus, $|\kappa_m^{t_1}|\leq |\kappa_m^{t_1-1}|+|\kappa_1^{t_1}-\kappa_1^{t_1-1}|\leq 4\cB M^{-\frac12}+\cB M^{-\frac12}\leq 5\cB M^{-\frac12}$.
\end{proofof}

\subsection{Phase II: Amplification of Alignment}\label{subsection:alignment-2}
In the previous section (Lemma~\ref{lemma:AlignmentFirstPhase}), we proved that 
neurons in $J_1$ achieve a small constant ($ \cA$) alignment.
However, as the alignment $\kappa_1^t$ becomes larger, the effect from other directions becomes non-negligible, while the signal from $v_1$ gets smaller because of the projection $(1-w^t{w^t}^\top)$. 
Hence the previous weak recovery analysis is not sufficient to show that the neurons will continue to grow in the direction of $v_1$.

The goal of this subsection is to prove the following lemma.
\begin{lemma}\label{lemma:AlignmentSecondPhase}
    Suppose $\eta=\eta^t \leq \cC M^{-\frac12}d^{-\frac{p}{2}}$, $|v_{m'}^\top v_m|\leq \cC M^{-1}$ for all $m\ne m'$, and $M\leq \cC d^{\frac12}$, and consider a neuron that satisfies (i) and (ii) of Lemma~\ref{lemma:AlignmentFirstPhase}. 
    Then, with high probability, there exists some time $t_2\leq T_{1,2}=\tilde{\Theta}(M^\frac12\eta^{-1})$ such that $\kappa_1^{t_2}> 1-\cA$.
\end{lemma}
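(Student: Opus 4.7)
The plan is to continue the coupled induction started in Lemma~\ref{lemma:AlignmentFirstPhase}, now over the interval $[t_1, t_1+T_{1,2}]$ with inductive hypothesis $\kappa_1^t \in [\cA, 1-\cA]$ and $|\kappa_m^t| \leq C\cB M^{-1/2}$ for all $m\neq 1$, where $C$ is a slightly enlarged polylog constant. The conclusion will follow by showing that $\kappa_1^t$ must cross $1-\cA$ within $T_{1,2}=\tilde{\Theta}(\sqrt{M}/\eta)$ steps.

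I will use the one-step decomposition from Lemma~\ref{lemma:OneStep} as the engine. Under Assumption~\ref{assumption:NeuronPositiveCorrelation} the on-diagonal ($m=1$) contribution to the $\kappa_1$ drift is at least $\frac{\eta}{\sqrt{M}}\,p\alpha_p\beta_{1,p}\,(\kappa_1^t)^{p-1}(1-(\kappa_1^t)^2)$, a polylog-bounded-below multiple of $\eta/\sqrt{M}$ on the regime $\kappa_1^t\in[\cA, 1-\cA]$. The off-diagonal ($m\neq 1$) contributions are controlled by $|v_1^\top v_m|\leq \cC M^{-1}$ together with the inductive bound on $|\kappa_m^t|$: a direct calculation in the style of Lemma~\ref{lemma:Lower} gives a per-step error of size $\tilde{O}(\eta M^{-p/2})$, which together with the discretization term $\kappa_1^t\eta^2\CB^2 d$ is absorbed into a $(1-\cA)$ prefactor. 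This yields an effective recursion
\begin{align*}
\kappa_1^{t+1} \;\geq\; \kappa_1^t + (1-\cA)\,\frac{\eta}{\sqrt{M}}\,p\alpha_p\beta_{1,p}\,(\kappa_1^t)^{p-1}(1-(\kappa_1^t)^2) + \eta\,v_1^\top(I-w^t{w^t}^\top)Z^t.
\end{align*}
The martingale sum $\sum_{s=t_1}^t \eta\,v_1^\top(I-w^s{w^s}^\top)Z^s$ has bounded increments $\tilde{O}(\eta)$, so Azuma yields concentration at scale $\tilde{O}(\eta\sqrt{T_{1,2}}) = \tilde{O}(\eta^{1/2}M^{1/4})$; under $\eta\leq\cC M^{-1/2}d^{-p/2}$ this is $o(1)$ and can be absorbed into a second $(1-\cA)$ factor. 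A Bihari--LaSalle comparison (Lemma~\ref{lemma:Bihari–LaSalle}) on the interval $[\cA,1-\cA]$, where the drift is bounded below by $\Theta(\eta/\sqrt{M})$ up to polylog factors, then forces $\kappa_1^t$ to exceed $1-\cA$ within $\tilde{\Theta}(\sqrt{M}/\eta)$ steps.

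To close the induction on the off-diagonals I bound the three deterministic sources in the $\kappa_m^t$ update for $m\neq 1$: the self term of size $\tilde{O}(\eta M^{-p/2})$; the cross term driven by $v_1$, which after using $|v_m^\top v_1|=\tilde{O}(M^{-1})$ and $|\kappa_m^t\kappa_1^t|=\tilde{O}(M^{-1/2})$ is $\tilde{O}(\eta/M)$ per step; and the residual cross terms from the remaining $M-2$ directions, each of size $\tilde{O}(\eta M^{-(p+1)/2})$ and summing to $\tilde{O}(\eta M^{(1-p)/2})$ per step. Telescoping across the $\tilde{O}(\sqrt{M}/\eta)$ steps of Phase~II yields cumulative contributions of order $\tilde{O}(M^{-1/2})$ in each case (using $p\geq 3$ for the last). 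The stochastic fluctuation for $\kappa_m^t$ is again $\tilde{O}(\eta^{1/2}M^{1/4})=o(M^{-1/2})$, and a union bound over the $M-1$ off-diagonal directions preserves the bound $|\kappa_m^t|\leq C\cB M^{-1/2}$ with high probability for a suitably enlarged polylog constant $C$.

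The main technical obstacle is the joint control of the self-coupling $\kappa_m^t\kappa_1^t$ in the off-diagonal drift as $\kappa_1^t$ grows: this term simultaneously pushes $|\kappa_m^t|$ upward while the $(1-(\kappa_1^t)^2)$ factor in the main-direction drift degrades. Stopping Phase~II at the threshold $\kappa_1^t = 1-\cA$ keeps both effects polylog-bounded, so the cross-contamination budget and the main-direction drift remain mutually compatible across the entire window $[t_1,t_1+T_{1,2}]$, which is precisely what the inductive argument requires.
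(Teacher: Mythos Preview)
Your proposal is correct and follows essentially the same approach as the paper: both analyses apply the one-step decomposition of Lemma~\ref{lemma:OneStep}, maintain an inductive control on $\kappa_1$ from below and on $\max_{m\neq 1}|\kappa_m|$ from above, handle the martingale noise via concentration, and conclude by a Bihari--LaSalle/contradiction argument showing $\kappa_1$ must cross $1-\cA$ within $\tilde\Theta(\sqrt{M}/\eta)$ steps. The differences are cosmetic: the paper packages the induction through explicit auxiliary sequences $P^s,Q^s$ (Lemmas~\ref{lemma:P2-Lower} and~\ref{lemma:P2-VerifyBC}) rather than your direct inductive hypothesis, and it carries the tighter off-diagonal bound $|\kappa_m^t|\le \CA\cB M^{-1}$ inherited from Phase~I rather than your $M^{-1/2}$; your looser bound is still sufficient since for $p\ge 3$ the resulting cross-contamination $\tilde O(\eta M^{(1-p)/2})$ per step remains negligible against the main drift $\Theta(\eta/\sqrt{M})$.
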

Similar to the Phase I analysis, we bound the update by deterministic auxiliary sequences. To simplify the notation, we let $t \leftarrow t-t_1$ throughout this subsection.
\begin{lemma}\label{lemma:P2-Lower}
    Suppose that $|v_{m'}^\top v_m|\leq \cC M^{-1}$ for all $m\ne m'$ and $M\leq \cC d^{\frac12}$.
    For all $s=0,1,\cdots,t$, we assume that
    \begin{itemize}
        \item[(a)] $\kappa^s_1\leq 1-\cA$ (only required for (i): Lower bound), 
        \item[(b)] $|\kappa_m^s|\leq \kappa_1^s$ for all $m=2,\cdots,M$, and
        \item[(c)] $|\kappa_m^s| \leq \CA\cB M^{-1}$ for all $m=2,\cdots,M$.
    \end{itemize}
    Take $\eta^t=\eta \leq  \cC M^{-\frac12}d^{-\frac{p}{2}}$. 
     Then, we have the following bounds.

     \begin{itemize}
   \item[{\bf(i):}]{\bf Lower bound of $\kappa^s_1$:} For an auxiliary sequence $(P^s)_{s=0}^{t+1}$ with
     $P^0 = (1-\cA)\kappa_1^0$, and
    \begin{align}\label{eq:P2-Lower-1}
        P^{s+1} = P^{s}
       +\cA\frac{\eta}{\sqrt{M}}p\alpha_p\beta_{m,p} (P^{s})^{p-1},
    \end{align}
     $\kappa_1^s$ is lower-bounded by $P^s$ for all $s=0,1,\cdots,t+1$, with high probability.
   
   \item[{\bf (ii):}]{\bf Upper bound of  $\max_{m\ne 1}|\kappa_m^s|$:} For an auxiliary sequence $(Q^s)_{s=0}^{t+1}$ with
 $Q^0 = 6\cB M^{-1}$, and
    \begin{align}\label{P2-Lower-2}
        Q^{s+1} = Q^{s}
       +(1+\cA)\frac{\eta}{\sqrt{M}}p\max_{m}|\alpha_p\beta_{m,i}| (Q^{s})^{p-1}
       +\cB
       \frac{\eta}{M}p\alpha_p\beta_{1,p} (\kappa^s_{1})^{p-1},
    \end{align}
     $\max_{m\ne 1}|\kappa_m^s|$ is upper-bounded by $Q^s$ for all $s=0,1,\cdots,t+1$, with high probability.
    \end{itemize}
\end{lemma}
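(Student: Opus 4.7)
The plan is to closely mirror the proof of Lemma~\ref{lemma:Lower}: apply Lemma~\ref{lemma:OneStep} to decompose each one-step update into a population drift plus a mean-zero stochastic term, isolate the dominant $m=1$, $i=p$ Hermite contribution, absorb the subdominant cross-terms and the $\eta^2$ curvature correction into a small slack factor, and control the accumulated martingale noise by the same two-regime Azuma-type argument used in \eqref{eq:Noise-101}--\eqref{eq:Noise-102}. The desired inequalities $\kappa_1^s \geq P^s$ and $\max_{m\neq 1}|\kappa_m^s|\leq Q^s$ then follow by induction on $s$.

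For part (i), I would isolate the driving term $\frac{\eta}{\sqrt{M}}p\alpha_p\beta_{1,p}(\kappa_1^s)^{p-1}(1-(\kappa_1^s)^2)$ in the lower bound from Lemma~\ref{lemma:OneStep}. The only structural change compared to Phase I is that assumption (a) now reads $\kappa_1^s\leq 1-\cA$, which yields $1-(\kappa_1^s)^2 \geq \cA(2-\cA)\geq \cA$ and supplies exactly the $\cA$ prefactor appearing in the recursion \eqref{eq:P2-Lower-1}. The off-diagonal contributions from $m\neq 1$ are bounded via (b), (c), and the near-orthogonality $|v_1^\top v_m|\leq \cC M^{-1}$ to yield a term of order $\tilde{O}(\eta M^{-p+1/2})$, trivially absorbed. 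The $\kappa_1^s\eta^2\CB^2 d$ correction is absorbed using $\eta\leq \cC M^{-1/2}d^{-p/2}$. The accumulated noise $\sum_{s'}\eta v_1^\top(I-w^{s'}{w^{s'}}^\top)Z^{s'}$ is a martingale with $\tilde{O}(\eta)$ increments, so Azuma plus a union bound yields an $\tilde{O}(\eta\sqrt{s})$ tail, absorbed either by the initial budget $P^0=(1-\cA)\kappa_1^0\gtrsim \cA$ (using Lemma~\ref{lemma:AlignmentFirstPhase}) for small $s$ or by a small fraction of the drift for large $s$, exactly as in \eqref{eq:Noise-101}--\eqref{eq:Noise-102}.

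For part (ii), the update of $\kappa_m^s$ ($m\neq 1$) splits into a self-term $\frac{\eta}{\sqrt{M}}p\alpha_p\beta_{m,p}(\kappa_m^s)^{p-1}(1-(\kappa_m^s)^2)$, bounded under (c) by $(1+\cA)\frac{\eta}{\sqrt{M}}p\max_m|\alpha_p\beta_{m,p}|(Q^s)^{p-1}$, plus a leakage from the $v_1$ direction $\frac{\eta}{\sqrt{M}}\sum_i i\alpha_i\beta_{1,i}(\kappa_1^s)^{i-1}(v_m^\top v_1-\kappa_m^s\kappa_1^s)$. The leakage is dominated by the $i=p$ term; using $|v_m^\top v_1|\leq \cC M^{-1}$, $|\kappa_m^s|\leq \CA\cB M^{-1}$, and $\kappa_1^s\leq 1$, this is at most $\cB\frac{\eta}{M}p\alpha_p\beta_{1,p}(\kappa_1^s)^{p-1}$ once polylog factors are absorbed into $\cB$. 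Remaining cross-contributions from $m'\notin\{1,m\}$ are of smaller order $\tilde{O}(\eta M^{-p+1/2})$. Step-to-step sign consistency follows from $|\kappa_m^{s+1}-\kappa_m^s|=\tilde{O}(\eta)$ per Lemma~\ref{lemma:OneStep}, handled exactly as in Lemma~\ref{lemma:Lower}. Summing and comparing with the recursion for $Q^s$ closes the induction, with noise handled by the same two-regime argument as in (i).

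The main obstacle will be ensuring that the inductive hypotheses (b) and (c) remain valid throughout the entire Phase II window. Unlike Phase I, the leakage source $\cB\frac{\eta}{M}p\alpha_p\beta_{1,p}(\kappa_1^s)^{p-1}$ no longer vanishes with $\kappa_1^s$, so $Q^s$ receives a nontrivial push even when $\kappa_1^s$ is of order one. One must verify, via a comparison argument analogous to Lemma~\ref{lemma:VerifyBC}, that $Q^s$ stays below $\CA\cB M^{-1}$ and below $P^s$ for all $s\leq T_{1,2}=\tilde{\Theta}(M^{1/2}\eta^{-1})$. The key is to apply a Gr\"onwall-type (Bihari--LaSalle) closed-form bound to $Q^s$, and exploit the $M^{-1}$ suppression of the leakage coming from Assumption~\ref{assumption:diversity} to show that the blow-up time of $Q^s$ strictly exceeds $T_{1,2}$; this closes the induction and completes the proof.
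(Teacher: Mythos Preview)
Your approach is essentially the same as the paper's: apply Lemma~\ref{lemma:OneStep}, isolate the $m=1$, $i=p$ drift, absorb cross-terms using (b), (c) and $|v_m^\top v_{m'}|\leq\cC M^{-1}$, and handle the martingale noise by the two-regime argument \eqref{eq:Noise-101}--\eqref{eq:Noise-102}. For part (i) the only new input versus Phase~I is exactly what you identify: assumption (a) now gives $1-(\kappa_1^s)^2\geq 2\cA-\cA^2\geq\tfrac{9}{5}\cA$, which supplies the $\cA$ prefactor in \eqref{eq:P2-Lower-1}.

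Two remarks. First, for part (ii) the paper does not redo the one-step calculation at all: it simply observes that the proof of Lemma~\ref{lemma:Lower}(ii) never used assumption (a), so the same inequality \eqref{eq:OneStep-105} holds verbatim here; since the Phase~II recursion \eqref{P2-Lower-2} has a \emph{larger} leakage coefficient ($\eta/M$ versus $\eta/M^{3/2}$) and a larger initial value $Q^0=6\cB M^{-1}\geq(1+\cA)\max\{\max_{m\ne1}|\kappa_m^0|,\tfrac12 d^{-1/2}\}$ (using $M\leq\cC d^{1/2}$ and the Phase~I output $|\kappa_m^0|\leq 5\cB M^{-1}$), the bound follows by monotone comparison. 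Your re-derivation would work but is unnecessary.

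Second, your final paragraph is not an obstacle for \emph{this} lemma. The statement assumes (b) and (c) for all $s=0,\ldots,t$ as hypotheses; verifying that they are self-propagating along the trajectory is precisely the content of the separate Lemma~\ref{lemma:P2-VerifyBC}. So the Gr\"onwall/Bihari--LaSalle comparison you sketch belongs there, not in the proof of Lemma~\ref{lemma:P2-Lower}.
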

\begin{proof}
    \paragraph{(i) Lower bound of $\kappa^{s}_1$ by $P^s$:}
    If $\kappa_1^s\geq \frac12 d^{-\frac12}$, by following the argument for \eqref{eq:ConnectODE-201} in  Lemma~\ref{lemma:Lower}, we have
    \begin{align}
         \kappa^{s+1}_1 &\geq 
        \kappa_1^s 
        + \frac{\eta}{\sqrt{M}}p\alpha_p\beta_{1,p}(1-(\kappa_1^s)^2)(\kappa_1^s)^{p-1}
    -q^2\eta\sqrt{M}\max_{m\ne 1}|\kappa_m^s|^{p-1}\max_{m\ne 1}|v_1^\top v_m|\max_{m,i}|\alpha_i\beta_{m,i} |
       \\ & \quad -q^2\eta\sqrt{M}\max_{m\ne 1}|\kappa_m^s|^{p}\max_{m,i}|\alpha_i\beta_{m,i}| -
        \kappa_1^s\eta^2 \CB^2 d + \eta v_1^\top (I-w^s{w^s}^\top)  Z^s 
        \\ & \geq \kappa_1^s 
        + \frac{\eta}{\sqrt{M}}p\alpha_p\beta_{1,p}(1-(1-\cA)^2)(\kappa_1^s)^{p-1}
    -q^2\eta\sqrt{M}|\kappa_1^s|^{p-1}\cC M^{-1}
       \\ & \quad -q^2\eta\sqrt{M}(\kappa_1^s)^{p-1}\CA\cB M^{-1} -
        \eta\kappa_1^s\cC M^{-\frac12}d^{-\frac{p-2}{2}} \CB^2 + \eta v_1^\top (I-w^s{w^s}^\top)  Z^s 
        . 
    \end{align}
    Note that $1-(1-\kappa_1^s)^2\geq \frac{9}{5}\cA$, $q^2\eta\sqrt{M}(\kappa_1^s)^{p-1}\cC M^{-1} \leq\frac15 \cA\frac{\eta}{\sqrt{M}}p\alpha_p\beta_{1,p}(\kappa_1^s)^{p-1}$, $q^2\eta\sqrt{M}(\kappa_1^s)^{p-1}\CA\cB M^{-1} \leq\frac15 \cA\frac{\eta}{\sqrt{M}}p\alpha_p\beta_{1,p}(\kappa_1^s)^{p-1}$ and $\eta\kappa_1^s\cC M^{-\frac12}d^{-\frac{p-2}{2}} \CB^2 \leq \frac15\cA\frac{\eta}{\sqrt{M}}p\alpha_p\beta_{1,p}(\kappa_1^s)^{p-1}$
    (where we used $\kappa_1^s\geq \frac12 d^{-\frac12}$ for the last statement).

    If $\kappa_1^s\geq \frac12 d^{-\frac12}$ for all $s=0,1,\cdots,\tau$, 
    following \eqref{eq:Noise-101} and \eqref{eq:Noise-102}, we know that
    \begin{align}
        \sum_{s=0}^{\tau} \eta v_1^\top (I-w^s{w^s}^\top)Z^{s} 
       \leq \cA \kappa_1^0
       +\frac15 \sum_{s=0}^{\tau}\cA\frac{\eta}{\sqrt{M}}p\alpha_p\beta_{1,p} (\kappa_1^s)^{p-1}
        ,
    \end{align}
    with high probability.
    
    Given $\kappa_1^s\geq \frac12 d^{-\frac12}$ for all $s=0,1,\cdots,\tau$, we obtain
    \begin{align}
        \kappa^{\tau+1}_1 &\geq (1-\cA)\kappa_1^0 
        + \cA \sum_{s=0}^{\tau}\frac{\eta}{\sqrt{M}}p\alpha_p\beta_{1,p}(\kappa_1^s)^{p-1},
        \label{P2-Lower-3}
    \end{align}
    and $\kappa_1^{\tau+1}\geq \frac12 d^{-\frac12}$.
    Therefore, $\kappa_1^\tau\geq \frac12 d^{-\frac12}$ holds for all $\tau=0,1,\cdots,t$, and \eqref{P2-Lower-3} holds for all $\tau=0,1,\cdots,t$
    By comparing \eqref{P2-Lower-3} with the update of $P^\tau$, we obtain the desired bound.
    
    \paragraph{\bf (ii) Upper bound of $\max_{m\ne 1}|\kappa^{s}_m|$ by $Q^s$:}
    We have already established the upper bound with $Q^0 = (1+\cA)\max\{ \max_{m\ne 1}|\kappa_m^{0}|,\frac12 d^{-\frac12}\}$ in Lemma~\ref{lemma:Lower}.
    Because the choice of $Q^0 = 6\cB M^{-1}$ is larger than that when $M\leq \cC d^{\frac12}$, the desired bound follows.
\end{proof}

Now we show that assumptions (b) and (c) in the previous lemma can be verified along the trajectory via induction.
\begin{lemma}\label{lemma:P2-VerifyBC}
    Take $\eta^t=\eta \leq \cC M^{-\frac12}d^{-\frac{p}{2}}$.
    Suppose that, for all $s=0,1,\cdots,t$, 
    \begin{itemize}
        \item[(a)] $\kappa^s_1\leq 1-\cA$, 
        \item[(b)] $|\kappa_m^s|\leq \kappa_1^s$ for all $m=2,\cdots,M$, and
        \item[(c)'] $|\kappa_m^s| \leq \CA\cB M^{-1}$ for all $m=2,\cdots,M$.
    \end{itemize}
    Then, if $M\leq \cC d^\frac12$ and (a) $\kappa^{t+1}_1\leq 1-\cA$ for $t+1$, (b) and (c)' hold for $s=t+1$ with high probability.
\end{lemma}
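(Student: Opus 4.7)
The plan mirrors the two–case argument of Lemma~\ref{lemma:VerifyBC}, now applied to the Phase~II auxiliary sequences supplied by Lemma~\ref{lemma:P2-Lower}. Under the inductive hypotheses (a), (b), (c)' on $[0,t]$ together with (a) at $t{+}1$, I would first invoke Lemma~\ref{lemma:P2-Lower} to obtain, with high probability, the deterministic sandwich $P^{s}\leq\kappa_{1}^{s}$ and $\max_{m\neq 1}|\kappa_{m}^{s}|\leq Q^{s}$ for $s=0,\dots,t{+}1$. Thus it suffices to establish the two target inequalities $Q^{t+1}\leq P^{t+1}$ (which gives (b)) and $Q^{t+1}\leq \CA\cB M^{-1}$ (which gives (c)').

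Next, I would split into two regimes according to which driving term controls $Q^{s}$. Let $\tau_{1}\in\{0,1,\dots,t{+}1\}$ be the first index at which
\begin{align}
(1+\cA)\frac{\eta}{\sqrt{M}}\,p\max_{m}|\alpha_{p}\beta_{m,p}|(Q^{s})^{p-1}\;\leq\;\cB\,\frac{\eta}{M}\,p\,\alpha_{p}\beta_{1,p}\,(P^{s})^{p-1},
\end{align}
setting $\tau_{1}:=t{+}1$ if this never occurs. In the self-amplification regime $s<\tau_{1}$, I would absorb the spillover term into a $(1{+}2\cA)$ factor on the self-growth term and apply the Bihari--LaSalle inequality (Lemma~\ref{lemma:Bihari–LaSalle}) to the resulting clean power-law recursion for $Q^{s}$. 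Comparing the closed form with the Bihari--LaSalle lower bound on $P^{s}$ obtained from \eqref{eq:P2-Lower-1}, the large initial gap $P^{0}=(1-\cA)\kappa_{1}^{0}\gtrsim \cA$ versus $Q^{0}=6\cB M^{-1}$, combined with $M\leq \cC d^{1/2}$, forces the denominator of the $Q$-bound to vanish strictly later than that of the $P$-bound. This yields simultaneously $Q^{s}\leq P^{s}$ and $Q^{s}\leq \CA\cB M^{-1}$ for all $s\leq\tau_{1}$.

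In the cross-contamination regime $s\geq\tau_{1}$, I would show by induction that the defining inequality for $\tau_{1}$ is self-propagating: once the ratio $Q^{s}/P^{s}$ falls below the critical threshold, it stays there. Granting this, one can bound $Q^{s+1}-Q^{s}\leq 2\cB\frac{\eta}{M}p\,\alpha_{p}\beta_{1,p}(P^{s})^{p-1}$, and then telescope this inequality using the discrete primitive of \eqref{eq:P2-Lower-1}, namely $\sum_{s=\tau_{1}}^{\tau}\cA\frac{\eta}{\sqrt{M}}p\,\alpha_{p}\beta_{1,p}(P^{s})^{p-1}=P^{\tau+1}-P^{\tau_{1}}$, to arrive at
\begin{align}
Q^{t+1}\;\leq\;Q^{\tau_{1}}+\frac{2\cB\cA^{-1}}{\sqrt{M}}\bigl(P^{t+1}-P^{\tau_{1}}\bigr).
\end{align}
Coupled with the Case~1 bound on $Q^{\tau_{1}}$ and the constraint $P^{t+1}\leq 1-\cA$ enforced by (a) at $t{+}1$, this yields $Q^{t+1}\leq P^{t+1}$ in both regimes, establishing (b).

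The main obstacle I anticipate is tightening the telescoping estimate in the cross-contamination regime to match the $\CA\cB M^{-1}$ bound demanded by (c)': the naive telescoped bound scales as $\cB/\sqrt{M}$, weaker than $\CA\cB M^{-1}$ for large $M$. Closing this gap requires exploiting the precise tightness at the switching time, which locks $Q^{\tau_{1}}$ at the self-similar ratio $Q^{\tau_{1}}\sim \bigl((\max_{m}|\alpha_{p}\beta_{m,p}|/\alpha_{p}\beta_{1,p})\cB M^{-1}(1+\cA)^{-1}\bigr)^{1/(p-1)}P^{\tau_{1}}$, so that the cumulative cross contribution over $[\tau_{1},t]$ inherits an extra $M^{-1/(p-1)}$ factor from the initial-condition matching. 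Together with the near-orthogonality $\max_{m\neq m'}|v_{m}^{\top}v_{m'}|\leq \cC M^{-1}$ (which is where the $M^{-1}$ in the spillover coefficient ultimately originates) and the polylog slack built into $\CA$, this produces the claimed $\CA\cB M^{-1}$ bound and completes the induction.
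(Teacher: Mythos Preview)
Your overall architecture is reasonable, but there is a genuine gap in the argument for (c)', and your proposed fix does not close it.

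The telescoped cross contribution $\sum_{s=\tau_1}^{t}2\cB\frac{\eta}{M}p\alpha_p\beta_{1,p}(P^{s})^{p-1}=\frac{2\cB\cA^{-1}}{\sqrt{M}}(P^{t+1}-P^{\tau_1})$ depends only on the $P$-sequence; it is entirely decoupled from $Q^{\tau_1}$. Hence no ``initial-condition matching'' at $\tau_1$ can inject an extra $M^{-1/(p-1)}$ factor into this sum. Even if $Q^{\tau_1}=0$, you would still accumulate $\Theta(\cB\cA^{-1}M^{-1/2})$ from the cross term, which genuinely fails to match $\CA\cB M^{-1}$. The near-orthogonality assumption is already fully expended in producing the spillover coefficient; there is no hidden reserve to draw on.

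The paper resolves this not by a clever matching argument but by using the Phase~I $Q$-recursion itself (see the proof of Lemma~\ref{lemma:P2-Lower}(ii), which simply quotes Lemma~\ref{lemma:Lower}): the spillover coefficient there is $\cB\eta M^{-3/2}$, not $\cB\eta M^{-1}$ (the exponent in the displayed formula~\eqref{P2-Lower-2} is a typo). With $M^{-3/2}$, the telescope gives $\frac{2\cB\cA^{-1}}{M}(P^{t+1}-P^{0})\leq 2\CA\cB M^{-1}$ directly, and together with $Q^{0}=6\cB M^{-1}$ this yields (c)'.

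Two simplifications you are missing also make the paper's proof considerably shorter than your plan. First, there is no self-amplification regime in Phase~II: because $Q^{0}=6\cB M^{-1}$ while $P^{0}=(1-\cA)\kappa_1^{0}\geq(1-\cA)\cA$, the cross-contamination inequality \eqref{eq:Criteria-217} already holds at $s=0$, so $\tau_1=0$ always and the Bihari--LaSalle comparison is never needed. Second, (b) is an immediate consequence of (c)' and the trivial lower bound $\kappa_1^{t+1}\geq P^{t+1}\geq P^{0}\geq(1-\cA)\cA\gg \CA\cB M^{-1}$; there is no need to prove $Q^{t+1}\leq P^{t+1}$ separately.
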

\begin{proof}
    We only need to prove (c)': $\max_{m\ne 1}|\kappa_m^{t+1}|\leq 7\cB M^{-1}$.
    This is because if (a) and (c)' for $t+1$ hold, then $\kappa_1^0>\cA$ and Lemma~\ref{lemma:P2-Lower} yields that $\kappa_1^{t+1}\geq P^{t+1} \geq P^0 = (1-\cA)\kappa_1^0\geq 2\cB M^{-1}\geq \max_{m\ne 1}|\kappa_m^{t+1}|$, which proves (b) for $t+1$. 
    If
       \begin{align}
    (1+\cA)\frac{p\max_m|\alpha_p\beta_{m,p}|}{\sqrt{M}} (Q^{s})^{p-1}\leq \frac{\cB}{M^\frac32}p\alpha_p\beta_{1,p}(P^{s})^{p-1}
    \label{eq:Criteria-217}
   \end{align}
   holds for all $s=0,1,\cdots,\tau$, according to Lemma~\ref{lemma:P2-Lower}, the update of $Q^s$ is evaluated as
   \begin{align}
       Q^{s+1}\leq Q^{s} + 2\cB\frac{\eta}{M^\frac32}p\alpha_p\beta_{1,p}(P^{s})^{p-1}
   \end{align}
   for all $s=0,1,\cdots,\tau$.
   Note that \eqref{eq:Criteria-217} holds for $s=0$.
   Thus, if \eqref{eq:Criteria-217} holds for all $s=0,1,\cdots,\tau$, we have,
   \begin{align}
       Q^{\tau+1}&\leq Q^{\tau_1} + \sum_{s=\tau_1}^{\tau}2\cB\frac{\eta}{M^\frac32}p\alpha_p\beta_{1,p}(P^{s})^{p-1}
       \\ &\leq Q^{\tau_1} + \frac{2\cA^{-1}\cB}{M}\left(P^{\tau+1}-P^{\tau_1}\right)
       \label{eq:Criteria-208}
       \\ & \leq 
       \left(\frac{\alpha_p\beta_{1,p}}{\max_m|\alpha_p\beta_{m,p}|}(1+\cA)^{-1}M^{-1}\right)^{\frac{1}{p-1}}P^{\tau_1} + \frac{2\cA^{-1}\cB}{M}\left(P^{\tau+1}-P^{\tau_1}\right)
       \\ & \leq \left(\frac{\alpha_p\beta_{1,p}}{\max_m|\alpha_p\beta_{m,p}|}(1+\cA)^{-1}M^{-1}\right)^{\frac{1}{p-1}}P^{\tau+1},
   \end{align}
   which implies \eqref{eq:Criteria-217} for $s=\tau + 1$.
   Thus, \eqref{eq:Criteria-217} holds for all $s=0,1,\cdots,t+1$.
   
   Moreover, similar to \eqref{eq:Criteria-208},
   \begin{align}
       Q^{t+1}\leq Q^{0} + \frac{2\cA^{-1}\cB}{M}\left(P^{t+1}-P^{0}\right)
       \leq Q^{0} + \frac{2\cA^{-1}\cB}{M}P^{t+1}
      \leq Q^{0} + 2\CA \cB M^{-1}
       .
   \end{align} 
   $Q^{0}$ is bounded by $6\cB M^{-1}$
   Thus, $  Q^{t+1}$ is bounded by $\CA\cB M^{-1}$, which yields (c).
\end{proof}
\begin{proofof}[Lemma~\ref{lemma:AlignmentSecondPhase}]
    Suppose that (a) holds for all $s=0,1,\cdots, T_{1,2} $, where 
    \begin{align}
    T_{1,2}= \left\lfloor\left(\eta p(p-2)\cA(\alpha_p\beta_{1,p})M^{-\frac12}(P^0)^{(p-2)}\right)^{-1}\right\rfloor.
    \label{eq:Criteria-211}
    \end{align}
    According to Lemma~\ref{lemma:P2-VerifyBC}, 
    if $M\leq \cC d^\frac12$, $\eta \leq \cC M^{-\frac12}d^{-\frac{p}{2}}$, and (a) for all $s=0,1,\cdots,T_{1,2}2$ hold, 
   (b) and (c) of Lemma~\ref{lemma:P2-VerifyBC} holds with high probability for all $s=0,1,\cdots,T_{1,2}$ and the bounds of Lemma~\ref{lemma:P2-Lower} holds for all $s=0,1,\cdots,T_{1,2}$.

   According to Lemma~\ref{lemma:P2-Lower} and Lemma~\ref{lemma:Bihari–LaSalle}, 
   \begin{align}
       \kappa_1^t \geq P^t \geq \frac{P^0}{\left(1-\eta p(p-2)\cA(\alpha_p\beta_{1,p})M^{-\frac12}(P^0)^{(p-2)}s\right)^{\frac{1}{p-2}}}
      .\label{eq:Criteria-209}
   \end{align}
   However, at $t =T_{1,2}$, 
   \begin{align}
       \text{(RHS of \eqref{eq:Criteria-209})}
       \geq \frac{P^0}{\left(\eta p(p-2)\cA(\alpha_p\beta_{1,p})M^{-\frac12}(P^0)^{(p-2)}\right)^{\frac{1}{p-2}}}
       = \frac{1}{\left(\eta (p-2)\cA(\alpha_p\beta_{1,p})M^{-\frac12}\right)^{\frac{1}{p-2}}},
   \end{align}
   and thus RHS of \eqref{eq:Criteria-209} is clearly larger than $1$.
   This yields the contradiction because $\kappa_1^{T_{1,2}}$ should be smaller than $1$. Therefore, with high probability, there exists some $t_2\leq T_{1,2}$ such that
   $\kappa_1^{t_2}> 1-\cA$.
\end{proofof}

\subsection{Phase III: Strong Recovery and Localization}\label{subsection:alignment-3}
In the previous section (Lemma~\ref{lemma:AlignmentSecondPhase}), we proved that neurons can achieve alignment of $1-\cA$, which sets up the local convergence argument.
To simplify the notation, we let $t \leftarrow t-t_1-t_2$ throughout this subsection.
We write $\bar{v}_m=(I-v_1v_1^\top) v_m$ and
$\bar{\kappa}^t_m = \bar{v}_m^\top w_j^t$.

The goal of this subsection is to prove the following lemma.
\begin{lemma}\label{lemma:AlignmentFinalPhase}
    Take $\eta^t = \eta_1 \leq \cC M^{-\frac12}d^{-\frac{p}{2}}$
    for $0\leq t\leq (T_{1,1}-t_1)+(T_{1,2}-t_2)-1$, and $\eta^t = \eta_2 \leq \min\{\cC \tilde{\varepsilon}M^{-\frac12} d^{-1},\cC \tilde{\varepsilon}^2M^{-\frac12}\}$
    for $(T_{1,1}-t_1)+(T_{1,2}-t_2)\leq t\leq T_{1,3}+(T_{1,1}-t_1)+(T_{1,2}-t_2)-1$.
    Suppose $|v_{m'}^\top v_m|\leq \cC M^{-1}$ for all $m'\ne m$ and $T_{1,3}=\tilde{\Theta}(\tilde{\varepsilon}^{-1}M^\frac12\eta^{-1})$, 
    and consider a neuron that satisfies $\kappa_1^0\geq 1-\cA$. 
    
    Then, $\kappa_1^{(T_{1,1}-t_1)+(T_{1,2}-t_2)+T_{1,3}}> 1-3\tilde{\varepsilon}$ with high probability.
\end{lemma}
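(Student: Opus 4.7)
The plan is a local convergence argument in the basin of attraction around $v_1$. Starting from $\kappa_1^0 \geq 1-\cA$, I would first verify by induction, analogous to Lemma~\ref{lemma:P2-VerifyBC}, that throughout Phase III the invariants $|\kappa_m^t| = \tilde{O}(M^{-1})$ for $m\neq 1$ and $\kappa_1^t \geq 1-2\cA$ are maintained. The first is easy to preserve because the cross-direction signal is of order $\tilde{O}(\eta_2\sqrt{M}\max_{m\neq 1}|\kappa_m^t|^{p-1}) = \tilde{O}(\eta_2 M^{3/2-p})$, which is negligible compared to the current level, especially under the much smaller step size $\eta_2$ of Phase III. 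The second invariant will follow a posteriori from the contraction derived next.

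Granted these invariants, Lemma~\ref{lemma:OneStep} yields the key contraction inequality. The leading drift from the $m=1$ self-interaction term satisfies
$$\frac{\eta_2}{\sqrt{M}}\,p\alpha_p\beta_{1,p}(\kappa_1^t)^{p-1}\bigl(1-(\kappa_1^t)^2\bigr) \;\geq\; C\,\frac{\eta_2}{\sqrt{M}}\,(1-\kappa_1^t)$$
for a constant $C>0$ depending on $p,\alpha_p,\beta_{1,p}$, since $(\kappa_1^t)^{p-1}=\Theta(1)$ and $1-(\kappa_1^t)^2 \geq 1-\kappa_1^t$ in the basin. Combining with the cross-term bound $\tilde{O}(\eta_2 M^{3/2-p})$, the normalization correction $-\kappa_1^t \eta_2^2 \CB^2 d$, and the martingale noise $\eta_2 v_1^\top (I-w^t{w^t}^\top)Z^t$, I obtain
$$1-\kappa_1^{t+1} \;\leq\; \bigl(1-C\eta_2/\sqrt{M}\bigr)(1-\kappa_1^t) + \eta_2^2\,\CB^2 d \;-\; \eta_2\, v_1^\top(I-w^t{w^t}^\top)Z^t.$$
Iterating and taking conditional expectation, the geometric term $\cA\,(1-C\eta_2/\sqrt{M})^{T_{1,3}}$ is driven below $\tilde{\varepsilon}$ by the choice of $T_{1,3}$, while the stationary deterministic bias $O(\eta_2\sqrt{M}d)$ is already $\leq \tilde{\varepsilon}$ under the hypothesis $\eta_2 \leq \cC\tilde{\varepsilon}M^{-1/2}d^{-1}$. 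For the stochastic term I would apply Freedman's inequality to the increments $\eta_2 v_1^\top(I-w^t{w^t}^\top)Z^t$, whose per-step magnitude is $\tilde{O}(\eta_2)$ and whose predictable quadratic variation is bounded by $\cA\,\eta_2^2\,T_{1,3}$ using the invariant $1-\kappa_1^t \leq \cA$; the alternative step-size constraint $\eta_2 \leq \cC\tilde{\varepsilon}^2 M^{-1/2}$ is precisely what forces the resulting high-probability deviation to be $\tilde{O}(\tilde{\varepsilon})$. Summing the three contributions gives $1-\kappa_1^{T_{1,3}} \leq 3\tilde{\varepsilon}$ with high probability.

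The principal technical obstacle is the balance between three competing error sources in the local regime: the drift $\Theta(\eta_2(1-\kappa_1^t)/\sqrt{M})$ shrinks with the residual, whereas the normalization bias $O(\eta_2^2 d)$ from the spherical projection and the stochastic fluctuation $O(\eta_2)$ per step do not. The two-part step-size condition $\eta_2 \leq \min\{\cC \tilde{\varepsilon} M^{-1/2} d^{-1},\,\cC \tilde{\varepsilon}^2 M^{-1/2}\}$ is engineered to drive both the deterministic bias floor and the stochastic floor of $1-\kappa_1^t$ below $\tilde{\varepsilon}$, and the $\tilde{\varepsilon}^{-1}$ factor in $T_{1,3}$ precisely reflects the number of steps required for this small drift to overcome those floors. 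A secondary subtlety is that the martingale variance itself depends on $1-\kappa_1^t$ — the very quantity being controlled — but the uniform invariant $1-\kappa_1^t \leq \cA$ suffices for a worst-case Freedman bound, avoiding any self-referential argument.
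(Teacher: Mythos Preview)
Your plan diverges from the paper on two points, and the second is a genuine gap.

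\textbf{Cross-direction control.} You propose to carry the pointwise invariant $|\kappa_m^t|=\tilde O(M^{-1})$ for $m\neq 1$ through Phase~III. The paper never does this: Lemma~\ref{lemma:P3-Lower} makes no assumption on the $\kappa_m^t$ and instead uses the geometric identity $\sum_{m\geq 2}|\bar\kappa_m^s|^2={w^s}^\top(I-v_1v_1^\top)(\sum_m v_mv_m^\top)(I-v_1v_1^\top)w^s\leq 9(1-(\kappa_1^s)^2)$, which forces the aggregate cross-term to be $O((1-\kappa_1^s)^{3/2})$ and hence absorbable into the drift. Your invariant is not ``easy'' to preserve at the stated level: the very noise/bias balance you invoke for $\kappa_1$ would drive the stationary level of each $|\kappa_m^t|$ to $\tilde O(\tilde\varepsilon\vee M^{-1})$, not $\tilde O(M^{-1})$. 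This is still small enough for $p\ge 3$, but the argument is not the one-liner you suggest, and the paper's spectral bound sidesteps it entirely.

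\textbf{Martingale bound --- the gap.} Your Freedman calculation is inconsistent with the geometric recursion you set up. Iterating $1-\kappa_1^{t+1}\leq(1-r)(1-\kappa_1^t)+\text{bias}+\text{noise}_t$ with $r=C\eta_2/\sqrt{M}$ produces the \emph{weighted} martingale $\sum_t(1-r)^{T_{1,3}-1-t}\,\text{noise}_t$, whose quadratic variation is $\lesssim \cA\eta_2^2/r=O(\cA\eta_2\sqrt{M})$, and this does give deviation $\tilde O(\tilde\varepsilon)$ under $\eta_2\leq\cC\tilde\varepsilon^2 M^{-1/2}$. But you instead bound the \emph{unweighted} QV by $\cA\eta_2^2 T_{1,3}=\tilde\Theta(\cA\eta_2\tilde\varepsilon^{-1}\sqrt{M})=\tilde\Theta(\tilde\varepsilon)$, from which Freedman yields only $\tilde O(\sqrt{\tilde\varepsilon})$, not $\tilde O(\tilde\varepsilon)$; the step-size constraint does not rescue this. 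The paper resolves the same tension differently: it lower-bounds the drift by the \emph{constant} $\bar\varepsilon\,\eta/\sqrt{M}$ per step (using only $\kappa_1^s\leq 1-\bar\varepsilon$), so accumulated drift grows linearly in $t$ and dominates the $\sqrt{t}$ martingale; a contradiction at $T_{1,3}$ forces $\kappa_1^{t_3}>1-\tilde\varepsilon$ for some $t_3$, and a one-step restart argument then keeps $\kappa_1^t\geq 1-3\tilde\varepsilon$ afterwards. Either fix---use the properly weighted martingale, or adopt the paper's linear-plus-restart scheme---closes the gap, but as written your fluctuation bound is off by a factor $\tilde\varepsilon^{-1/2}$.
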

We bound the update by deterministic auxiliary sequences.
\begin{lemma}\label{lemma:P3-Lower}
    Let $0<\bar{\varepsilon}<\cA$. 
    If $\eta^t=\eta \leq\min\{ \cC \bar{\varepsilon}M^{-\frac12} d^{-1},\cC \bar{\varepsilon}^2M^{-\frac12}\}$, $1-2\cA \leq \kappa_1^0$, $\kappa_1^s\leq 1-\bar{\varepsilon}$ for $s=0,1,\cdots,t$, and $|v_{m'}^\top v_m|\leq \cC M^{-1}$ for all $m'\ne m$, we have the following bound:
     \begin{itemize}[leftmargin=*]
   \item{\bf Lower bound of $\kappa^s_1$:} 
    \begin{align}\label{eq:P2-Lower-1}
    \kappa^s_1 \geq \kappa^0_1 -\cA\bar{\varepsilon} + s\bar{\varepsilon}\frac{\eta}{\sqrt{M}}p\alpha_p\beta_{m,p}, 
    \end{align}
    for all $s=0,1,\cdots,t+1$, with high probability.
     \end{itemize}
\end{lemma}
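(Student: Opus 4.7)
My plan is to prove Lemma~\ref{lemma:P3-Lower} by induction on $s$, using the one-step decomposition of Lemma~\ref{lemma:OneStep} and a telescoping argument. At each step I decompose the increment $\kappa_1^{s+1} - \kappa_1^s$ into four pieces: (i) the $m=1$ ``signal'' from the spherical gradient, (ii) the cross-contributions from $m \ne 1$, (iii) the $\eta^2 \CB^2 d$ normalization correction, and (iv) the stochastic noise $\eta v_1^\top (I - w^s {w^s}^\top) Z^s$. I will need to carry along an auxiliary inductive hypothesis $|\kappa_m^s| \lesssim \CA \cB M^{-1}$ for $m \ne 1$, which holds at $s=0$ from the conclusion of Phase~II (Lemma~\ref{lemma:AlignmentSecondPhase}, which ends inside the invariant region of Lemma~\ref{lemma:P2-VerifyBC}); an analogue of the Phase~II upper-bound construction then preserves it throughout Phase~III.

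For the signal term, Assumption~\ref{assumption:NeuronPositiveCorrelation} ensures that every $i$-th Hermite contribution from $m=1$ is nonnegative, so I keep only the leading $i=p$ term. Since the running inductive hypothesis plus $\kappa_1^0 \geq 1-2\cA$ gives $\kappa_1^s \geq 1 - 2\cA - \cA \bar{\varepsilon}$, I have $(\kappa_1^s)^{p-1} \geq 1 - O(\cA)$, and the restriction $\kappa_1^s \leq 1-\bar{\varepsilon}$ gives $1 - (\kappa_1^s)^2 \geq \bar{\varepsilon}(1 + \kappa_1^s) \geq (2 - O(\cA))\bar{\varepsilon}$. Multiplying, the signal is at least $(1-O(\cA))\bar{\varepsilon} \tfrac{\eta}{\sqrt{M}} p \alpha_p \beta_{1,p}$ per step, which gives the linear growth rate in the conclusion. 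The cross terms from $m \ne 1$ involve $(\kappa_m^s)^{i-1}(v_1^\top v_m - \kappa_1^s \kappa_m^s)$; plugging in $|\kappa_m^s|, |v_1^\top v_m| = \tilde{O}(M^{-1})$ and summing over $m$ yields a bound of order $\eta M^{-(p-3/2)}$, dwarfed by the signal for $p \geq 3$. The normalization term is $\kappa_1^s \eta^2 \CB^2 d \leq \eta \cdot \cC \bar{\varepsilon}\, \CB^2/\sqrt{M}$ by the step-size constraint $\eta \leq \cC \bar{\varepsilon} M^{-1/2} d^{-1}$, again negligible.

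The main obstacle is the cumulative noise. Crucially, the effective per-step noise is not $\tilde{O}(\eta)$ but $\tilde{O}(\eta \sqrt{1 - (\kappa_1^s)^2}) = \tilde{O}(\eta \sqrt{\cA})$, because $v_1^\top (I - w^s {w^s}^\top)$ has norm $\sqrt{1 - (\kappa_1^s)^2}$ and $Z^s$ projects onto any $w^s$-independent direction with magnitude $\tilde{O}(1)$ by Lemma~\ref{lemma:OneStep}. This martingale is adapted to the filtration generated by $(x^{s'}, y^{s'})_{s' < s}$, so Azuma--Hoeffding with a union bound over $s \leq T_{1,3}$ yields cumulative noise of order $\tilde{O}\!\bigl(\eta \sqrt{\cA \, T_{1,3}}\bigr)$. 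Substituting $T_{1,3} = \tilde{\Theta}(\bar{\varepsilon}^{-1} M^{1/2} \eta^{-1})$ and $\eta \leq \cC \bar{\varepsilon}^2 M^{-1/2}$ gives cumulative noise of order $\tilde{O}(\sqrt{\cC \cA \bar{\varepsilon}})$, which is absorbed into the slack $\cA \bar{\varepsilon}$ provided the polylog hierarchy $\cC \lesssim \cA$ is chosen with enough separation (as specified in the preamble).

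Finally, to close the induction I combine the four bounds: after summing from $0$ to $s-1$, the telescoping signal gives $s \bar{\varepsilon} \frac{\eta}{\sqrt{M}} p \alpha_p \beta_{1,p}$ times $(1 - O(\cA))$, the perturbation and normalization sums are $o(\cA \bar{\varepsilon})$, and the noise is at most $\cA\bar{\varepsilon}/2$ with high probability. The net effect is the claimed inequality $\kappa_1^s \geq \kappa_1^0 - \cA \bar{\varepsilon} + s \bar{\varepsilon} \frac{\eta}{\sqrt{M}} p \alpha_p \beta_{1,p}$, completing the inductive step. The accompanying upper bound on $|\kappa_m^s|$ for $m \ne 1$ follows by a straightforward adaptation of Lemma~\ref{lemma:P2-VerifyBC}: as $\kappa_1^s$ remains close to~$1$, the cross-gradient on $\kappa_m$ stays dominated by the $\kappa_1$-driven repulsive contribution $\propto M^{-1}$, keeping $|\kappa_m^s|$ inside the invariant region throughout $T_{1,3}$ iterations.
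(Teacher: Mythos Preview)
Your proof has a genuine gap in the treatment of the cross-terms from $m\ne 1$. You assert that the Phase~II invariant $|\kappa_m^s|\lesssim \CA\cB M^{-1}$ persists through Phase~III via ``a straightforward adaptation of Lemma~\ref{lemma:P2-VerifyBC}'', but that argument relied on the step size $\eta\le \cC M^{-1/2}d^{-p/2}$ to keep the martingale noise on $\kappa_m^s$ at the $d^{-1/2}$ scale. In Phase~III the step size is much larger ($\eta\asymp \bar{\varepsilon}^2M^{-1/2}$), and the per-step noise $\eta\, v_m^\top(I-w^s{w^s}^\top)Z^s$ is \emph{not} suppressed by $\sqrt{1-(\kappa_1^s)^2}$ (since $v_m$ is nearly orthogonal to $w^s$, the projector has norm $\approx 1$). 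Over $T_{1,3}$ steps the cumulative noise on $\kappa_m^s$ is $\tilde{O}(\eta\sqrt{T_{1,3}})\approx\sqrt{\bar{\varepsilon}}$, and the OU-type equilibrium under the ``repulsive'' drift you invoke is $\sim\sqrt{\eta\sqrt{M}}\approx\bar{\varepsilon}$; both are far larger than $M^{-1}$. So your cross-term bound $\eta M^{-(p-3/2)}$ is not available.

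The paper avoids tracking $|\kappa_m^s|$ altogether. It writes $\kappa_m^s=(v_1^\top v_m)\kappa_1^s+\bar{\kappa}_m^s$ with $\bar{\kappa}_m^s=((I-v_1v_1^\top)v_m)^\top w^s$ and observes that $\sum_{m\ne 1}|\bar{\kappa}_m^s|^2={w^s}^\top(I-v_1v_1^\top)A(I-v_1v_1^\top)w^s$ for $A=\sum_{m\ne 1}v_mv_m^\top$. Near-orthogonality of $\{v_m\}$ gives $\lambda_{\max}(A)\le 9$, hence the purely geometric bound $\sum_{m\ne 1}|\bar{\kappa}_m^s|^2\le 9(1-(\kappa_1^s)^2)$, valid for \emph{any} $w^s\in\mathbb{S}^{d-1}$ with no inductive hypothesis. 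This collapses the cross-term to $O((1-(\kappa_1^s)^2)^{3/2})\lesssim \cA^{1/2}(1-\kappa_1^s)$, a small fraction of the signal. As a secondary point, your noise claim $\sqrt{\cC\cA\bar{\varepsilon}}\le \cA\bar{\varepsilon}$ would require $\cC\le\cA\bar{\varepsilon}$, which is not a polylog hierarchy condition when $\bar{\varepsilon}$ is polynomially small; the paper instead splits into $\tau\le\CA M\bar{\varepsilon}^{-2}$ (noise $\le\cA\bar{\varepsilon}$) and $\tau>\CA M\bar{\varepsilon}^{-2}$ (noise absorbed into the growing signal), which your $(1-O(\cA))$ slack could accommodate but your write-up does not exploit.
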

\begin{proof}
    If $\kappa_1^s\geq \frac12 d^{-\frac12}$, 
    by Lemma~\ref{lemma:Lower}, 
    \begin{align}
         \kappa^{s+1}_1 &\geq 
        \kappa_1^s 
        +\frac{\eta}{\sqrt{M}}\sum_{m=1}^M
        \sum_{i=p}^q
        \left[i\alpha_i\beta_{m,i} (\kappa^s_m)^{i-1} (v_1^\top v_m - \kappa_1^s\kappa_m^s)\right]
        -
        \kappa_1^s\eta^2 \CB^2 d + \eta v_1^\top (I-w^s{w^s}^\top)  Z^s
        \\ & \geq 
        \kappa_1^s 
        + \frac{\eta}{\sqrt{M}}p\alpha_p\beta_{1,p}(1-(\kappa_1^s)^2)(\kappa_1^s)^{p-1}
        +\frac{\eta}{\sqrt{M}}\sum_{m=2}^M
        \sum_{i=p}^q
        \left[i\alpha_i\beta_{m,i} (\kappa^s_m)^{i-1} (v_1^\top v_m - \kappa_1^s\kappa_m^s)\right]
        \\ & \quad -
        \kappa_1^s\eta^2 \CB^2 d + \eta v_1^\top (I-w^s{w^s}^\top)  Z^s
            \\ & \geq 
        \kappa_1^s 
        + \frac{\eta}{\sqrt{M}}p\alpha_p\beta_{1,p}(1-(\kappa_1^s)^2)(\kappa_1^s)^{p-1}
        +\frac{\eta}{\sqrt{M}}\sum_{m=2}^M
        \sum_{i=p}^q
        \left[i\alpha_i\beta_{m,i} (\kappa^s_m)^{i-1} v_1^\top v_m(1 - (\kappa_1^s)^2)\right]
        \\ & \quad +\frac{\eta}{\sqrt{M}}\sum_{m=2}^M
        \sum_{i=p}^q
        \left[i\alpha_i\beta_{m,i} (\kappa^s_m)^{i-1}
        \bar{\kappa}_m^s\kappa_1^s\right]-
        \kappa_1^s\eta^2 \CB^2 d + \eta v_1^\top (I-w^s{w^s}^\top)  Z^s.
        \label{eq:P3-A1-1}
    \end{align}
    We bound each term of \eqref{eq:P3-A1-1} from now.
    If $\kappa_1^s \geq 1-3\cA$, the second term is bounded by  
    \begin{align}
       p\alpha_p\beta_{1,p} (1-(\kappa_1^s)^2)(\kappa_1^s)^{p-1}\geq p\alpha_p\beta_{1,p}(1-(\kappa_1^s)^2)(\kappa_1^s)^{p-1}
        \geq \frac{9}{5}p\alpha_p\beta_{1,p}(1-\kappa_1^s).
        \label{eq:P3-A1-2}
    \end{align}
    Next, the third term is bounded by 
    \begin{align}
        \bigg|\sum_{m=2}^M
        \sum_{i=p}^q
        \left[i\alpha_i\beta_{m,i} (\kappa^s_m)^{i-1} v_1^\top v_m(1 - (\kappa_1^s)^2)\right]\bigg|
        &\leq q^2M\max_{m,i}|\alpha_i\beta_{m,i}|\max_{m\ne 1}|v_1^\top v_m|(1-(\kappa_1^s)^2)
    \\ & \leq 2q^2M\max_{m,i}|\alpha_i\beta_{m,i}|\cC M^{-1}(1-(\kappa_1^s)^2)
    \\ & \leq \frac{1}{5}p\alpha_p\beta_{1,p}(1-\kappa_1^s).
    \label{eq:P3-A1-3}
    \end{align}
    Then we consider the fourth term, 
     \begin{align}
       & \bigg|\sum_{m=2}^M
        \sum_{i=p}^q
        \left[i\alpha_i\beta_{m,i} (\kappa^s_m)^{i-1}
        \tilde{\kappa}_m^s\kappa_1^s\right]\bigg|
     \\  &\leq  q^2\max_{i,m}|\alpha_i\beta_{m,i}| \sum_{m=2}^M|\kappa_m^s||\bar{\kappa}^s_m|
        \\ & \leq 2q^2\max_{i,m}|\alpha_i\beta_{m,i}|\sum_{m=2}^M |\bar{\kappa}^s_m|^3 + 2q^2\max_{i,m}|\alpha_i\beta_{m,i}|\sum_{m=2}^M|v_1^\top v_m|^2|\bar{\kappa}^s_m|.
        \label{eq:P3-A1-4}
    \end{align}
    To upper bound the above, we consider the value of $\sum_{m=2}^M |\bar{\kappa}^s_m|^2$. 
    This can be represented as $ \sum_{m=2}^M |\bar{\kappa}^s_m|^2={w^s}^\top (I-v_1v_1^\top)^\top  A (I-v_1v_1^\top)w^s$, where
    $A=\sum_{m=2}^M v_mv_m^\top.$

    Consider $\tilde{v}_m$ defined in Lemma~\ref{lemma:NearOrthogonalBasis} with coefficients $\{c_{m,m'}\}$.
    Let us define $B\in \mathbb{R}^{M\times M}$ as 
    \begin{align}
        B_{i,j} = \begin{cases}
            c_{i,j} & (j\leq i\leq M)
            \\ 0 & (\text{otherwise}).
        \end{cases}
    \end{align}
    Then we have
    \begin{align}
        \begin{pmatrix}
            \tilde{v_1}
            &\cdots 
            &\tilde{v_M}
        \end{pmatrix}
        =\begin{pmatrix}
         v_1
           &\cdots 
           &v_M
        \end{pmatrix}
        B^\top.
    \end{align}
    Since the non-diagonal terms are bounded by $\cC M^{-1}\leq \frac13 M^{-1}$  and the absolute value of diagonal terms is no smaller than $1-20\cA\geq \frac23$, we know that $B^\top$ is invertible and (the absolute value of) the maximum eigenvalue of $(B^\top)^{-1}$ is bounded by $3$.
    Each $v_m$ is computed as
    \begin{align}
        v_m = \begin{pmatrix}
            \tilde{v_1}
            &\cdots 
            &\tilde{v_M}
        \end{pmatrix}(B^\top)^{-1} {\sf e}_m^\top
    \end{align}
    and 
    \begin{align}
        A =\sum_{m=1}^M \begin{pmatrix}
            \tilde{v_1}
            &\cdots 
            &\tilde{v_M}
        \end{pmatrix}(B^\top)^{-1} {\sf e}_m {\sf e}_m^\top ((B^\top)^{-1})^\top\begin{pmatrix}
            \tilde{v_1}^\top
        \\\vdots 
        \\\tilde{v_M}^\top
        \end{pmatrix}.
    \end{align}
    Thus, 
    \begin{align}
       \lambda_{\rm max}(A) \leq \lambda_{\rm max}^2\bigg(\begin{pmatrix}
            \tilde{v_1}
            &\cdots 
            &\tilde{v_M}
        \end{pmatrix}\bigg) \lambda_{\rm max}^2((B^\top)^{-1}) \lambda_{\rm max}((\sum_{m=1}^M{\sf e}_m {\sf e}_m^\top))\leq 9.
    \end{align}
    Therefore, $ \sum_{m=2}^M |\bar{\kappa}^s_m|^2={w^s}^\top (I-v_1v_1^\top)^\top  A (I-v_1v_1^\top)w^s \leq 9\|(I-v_1v_1^\top)w^s\|^2 = 9(1-(\kappa_1^s)^2)$.
    Based on this, if $\kappa_1^s\geq 1-3\cA$, we have
    \begin{align}
        \eqref{eq:P3-A1-4} &\leq 54q^2\max_{i,m}|\alpha_i\beta_{m,i}|(1-(\kappa_1^s)^2)^{\frac32} + 6q^2\max_{i,m}|\alpha_i\beta_{m,i}|\max_{m\ne 1}|v_1^\top v_m|^2(1-(\kappa_1^s)^2)^{\frac12}
        \\ & \leq \frac{1}{5}p\alpha_p\beta_{1,p}(1-\kappa_1^s).
    \end{align}
    The fifth term $\kappa_1^s\eta^2 \CB^2 d$ of \eqref{eq:P3-A1-1} is bounded by $\frac{\eta}{5\sqrt{M}}p\alpha_p\beta_{1,p}(1-\kappa_1^s)$, when $\eta \leq \cC \bar{\varepsilon}M^{-\frac12}d^{-1}$.

    Putting things together, 
    if $1-3\cA\leq\kappa_1^s\leq 1-\bar{\varepsilon}$ for all $s=0,1,\cdots,\tau$, 
    we have
    \begin{align}
        \kappa_1^{\tau+1} &\geq \kappa_1^{\tau} 
        + \frac{6}{5}\frac{\eta}{\sqrt{M}}p\alpha_p\beta_{1,p}(1-\kappa_1^\tau)
        +\eta v_1^\top (I-w^\tau{w^\tau}^\top)  Z^\tau
        \\ & \geq \kappa_1^{0}+ \sum_{s=0}^\tau  \frac{6}{5}\frac{\eta}{\sqrt{M}}p\alpha_p\beta_{1,p}(1-\kappa_1^s)
        +\eta\sum_{s=1}^\tau v_1^\top (I-w^s{w^s}^\top) Z^s
        \label{eq:P3-A1-5}
        .
    \end{align}
     Moreover, by $\eta \leq \cC M^{-\frac12}\bar{\varepsilon}^2$, 
    \begin{align}
      &   \left|\sum_{s=0}^\tau\eta v_1^\top (I-w^s{w^s}^\top) Z^s\right|
    \\  &   \leq \eta \CB \sqrt{\tau}
      \\ & \leq  \begin{cases}\displaystyle
         \cA\bar{\varepsilon}  & (\tau \leq \CA M\bar{\varepsilon}^{-2})
         \\ \displaystyle
         \frac{\tau\eta}{5\sqrt{M}}p\alpha_p\beta_{1,p}(1-\kappa_1^s)  &(\tau > \CA M\bar{\varepsilon}^{-2})
      \end{cases}
    \end{align}
    Therefore, 
    if $1-3\cA\leq\kappa_1^s\leq 1-\bar{\varepsilon}$ for all $s=0,1,\cdots,\tau$, 
    \eqref{eq:P3-A1-5} is bounded by
    \begin{align}
      \kappa_1^{\tau+1}& \geq \kappa_1^{0}-\cA\bar{\varepsilon}+ \sum_{s=0}^\tau  \frac{\eta}{\sqrt{M}}p\alpha_p\beta_{1,p}(1-\kappa_1^s)
     \\ & \geq \kappa_1^{0}-\cA\bar{\varepsilon}+ \sum_{s=0}^\tau  \frac{\eta}{\sqrt{M}}p\alpha_p\beta_{1,p} \bar{\varepsilon}
      \\ &\geq \kappa_1^{0}-\cA\bar{\varepsilon}+ \tau \frac{\eta}{\sqrt{M}}p\alpha_p\beta_{1,p} \bar{\varepsilon}
      \label{eq:P3-A1-6}
      ,
    \end{align}
    and $1-3\cA\leq\kappa_1^s$ holds for $s=\tau+1$.
    By induction,  $1-3\cA\leq\kappa_1^s$ holds for all $s=0,1,\cdots,t+1$, and the bound \eqref{eq:P3-A1-6} holds for all $\tau=0,1,\cdots,t$.
\end{proof}

\begin{proofof}[Lemma~\ref{lemma:AlignmentFinalPhase}]
    First, consider $0\leq t\leq (T_{1,1}-t_1)+(T_{1,2}-t_2)$.
    By the choice of $\eta_1\leq \cC M^{-\frac12}d^{-\frac{p}{2}}$, $\eta \leq \min\{\cC M^{-\frac12}d^{-\frac{p}{2}},\cC \bar{\varepsilon}^2M^{-\frac12}\}$ is satisfied with $\bar{\varepsilon}=d^{-\frac12}$ in Lemma~\ref{lemma:P3-Lower}.
    Thus, according to Lemma~\ref{lemma:P3-Lower}, until $\kappa_1^t>1-\bar{\varepsilon}$ holds (we let $\tau$ be the earliest time this condition holds), $\kappa_1^t$ is lower bounded by $\kappa_1^0-\bar{\varepsilon}\geq 1-2\cA$.

    One can also see that $\kappa_1^t\geq 1-2\cA$ holds for all $\tau\leq t \leq (T_{1,1}-t_1)+(T_{1,2}-t_2)$.
   Suppose that there exists some $\tau'<(T_{1,1}-t_1)+(T_{1,2}-t_2)$ such that $\kappa_1^{t}< 1-\bar{\varepsilon}$.
   Among such $\tau'$, we focus on the earliest time.
   According to Lemma~\ref{lemma:OneStep}, $|\kappa_1^{\tau'}-\kappa_1^{\tau'-1}|\leq \CB \eta_1$, which implies that $\kappa_1^{\tau'}\geq 1-2\bar{\varepsilon}$.
   Then, According to Lemma~\ref{lemma:P3-Lower}, $\kappa_1^t\geq \kappa_1^{\tau'}-\cA\geq 1-2\cA$ until $\kappa_1^t$ gets larger than $1-\bar{\varepsilon}$ again. 
   By repeating this argument, we obtain that $\kappa_1^0-\bar{\varepsilon}\geq 1-2\cA$ for all $0\leq t\geq (T_{1,1}-t_1)+(T_{1,2}-t_2)$.

   Then, we consider $(T_{1,1}-t_1)+(T_{1,2}-t_2)\leq t \leq (T_{1,1}-t_1)+(T_{1,2}-t_2)+T_{1,3}$.
   By the choice of $\eta_2\leq \min\{\cC \tilde{\varepsilon}M^{-\frac12}d^{-1},\cC \tilde{\varepsilon}^2M^{-\frac12}\}$, $\eta\leq \min\{\cC \bar{\varepsilon}M^{-\frac12}d^{-1},\cC \bar{\varepsilon}^2M^{-\frac12}\}$ is satisfied with $\bar{\varepsilon}=\tilde{\varepsilon}$ in Lemma~\ref{lemma:P3-Lower}.
     Suppose that $\kappa_1^s\leq 1-\tilde{\varepsilon}$ holds for all $s=(T_{1,1}-t_1)+(T_{1,2}-t_2),\cdots,(T_{1,1}-t_1)+(T_{1,2}-t_2)+T_{1,3}$, where 
    \begin{align}
    T_{1,3}= \left\lfloor3\cA\left(\frac{\eta}{\sqrt{M}}\tilde{\varepsilon}p\alpha_p\beta_{1,p}\right)^{-1}\right\rfloor + 1.
    \end{align}
    According to Lemma~\ref{lemma:P3-Lower}, 
    the bound of Lemma~\ref{lemma:P3-Lower} holds for all $s=(T_{1,1}-t_1)+(T_{1,2}-t_2),\cdots,(T_{1,1}-t_1)+(T_{1,2}-t_2)+T_{1,3}$:
    \begin{align}
        \kappa_1^s \geq 1-2\cA-\cA\tilde{\varepsilon} + s \tilde{\varepsilon}\frac{\eta}{\sqrt{M}}p\alpha_p\beta_{1,p}
        \geq 1-3\cA+ s \tilde{\varepsilon}\frac{\eta}{\sqrt{M}}p\alpha_p\beta_{1,p}
        .
        \label{eq:Criteria-309}
    \end{align}
   However, at $t =(T_{1,1}-t_1)+(T_{1,2}-t_2)+T_{1,3}$, 
   \begin{align}
       \text{(RHS of \eqref{eq:Criteria-309})}
       \geq 1,
   \end{align}
   and thus RHS of \eqref{eq:Criteria-309} is clearly larger than $1-\tilde{\varepsilon}$.
   This yields the contradiction.
   Therefore, with high probability, there exists some $t_3\leq (T_{1,1}-t_1)+(T_{1,2}-t_2)+T_{1,3}$ such that
   $\kappa_1^{t_3}> 1-\tilde{\varepsilon}$.

   Finally, suppose that there exists some $t>t_3$ such that $\kappa_1^{t}< 1-\tilde{\varepsilon}$.
   Among such $t>t_3$, we focus on the smallest $\tau$.
   According to Lemma~\ref{lemma:OneStep}, $|\kappa_1^\tau-\kappa_1^{\tau-1}|\leq \CB \eta_2$, which implies that $\kappa_1^\tau\geq 1-2\tilde{\varepsilon}$.
   Then, According to Lemma~\ref{lemma:P3-Lower}, $\kappa_1^t\geq 1-3\tilde{\varepsilon}$ until $\kappa_1^t$ gets larger than $1-\tilde{\varepsilon}$ again. 
   By repeating this argument, we obtain that $\kappa_1^{t}> 1-3\tilde{\varepsilon}$ for all $t_3\leq t\leq (T_{1,1}-t_1)+(T_{1,2}-t_2)+T_{1,3}$, which concludes the proof. 
\end{proofof}

\subsection{Expressivity of the Trained Feature Map}\label{subsection:Expressivity}

In this section we discuss the expressivity of the feature map after first-layer training. 
First, we consider the approximation of single-index polynomials and show the existence of suitable second-layer parameters with small approximation error and low norm.
\paragraph{ReLU activation.}
For $\sigma=\mathrm{ReLU}$, we have the following result.
\begin{lemma}\label{lemma:DamianApproximation}
    Suppose that $b_j \sim \mathrm{Unif}([-C_b,C_b])$ with $C_b=\tilde{O}(1)$, and consider the approximation of degree-$q$ polynomial $h(s)$, where $q=O_d(1)$.
    Let $v\in \mathbb{S}^{d-1}(1)$ and $v_-=-v$.
    Then, there exists $a_1,\dots,a_{2N}$ such that
    \begin{align}
       \sup_{t=T_1+1,\cdots,T_1+T_2}\left| \frac{1}{2N}\sum_{j=1}^N a_j \sigma (v^\top x^t +b_j)
       -\frac{1}{2N}\sum_{j=N+1}^{2N} a_j \sigma (v_-^\top x^t +b_j) - h(v^\top x^t)\right|=\tilde{O}(N^{-1}),
    \end{align}
    Moreover, we have $\sum_{j=1}^{2N} a_j^2 = \tilde{O}(N)$ and $\sum_{j=1}^{2N}|a_j|=\tilde{O}(N)$.
\end{lemma}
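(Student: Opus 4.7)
The plan is to construct the coefficients $a_j$ explicitly via a piecewise-linear approximation of $h$ built on the random bias points, and then identify it with the required random-feature expansion while tracking the resulting norms. Since $v^\top x^t\sim\mathcal{N}(0,1)$ and $T_2=\mathrm{poly}(d)$, Gaussian concentration and a union bound ensure $|v^\top x^t|\le R:=\tilde{O}(1)$ for every $t\in\{T_1{+}1,\dots,T_1{+}T_2\}$ with high probability, so it suffices to approximate $h$ uniformly on $I:=[-R,R]\subset[-C_b,C_b]$. Using Taylor's theorem with integral remainder,
\begin{align*}
h(s)=h(-C_b)+h'(-C_b)(s+C_b)+\int_{-C_b}^{C_b} h''(b)\,\mathrm{ReLU}(s-b)\,db, \qquad s\in I,
\end{align*}
the task reduces to reproducing the affine piece on $I$ and discretizing the integral using the random biases.

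For the affine piece, on $I$ both $v^\top x+C_b\ge 0$ and $v_-^\top x+C_b\ge 0$, so $\mathrm{ReLU}(v^\top x+C_b)\pm\mathrm{ReLU}(v_-^\top x+C_b)$ equals either $2C_b$ or $2v^\top x$; an appropriate $O(1)$-sized linear combination of ReLU features from both sides thus reproduces $h(-C_b)+h'(-C_b)(s+C_b)$ exactly on $I$, contributing coefficients of magnitude $\tilde{O}(1)$ on $O(1)$-many neurons. For the integral part, I would order the forward biases as $b_{(1)}<\dots<b_{(N)}$ and set
\begin{align*}
a_{(j)}^\star=2N\cdot h''(-b_{(j)})\cdot(b_{(j+1)}-b_{(j)}),
\end{align*}
with $b_{(N+1)}:=C_b$; after a change of variable $b\mapsto-b$, the forward sum $\tfrac{1}{2N}\sum_j a_{(j)}^\star\mathrm{ReLU}(v^\top x^t+b_{(j)})$ becomes a left-endpoint Riemann sum for $\int_{-C_b}^{C_b}h''(b)\mathrm{ReLU}(v^\top x^t-b)\,db$, with a symmetric assignment on the mirror side absorbing any boundary contribution one wishes to symmetrize.

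Because $h''(b)\mathrm{ReLU}(s-b)$ is Lipschitz in $b$ except for a single kink at $b=s$, and $h''$ is a degree-$(q{-}2)$ polynomial with $\|h''\|_{L^\infty(I)}=\tilde{O}(1)$, the left-endpoint Riemann-sum error at a fixed $s\in I$ is bounded by $O(\max_j(b_{(j+1)}-b_{(j)}))\cdot\|h''\|_\infty$. Standard extreme-value theory for $N$ uniform order statistics on $[-C_b,C_b]$ gives $\max_j(b_{(j+1)}-b_{(j)})=\tilde{O}(N^{-1})$ with high probability; a union bound over the $T_2$ sample points then costs only polylogarithmic factors and yields the claimed $\tilde{O}(N^{-1})$ uniform bound. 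The norm bounds follow immediately from $|a_{(j)}^\star|\le 2N\|h''\|_\infty\max_j(b_{(j+1)}-b_{(j)})=\tilde{O}(1)$, so $\sum_j|a_j^\star|=\tilde{O}(N)$ and $\sum_j(a_j^\star)^2=\tilde{O}(N)$.

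\textbf{Main obstacle.} The principal subtlety is obtaining the $N^{-1}$ rate \emph{uniformly} over $t$: the kink location $b=v^\top x^t$ can land near an atypically large gap for some $t$, so one must combine extreme-value control of the uniform order statistics with the union bound over the $T_2$ queries, and carefully exploit that the local Riemann-sum error near the kink is bounded by $\|h''\|_\infty$ times the spacing rather than by its square. A secondary presentational point is that $a_{(j)}^\star$ is defined via order statistics whereas the lemma indexes coefficients by the original labels; since the sum is permutation-invariant, this is merely a relabeling that preserves the $\ell^1$ and $\ell^2$ norms.
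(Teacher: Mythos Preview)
Your route is genuinely different from the paper's. The paper invokes Lemma~9 of \citet{damian2022neural}, which furnishes a bounded density $\bar v(\delta,b;h)$ with $\mathbb{E}_{\delta,b}[\bar v(\delta,b;h)\,\sigma(\delta s+b)]=h(s)$; it then partitions $[-C_b,C_b]$ into $\tilde\Theta(N)$ equal bins, picks one random bias per bin, and sets $a_j$ to be $N/2$ times the integral of $\bar v$ over that bin. You instead use the explicit second-derivative integral representation of $h$ in terms of ReLU hinges and discretize it directly on the random bias order statistics, controlling error through the maximal spacing. Your argument is more self-contained (no external lemma) but is tailored to ReLU; the paper's density approach treats the sign $\delta\in\{\pm1\}$ symmetrically and automatically spreads the representation across all neurons.

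There is, however, a gap in your handling of the affine piece that breaks the $\ell^2$ bound. You write that the linear combination reproducing $h(-C_b)+h'(-C_b)(s+C_b)$ ``contributes coefficients of magnitude $\tilde O(1)$ on $O(1)$-many neurons.'' But the network carries a $\tfrac{1}{2N}$ prefactor: to make $\tfrac{1}{2N}\,a_j\,\sigma(\cdot)$ equal an $\tilde O(1)$ affine function using $O(1)$ neurons, those $a_j$ must be $\tilde O(N)$, not $\tilde O(1)$. This inflates $\sum_j a_j^2$ by an $\tilde O(N^2)$ term, violating the claimed $\sum_j a_j^2=\tilde O(N)$. (The $\ell^1$ bound survives, but the $\ell^2$ bound is used downstream for the ridge estimator.) The paper avoids this precisely because its density $\bar v$ is bounded and is integrated over every bin, so every $a_j$ is $\tilde O(1)$ and no neuron carries a concentrated load.

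The fix within your framework is to distribute the affine part rather than concentrate it: use the $\Theta(N)$ forward and mirror neurons whose biases lie in $(R,C_b]$ (on $I$ their ReLUs are purely affine), assign each a coefficient of size $\tilde O(1)$, and average to obtain the constant and linear pieces. With this modification your order-statistics Riemann sum for the curvature part, combined with the spread-out affine part, yields both the $\tilde O(N^{-1})$ error and $\sum_j a_j^2=\tilde O(N)$.
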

\begin{proof}
    According to Lemma 9 of \citet{damian2022neural}, if $b\sim \mathrm{Unif}([-1,1])$ and $\delta\sim \mathrm{Unif}(\{-1,1\})$, for any $k\geq 0$, there exists $v_k(\delta,b)$ with $|v_k(\delta,b)| \lesssim 1$ such that for all $s$ with $|s|\leq 1$,
    \begin{align}
        \mathbb{E}[v_k(\delta ,b)\sigma(\delta s+b)]=s^k.
    \end{align}
    Thus, for $C_b=\tilde{O}(1)$, if $b\sim \mathrm{Unif}([-C_b,C_b])$ and $\delta \sim \mathrm{Unif}(\{-1,1\})$, there exists $\bar{v}(\delta,b;h)$ with $|\bar{v}(\delta,b;h)| =\tilde{O}(1)$ such that for all $s$ with $|s|\leq C_b$,
    \begin{align}
        \mathbb{E}[\bar{v}(\delta,b;h)\sigma(\delta s+b)]=h(s).
    \end{align}
    We take $C_b=\tilde{\Theta}(1)$ sufficiently large so that for all $x^t\ (t=T_1+1,\cdots,T_1+T_2)$,  $|v^\top x^t|\leq C_b$ holds, with high probability.
    For $A=\tilde{\Theta}(N)$,
    we consider $2A$ intervals $[-C_b, C_b(-1+\frac{1}{A})),[C_b(-1+\frac{1}{A}), C_b(-1+\frac{2}{A})),\cdots,[C_b(1-\frac{1}{A}), C_b]$. 
    By taking the hidden constant sufficiently small, for each interval there exists at least one $b_j$.
    Then, for $b_j$ corresponding to $[C_b(-1+\frac{i}{A}), C_b(-1+\frac{i+1}{A}))$, 
    we set $a_j = \frac{N}{2}\int_{C_b(-1+\frac{i}{A})}^{C_b(-1+\frac{i+1}{A}))}\bar{v}(1,b;h)\mathrm{d}b$ for $1\leq j\leq N$, and $a_j = \frac{N}{2}\int_{C_b(-1+\frac{i}{A})}^{C_b(-1+\frac{i+1}{A}))}\bar{v}(-1,b;h)\mathrm{d}b$ otherwise.
    Here we note that $|a_j| = \tilde{O}(1)$ holds for all $j$.
    If each interval contains more than one $b_j$, we ignore all but one component by letting $a_j=0$.
    In doing so, 
    since $\sigma(s+b)$ is $1$-Lipschitz with respect to $s$, 
    we have
    \begin{align}
       \sup_{t=T_1+1,\cdots,T_1+T_2}\left| \frac{1}{2N}\sum_{j=1}^N a_j \sigma (v^\top x^t +b_j)
       -\frac{1}{2N}\sum_{j=N+1}^{2N} a_j \sigma (v_-^\top x^t +b_j) - h(v^\top x^t)\right|=\tilde{O}(N^{-1}),
    \end{align}
    with high probability.
    Thus we obtain the assertion.
\end{proof}

\paragraph{Polynomial Activation}
If $\sigma$ is a degree-$q$ polynomial, we have the following result.
\begin{lemma}\label{lemma:DamianApproximationPolynomial}
     Suppose that $b_j \sim \mathrm{Unif}([-C_b,C_b])$ with $C_b=\tilde{O}(1)$, and consider the approximation of degree-$q$ polynomial $h(s)$, where $q=O_d(1)$. 
    Then, there exists $a_1,\dots,a_{N}$ such that
    \begin{align}
       \sup_{t=T_1+1,\cdots,T_1+T_2}\left| \frac{1}{N}\sum_{j=1}^N a_j \sigma (v^\top x^t +b_j)
       - h(v^\top x^t)\right|=\tilde{O}(N^{-1})
    \end{align}
    with high probability, where $v\in \mathbb{S}^{d-1}$.
    Moreover, we have $\sum_{j=1}^{N} a_j^2 = \tilde{O}(N)$ and $\sum_{j=1}^{N}|a_j|=\tilde{O}(N)$.
\end{lemma}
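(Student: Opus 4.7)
The plan is to exploit the fact that for polynomial $\sigma$ of degree $q$, the approximation problem is genuinely finite-dimensional, so one can construct $a_j$ satisfying the moment conditions \emph{exactly} via a minimum-norm least-squares solution, provided the empirical Gram matrix of certain auxiliary polynomials is well-conditioned. Concretely, writing $\sigma(z) = \sum_{i=0}^{q}\sigma_i z^i$ and expanding via the binomial theorem,
\begin{align*}
\sigma(v^\top x + b) \;=\; \sum_{k=0}^{q}(v^\top x)^k\, p_k(b),\qquad p_k(b) := \sum_{i=k}^{q}\sigma_i \binom{i}{k} b^{i-k},
\end{align*}
and writing $h(s) = \sum_{k=0}^q h_k s^k$, the desired identity $\frac{1}{N}\sum_j a_j \sigma(v^\top x^t + b_j) = h(v^\top x^t)$ holds for \emph{every} $x^t$ as soon as $\frac{1}{N}\sum_j a_j p_k(b_j) = h_k$ for $k = 0,\ldots,q$. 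Thus the sup-error in the statement can be made exactly zero (which is trivially $\tilde O(N^{-1})$), and the only remaining task is to control the norms of $a$.

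Next I introduce the feature map $\phi(b) := (p_0(b),\ldots,p_q(b))^\top \in \R^{q+1}$, the design matrix $\Phi \in \R^{N\times (q+1)}$ with rows $\phi(b_j)^\top$, and the population Gram matrix $K := \mathbb{E}_{b\sim\mathrm{Unif}([-C_b,C_b])}[\phi(b)\phi(b)^\top]$. Since $p_k$ is a polynomial in $b$ of degree exactly $q-k$ with leading coefficient $\sigma_q\binom{q}{k}\ne 0$ (assuming $\sigma$ genuinely has degree $q$), the family $\{p_k\}_{k=0}^q$ is linearly independent on any interval, so $K \succ 0$ with $\lambda_{\min}(K) = \Omega_d(1)$ (the constant depends only on $\sigma,q,C_b$, all of which are $\tilde O(1)$). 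I then set
\begin{align*}
a \;:=\; \Phi(\Phi^\top \Phi)^{-1} N h,
\end{align*}
the minimum-norm solution of $\Phi^\top a = N h$, so that the moment matching is exact.

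Finally, I bound $\|a\|_2$ and $\|a\|_1$ by controlling $(\Phi^\top\Phi)^{-1}$ via a matrix concentration argument. Since $|p_k(b_j)| = O_d(1)$ for all $j,k$, matrix Bernstein yields $\|\tfrac{1}{N}\Phi^\top\Phi - K\|_{\mathrm{op}} = \tilde O(N^{-1/2})$ with high probability over the i.i.d.\ biases; hence for $N$ larger than a polylogarithmic threshold, $\Phi^\top\Phi \succeq (N/2) K$ and
\begin{align*}
\|a\|_2^2 \;=\; N^2 h^\top (\Phi^\top\Phi)^{-1} h \;\le\; 2N\, h^\top K^{-1} h \;=\; \tilde O(N),
\end{align*}
after which $\|a\|_1 \le \sqrt{N}\,\|a\|_2 = \tilde O(N)$ by Cauchy--Schwarz. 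Uniformity over the $T_2$ samples $x^t$ is automatic because the construction of $a$ uses only $\{b_j\}$ and $h$, and the identity is exact for every $x$.

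The main obstacle, and the only nontrivial ingredient, is verifying the quantitative lower bound on $\lambda_{\min}(K)$; everything else is routine linear algebra and standard concentration. This reduces to showing that the transition matrix between $\{p_k\}$ and $\{1,b,\ldots,b^q\}$ is upper triangular with nonzero diagonal (immediate from the binomial expansion), combined with the classical fact that the moment matrix of $\{1,b,\ldots,b^q\}$ under $\mathrm{Unif}([-C_b,C_b])$ has minimum eigenvalue $\Omega_{q,C_b}(1)$ (a Hilbert-matrix type bound). Both constants are $O_d(1)$ since $q$ and $C_b$ are, so the whole argument goes through with only polylogarithmic-in-$d$ overhead absorbed into the $\tilde O(\cdot)$.
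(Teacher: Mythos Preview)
Your argument is correct and takes a genuinely different route from the paper's. The paper proves the companion Lemma~\ref{lemma:ApproxPolynomialByHeq} by an explicit construction: it integrates $\sigma(s+b)$ over $b\in[-q,0]$ to produce a degree-$q$ polynomial $g_q$, then applies iterated finite differences $g_{q-i}(s)=g_{q-(i-1)}(s+1)-g_{q-(i-1)}(s)$ to obtain polynomials of each lower degree, thereby exhibiting a bounded weight function $\bar v(b;h)$ with $\mathbb{E}_b[\bar v(b;h)\sigma(s+b)]=h(s)$ exactly. It then discretizes this integral representation by partitioning $[-C_b,C_b]$ into $\tilde\Theta(N)$ bins, assigning to each bin the integral of $\bar v$ over that bin, and invoking Lipschitzness of $\sigma$ on the range $\{v^\top x^t\}_t=\tilde O(1)$ to get the $\tilde O(N^{-1})$ error.

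Your approach bypasses both the finite-difference construction and the discretization step: by expanding $\sigma(s+b)$ in powers of $s$ you reduce to a $(q{+}1)$-dimensional linear system $\Phi^\top a = Nh$ and solve it exactly via the pseudoinverse, so the approximation error is identically zero rather than $\tilde O(N^{-1})$. The price you pay is that the norm control is no longer pointwise ($|a_j|=\tilde O(1)$ for each $j$ in the paper) but only in aggregate via matrix Bernstein on $\tfrac1N\Phi^\top\Phi$; this is still sufficient for the stated bounds $\sum_j a_j^2=\tilde O(N)$ and $\sum_j|a_j|=\tilde O(N)$, and is arguably cleaner once the machinery is in place. The paper's construction, on the other hand, is entirely deterministic given the bias locations (it only needs one $b_j$ per bin, which holds with high probability), and yields the slightly stronger per-coordinate bound $\max_j|a_j|=\tilde O(1)$, which your minimum-norm solution does not directly give.
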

The lemma depends on the following result.
\begin{lemma}\label{lemma:ApproxPolynomialByHeq}
    Suppose that $C_b\geq q$.
    For any polynomial $h(s)$ with degree at most $q$, there exists $\bar{v}(b;h)$ with $|\bar{v}(b;h)| \lesssim C_b$ such that for all $s$,
    \begin{align}
        \mathbb{E}[\bar{v}(b;h)\sigma(\delta s+b)]=h(s).
    \end{align}
\end{lemma}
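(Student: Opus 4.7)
The plan is to construct $\bar v(b;h)$ as a polynomial of degree at most $q$ in $b$ whose coefficients are determined by a non-degenerate linear system obtained from Taylor expanding $\sigma$ around $b$.

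First, since $\sigma$ is a polynomial of degree $q$, I will Taylor expand
\begin{align*}
\sigma(\delta s + b) = \sum_{k=0}^q \frac{(\delta s)^k}{k!}\,\sigma^{(k)}(b).
\end{align*}
Writing $h(s) = \sum_{k=0}^q h_k s^k$ and matching coefficients of $s^k$ on both sides of $\mathbb{E}_b[\bar v(b;h)\,\sigma(\delta s+b)] = h(s)$ reduces the claim to finding $\bar v(\,\cdot\,;h)$ satisfying the finite linear system
\begin{align*}
\mathbb{E}_b[\bar v(b;h)\,\sigma^{(k)}(b)] \;=\; \delta^{-k}\,k!\,h_k, \qquad k=0,1,\dots,q.
\end{align*}

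Next, I restrict $\bar v$ to the $(q{+}1)$-dimensional space $\mathcal{P}_q$ of polynomials of degree at most $q$ in $b$. Because $\sigma$ has exact degree $q$, the derivative $\sigma^{(k)}$ has degree exactly $q-k$ with nonzero leading coefficient, so $\{\sigma^{(k)}\}_{k=0}^q$ is a (triangular) basis of $\mathcal{P}_q$. The linear map $T\colon\mathcal{P}_q \to \mathbb{R}^{q+1}$ defined by $T(\bar v) = \big(\mathbb{E}_b[\bar v\,\sigma^{(k)}]\big)_{k=0}^q$ is then injective: if $T(\bar v)=0$, then $\mathbb{E}_b[\bar v(b)\,p(b)]=0$ for every $p\in\mathcal{P}_q$, and taking $p=\bar v$ yields $\mathbb{E}_b[\bar v(b)^2]=0$, hence $\bar v\equiv 0$ since $b\sim\mathrm{Unif}([-C_b,C_b])$ has full support. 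Injectivity between spaces of equal dimension gives bijectivity, proving existence (and uniqueness) of the required $\bar v$.

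For the uniform bound, I will rescale via $b = C_b u$ with $u\sim\mathrm{Unif}([-1,1])$ and parametrise $\bar v(C_b u;h) = \sum_{l=0}^q \gamma_l u^l$. Substituting into the linear system yields a reduced system for $\gamma = (\gamma_l)$ whose matrix entries depend only on the fixed coefficients $\{a_j\}$ of $\sigma$, the normalized moments $\tilde\mu_m = \mathbb{E}_u[u^m]$ (all $O(1)$), and on powers of $C_b$. Exploiting the triangular structure of $\{\sigma^{(k)}\}$ — the equation indexed by $k=q$ determines $\gamma_0$ in terms of $h_q$ alone, the equation for $k=q{-}1$ determines $\gamma_1$ in terms of $h_q, h_{q-1}$ and the already-solved $\gamma_0$, and so on — one recursively obtains $|\gamma_l|=\tilde O(1)$ uniformly in $C_b$, provided $C_b\geq q$. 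This yields $|\bar v(b;h)|\leq\sum_l|\gamma_l| \lesssim C_b$ on all of $b\in[-C_b,C_b]$.

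The main obstacle will be making the quantitative back-substitution precise: while injectivity of $T$ is immediate from the Gram-matrix argument above, controlling the norm of $T^{-1}$ requires tracking how powers of $C_b$ propagate through the rescaled system so that diagonal entries (coming from the top-degree terms of $\sigma^{(k)}$) dominate the off-diagonal perturbations introduced by lower-order terms of $\sigma$. The hypothesis $C_b\geq q$ is precisely what ensures this dominance — it provides the slack needed so that each successive back-substitution contributes only an $O(1)$ factor rather than a factor growing with $q$ or $C_b$. Once this bound on the $\gamma_l$ is established, the conclusion $|\bar v(b;h)|\lesssim C_b$ follows immediately.
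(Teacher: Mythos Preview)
Your existence argument via the Gram matrix is correct and clean, and it is genuinely different from the paper's proof. The paper does not solve a linear system at all: it constructs $\bar v$ explicitly as a piecewise constant function using finite differences. Starting from $g_q(s)=\int_{-q}^{0}\sigma(s+b)\,db$, the forward differences $g_{q-i}(s)=g_{q-i+1}(s+1)-g_{q-i+1}(s)$ produce polynomials of exact degrees $q,q-1,\dots,0$, each representable as $2C_b\,\mathbb{E}_b\big[(\text{sum of indicators})\,\sigma(s+b)\big]$. The hypothesis $C_b\ge q$ is used only to ensure the shifted intervals $[j-q,j]$ (for $0\le j\le q$) lie inside $[-C_b,C_b]$; the $C_b$ in the bound $|\bar v|\lesssim C_b$ comes from converting $\int_{-q}^{0}\cdot\,db$ to an expectation over $\mathrm{Unif}[-C_b,C_b]$.

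Your quantitative bound, however, has a concrete gap: the triangular back-substitution you describe does not hold. For $k=q$ the derivative $\sigma^{(q)}(b)=q!\,a_q$ is constant, so the equation reads $\sum_{l}\gamma_l\,\tilde\mu_l=\delta^{-q}h_q/a_q$, which involves \emph{all} even $\gamma_l$ simultaneously, not $\gamma_0$ alone; the same happens at every level. There is no triangular structure in the monomial basis for $\bar v$. A correct route to the bound is this: with $\phi_k(u):=C_b^{-(q-k)}\sigma^{(k)}(C_bu)$, the rescaled matrix with entries $\mathbb{E}_u[u^l\phi_k(u)]$ converges as $C_b\to\infty$ to the (invertible) Hankel matrix $(a_q\,\frac{q!}{(q-k)!}\,\tilde\mu_{l+q-k})_{k,l}$, while the rescaled right-hand side is $O(1)$; since the system is invertible for every $C_b>0$ (by your Gram argument), continuity and the existence of this limit give a uniform bound on $|\gamma_l|$ for $C_b\ge c_0$, any fixed $c_0>0$. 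Note that in your approach the threshold $q$ plays no special role---any fixed lower bound on $C_b$ would do---so your explanation of why ``$C_b\ge q$ is precisely what ensures dominance'' is not the real mechanism.
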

\begin{proof}
      When $g_{q}(s)=\sigma(s)$ is a degree-$q$ polynomial, 
    \begin{align}
        g_{q}(s)=\int_{b=-q}^0\sigma(s+b)\mathrm{d}b
    \end{align}
    is also a degree-$q$ polynomial.

    Let us repeatedly define
    \begin{align}
        g_{q-i}(s):=g_{q-(i-1)}(s+1)-g_{q-(i-1)}(s)\quad (i=1,2,\cdots,q),
    \end{align}
    and let $(c_{i,j})$ be coefficients so that $(s-1)^i = \sum_{j=0}^i c_{i,j}s^j$ holds for all $z$.
    Then, by induction, $g_{i}(s)$ is a degree-$i$ polynomial.
    Moreover, we have 
    \begin{align}
        g_{q-i}(s)&=\sum_{j=0}^ic_{i,j}\int_{b=-q}^0\sigma(s+b+j)\mathrm{d}b
        \\ & =2C_b\mathbb{E}_{b\sim \mathrm{Unif}([-C_b,C_b])}\bigg[\bigg(\sum_{j=0}^ic_{i,j}\mathbbm{1}[j-q\leq b\leq j]\bigg)\sigma(s+b)\bigg]
        ,
    \end{align}
    when $C_b\geq q$. 
    Therefore, for any polynomial $h(s)$ with its degree at most $q$, there exists $\bar{v}(b;h)$ with $|\bar{v}(b;h)| \lesssim C_b$ such that for all $s$,
    \begin{align}
        \mathbb{E}[\bar{v}(b;h)\sigma(\delta s+b)]=h(s).
    \end{align}
\end{proof}
\begin{proofof}[Lemma~\ref{lemma:DamianApproximationPolynomial}]
     We now discretize Lemma~\ref{lemma:ApproxPolynomialByHeq}.
    For $A=\tilde{\Theta}(N)$ (with a sufficiently small hidden constant),
    we consider $2A$ intervals $[-C_b, C_b(-1+\frac{1}{A})),[C_b(-1+\frac{1}{A}), C_b(-1+\frac{2}{A})),\cdots,[C_b(1-\frac{1}{A}), C_b]$.
    By taking the hidden constant sufficiently small, for each interval there exists at least one $b_j$.
    Then, for $b_j$ corresponding to $[C_b(-1+\frac{i}{A}), C_b(-1+\frac{i+1}{A}))$, 
    we set $a_j = \frac{N}{2}\int_{C_b(-1+\frac{i}{A})}^{C_b(-1+\frac{i+1}{A}))}\bar{v}(b;h)\mathrm{d}b$.
    Here we note that $|a_j| = \tilde{O}(1)$ holds for all $j$.
    Due to Lipschitzness of $\sigma$, 
    we have
    \begin{align}
        \left| \frac{1}{N}\sum_{j=1}^N a_j \sigma (s+b_j)
     - h(s)\right|= \tilde{O}(N)
    \end{align}
    for all $s=\tilde{O}(1)$. 
    Because $|v^\top x^t|=\tilde{O}(1)$ with high probability, we have
    \begin{align}
       \sup_{t=T_1+1,\cdots,T_1+T_2}\left| \frac{1}{N}\sum_{j=1}^N a_j \sigma (v^\top x^t +b_j)
       - h(v^\top x^t)\right|=\tilde{O}(N^{-1})
    \end{align}
    with high probability, which yields the assertion. 
\end{proofof}

Next, by using the expressivity results above, we show that there exists some $a^*$ that can approximate the additive target function $f_*$.
\begin{lemma}\label{lemm:ApproximationByAstar}
    If $J\gtrsim J_{\mathrm{min}}M^{C_p}\log d$, and $\sigma$ is either of the ReLU activation or any univariate polynomial with degree $q$, 
    there exists some parameters $a^* = (a_j^*)_{j=1}^J \in \R^{J}$ such that
    \begin{align}
       \frac{1}{T_2}\sum_{t=T_1+1}^{T_1+T_2}
       \left(\frac{1}{J}\sum_{j=1}^J a_j^* \sigma (\hat{w}_j^\top x^t+b_j) - \frac{1}{\sqrt{M}}\sum_{m=1}^Mf_m(v_m^\top x^t)\right)^2
       \leq \CB M (|J_{\min}|^{-2} + \tilde{\varepsilon}^2),
    \end{align}
    where  
    $\|a^*\|_2^2 = \tilde{O}(J^2|J_{\mathrm{min}}|^{-1})$
    and 
    $\|a^*\|_1 = \tilde{O}(JM^{\frac12})$.
\end{lemma}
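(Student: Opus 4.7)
\textbf{Proof plan for Lemma~\ref{lemm:ApproximationByAstar}.} The plan is to build $a^*$ by combining the strong-recovery guarantee from Lemma~\ref{lemma:first-layer-training} with the per-task expressivity results (Lemmas~\ref{lemma:DamianApproximation} and~\ref{lemma:DamianApproximationPolynomial}), then aggregate via Cauchy--Schwarz and compute the norms. First, by Lemma~\ref{lemma:first-layer-training}, with high probability each task $m\in[M]$ admits at least $J_{\min}$ student neurons with $\hat w_j^\top v_m \geq 1-3\tilde\varepsilon$. The near-orthogonality condition (Assumption~\ref{assumption:diversity}) combined with $\|\hat w_j\|=1$ implies that for such $j$ one has $|\hat w_j^\top v_{m'}| = \tilde O(\sqrt{\tilde\varepsilon} + M^{-1})$ for $m'\neq m$, so these neurons are uniquely associated with $v_m$. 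I would select disjoint index sets $\mathcal{J}_1,\dots,\mathcal{J}_M$ each of size $J_{\min}$; for the ReLU case I further split each $\mathcal{J}_m$ into two halves of equal size by the sign randomization $\delta_j\in\{\pm1\}$ (which is needed so that both $v_m$ and $-v_m$ are present for Lemma~\ref{lemma:DamianApproximation}). All remaining neurons receive $a_j^*=0$.

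Second, for each $m$ I would invoke Lemma~\ref{lemma:DamianApproximation} (ReLU) or Lemma~\ref{lemma:DamianApproximationPolynomial} (polynomial) on the $J_{\min}$ neurons in $\mathcal{J}_m$, applied to the degree-$q$ link function $f_m$, yielding coefficients $\tilde a_j^{(m)}$ with $|\tilde a_j^{(m)}|=\tilde O(1)$ such that $\frac{1}{J_{\min}}\sum_{j\in\mathcal{J}_m}\tilde a_j^{(m)}\sigma(v_m^\top x^t+b_j) = f_m(v_m^\top x^t)+\tilde O(J_{\min}^{-1})$ uniformly over $t=T_1+1,\dots,T_1+T_2$. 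I then set $a_j^*=\frac{J}{J_{\min}\sqrt{M}}\tilde a_j^{(m)}$ for $j\in\mathcal{J}_m$, which is precisely the rescaling that turns $\frac{1}{J}\sum_{j}a_j^*\sigma(v_m^\top x+b_j)$ into $\frac{1}{\sqrt{M}}f_m(v_m^\top x)$ up to the per-task error. Writing the total error per task as $\mathrm{err}_m = \tilde O(J_{\min}^{-1}+\tilde\varepsilon)$, Cauchy--Schwarz gives $\big(\frac{1}{\sqrt{M}}\sum_m \mathrm{err}_m\big)^2 \leq \sum_m \mathrm{err}_m^2 = \tilde O(M(J_{\min}^{-2}+\tilde\varepsilon^2))$, matching the stated bound. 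The norm computation is then immediate: $\|a^*\|_2^2 = \sum_m\sum_{j\in\mathcal{J}_m}\big(\frac{J}{J_{\min}\sqrt{M}}\big)^2(\tilde a_j^{(m)})^2 = M\cdot\frac{J^2}{J_{\min}^2 M}\cdot\tilde O(J_{\min}) = \tilde O(J^2/J_{\min})$, and analogously $\|a^*\|_1 = M\cdot\frac{J}{J_{\min}\sqrt{M}}\cdot\tilde O(J_{\min}) = \tilde O(J\sqrt{M})$.

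Third, the principal technical obstacle is Step 2's replacement of $\sigma(\hat w_j^\top x + b_j)$ by $\sigma(v_m^\top x + b_j)$: since $\|\hat w_j - v_m\| = \tilde O(\sqrt{\tilde\varepsilon})$, a naive Lipschitz bound yields a deviation $\tilde O(\sqrt{\tilde\varepsilon})$ per neuron, which when summed over $J_{\min}$ neurons rescaled by $J/(J_{\min}\sqrt{M})$ could a priori dominate the $\tilde O(J_{\min}^{-1})$ expressivity error. The key is that $|\tilde a_j^{(m)}|=\tilde O(1)$ is uniform in $j$ (the expressivity construction subdivides the bias interval into $\tilde\Theta(J_{\min})$ bins with one coefficient each), so the sum $\frac{1}{J_{\min}}\sum_{j\in\mathcal{J}_m}|\tilde a_j^{(m)}|\cdot|\sigma(\hat w_j^\top x + b_j) - \sigma(v_m^\top x + b_j)| = \tilde O(\sqrt{\tilde\varepsilon})$ using Lipschitzness for ReLU and polynomial-degree bounds together with $\|x\|=\tilde O(\sqrt d)$ for the polynomial activation; after the $1/\sqrt{M}$ rescaling this contributes the advertised $\tilde O(\tilde\varepsilon)$ per task. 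A final union bound over the $T_2$ held-out samples preserves the high-probability event, and the per-sample bound on $\|x^t\|$ ensures all the $\tilde O$ notations are valid uniformly over $t$.
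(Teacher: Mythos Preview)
Your plan mirrors the paper's proof almost exactly: partition the trained neurons by task alignment (the paper calls the aligned sets $\tilde{J}_{m,\pm}$, splitting by the sign $\delta_j$ exactly as you do for the ReLU case), zero out all other second-layer coefficients, apply Lemma~\ref{lemma:DamianApproximation} or~\ref{lemma:DamianApproximationPolynomial} per task with the same rescaling $a_j^* = \frac{J}{2J_{\min}\sqrt{M}}\,a_j$, expand the square into cross-task terms and bound each by the per-task residual, and finally read off $\|a^*\|_2$ and $\|a^*\|_1$ from the explicit construction. The norm computations you give match the paper's.

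The one place your write-up goes astray is the last sentence of the third paragraph. You correctly derive that the Lipschitz replacement of $\hat w_j$ by $\delta_j v_m$ costs $\tilde O(\sqrt{\tilde\varepsilon})$ per task (since $\|\hat w_j-\delta_j v_m\|^2 = 2(1-\hat w_j^\top(\delta_j v_m))\leq 6\tilde\varepsilon$ and $(\hat w_j-\delta_j v_m)^\top x$ is Gaussian with that variance). But the sentence ``after the $1/\sqrt M$ rescaling this contributes the advertised $\tilde O(\tilde\varepsilon)$ per task'' is not a valid deduction: dividing by $\sqrt M$ does not convert $\sqrt{\tilde\varepsilon}$ into $\tilde\varepsilon$. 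The paper's own proof simply asserts the per-task residual bound $|E_m(x)|\le \CB(J_{\min}^{-1}+\tilde\varepsilon)$ at this step without passing through $\sqrt{\tilde\varepsilon}$; if one tracks the Lipschitz estimate honestly, the per-task contribution is $\tilde O(\sqrt{\tilde\varepsilon})$, which after squaring and summing gives $\CB M(J_{\min}^{-2}+\tilde\varepsilon)$ rather than $\tilde\varepsilon^2$. (Indeed, the downstream use in the proof of Lemma~\ref{lemma:Generalization} writes $\|f_{a^*}-f_*\|_{L^2(P_{T_2})}=\tilde O(M^{1/2}(J_{\min}^{-1/2}+\tilde\varepsilon^{1/2}))$, which is consistent with the $\sqrt{\tilde\varepsilon}$ version.) So your architecture is exactly right and matches the paper; just correct the exponent on $\tilde\varepsilon$ in the final bookkeeping rather than invoking the $1/\sqrt M$ rescaling.
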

\begin{proof}
    We only discuss the case of the ReLU activation; the result for degree-$q$ polynomial link functions follows from the exact same analysis. 
    Let $\tilde{J}_{m,+}$ be a set of neurons satisfying $ \hat{w}_j\geq 1-3\tilde{\varepsilon}$ and $\tilde{J}_{m,-}$ be a set of neurons with $\hat{w}_j\leq -1+3\tilde{\varepsilon}$.
    Also, we let $\tilde{J}_{m}=\tilde{J}_{m,+}\cup \tilde{J}_{m,-}$.
    According to Lemma~\ref{lemma:Goal-1}, when $J\gtrsim J_{\mathrm{min}}M^{C_p}\log d$, we have $|\tilde{J}_{m,+}|, |\tilde{J}_{m,-}|\geq J_{\mathrm{min}}$ with high probability.
    
    If $j\notin \bigcup_{m}\tilde{J}_m$, we set $a_j=0$. 
    Also, if $\tilde{J}_{m,+}$ or $\tilde{J}_{m,-}$ contains more than $J_{\mathrm{min}}$ neurons, we ignore the rest by simply setting $a_j=0$. 
    Then, 
    \begin{align}
    &
    \frac{1}{T_2}\sum_{t=T_1+1}^{T_1+T_2}
    \left(\frac{1}{J}\sum_{j=1}^J a_j \sigma (\hat{w}_j^\top x^t+b_j) - \frac{1}{\sqrt{M}}\sum_{m=1}^Mf_m(v_m^\top x^t)\right)^2
    \\ &
    =\frac1M\sum_{m,m'=1}^M
    \frac{1}{T_2}\sum_{t=T_1+1}^{T_1+T_2}
    \Bigg(\frac{\sqrt{M}}{J}\sum_{j\in \tilde{J}_m} a_j \sigma (\hat{w}_j^\top x^t+b_j) -f_m(v_m^\top x^t)\Bigg) \\ & \hspace{50mm}\Bigg(\frac{\sqrt{M}}{J}\sum_{j\in \tilde{J}_{m'}} a_j \sigma (\hat{w}_j^\top x^t+b_j) - f_{m'}(v_{m'}^\top x^t)\Bigg). 
    \label{eq:ApproximationError}
\end{align}
 For each $m$, we evaluate $|\frac{\sqrt{M}}{J}\sum_{j\in \tilde{J}_m} a_j \sigma (\hat{w}_j^\top x+b_j^*) -f_m(v_m^\top x)|$. 
For the $j$-th neuron in $\tilde{J}_{m,+}$, by using $a_j$ in Lemma~\ref{lemma:DamianApproximation} with  $N=|\tilde{J}_{\min}|$, we define $a_j^*\leftarrow \frac{J}{2|J_{\min}|\sqrt{M}}a_j$; similarly for the $j$-th neuron in $\tilde{J}_{m,-}$, we define $a_j^*\leftarrow \frac{J}{2|\tilde{J}_{\min}|\sqrt{M}}a_{N+j}$.
We obtain that
\begin{align}
    &\Bigg|\frac{\sqrt{M}}{J}\sum_{j\in \tilde{J}_m} a_j^* \sigma (\hat{w}_j^\top x+b_j^*) -f_m(v_m^\top x)\Bigg| 
   \\ &\leq \Bigg|\frac{\sqrt{M}}{J}\sum_{j\in \tilde{J}_m} a_j^* \sigma (\delta_j v_m^\top x+b_j^*) -f_m(v_m^\top x)\Bigg| + \Bigg|\frac{\sqrt{M}}{J}\sum_{j\in \tilde{J}_m} (a_j^* \sigma (\hat{w}_j^\top x+b_j^*) -a_j^* \sigma(\delta_j v_m^\top x+b_j^*))\Bigg|
  \\  &\leq \CB (|J_{\min}|^{-1} + \tilde{\varepsilon}).
\end{align}
The norm can be calculated from the construction.
\end{proof}

\subsection{Fitting the Second Layer}\label{subsection:FittingSecondLayer}
This subsection proves the generalization error in Lemma~\ref{lemma:Generalization}, which concludes the proof of Theorem~\ref{theorem:NN-main} together with the guarantee for the first-layer training (Lemma~\ref{lemma:Goal-1}).

Let $\hat{a}$ be the regularized empirical risk minimizer with $L^1$ or $L^2$ norm regularization: 
$$
\hat{a} := \mathop{\arg\min}_{a \in \R^{J}} \hat{\mathcal{L}}(a) + \frac{\bar{\lambda}}{r} \|a\|^r_r, 
$$
where $\hat{\mathcal{L}}(a) := \frac{1}{T_2}\sum_{t=T_1+1}^{T_1+T_2-1} (y^t -\frac{1}{J}\sum_{j=1}^J a_j \sigma (\hat{w}_j^\top x^t+b_j)  )^2$ and $r \in \{1,2\}$. 
Let $f_a(x) := \frac{1}{J}\sum_{j=1}^J a_j \sigma (\hat{w}_j^\top x+b_j) $. 
Then, by the equivalence between convex regularizer and norm-constraint, 
there exists $\bar{\lambda} > 0$ (which can be data dependent) such that 
\begin{align}
& \hat{\mathcal{L}}(\hat{a}) \leq \hat{\mathcal{L}}(a^*),~~~~\|a\|_{r} \leq \|a^*\|_r\quad (r=1\text{ or }2). 
\label{eq:ConstraintOptimality}
\end{align}
Indeed, let $g_B: \R^{J} \to \R \cup \{\infty\}$ be the indicator function of the $L^p$-norm ball with radius $B > 0$: $g_B(x) = \begin{cases} 0 & (\|x\|_r \leq B) \\ \infty &( \|x\|_r > B) \end{cases}$. Then, the minimizer of the following norm constraint optimization problem 
$$
\hat{x} := \mathop{\arg\min}_{x \in \R^{J}} \hat{\mathcal{L}}(x) + g_B(x),
$$
should satisfy $\nabla \hat{\mathcal{L}}(\hat{x}) \in - \partial g_B(\hat{x})$ where $\partial g_B(\hat{x})$ is the subgradient of $g_B$ at $\hat{x}$. 
We notice that any element $v$ in $\partial g_B(\hat{x})$ can be expressed by $v = c v'$ for some $c \geq 0$ 
 and $v' \in \partial \|x\|_r|_{x=\hat{x}}$. This means that the minimizer $\hat{x}$ of the norm-constrained problem is also the minimizer of the regularized objective $\hat{\mathcal{L}}(x) + c \|x\|_r$.
 By resetting the value $c$ as $c \leftarrow c/(p\|\hat{x}\|^{r-1})$, it is also the minimizer of the objective $\hat{\mathcal{L}}(x) + c \|x\|_r^r$.

\newtheorem*{lemma:Generalization}{\rm\bf Lemma~\ref{lemma:Generalization}}
\begin{lemma:Generalization}
 Suppose that 
 $J = \Theta(J_{\mathrm{min}}M^{C_p}\log(d))$, and $\sigma$ be  either of the ReLU activation or any univariate polynomial with degree $q$.
 There exists $\lambda > 0$ such that the ridge estimator $\hat{a}$ satisfies 
$$
\mathbb{E}_{x}[|f_{\hat{a}}(x) - f_*(x)|] 
\lesssim 
M^\frac12 (|J_{\mathrm{min}}|^{-1}+\tilde{\varepsilon})+
{\sqrt{\frac{M^{C_p}\log(d)}{T_2}}}
$$
with probability $1-o_d(1)$.
Therefore, by taking {$T_2=\tilde{\Theta}(M^{C_p}\varepsilon^{-2})$}, $\tilde{\varepsilon}=\tilde{\Theta}(M^{-\frac12}\varepsilon)$, $J_{\mathrm{min}}=\tilde{\Theta}(M^\frac12 \varepsilon^{-1})$, and $J=\tilde{\Theta}(M^{C_p+\frac12}\varepsilon^{-1})$, we have $\mathbb{E}_{x}[|f_{\hat{a}}(x) - f_*(x)|]\lesssim \varepsilon$.

On the other hand, for LASSO ($r=1$) we have
$$
\mathbb{E}_{x}[|f_{\hat{a}}(x) - f_*(x)|] 
\lesssim
M^\frac12 (|J_{\mathrm{min}}|^{-1}+\tilde{\varepsilon})+
{\sqrt{\frac{J_{\min}^{2/s}M^{2C_p/s+1}\log(d)^{2/s}}{T_2}}}
$$
with probability $1-o_d(1)$, for arbitrary $s < \infty$ (where the hidden constant may depend on $s$). 
Therefore, by taking {$T_2=\tilde{\Theta}(M^{1 + \frac{2C_p +1}{s}}\varepsilon^{-2-\frac{2}{s}})$}, $\tilde{\varepsilon}=\tilde{\Theta}(M^{-\frac12}\varepsilon)$, $J_{\mathrm{min}}=\tilde{\Theta}(M^\frac12 \varepsilon^{-1})$, and $J=\tilde{\Theta}(M^{C_p+\frac12}\varepsilon^{-1})$, we have $\mathbb{E}_{x}[|f_{\hat{a}}(x) - f_*(x)|]\lesssim \varepsilon$
(here we ignore polylogarithmic factors). 
\end{lemma:Generalization}
\begin{proof}
Let $\mathcal{F}_{a^*} :=\{ f_a \mid \|a\|_r \leq \|a^*\|_r \}$ 
and $P_{T_2}$ be the empirical distribution of the second stage: $P_{T_2} := \frac{1}{T_2}\sum_{t=T_1 + 1}^{T_1 + T_2} \delta_{x_t}$.
If we choose $\lambda$ as mentioned above, $\hat{a}$ satisfies the condition \eqref{eq:ConstraintOptimality}, which yields that $f_{\hat{a}}\in \mathcal{F}_{a^*}$.
Therefore, we have that 
\begin{align}
& \|f_{\hat{a}} - f_*\|_{L^1(P_x)} \\
& = 
\|f_{\hat{a}} - f_*\|_{L^1(P_x)} 
-\|f_{\hat{a}} - f_*\|_{L^1(P_{T_2})} 
+  \|f_{\hat{a}} - f_*\|_{L^1(P_{T_2})} \\
& \leq 
\sup_{a \in \R^{J}: \|a\| \leq \|a^*\|}
\left[ \|f_{a} - f_*\|_{L^1(P_x)} 
-\|f_{a} - f_*\|_{L^1(P_{T_2})} 
\right]
+  \|f_{\hat{a}} - f_*\|_{L^2(P_{T_2})}.
\label{eq:LoneLossFirstBound}
\end{align}
\paragraph{(1)} 
First, we bound the second term $\|f_{\hat{a}} - f_*\|_{L^2(P_{T_2})}$. 
Since $\hat{\mathcal{L}}(f_{\hat{a}}) \leq \hat{\mathcal{L}}(f_{a^*})$, we have that 
$$
\|f_{\hat{a}} - f_*\|_{L^2(P_{T_2})}^2
\leq \|f_{a^*} - f_*\|_{L^2(P_{T_2})}^2 + \frac{2}{T_2}\sum_{t=T_1 + 1}^{T_1 + T_2}(f_{a^*}(x^t) - f_{\hat{a}}(x^t))\nu^t.
$$
By the Hoeffding inequality, we have that 
$$
\frac{2}{T_2}\sum_{t=T_1 + 1}^{T_1 + T_2}(f_{a^*}(x^t) - f_{\hat{a}}(x^t))\epsilon^t
= \tilde{O}\left( \sqrt{\frac{\|f_{a^*} - f_{\hat{a}}\|_{L^2(P_{T_2})}^2}{T_2}}\right),
$$
with high probability. The right hand side can be further bounded by 
\begin{align}
\tilde{O}\left( \sqrt{\frac{\|f_{a^*} - f_{\hat{a}}\|_{L^2(P_{T_2})}^2}{T_2}}\right)
& \leq 
\tilde{O}\left( \sqrt{\frac{\|f_{a^*} - f_*\|_{L^2(P_{T_2})}^2 + \| f_{\hat{a}} - f_*\|_{L^2(P_{T_2})}^2}{T_2}}\right) \\
& 
\leq 
\frac{1}{2}\| f_{\hat{a}} - f_*\|_{L^2(P_{T_2})}^2
+
\frac{1}{2}\| f_{a^*} - f_*\|_{L^2(P_{T_2})}^2
+ \tilde{O}\left(\frac{1}{T_2} \right),
\end{align}
by the Cauchy-Schwarz inequality. 
Then, by moving the term $\frac{1}{2}\| f_{\hat{a}} - f_*\|_{L^2(P_{T_2})}^2$ in the right hand side to the left hand side, we have that 
$$
\| f_{\hat{a}} - f_*\|_{L^2(P_{T_2})}^2
\leq 
3 \| f_{a^*} - f_*\|_{L^2(P_{T_2})}^2
+ \tilde{O}\left(\frac{1}{T_2} \right), 
$$
with high probability. 
This also yields that 
\begin{align}
\| f_{\hat{a}} - f_*\|_{L^2(P_{T_2})}
& \leq 
\sqrt{3}  \| f_{a^*} - f_*\|_{L^2(P_{T_2})}
+ \tilde{O}\left(\frac{1}{\sqrt{T_2}} \right) \\
& =  \tilde{O}\left(M^\frac12 (|J_{\mathrm{min}}|^{-\frac12}+\tilde{\varepsilon}^{\frac12}) + \frac{1}{\sqrt{T_2}} \right),
\end{align}
where we used $\|f_{a^*} - f_*\|_{L^2(P_{T_2})}=\tilde{O}(M^\frac12 (|J_{\mathrm{min}}|^{-\frac12}+\tilde{\varepsilon}^{\frac12}))$ by Lemma \ref{lemm:ApproximationByAstar}.

\paragraph{(2)} The first term in \eqref{eq:LoneLossFirstBound} can be bounded by 
the standard Rademacher complexity argument (e.g., Chapter 4 of \citet{wainwright2019high}). Specifically, its expectation can be bounded as
\begin{align}
& \E_{(x^t)_{t=T_1+1}^{T_1 + T_2}}\left[ \sup_{a \in \R^{J}}
\left( \|f_{a} - f_*\|_{L^1(P_x)} 
-\|f_{a} - f_*\|_{L^1(P_{T_2})}  \right) \right] 
\label{eq:LonenormDiffFirstBound-100}
\\
\leq &
2  \E_{(x^t,\sigma_t)_{t=T_1+1}^{T_1 + T_2}}\left[ \sup_{a \in \R^{J}}
\left( \frac{1}{T_2} \sum_{t=T_1+ 1}^{T_1 + T_2} \sigma_t|f_{a}(x^t) - f_*(x^t)|  \right) \right] \\ 
\leq &
4 \underbrace{\E_{(x^t,\sigma_t)_{t=T_1+1}^{T_1 + T_2}}\left[ \sup_{a \in \R^{J}}
\left( \frac{1}{T_2} \sum_{t=T_1 + 1}^{T_1 + T_2} \sigma_t f_{a}(x^t)   \right) \right] }_{=: \mathrm{Rad}(\mathcal{F}_{a^*})}
+ 
4 \E_{(x^t,\sigma_t)_{t=T_1+1}^{T_1 + T_2}}\left[  \frac{1}{T_2} \sum_{t=T_1+ 1}^{T_1 + T_2} \sigma_t  f_*(x^t)    \right], 
\label{eq:LonenormDiffFirstBound}
\end{align}
where $(\sigma_t)_{t=T_1+1}^{T_1 + T_2}$ is the i.i.d. Rademacher sequence which is independent of $(x^t)_{t=T_1+1}^{T_1 + T_2}$
and we used the vector valued contraction inequality of the Rademacher complexity in the last inequality \cite{10.1007/978-3-319-46379-7_1}. 
Unfortunately, $f_*(X)$ is neither bounded nor sub-exponential, 
and thus we cannot naively apply the Bernstein type concentration inequality to evaluate the right hand side.
Hence, we utilize Markov's inequality instead to convert \eqref{eq:LonenormDiffFirstBound-100} to a high probability bound on $\sup_{a \in \R^{J}: \|a\| \leq \|a^*\|}
\left[ \|f_{a} - f_*\|_{L^1(P_x)} 
-\|f_{a} - f_*\|_{L^1(P_{T_2})} 
\right]$. 

From Lemma 48 of \citet{damian2022neural} and its proof, for either of ReLU and polynomial activation, 
we have
\begin{align}
    \mathrm{Rad}(\mathcal{F}_{a^*}) \lesssim \sqrt{\frac{1}{T_2}}\frac{\|a^*\|_r}{J}
    \max_j\{(J\E_x[\sigma_j(\hat{w}_j^\top x + b_j)^s])^{1/s}\},
\end{align}
for arbitrary $s \leq 1/(1- 1/r)$.
However, since $\max_j\{\hat{w}_j,b_j\} = \Ord(1)$, we have $\max_j\{\E_x[\sigma_j(\hat{w}_j^\top x + b_j)^s]\} = \Ord(1)$ whenever $s < \infty$ by noting that 
$\hat{w}_j^\top x$ is a Gaussian distribution with variance $\mathrm{Var}(\hat{w}_j^\top x)=\Ord(1)$,  which yields that the right hand side can be bounded as 
\begin{align}    
& \mathrm{Rad}(\mathcal{F}_{a^*}) \lesssim 
\sqrt{\frac{1}{T_2}}\frac{\|a^*\|_2}{J^{1/2}}~~~(r=2), \\
& \mathrm{Rad}(\mathcal{F}_{a^*}) \lesssim 
\sqrt{\frac{1}{T_2}}\frac{\|a^*\|_1}{J^{1-1/s}}~~~(r=1),
\end{align}
where arbitrary $s< \infty$ (the hidden constant depends on $s$).

Applying Lemma \ref{lemma:HPB-Y} to the second term of \eqref{eq:LonenormDiffFirstBound} yields that 
\begin{align}
 \E\left[ \frac{1}{T_2} \sum_{t=T_1+ 1}^{T_1 + T_2} \sigma_t  f_*(x^t)   \right] 
\leq  & 
\sqrt{\E\left[\frac{1}{T_2} \sum_{t=T_1+ 1}^{T_1 + T_2}  f_*^2(x^t) \right]} \\
= &
\frac{1}{\sqrt{T_2}} \sqrt{\E\left[f_*^2(X) \right]}
\lesssim \frac{2^{q+1}}{\sqrt{T_2}}.
\end{align}

\paragraph{(3)} By combining evaluations of (1) and (2) together and ignoring polylogarithmic factors, we obtain that
\begin{align}
    \|f_{\hat{a}} - f_*\|_{L^1(P_x)} 
    \lesssim 
    M^\frac12 (|J_{\mathrm{min}}|^{-1}+\tilde{\varepsilon}) + \frac{1}{\sqrt{T_2}}+\sqrt{\frac{1}{T_2}}\frac{\|a^*\|_r}{J^{1/2}}
    .\label{eq:GeneralizationError-1}
\end{align}
We set $J=\Theta(J_{\mathrm{min}}M^{C_p}\log d)$. 
Thus, for $r=2$, by using \Cref{lemm:ApproximationByAstar}, we have
\begin{align}
    \eqref{eq:GeneralizationError-1} &\lesssim M^\frac12 (|J_{\mathrm{min}}|^{-1}+\tilde{\varepsilon}) + \frac{1}{\sqrt{T_2}}+\sqrt{\frac{1}{T_2}}J^\frac12|J_{\mathrm{min}}|^{-\frac12}
    \\ & \lesssim M^\frac12 (|J_{\mathrm{min}}|^{-1}+\tilde{\varepsilon})+\sqrt{\frac{M^{C_p}\log(d)}{T_2}}.
\end{align}
Thus, by setting $T_2=\tilde{\Theta}(M^{C_p}\varepsilon^{-2})$, $\tilde{\varepsilon}=\tilde{\Theta}(M^{-\frac12}\varepsilon)$, and $J_{\mathrm{min}}=\tilde{\Theta}(M^\frac12 \varepsilon^{-1})$, we obtain that $\eqref{eq:GeneralizationError-1}\lesssim \varepsilon$.

Similarly, for $r=1$, by \Cref{lemm:ApproximationByAstar}, we have
\begin{align}
    \eqref{eq:GeneralizationError-1} &\lesssim M^\frac12 (|J_{\mathrm{min}}|^{-1}+\tilde{\varepsilon}) + \frac{1}{\sqrt{T_2}}+\sqrt{\frac{1}{T_2}}\frac{JM^{\frac12}}{J^{1-1/s}}
    \\ & \lesssim M^\frac12 (|J_{\mathrm{min}}|^{-1}+\tilde{\varepsilon})+\sqrt{\frac{J_{\mathrm{min}}^{2/s}M^{2C_p/s + 1}\log(d)^{1/s} }{T_2}}.
\end{align}
Thus, by setting $T_2=\tilde{\Theta}(M^{1 + \frac{2C_p +1}{s}}\varepsilon^{-2-\frac{2}{s}})$, $\tilde{\varepsilon}=\tilde{\Theta}(M^{-\frac12}\varepsilon)$, and $J_{\mathrm{min}}=\tilde{\Theta}(M^\frac12 \varepsilon^{-1})$ with a sufficiently large $s$, we obtain that $\eqref{eq:GeneralizationError-1}\lesssim \varepsilon$. 
\end{proof}

\bigskip

\section{Proof of CSQ Lower Bounds}\label{section:Appendix-SQ}
We consider the CSQ lower bound for the following class $\mathcal{F}_{d,M,\varsigma}^{p,q}$:
    \begin{align}
    x\sim \mathcal{N}(0,I_d), \quad y = \frac{1}{\sqrt{M}}\sum_{m=1}^M f_m(v_m^\top x)+\nu,
    \end{align}
    where $f_m=\sum_{i=p}^q a_{m,i}\He_i$ with 
    $v_k\in \mathbb{S}^{d-1},\ |a_{m,p}|\gtrsim 1$, $\mathbb{E}_{t\sim \mathcal{N}(0,1)}[|f_m(t)|^2]=1\ (m=1,\cdots,M)$,
    and $\nu \sim \mathcal{N}(0,\varsigma^2)$.
For the lower bound we may assume $\varsigma=0$ since this is the easiest case for the learner.

The correlational statistical query (CSQ) returns an expectation of the correlation between $y$ and a query function $q\colon \mathcal{X}\to \mathbb{R}$ up to an arbitrary (adversarial) error bounded by $\tau$.

\begin{definition}[Correlational statistical query]
    For a function $g\colon \mathcal{X}\to \mathbb{R}$ and parameters $\varsigma$,
    the correlational statistical query oracle $\mathrm{CSQ}(g,\varsigma, \tau)$ returns 
    \begin{align}
        \mathbb{E}_{x,y}[yg(x)]+\nu,
    \end{align}
    where $\nu$ is an arbitrary noise that takes any value in $\nu\in [-\tau,\tau]$.
\end{definition}
Without loss of generality, we assume $\|g\|_{L^2}=1$. We prove the lower bounds on CSQ learner below.

\subsection{Proof of \Cref{theorem:CSQ-Appendix}(a)}
We consider the following model with $\varsigma=0$: 
\begin{align}
    x\sim \mathcal{N}(0,I_d),\quad y = f_*(x)=\frac{1}{\sqrt{M}}\sum_{m=1}^M \frac{1}{\sqrt{p!}}\He_p(v_m^\top x),
\end{align}
where $\{v_1,\cdots,v_M\}$ is a randomly sampled subset (without duplication) of the set $S$ specified below.
Also, the following lemma guarantees that when $M= \tilde{o}(d^{\frac{p}{4}})$ we have $|\mathbb{E}[y^2]-1| =o(1)$.
\begin{lemma}
\label{lemma:NearOrthogonalBasis}
    For any $A$ and $d$, there exists a set $S$ of $A$ unit vectors in $\mathbb{R}^d$ such that, 
    for any $u,v\in S$, $u\ne v$, the inner product $|u^\top v|$ is bounded by $d^{-\frac12}\sqrt{2\log A}$.
\end{lemma}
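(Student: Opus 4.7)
The plan is to establish the lemma by the probabilistic method, constructing $S$ as the realization of $A$ independent uniformly random points on $\mathbb{S}^{d-1}$.

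First I would analyze the distribution of the inner product of two independent uniform points on $\mathbb{S}^{d-1}$. By rotational invariance, for $u,v\sim \mathrm{Unif}(\mathbb{S}^{d-1})$ drawn independently, we may condition on $v=e_1$, so $u^\top v$ has the same distribution as the first coordinate of a uniformly random point on the sphere. Representing this coordinate as $g_1/\|g\|$ with $g\sim \mathcal{N}(0,I_d)$, standard Gaussian concentration on $\|g\|$ (which lies in $[\sqrt{d}/2,\, 2\sqrt{d}]$ with overwhelming probability for large $d$) combined with the tail bound $\mathbb{P}(|g_1|>s)\le 2\exp(-s^2/2)$ yields a sub-Gaussian tail of the form
\[
\mathbb{P}\!\left(|u^\top v|>t\right)\le 2\exp\!\bigl(-d t^2/2\bigr)
\]
for $t$ in the relevant range.

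Next I would choose $t=d^{-1/2}\sqrt{2\log A}$ (inflated by a harmless constant if necessary to absorb the prefactor $2$) so that the above tail bound is at most $1/A^2$, and take a union bound over the $\binom{A}{2}\le A^2/2$ ordered pairs of distinct sampled vectors. This gives
\[
\mathbb{P}\!\left(\exists\, u\ne v\in S:\ |u^\top v|>t\right)\le \binom{A}{2}\cdot 2\exp(-dt^2/2)<1,
\]
so with positive probability, every pair of vectors in the random sample satisfies the desired near-orthogonality bound. Any realization of the sample for which this event holds yields the claimed set $S$.

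The only delicate point is fixing the constant: the naive bound above gives the exponent $\sqrt{2\log A}$ up to an absolute constant, so to hit the precise value stated in the lemma one must either use a slightly sharper form of the Mills-ratio estimate (removing the factor $2$ from the tail, e.g.\ via Lemma~\ref{lemma:chang2011chernoff}) or absorb the factor into the hidden constant. This is a routine constant-chasing step rather than a conceptual obstacle, and the remainder of the argument is just the standard probabilistic-method union bound.
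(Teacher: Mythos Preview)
Your proposal is correct and follows essentially the same probabilistic-method argument as the paper: sample $A$ random unit vectors, bound the tail of each pairwise inner product by a sub-Gaussian estimate, and union-bound over the $\binom{A}{2}$ pairs. The only cosmetic difference is that the paper samples from the scaled Boolean hypercube $\{\pm 1/\sqrt{d}\}^d$ and applies Hoeffding's inequality directly, whereas you sample from $\mathbb{S}^{d-1}$ and pass through the Gaussian representation; both routes give the same tail $\lesssim e^{-cdt^2}$ and the same existence conclusion, with the same constant-chasing caveat you already flagged.
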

\begin{proof}
    Let us sample $A$ independent vectors $v_1,\cdots,v_A$ from the $d$-dimensional hypercube $\big[-\frac{1}{\sqrt{d}},\frac{1}{\sqrt{d}}\big]^d$.
    For each pair of $v_i$ and $v_j$ $(i\ne j)$, Hoeffding's inequality yields
    \begin{align}
        \mathbb{P}[|v_i^\top v_j|\geq t]\leq 2e^{-t^2 d}.
    \end{align}
    By setting $t=d^{-1/2}\sqrt{2\log A}$, we have $|v_i^\top v_j|\leq d^{-1/2}\sqrt{2\log A}$ with probability no less than $1-\frac{2}{A^2}$. Taking the union bound, $|v_i^\top v_j|\leq d^{-1/2}\sqrt{2\log A}$ holds for all $(i,j)$ with probability no less than $1-\frac{A(A-1)}{A^2}>0$.
    This proves the existence of the desired $S$.
\end{proof}
As a result, we obtain a set of functions $\left\{\frac{1}{\sqrt{p!}}\He_p(v^\top x)\ |\ v\in S\right\}$ with small pairwise correlation:
\begin{align}
    \int \frac{1}{\sqrt{p!}}\He_p(u^\top x) \cdot \frac{1}{\sqrt{p!}}\He_p(v^\top x) \frac{1}{(2\pi)^\frac{d}{2}}e^{-\frac{\|x\|^2}{2}}\mathrm{d}x = (u^\top v)^p \leq d^{-\frac{p}{2}}(2\log A)^{\frac{p}{2}}.
\end{align}

Based on this calculation, we know that each correlational query cannot obtain information of the true function, except for the case when the true function is in a polynomial-sized set, as shown in \cite{szorenyi2009characterizing}.
\begin{lemma}\label{lemma:BoundCorrSet}
    Suppose that $\mathcal{F}=\{f_1,\cdots,f_K\}$ is a finite set of functions such that $|\mathbb{E}_{x\sim \mathcal{N}(0,I_d)}[f_i(x)f_j(x)]|\leq \varepsilon$ for all pairs of $f_i$ and $f_j$ with $f_i\ne f_j$.
    Then, for any query $h$ satisfying $\|h\|_{L^2}\leq 1$, there are at most $\frac{2}{\tau^2-\varepsilon}$ functions $f_i$  
    that satisfy
    $
       \left|\mathbb{E}_{x\sim \mathcal{N}(0,I_d)}\left[f(x)h(x)\right]\right| \geq \tau.
    $
\end{lemma}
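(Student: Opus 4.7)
The plan is to use a standard almost-orthogonality argument in $L^2(\mathcal{N}(0,I_d))$. Let $h$ be a fixed query with $\|h\|_{L^2}\le 1$, and let $I\subseteq [K]$ collect the indices $i$ such that $|\langle f_i,h\rangle_{L^2}|\ge \tau$. Split $I=I_+\sqcup I_-$ according to the sign of $\langle f_i,h\rangle$. It suffices to show $|I_+|,|I_-|\le \frac{1}{\tau^2-\varepsilon}$, which then yields $|I|\le \frac{2}{\tau^2-\varepsilon}$.

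Consider $I_+$ first and set $N=|I_+|$. I will sandwich $\bigl\|\sum_{i\in I_+} f_i\bigr\|_{L^2}^2$ from above and below.

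For the lower bound, Cauchy--Schwarz gives
\begin{align}
N\tau \;\le\; \sum_{i\in I_+}\langle f_i,h\rangle_{L^2}
\;=\; \Bigl\langle \sum_{i\in I_+} f_i,\, h\Bigr\rangle_{L^2}
\;\le\; \Bigl\|\sum_{i\in I_+} f_i\Bigr\|_{L^2}\cdot \|h\|_{L^2}
\;\le\; \Bigl\|\sum_{i\in I_+} f_i\Bigr\|_{L^2},
\end{align}
so the squared norm is at least $N^2\tau^2$.

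For the upper bound, I would expand the squared norm as
\begin{align}
\Bigl\|\sum_{i\in I_+} f_i\Bigr\|_{L^2}^2 = \sum_{i\in I_+}\|f_i\|_{L^2}^2 + \sum_{\substack{i,j\in I_+\\ i\ne j}}\langle f_i,f_j\rangle_{L^2}
\;\le\; N + N(N-1)\varepsilon,
\end{align}
using that the target functions in the construction are $L^2$-normalized (so $\|f_i\|_{L^2}=1$) and that off-diagonal correlations are bounded by $\varepsilon$ in absolute value. Combining the two bounds gives $N^2\tau^2 \le N + N(N-1)\varepsilon \le N(1+(N-1)\varepsilon)$, hence $N(\tau^2-\varepsilon)\le 1-\varepsilon\le 1$, so $N\le \frac{1}{\tau^2-\varepsilon}$. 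The same argument applied to $I_-$ (or by flipping signs, which preserves the almost-orthogonality hypothesis) gives the matching bound, and summing the two halves yields the claimed $\frac{2}{\tau^2-\varepsilon}$.

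There is no real obstacle here — the only subtle point is that Cauchy--Schwarz must be applied to a collection with a \emph{consistent} sign of correlation with $h$, which is why splitting into $I_\pm$ (and paying a factor of two) is natural; alternatively one could absorb signs into the $f_i$ and get $\frac{1}{\tau^2-\varepsilon}$, but the stated bound of $\frac{2}{\tau^2-\varepsilon}$ is what will be used downstream.
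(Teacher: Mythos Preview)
The proposal is correct and follows essentially the same approach as the paper: split into positively- and negatively-correlated index sets, apply Cauchy--Schwarz to lower bound the squared norm of $\sum_{i\in S_\pm} f_i$ by $|S_\pm|^2\tau^2$, expand the squared norm and use the near-orthogonality to upper bound it by $|S_\pm|+|S_\pm|(|S_\pm|-1)\varepsilon$, then rearrange. Your remark that the $L^2$-normalization of the $f_i$ is being used (though not stated in the lemma) is apt; the paper's proof hides this in a $\lesssim$.
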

\begin{proof}
    Let
    \begin{align}
        S_+:=\left\{i\in [K]\ |\ \mathbb{E}_{x\sim \mathcal{N}(0,I_d)}\left[f_i(x)h(x)\right]>\tau\right\} \text{ and }
        S_-:=\left\{i\in [K]\ |\ \mathbb{E}_{x\sim \mathcal{N}(0,I_d)}\left[f_i(x)h(x)\right]<-\tau\right\}.
    \end{align}
    Then, because $\|h\|\leq 1$, Cauchy-Schwarz inequality yields
    \begin{align}
        |S_+|^2\tau^2
        \!\leq \mathbb{E}\left[h(x)\sum_{i\in S_+}f_i(x)\right]^2
        \!\leq \mathbb{E}\left[\left(\sum_{i\in S_+}f_i(x)\right)^2\right]
        \!\lesssim |S_+| + \varepsilon(|S_+|^2-|S_+|).
    \end{align}
    Therefore, we have that
    \begin{align}
        |S_+| \leq \frac{1-\varepsilon}{\tau^2-\varepsilon}\leq \frac{1}{\tau^2-\varepsilon}.
    \end{align}
    The same argument applies to $|S_-|$.
\end{proof}
\begin{proofof}[\Cref{theorem:CSQ-Appendix}(a)]

Consider the number of queries $Q$, the sequence of query $\{g_1,\cdots,g_Q\}$, and tolerance $\tau$.

    According to Lemmas~\ref{lemma:NearOrthogonalBasis} and \ref{lemma:BoundCorrSet}, if
    \begin{align}\label{eq:CSQ-Usual-Condition}
        Q \cdot \frac{2d^C}{\tau^2-d^{-\frac{p}{2}}(2\log A)^{\frac{p}{2}}} \leq A, 
    \end{align}
    for some $C>0$, there exists at least $A(1-d^{-C})$ vectors $v\in S$ such that
    \begin{align}
        \left|\mathbb{E}\left[\frac{1}{\sqrt{p!}}\He_p(v^\top x) g_i(x)\right]\right|
        \leq \tau\quad (i=1,\cdots,Q).
    \end{align}
    Now we consider the minimum value of $\tau$ to satisfy \eqref{eq:CSQ-Usual-Condition}.
    If we take
    \begin{align}
         A\gtrsim Qd^{\frac{p}{4}+C}\text{ and }\tau \gtrsim d^{-\frac{p}{4}}(2C\log Qd)^{\frac{p}{4}},
    \end{align}
    we have
    \begin{align}
        (\text{LHS of \eqref{eq:CSQ-Usual-Condition}})
        \lesssim Q \cdot d^{\frac{p}{4}}(C\log Qd)^{-\frac{p}{4}}
        \lesssim A(C\log Qd)^{-\frac{p}{4}}
        \leq 
        A, 
    \end{align}
    which confirms \eqref{eq:CSQ-Usual-Condition}. 
    
    Now consider the width-$M$ additive model, for each query, we return the value of
    \begin{align}
        \mathbb{E}\left[\frac{1}{\sqrt{M}}\sum_{m=1}^{M-1}\frac{1}{\sqrt{p!}}\He_p(v_{m}^\top x) g_i(x)\right]
    \end{align}
    so that the learner cannot find the true $v_M$ among $A(1-d^{-C})$ possible directions, with probability at least $1-d^{-C}$.
    Failing to do so incurs an $L^2$-error of $\Omega\left(\frac{1}{M}\right)$.
    This is because for two sets of vectors $\{v_{m}\}_{m=1}^M$ and $\{\tilde{v}_{\tilde{m}}\}_{\tilde{m}=1}^M$ in the set $S$, we have
    \begin{align}
       & \mathbb{E}\left(\frac{1}{\sqrt{M}}\sum_{m=1}^{M}\frac{1}{\sqrt{p!}}\He_p(v_{m}^\top x) -\frac{1}{\sqrt{M}}\sum_{m=1}^{M}\frac{1}{\sqrt{p!}}\He_p(\tilde{v}_{m}^\top x) \right)^2
       \geq 2 - 2 \frac1M\sum_{m,\tilde{m}=1}^M(\tilde{v}_{\tilde{m}}^\top v_{m})^{p},
    \end{align}
    and 
     $\tilde{v}_{\tilde{m}}^\top v_{m}=1$ holds for at most $M-1$ pairs and $|\tilde{v}_{\tilde{m}}^\top v_{m}|\leq d^{-\frac{p}{2}}(2\log A)^{\frac{p}{2}}$ for the others if $\{v_{m}\}_{m=1}^M\ne \{\tilde{v}_{\tilde{m}}\}_{\tilde{m}=1}^M$. 
    Therefore, we conclude that if $\tau \gtrsim M^{-\frac12}d^{-\frac{p}{4}}(C\log Qd)^{\frac{p}{4}}$, the CSQ learner cannot achieve an $L^2$-error smaller than $O\left(\frac{1}{M}\right)$ with probability more than $d^{-C}$.
\end{proofof}

\subsection{Noisy CSQ and Proof of \Cref{theorem:CSQ-Appendix}(b)}
Now we prove the latter part of \Cref{theorem:CSQ-Appendix}.
We first explain why a lower bound with $\Omega(1)$ error cannot be achieved by naively extending the argument for \Cref{theorem:CSQ-Appendix}(a).

A naive argument would go as follows. 
Suppose we construct some $\hat{f}(x)=\frac{1}{\sqrt{M}}\sum_{m=1}^M \frac{1}{\sqrt{p!}}\He_p(\hat{v}_m^\top x)$, where $\{\hat{v}_1,\cdots,\hat{v}_M\}\subset S$ using queries with tolerance $\tau_0$.
Then $\|f_*(x)-\hat{f}(x)\|_{L^2} \leq 1$ entails that a constant fraction of $\{v_1,\cdots,v_M\}$ should be identified. 
Therefore, we may use such a CSQ learner to learn a single-index function $\frac{1}{\sqrt{M}}\frac{1}{\sqrt{p!}}\He_p(v_1^\top x)$ as follows. 
If we add $\frac{1}{\sqrt{M}}\sum_{m=2}^M\frac{1}{\sqrt{p!}}\He_p(v_m^\top x)$ and apply the CSQ learner for $\frac{1}{\sqrt{M}}\sum_{m=1}^M\frac{1}{\sqrt{p!}}\He_p(v_m^\top x)$, the learner would identify $\Omega(1)$-fraction of $\{v_1,\cdots,v_M\}$ with probability $\Omega(1)$.
On the other hand, according to \Cref{theorem:CSQ-Appendix}(a), learning $\frac{1}{\sqrt{M}}\frac{1}{\sqrt{p!}}\He_p(v_1^\top x)$ requires $\tau \lesssim M^{-\frac12}d^{-\frac{p}{4}}(C\log Qd)^{\frac{p}{4}}=\tilde{O}(M^{-\frac12}d^{-\frac{p}{4}})$, with high probability.
We may identify $v_1$ with high probability by repeating this process for $\tilde{O}(1)$ rounds; the CSQ lower bound for single-index model therefore implies that $\tau_0 \lesssim M^{-\frac12}d^{-\frac{p}{4}}$. 

The mistake in the above derivation is that, for the additive model $\frac{1}{\sqrt{M}}\sum_{m=1}^M\frac{1}{\sqrt{p!}}\He_p(v_m^\top x)$, since the target direction $v_1$ to be hidden is not known by the oracle beforehand, the (adversarial) oracle should prevent the identification of as many directions $v_1,v_2,...,v_M$ as possible; whereas in the single-index setting $\frac{1}{\sqrt{p!}}\He_p(v_1^\top x)$, the oracle only need to ``hide'' one direction. 
Consequently, we cannot directly connect the identification of $\Omega(1)$-fraction of target directions in the additive model setting to the CSQ lower bound for learning single-index model.

To overcome this issue, we introduce the following sub-class of CSQ algorithms with i.i.d.~noise.
Because the noise is not adversarial but random, 
the oracle for the single index model does not use the information of the target direction, 
and hence the lower bound for single-index model now implies the failure of learning $\Omega(1)$ fraction of directions. 
On the other hand, since the noise is no longer adversarial, our lower bound in Theorem~\ref{theorem:CSQ-Appendix}(b) has weaker the query dependence compared to Theorem~\ref{theorem:CSQ-Appendix}(a). 
\begin{definition}[Noisy CSQ]
    For a function $g\colon \mathcal{X}\to \mathbb{R}$ and parameters $(\varsigma, \tau)$,
    noisy correlational statistical query oracle $\mathrm{NoisyCSQ}(g,\varsigma, \tau)$ returns 
    \begin{align}
        \mathbb{E}_{x,y}[yg(x)]+\nu,
    \end{align}
    where $\nu$ follows from the following clipped Gaussian distribution:
    \begin{align}
        \nu=\max\{-\tau,\min\{\tilde{\nu},\tau\}\},\quad \tilde{\nu} \sim \mathcal{N}(0,\varsigma^2).
    \end{align}
\end{definition}
The clipping operation matches the noisy CSQ with $(\varsigma,\tau)$ with the standard CSQ with a tolerance $\tau$.
The following theorem gives a lower bound for noisy CSQ algorithms to learn a single-index polynomial.
\begin{theorem}\label{theorem:Appendix-NoisyCSQ}
    For any $p\geq 0$, $\varsigma>0$, $1>\tau>0$, $Q>0$, $C\gg 0$ and $M=\tilde{o}(d^{\frac{p}{4}})$, consider learning $f(x)=\frac{1}{\sqrt{p!}}\He_p(v^\top x)$, where $v$ is sampled from some distribution over $\mathbb{S}^{d-1}$.
    Suppose that
    \begin{align}\label{eq:NoisyCSQConditionOnSigma}
        \varsigma \lesssim \frac{\tau}{\sqrt{\log Qd}}.
    \end{align}
    Then, for any learner using $Q$ noisy correlational queries $\mathrm{NoisyCSQ}(g,\varsigma, \tau)$, 
    the tolerance $\tau$ must satisfy
    \begin{align}
        \tau \lesssim Q^{\frac12} \sqrt{d^{-\frac{p}{2}}(\log dQ)^{\frac{p}{2}+2}}.
    \end{align}
    Otherwise, the learner
    cannot return $\hat{f}(x)=\frac{1}{\sqrt{p!}}\He_p({\hat{v}}^\top x)$ such that $\|f(x)-\hat{f}(x)\|_{L^2}\leq 1$ 
    with probability more than $O(d^{-C})$. 
\end{theorem}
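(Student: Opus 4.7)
The plan is to show that if $\tau$ exceeds the stated bound, then no $Q$-query noisy CSQ learner can recover the hidden direction $v$ drawn uniformly from a carefully chosen near-orthogonal codebook. First I would construct the codebook $S\subset\mathbb{S}^{d-1}$ via Lemma~\ref{lemma:NearOrthogonalBasis} with $|S|=A$ large (roughly $A\asymp Qd^{p/2+C'}$) so that $|\langle f_u,f_v\rangle_{L^2}|\leq d^{-p/2}(2\log A)^{p/2}$ for distinct $u,v\in S$, where $f_v(x)=\frac{1}{\sqrt{p!}}\He_p(v^\top x)$. The near-orthogonality then forces any $\hat v\neq v$ to yield $\|f_{\hat v}-f_v\|_{L^2}^2\gtrsim 1$, so the learnability problem reduces to identifying $v$ from the query responses.

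Next I would use the clipping condition $\varsigma\lesssim\tau/\sqrt{\log Qd}$: a Gaussian tail bound gives $\mathbb{P}[|\tilde\nu|>\tau]\lesssim(Qd)^{-C''}$ for an appropriate constant, and union-bounding over the $Q$ queries shows that with probability $1-O(d^{-C})$ no clipping event occurs, so the oracle behaves exactly as an additive $\mathcal{N}(0,\varsigma^2)$ noise channel on the correlation. Conditioning on this good event, I set up a change-of-measure argument against the ``null'' distribution under which responses are i.i.d.\ $\mathcal{N}(0,\varsigma^2)$ (so the adaptively chosen queries $g_1,\ldots,g_Q$ are a deterministic function of the null noise). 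Writing $L_v$ for the likelihood ratio of the true distribution under $v$ to the null, the standard identity
\begin{align*}
\mathbb{P}[\hat V=V] \;\leq\; \frac{1}{|S|}\,\mathbb{E}_{\mathrm{null}}[\max_{v\in S}L_v]
\end{align*}
reduces the task to controlling $\mathbb{E}_{\mathrm{null}}[\max_v L_v]$, where under the null $\log L_v=\sum_i(c_{g_i}(v)Z_i/\varsigma^2-c_{g_i}(v)^2/(2\varsigma^2))$ with $c_g(v):=\mathbb{E}[f_v(x)g(x)]$.

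To bound this maximum I would apply Lemma~\ref{lemma:BoundCorrSet} with threshold $\tau_0\asymp d^{-p/4}(\log A)^{p/4}$: for each query $g_i$ (deterministic given the null trajectory) at most $O(\tau_0^{-2})$ directions $v$ are ``unsafe''; the safe set has complement of size at most $Q\cdot O(d^{p/2}/(\log A)^{p/2})$, negligible compared to $|S|$. For safe $v$ we have $\sum_i c_{g_i}(v)^2\leq Q\tau_0^2$, so under $P_v$ the quantity $\log L_v$ is subgaussian with mean $Q\tau_0^2/(2\varsigma^2)$ and variance $Q\tau_0^2/\varsigma^2$; splitting $\mathbb{E}_{\mathrm{null}}[\max_v L_v]\leq T+\sum_v\mathbb{P}_v[L_v>T]$ via the usual change-of-measure identity $\mathbb{E}_{\mathrm{null}}[L_v\mathbf{1}_{L_v>T}]=\mathbb{P}_v[L_v>T]$, choosing $T\asymp |S|/d^C$, and demanding $Q\tau_0^2\lesssim\varsigma^2\log|S|$ (so that the tail bound for safe $v$ is $\leq d^{-C}/2$) yields the relation $\varsigma^2\gtrsim Qd^{-p/2}(\log A)^{p/2-1}$. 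Combining with $\varsigma\asymp\tau/\sqrt{\log Qd}$ and $\log A\asymp\log Qd$ gives exactly $\tau^2\gtrsim Qd^{-p/2}(\log Qd)^{p/2+2}$, matching the claim.

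The hard part will be handling the unsafe directions cleanly under adaptivity, since the unsafe set is itself a random function of the null trajectory; here I plan to bound $\sum_{v\in\mathrm{unsafe}}\mathbb{E}_{\mathrm{null}}[L_v\mathbf{1}_{L_v>T}]$ by passing to the $P_v$-measure and showing that even in the worst case $|c_{g_i}(v)|\leq 1$, the contribution is dominated by $|\mathrm{unsafe}|\cdot(\text{tail})$, which remains below $|S|/d^C$ by the smallness of the unsafe set relative to $|S|$. A secondary technical point will be tracking the polylogarithmic factors (the exponent $p/2+2$ comes from combining the $(\log A)^{p/2}$ in Lemma~\ref{lemma:NearOrthogonalBasis}, one extra $\log A$ from the Gaussian concentration for $\log L_v$, and one $\log Qd$ factor arising from $\varsigma=\Theta(\tau/\sqrt{\log Qd})$), which I would verify by carefully optimizing the choices of $A$, $\tau_0$ and $T$ simultaneously.
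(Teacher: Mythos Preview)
Your plan uses the same building blocks as the paper---the near-orthogonal codebook of Lemma~\ref{lemma:NearOrthogonalBasis}, the correlation-counting Lemma~\ref{lemma:BoundCorrSet}, the reduction of clipped to pure Gaussian noise, and a Gaussian-channel likelihood-ratio bound---so in that sense you are on the right track. The paper organizes things differently: rather than a single null comparison, it proves \emph{inductively} that with probability $1-O(Q^{-1}d^{-C})$ each successive correlation $|\mathbb{E}[q_{i+1}(x)y]|$ stays below the threshold $\tau_0\asymp d^{-p/4}(\log A)^{p/4}$, by applying Lemma~\ref{lemma:ImpossibleDistinguishi} (the non-adaptive Gaussian distinguishing lemma) at every step to argue that the learner cannot yet locate $v$ and hence cannot design a high-correlation query. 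Once all $Q$ correlations are bounded, a final application of the same lemma finishes the identification lower bound.

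The genuine gap in your outline is the treatment of the unsafe contribution. Your proposed bound ``$|\text{unsafe}|\cdot(\text{tail})$'' implicitly treats the unsafe set $U$ as deterministic, but after the change of measure you actually need
\[
\mathbb{E}_{\mathrm{null}}\Big[\sum_{v\in U}L_v\,\mathbf{1}_{L_v>T}\Big]
\;\le\;\sum_{v}\mathbb{E}_{\mathrm{null}}\big[L_v\,\mathbf{1}_{v\in U}\big]
\;=\;\sum_{v}\mathbb{P}_v\big[v\in U\big],
\]
and the right-hand side is $|S|\cdot\mathbb{P}_{V\sim\mathrm{Unif}(S)}[\exists i:|c_{g_i}(V)|>\tau_0]$ under the $P_V$-trajectory. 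The almost-sure bound $|U|\le Q\cdot O(\tau_0^{-2})$ under the null does \emph{not} control this quantity, because under $P_v$ the queries are driven by a different (shifted) response sequence and the learner could in principle steer toward $v$. Showing $\mathbb{P}_V[\exists i:|c_{g_i}(V)|>\tau_0]\le O(d^{-C})$ is precisely the content of the paper's step-by-step argument, so to close your proof you will end up reproducing that induction (or an equivalent martingale/stopping-time version) rather than the simple size bound you sketch. Once you do, the two proofs coincide; your framing as a null-hypothesis likelihood ratio is just a repackaging of Lemma~\ref{lemma:ImpossibleDistinguishi}.
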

\begin{proof} Due to the choice \eqref{eq:NoisyCSQConditionOnSigma}, we know that the clipping operation on the Gaussian noise does not make a difference with probability $1-d^{-C}$.
Thus in the following we simply consider that the pure Gaussian noise is added to the expectation.
We assume that the distribution where $v$ is sampled from is the uniform distribution over the set $S$ consisting of $A$ vectors, which is defined in Lemma~\ref{lemma:NearOrthogonalBasis}.
    Here $A$ is taken as
    \begin{align}
        A \simeq Qd^{\frac{p}{2}+2},
    \end{align}
    so that it satisfies
    \begin{align}\label{eqref:ConditionOnK-2}
          Q \cdot \frac{2d^{C}}{2d^{-\frac{p}{2}}(2\log A)^{\frac{p}{2}}-d^{-\frac{p}{2}}(2\log A)^{\frac{p}{2}}} \leq A.
    \end{align}

    As an intermediate claim, we show that if $\mathbb{E}[q_1(x)y],\cdots,\mathbb{E}[q_{i}(x)y]$ are bounded by $d^{-\frac{p}{2}}(2\log A)^{\frac{p}{2}}$, then $\mathbb{E}[q_{i+1}(x)y]$ is also bounded by $2d^{-\frac{p}{2}}(2\log A)^{\frac{p}{2}}$ with probability at least $1-O(Q^{-1}d^{-C})$.
    Assume that $\mathbb{E}[q_1(x)y],\cdots,\mathbb{E}[q_{i}(x)y]$ are bounded by $d^{-\frac{p}{2}}(2\log A)^{\frac{p}{2}}$.
    According to Lemma~\ref{lemma:BoundCorrSet}, for each query $q(x)$, there are at most $\frac{2}{d^{-\frac{p}{2}}(2\log A)^{\frac{p}{2}}}$ vectors that has correlation larger than $d^{-\frac{p}{2}}(2\log A)^{\frac{p}{2}}$.
    Thus, 
    there are at least $(1-d^{-C})\frac{2Qd^{C}}{d^{-\frac{p}{2}}(2\log A)^{\frac{p}{2}}}$ possible vectors that satisfy the assumption.
    Under this, 
    use Lemma~\ref{lemma:ImpossibleDistinguishi} with $a_{\rm max}=\sqrt{2d^{-\frac{p}{2}}(2\log M)^{\frac{p}{2}}}$ and $D=\Omega(\frac{Qd^{C}}{d^{-\frac{p}{2}}(\log A)^{\frac{p}{2}}})$.
    Then, when 
    \begin{align}\label{eq:varsigmaissmall}
        \varsigma \gtrsim Q^{\frac12} \sqrt{d^{-\frac{p}{2}}(\log dQ)^{\frac{p}{2}+1}},
    \end{align} 
    we cannot find the desired vector with probability more than $O(D^{-1})=O(\frac{Q^{-1}d^{-C}}{d^{\frac{p}{2}}(2\log A)^{-\frac{p}{2}}})$.
    This also implies that we cannot find any vector that satisfies $|\mathbb{E}[q_{i+1}(x)y]|>d^{-\frac{p}{2}}(2\log A)^{\frac{p}{2}}$ with probability more than $O(Q^{-1}d^{-C})$; this is because otherwise we can select one of $\frac{2}{2d^{-\frac{p}{2}}(2\log A)^{\frac{p}{2}}-d^{-\frac{p}{2}}(2\log A)^{\frac{p}{2}}}$ vectors that satisfy $|\mathbb{E}[q_{i+1}(x)y]|>d^{-\frac{p}{2}}(2\log A)^{\frac{p}{2}}$ and output as the prediction of the true vector, which would succeed with probability more than $O(\frac{Q^{-1}d^{-C}}{d^{\frac{p}{2}}(2\log A)^{-\frac{p}{2}}})$.

    Now, we obtain that, with probability at least  $1-O(d^{-C})$, $\mathbb{E}[q_1(x)y],\cdots,\mathbb{E}[q_{Q}(x)y]$ are all bounded by $d^{-\frac{p}{2}}(2\log A)^{\frac{p}{2}}$.
    Again, there are at least $(1-d^{-C})\frac{2Qd^{C}}{d^{-\frac{p}{2}}(2\log A)^{\frac{p}{2}}}$ possible vectors that satisfy the conditions $\mathbb{E}[q_1(x)y],\cdots,\mathbb{E}[q_{Q}(x)y]\leq d^{-\frac{p}{2}}(2\log A)^{\frac{p}{2}}$.
    Under this, 
    we apply Lemma~\ref{lemma:ImpossibleDistinguishi} with the same $a_{\mathrm{max}}$ and $D^{-1}$ as previously, and hence we cannot identify the right vector with probability more than $O(D^{-1})=O(\frac{Q^{-1}d^{-C}}{d^{\frac{p}{2}}(2\log A)^{-\frac{p}{2}}})\lesssim O(d^{-C})$.

    Therefore, we cannot return the correct vector in $A$ with probability more than $O(d^{-C})$. 
\end{proof}
\begin{lemma}\label{lemma:ImpossibleDistinguishi}
    Let $A_i = (a_{i,1},\cdots,a_{i,Q})\in \mathbb{R}^Q\ (i=1,\cdots,D)$ be sequences of $Q$ real values satisfying $|a_{i,j}|\leq a_{\rm max}$.
    Suppose that one of $A_i$ is uniformly randomly chosen an observation $(b_1,\cdots,b_Q)\in \mathbb{R}^Q$ is generated as
     \begin{align}
         b_j \sim \mathcal{N}(a_{i,j},\varsigma^2)\quad (j=1,\cdots,Q).
     \end{align}
     Then, if 
     \begin{align}\label{eq:ScaleOfNoise}
         \varsigma \gtrsim Q^{\frac12}a_{\rm max}\sqrt{\log D},
     \end{align} 
     any algorithm cannot identify which $A_i$ is selected with probability more than $1-O(D^{-1})$.
\end{lemma}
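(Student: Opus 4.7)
The plan is to show that the Bayes-optimal (MAP) estimator under the uniform prior on $I \in \{1,\dots,D\}$ identifies $I$ correctly only with probability $O(1/D)$; since this upper-bounds the success probability of every (possibly randomized) estimator, the lemma follows. Writing the Gaussian density $p(B \mid A_j) \propto \exp(-\|B - A_j\|^2/(2\varsigma^2))$, the Bayes-optimal correct-identification probability under the uniform prior equals
\begin{align}
    P_{\mathrm{correct}}^{\star} = \int \frac{\max_{j} p(B \mid A_j)}{D}\, \mathrm{d}B,
\end{align}
so the task reduces to bounding this integral by $O(1/D)$.

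The key step is a pointwise log-likelihood ratio calculation. Under the true hypothesis $I = j$, writing $B = A_j + \varsigma Z$ with $Z \sim \mathcal{N}(0, I_Q)$,
\begin{align}
    \log \frac{p(B \mid A_i)}{p(B \mid A_j)} = -\frac{\|A_i - A_j\|^2}{2\varsigma^2} - \frac{Z^\top (A_i - A_j)}{\varsigma}
\end{align}
is Gaussian with mean $-\|A_i - A_j\|^2/(2\varsigma^2)$ and variance $\|A_i - A_j\|^2/\varsigma^2$. Since $\|A_i - A_j\|^2 \leq 4Q a_{\max}^2$ and $\varsigma^2 \gtrsim Q a_{\max}^2 \log D$ with the hidden constant taken large enough, both quantities are at most $c_0/\log D$ for any prescribed small $c_0 > 0$, and a standard Gaussian tail bound yields, for each fixed pair $i \neq j$,
\begin{align}
    \mathbb{P}_{I = j}\left( \left|\log \tfrac{p(B \mid A_i)}{p(B \mid A_j)}\right| > 1 \right) \leq 2\exp(-\Omega(\log D)) \leq D^{-3}.
\end{align}

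Let $\mathcal{E} := \{B : |\log(p(B \mid A_i)/p(B \mid A_j))| \leq 1 \text{ for every } i, j\}$. A union bound over the at most $D^2$ pairs gives $\mathbb{P}_{I=j}(\mathcal{E}^c) \leq D^{-1}$ for every $j$, hence $\mathbb{P}_{\bar p}(\mathcal{E}^c) \leq D^{-1}$ under the mixture $\bar p := D^{-1}\sum_j p(\cdot \mid A_j)$. On $\mathcal{E}$ all $D$ likelihoods lie within a factor of $e$ of each other, so $\max_j p(B \mid A_j) \leq (e/D)\sum_j p(B \mid A_j)$, yielding $\int_{\mathcal{E}} D^{-1}\max_j p(B \mid A_j)\, \mathrm{d}B \leq e/D$; on $\mathcal{E}^c$ the crude bound $\max \leq \sum$ gives $\int_{\mathcal{E}^c} D^{-1}\max_j p(B \mid A_j)\, \mathrm{d}B \leq \mathbb{P}_{\bar p}(\mathcal{E}^c) \leq D^{-1}$. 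Combining the two pieces yields $P_{\mathrm{correct}}^{\star} \leq (e + 1)/D$, as required.

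The main technical subtlety is that a naive application of Fano's inequality through the average-KL bound $I(I; B) \lesssim 1/\log D$ only gives $P_{\mathrm{error}} \geq 1 - O(1/\log D)$, which is far weaker than the target $1 - O(1/D)$. Obtaining the sharper rate requires working pointwise with the likelihood ratios and exhibiting a high-probability event on which the posterior is essentially uniform; the $\sqrt{\log D}$ factor in the condition on $\varsigma$ is precisely what is needed for the Gaussian tail bound to beat the $D^2$ union bound over the pairs of hypotheses.
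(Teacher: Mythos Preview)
Your proof is correct and takes a genuinely different route from the paper. Both start from the same Bayes-optimal integral $\int D^{-1}\max_j p(B\mid A_j)\,\mathrm{d}B$, but diverge thereafter. The paper rescales to $\varsigma=1$, bounds $D\cdot P^\star_{\mathrm{correct}}$ by $\int \exp(-\|B\|^2/2+\max_i B^\top A_i)\,\mathrm{d}B$, controls $\mathbb{E}_{\mathbb{S}^{Q-1}(\|B\|)}[\exp(\max_i B^\top A_i)]$ via Hoeffding on the sphere and a dyadic sum, and then evaluates the remaining radial Gaussian integral through confluent hypergeometric functions and Gamma-function ratio estimates. Your argument instead works pointwise with pairwise log-likelihood ratios: each is Gaussian with mean and variance $O(Qa_{\max}^2/\varsigma^2)=O(1/\log D)$, so a tail bound plus a $D^2$-fold union bound produces a high-probability event on which all $D$ likelihoods agree up to a factor $e$, forcing $\max_j p(\cdot\mid A_j)\le (e/D)\sum_j p(\cdot\mid A_j)$; the complement is handled by $\max\le\sum$. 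Your route is considerably more elementary (no special functions, no sphere integration) and makes transparent why the $\sqrt{\log D}$ in the noise condition is exactly what is needed to beat the union bound. The paper's approach, on the other hand, tracks the dependence on $Q$ and $a_{\max}$ through the explicit integral and could in principle give sharper constants or handle weaker assumptions, at the cost of heavier machinery.
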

\begin{proof}
    The optimal strategy is to calculate the likelihood function and select the one with which the index $i$ takes the largest value.
    Let the likelihood function of the $i$-th sequence be $p_i(b)$ for $B\in \mathbb{R}^Q$.
    We aim to bound the success probability by $O(D^{-1})$.
    \begin{align}
        \int \frac1D\max_{i}p_i(B) \mathrm{d}B\lesssim \frac1D.
        \label{eq::ImpossibleDistinguishi-1}
    \end{align}
    To simplify the discussion, we assume $\varsigma=1$ (because scaling does not affect whether the statement holds).
    Then, \eqref{eq:ScaleOfNoise} implies $a_{\rm max}^2 \leq 2Q^{-1}$.
    We have
    \begin{align}
        \eqref{eq::ImpossibleDistinguishi-1}\times D
        & \leq  \frac{1}{(2\pi)^{\frac{Q}{2}}}\int\exp\left(-\frac{\|B\|^2}{2}+\max_{i}B^\top A_i\right)\mathrm{d}B.
        \label{eq::ImpossibleDistinguishi-2}
    \end{align}
    
    We bound the expectation of $\exp(\max_{i}B^\top A_i)$ conditioned on $\|B\|$.
    By Hoeffding's inequality, $\max_{i}B^\top A_i\leq a_{\rm max}\|B\|\sqrt{\log \delta^{-1}}$ with probability at least $1-\delta$.
    Thus, 
    \begin{align}
        \mathbb{E}_{B\sim \mathbb{S}^{Q-1}(\|B\|)}[\exp(\max_{i}B^\top A_i)]
       & \leq \sum_{i=1}^\infty \frac{1}{2^i}\exp\left(a_{\rm max}\|B\|\sqrt{\log (D2^i)}\right)
  \\ & \lesssim 
\int_{t=1}^{\infty}\exp\left(a_{\rm max}\|B\|\sqrt{\log (D2^t)}-t\log 2\right)\mathrm{d}t
  \\ & \lesssim 
e^{a_{\rm max}\|B\|\sqrt{\log D}}\int_{t=1}^{\infty}\exp\left(a_{\rm max}\|B\|\sqrt{t\log 2}-t\log 2\right)\mathrm{d}t
\\ & \lesssim 
a_{\rm max}\|B\|\exp\left(\frac{a_{\rm max}^2\|B\|^2}{4}+a_{\rm max}\|B\|\sqrt{\log D}\right) + 1.
\label{eq::ImpossibleDistinguishi-3}
    \end{align}
    Applying this to \eqref{eq::ImpossibleDistinguishi-2} yields
    \begin{align}
      \eqref{eq::ImpossibleDistinguishi-2}
      \lesssim \frac{1}{(2\pi)^{\frac{Q}{2}}} \int a_{\rm max}\|B\|\exp\left(-\frac{(1-Q^{-1})\|B\|^2}{2}+a_{\rm max}\|B\|\sqrt{\log D}\right)\mathrm{d}B + 1.
      \label{eq::ImpossibleDistinguishi-4}
    \end{align}
    Here we used $a_{\rm max}^2\leq 2Q^{-1}$.
    We bound the first term as follows:
    \begin{align}
        &\frac{1}{(2\pi)^{\frac{Q}{2}}} \int a_{\rm max}\|B\|\exp\left(-\frac{(1-Q^{-1})\|B\|^2}{2}+a_{\rm max}\|B\|\sqrt{\log D}\right)\mathrm{d}B
        \\ & = \frac{(1-Q^{-1})^{-\frac{Q+1}{2}}}{(2\pi)^{\frac{Q}{2}}} \int a_{\rm max}\|B'\|\exp\left(-\frac{\|B'\|^2}{2}+\frac{a_{\rm max}\|B'\|\sqrt{\log D}}{\sqrt{1-Q^{-1}}}\right)\mathrm{d}B'\quad (B'=(1-Q^{-1})^{\frac12}B)
        \\ &
        = \frac{2a_{\rm max}(1-Q^{-1})^{-\frac{Q+1}{2}}}{2^{\frac{Q}{2}}\Gamma(\frac{Q}{2})}\int_{s=0}^\infty e^{-\frac{s^2}{2}+\frac{a_{\rm max}s\sqrt{\log D}}{\sqrt{1-Q^{-1}}}}s^{Q}\mathrm{d}s\quad (s=\|B'\|)
        \\ & = \frac{a_{\rm max}2^{\frac{Q+1}{2}}(1-Q^{-1})^{-\frac{Q+1}{2}}}{2^{\frac{Q}{2}}\Gamma(\frac{Q}{2})}\Bigg[\frac{\sqrt{2}a_{\rm max}\sqrt{\log D}}{\sqrt{1-Q^{-1}}}\Gamma\left(\frac{Q}{2}+1\right){}_1F_1\left(\frac{Q}{2}+1;\frac32;\frac{a_{\rm max}^2\log D}{2(1-Q^{-1})}\right)\\ &\hspace{65mm}+ \Gamma\left(\frac{Q+1}{2}\right){}_1F_1\left(\frac{Q+1}{2};\frac12;\frac{a_{\rm max}^2\log D}{2(1-Q^{-1})}\right)\Bigg],
         \label{eq::ImpossibleDistinguishi-5}
    \end{align}
    where ${}_1F_1(x_1;x_2;x_3)$ is the confluent hypergeometric function of the first kind defined as
    \begin{align}
    {}_1F_1(x_1;x_2;x_3)=\sum_{n=0}^\infty \frac{x_1(x_1+1)\cdots(x_1+n-1)}{x_2(x_2+1)\cdots(x_2+n-1)}\frac{x_3^n}{n!}=\sum_{n=0}^\infty \frac{1}{n!}\prod_{i=0}^{n-1}\frac{(x_1+i)x_3}{x_2+i}.
    \end{align}

    We can evaluate ${}_1F_1\left(\frac{Q}{2}+1;\frac32;\frac{a_{\rm max}^2\log D}{2(1-Q^{-1})}\right)$ as
    \begin{align}
        {}_1F_1\left(\frac{Q}{2}+1:\frac32;\frac{a_{\rm max}^2\log D}{2(1-Q^{-1})}\right)=\sum_{n=0}^\infty \frac{1}{n!}\prod_{i=0}^{n-1}\frac{(\frac{Q}{2}+1+i)\frac{a_{\rm max}^2\log D}{2(1-Q^{-1})}}{\frac32+i}
        \lesssim 1,
    \end{align}
    if $(\frac{Q}{2}+1)\frac{a_{\rm max}^2\log D}{2(1-Q^{-1})} \leq \frac32 \Leftrightarrow 1\gtrsim  Qa_{\rm max}^2\log D$ holds.
    In the same way, we have ${}_1F_1\left(\frac{Q+1}{2};\frac12;\frac{a_{\rm max}^2\log D}{2(1-Q^{-1})}\right)\lesssim 1$ if $1\gtrsim  Qa_{\rm max}^2\log D$ holds.
    Also, $(1-Q^{-1})^{-\frac{Q+1}{2}}\lesssim 1$.
    Thus, we have
    \begin{align}
        \eqref{eq::ImpossibleDistinguishi-5} &\lesssim \frac{a_{\rm max}2^{\frac{Q+1}{2}}}{2^{\frac{Q}{2}}\Gamma(\frac{Q}{2})} \left[a_{\rm max}\sqrt{\log D}\Gamma\left(\frac{Q}{2}+1\right)+\Gamma\left(\frac{Q+1}{2}\right)\right]
        \\ & \leq \frac{a_{\rm max}2^{\frac{1}{2}}}{\Gamma(\frac{Q}{2})} \left[a_{\rm max}\sqrt{\log D}\Gamma\left(\frac{Q}{2}+1\right)+\Gamma\left(\frac{Q+1}{2}\right)\right]
        \\ & \leq a_{\rm max}^2\sqrt{2}\sqrt{\log D}\left(\frac{Q}{2}+1\right)+a_{\rm max}\sqrt{2}\left(\frac{Q}{2}\right)^\frac12
        \\ & \lesssim 1        
    \end{align}
    where we used $\Gamma(x+\frac12)=\int_{t=0}^\infty e^{-t}t^{x+\frac12}\mathrm{d}t\leq \left(\int_{t=0}^\infty e^{-t}t^{x}\mathrm{d}t\right)^\frac12\left(\int_{t=0}^\infty e^{-t}t^{x-1}\mathrm{d}t\right)^\frac12\leq (\Gamma(x+1))^\frac12(\Gamma(x))^\frac12= x^{\frac12}\Gamma(x)$ (by Hölder's inequality; this argument is borrowed from \citet{qi2010bounds}) and $a_{\rm max}Q^\frac12\sqrt{\log D}\leq 1$.
    Therefore, we have successfully obtained \eqref{eq::ImpossibleDistinguishi-1} and the assertion follows.

\end{proof}

\begin{proof}[Proof of \Cref{theorem:CSQ-Appendix}(b)]
Consider learning the following model
\begin{align}
    x\sim \mathcal{N}(0,I_d),\quad  y = f_*(x)=\frac{1}{\sqrt{M}}\sum_{m=1}^M \frac{1}{\sqrt{p!}}\He_p(v_m^\top x),
\end{align}
    where $\{v_1\cdots,v_M\}$ is a randomly sampled subset (without duplication) of the set $S$ used in the proof of Lemma~\ref{theorem:Appendix-NoisyCSQ}.
    Recall that Lemma~\ref{lemma:NearOrthogonalBasis} guarantees that when $M= \tilde{o}(d^{\frac{p}{4}})$ we have $|\mathbb{E}[y^2]-1| =o(1)$. 
    
    According to \Cref{theorem:Appendix-NoisyCSQ}, for any learner using $Q$ noisy correlational queries  with parameters $(\varsigma, \tau)$, to learn a univariate polynomial $\frac{1}{\sqrt{p!M}}\He_p(v^\top x)$, the tolerance must satisfy 
    \begin{align}
        \tau \lesssim \frac{Q^{\frac12} (\log dQ)^{\frac{p}{4}+1}}{M^{\frac12}d^{\frac{p}{4}}},
    \end{align}
    otherwise, the learning will fail with probability more than $1-O(d^{-C})$.

    If an algorithm learns $\mathcal{F}\subset\mathcal{F}_{d,M,\varsigma}^{p,q}$ with $Q$ noisy correlation queries and returns a function with $L^2$-error smaller than $1$, we know that the algorithm need to identify at least $\Omega(1)$-fraction of directions $\{v_1,\cdots,v_M\}$.
    If so, we can use such a learner to solve the single-index polynomials, by adding $M-1$ random functions $\frac{1}{\sqrt{p!}}\He_p(v_m^\top x)$ to the given single-index polynomial and then apply the algorithm. The lower bound therefore follows from the single-index CSQ lower bound stated in Theorem \Cref{theorem:Appendix-NoisyCSQ}.
\end{proof}

\bigskip

\section{Proof of SQ Lower Bound}
This section considers the SQ lower bound for $\mathcal{F}_{d,M,\varsigma}^{p,q}$.
The statistical query oracle is formally defined as follows, which covers the previous CSQ as a special case. 
\begin{definition}
    For a function $g\colon \mathcal{X}\times \mathcal{Y} \to \mathbb{R}$ and a tolerance $\tau>0$, 
    statistical query oracle $\mathrm{SQ}(g,\tau)$ returns any value in
    \begin{align}
        \left[\mathbb{E}_{x,y}[g(x,y)]-\tau, \mathbb{E}_{x,y}[g(x,y)]+\tau\right].
    \end{align}
\end{definition}
In the following, we assume bounded queries ${\rm SQ}\colon \mathbb{R}^d\times \mathbb{R}\to [-1,1]$. 

As mentioned in the main text, one motivation of our consideration of SQ learner is the existence of efficient SQ algorithms that can solve multi-index regression beyond the CSQ complexity. 
Specifically, the algorithm proposed in \citet{chen2020learning} learns single-index polynomials (i.e., the case of $K=1$) with sample complexity $\tilde{O}(d)$ for any constants $p,q=O_d(1)$; this result can also include the multi-index model up to $K=O(1)$, under a certain non-degeneracy condition. In the first stage of their algorithm, the labels $y$ are transformed so that the information exponent is reduced to $2$, which enables a warm start; after that, projected gradient descent exponentially converges to the relevant directions. 
Although \citet{chen2020learning} only considered the noiseless case (i.e., $\varsigma=0$), it is easy to extend their strategy to the noisy setting. Specifically, the warm-start algorithm can handle label noise with standard concentration arguments, and for the second stage, $\tilde{O}(d)$ sample complexity is also obtained despite the loss of exponential convergence. 

However, Theorem~\ref{theorem:SQ-Appendix} suggests that such linear-in-$d$ complexity is no longer feasible for SQ learners to learn our additive model class. Specifically, our lower bound implies that for $M\asymp d^\gamma$ with $\gamma>0$, the exponent in the dimension dependence can be made arbitrarily large by varying $p,q=O_d(1)$. This highlights the fundamental computational hardness of the larger $M$ setting.

\subsection{Superorthogonal Polynomials}

\Cref{theorem:SQ-Appendix} relies on the existence of \textit{superorthogonal} polynomials defined in Lemma~\ref{prop:superorthogonality}.  
Recall that superorthogonality means that a polynomial and its $2$-$,\cdots,K$-th exponentiations are orthogonal, in a sense of inner product with respect to the standard Gaussian, to any polynomials up to degree $L$ whose expectation is $0$. 
For $K=1$, $f(x)=\He_{L+1}(x)$ satisfies the condition.
However, for $K\geq 2$, it is far from trivial that such a function exists. 
We defer the proof of Proposition~\ref{prop:superorthogonality} to \Cref{subsection:Reparameterization,subsection:PolynomialApproximation}, and proceed to explain how Proposition~\ref{prop:superorthogonality} is used in the SQ lower bound.

We utilize the following fact that the expectation $\mathbb{E}[q(x,y)]$ can be Taylor expanded with respect to a small perturbation of $y$, due to the Gaussian noise added to $y$.
\begin{lemma}\label{lemma:NoiseSmoothing}
    Suppose that $|g(x,y)|\leq 1$ for any $(x,y)\in \mathbb{R}^d\times \mathbb{R}$.
    Then, for $\delta \ll 1$, we have
    \begin{align}
        \mathbb{E}_{\eps\sim\mathcal{N}(0,\varsigma^2)}[g(x,z+\delta+\varepsilon)]
        =
        \sum_{k=0}^K a_{k}(x,z) \delta^k + O(\delta^{K+1}),
    \end{align}
    where $a_{k}(x,z)=\frac{1}{k!}\int g(x,w) \left(\frac{\mathrm{d}^k}{\mathrm{d} z^k} e^{-(w-z)^2/2\varsigma^2}\right)\mathrm{d}w=O(1)$.
\end{lemma}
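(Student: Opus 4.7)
The plan is to recognize $F(\delta):=\mathbb{E}_{\varepsilon\sim\mathcal{N}(0,\varsigma^2)}[g(x,z+\delta+\varepsilon)]$ as a Gaussian convolution in the $\delta$ variable and apply Taylor's theorem with Lagrange remainder directly. The substitution $w=z+\delta+\varepsilon$ rewrites
\begin{equation*}
F(\delta)=\frac{1}{\sqrt{2\pi\varsigma^2}}\int g(x,w)\,e^{-(w-z-\delta)^2/2\varsigma^2}\,\mathrm{d}w,
\end{equation*}
and the key observation is that the kernel depends only on the combination $(w-z-\delta)$, so $\partial_\delta$ and $\partial_z$ agree when acting on it. Iterating this and setting $\delta=0$ yields $F^{(k)}(0)/k!=a_k(x,z)$ in exactly the form stated in the lemma (with the normalization constant $(2\pi\varsigma^2)^{-1/2}$ absorbed into the definition of $a_k$).

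Taylor's theorem with Lagrange remainder then gives
\begin{equation*}
F(\delta)=\sum_{k=0}^K a_k(x,z)\,\delta^k+\frac{\delta^{K+1}}{(K+1)!}F^{(K+1)}(\xi)
\end{equation*}
for some $\xi\in[0,\delta]$, so it remains to bound $|a_k(x,z)|$ and $|F^{(K+1)}(\xi)|$ uniformly in $x,z$. For this I would use the Rodrigues-type identity $\partial_z^k e^{-(w-z)^2/2\varsigma^2}=\varsigma^{-k}\He_k((w-z)/\varsigma)\,e^{-(w-z)^2/2\varsigma^2}$, change variables to $u=(w-z)/\varsigma$, and apply $|g|\leq 1$ to dominate the integrand. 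Each coefficient then reduces to at most a constant multiple of $\int|\He_k(u)|\,e^{-u^2/2}\,\mathrm{d}u$, a finite number depending only on $k$ and $\varsigma$ but not on $x$ or $z$; the analogous bound at order $K+1$ controls the remainder and gives $O(\delta^{K+1})$ uniformly.

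The interchange of differentiation and integration underlying the computation above is justified by dominated convergence, since the Gaussian kernel and all of its derivatives decay faster than any polynomial while $g$ is bounded. I do not anticipate a serious obstacle in this argument; the statement really is just the standard fact that convolution against a Gaussian produces a smooth function whose Taylor coefficients admit explicit Hermite-weighted integral representations, and the only item requiring any care is tracking that the implicit constants are independent of $x$ and $z$, so that the $O(\delta^{K+1})$ error term is genuinely uniform as required by the subsequent swapping argument in the SQ lower bound.
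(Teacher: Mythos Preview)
Your proposal is correct and follows essentially the same approach as the paper: change variables $w=z+\delta+\varepsilon$ to expose the Gaussian kernel $e^{-(w-z-\delta)^2/2\varsigma^2}$, identify $\partial_\delta$ with $\partial_z$ on the kernel, and apply Taylor's theorem with Lagrange remainder. Your treatment is in fact slightly more explicit than the paper's, which simply asserts $a_k(x,z)=O(1)$ from $|g|\le 1$ without writing out the Hermite/Rodrigues representation or the dominated convergence justification.
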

\begin{proof}
    The proof follows from the change-of-variables in integration.
    Specifically, by letting $w=z+\delta+\varepsilon$,  
    \begin{align}
        &\mathbb{E}_{\eps\sim\mathcal{N}(0,\varsigma^2)}[g(x,z+\delta+\varepsilon)]
       \\ &= \int g(x,z+\delta+\varepsilon)
        e^{-\varepsilon^2/2\varsigma^2}\mathrm{d}\varepsilon
        \\ & =\int g(x,w)
        e^{-(w-z-\delta)^2/2\varsigma^2}\mathrm{d}w
        \\ & =\sum_{k=0}^K \underbrace{\frac{1}{k!}\frac{\mathrm{d}^k}{\mathrm{d}z^k}\int g(x,w)
        e^{-(w-z)^2/2\varsigma^2}\mathrm{d}w}_{=:a_k(x,z)} \cdot \delta^i
        \\ &\quad +
        \left.\frac{1}{(K+1)!}\frac{\mathrm{d}^{K+1}}{\mathrm{d}z^{K+1}}\int g(x,w)
        e^{-(w-z)^2/2\varsigma^2}\mathrm{d}w\right|_{z=z'} \cdot \delta^{K+1}
        \ (\text{for some $z'$})
        \\ & = \sum_{k=0}^K\int g(x,w) a_k(x,z)\delta^k+O(\delta^{K+1}).
    \end{align}
    Note that each $a_i(x,z)$ is $O(1)$ because $|g(x,w)|\leq 1$.
\end{proof}
\begin{proofof}[\Cref{theorem:SQ-Appendix}]
    Suppose for sake of contradiction $\tau\gtrsim d^{\rho}$.
    Let us take $A=e^{\sqrt{d}}$ in Lemma~\ref{lemma:NearOrthogonalBasis}.
    Then, we have a set of unit vectors $S\subseteq \mathbb{S}^{d-1}$ such that two distinct vectors have an inner product at most $O(d^{-\frac14})$.
    To construct the target function, we randomly draw $\{v_m\}_{m=1}^M$ from $S$ and let $f_m= f$ for $m=1,\cdots,M$, where $f$ is a superorthogonal polynomial from Proposition~\ref{prop:superorthogonality} with $K\geq \frac{2\rho}{\gamma}+2$ and $L\geq 4(\gamma+\rho)-1$. 
    In addition, we construct a different target function $\frac{1}{\sqrt{M}}\sum_{i=1}^M f({v_m'}^\top x)$ with $\{v'_m\}_{m=1}^M$ in the same fashion.
    
    For each $m$, we prove that the following holds for at least $1-\tilde{O}(e^{-\sqrt{d}}(M^2d^{2\rho}))$ fraction of random choices of $v_m,v'_m$: 
    \begin{align}
       & \mathbb{E}\left[g\left(x,\frac{1}{\sqrt{M}}\sum_{m'=1}^{m-1}f(v_{m'}^\top x)+\frac{1}{\sqrt{M}}\sum_{m'=m}^{M}f({v_{m'}'}^\top x)+\varepsilon\right)\right]
      \\  = &
        \mathbb{E}\left[g\left(x,\frac{1}{\sqrt{M}}\sum_{m'=1}^{m}f(v_{m'}^\top x)+\frac{1}{\sqrt{M}}\sum_{m'=m+1}^{M}f({v_{m'}'}^\top x)+\varepsilon\right)\right]+ o(M^{-1}d^{-\rho}).
        \label{eq:SQ-AppendiX-OneStepTaylor-10}
    \end{align}
    This is to say, when $M$ is large, swapping one single-index task results in small change in the query value. 
    To see this, from Lemma~\ref{lemma:NoiseSmoothing}, we have
    \begin{align}
        &\mathbb{E}_{\eps}\left[q\left(x,\frac{1}{\sqrt{M}}\sum_{m'=1}^{m-1}f({v_{m'}'}^\top x)+\frac{1}{\sqrt{M}}\sum_{m'=m}^{M}f(v_{m'}^\top x)+\varepsilon\right)\right]
        \\ &=
        \mathbb{E}_{\eps}\left[q\left(x,\frac{1}{\sqrt{M}}\sum_{m'=1}^{m-1}f({v_{m'}'}^\top x)+\frac{1}{\sqrt{M}}\sum_{m'=m+1}^{M}f(v_{m'}^\top x)+\varepsilon\right)\right]
       \\ & \label{eq:SQ-AppendiX-OneStepTaylor} \quad  +
        \sum_{k=1}^K a_{k}\left(x,\frac{1}{\sqrt{M}}\sum_{m'=1}^{m-1}f({v_{m'}'}^\top x)+\frac{1}{\sqrt{M}}\sum_{m'=m+1}^{M}f(v_{m'}^\top x)\right) \frac{f^k(v_m^\top x)}{M^{\frac{i}{2}}} + o(M^{-\frac{K}{2}}).
    \end{align}
    Note that $|\int f(v_i^\top x)f(v_j^\top x)e^{-\|x\|^2/2}\mathrm{d}x| \lesssim  d^{-(L+1)/4} \lesssim M^{-1}d^{-\rho}$ if $v_i\ne v_j$.
    Now from Lemma~\ref{lemma:BoundCorrSet}, for each $k$, the number of $v_m$ that satisfy
    \begin{align}
        \left|\mathbb{E}\left[ a_{k}\left(x,\frac{1}{\sqrt{M}}\sum_{m'=1}^{m-1}f({v_{m'}'}^\top x)+\frac{1}{\sqrt{M}}\sum_{m'=m+1}^{M}f(v_{m'}^\top x)\right) \frac{f^k(v_m^\top x)}{M^{\frac{i}{2}}}\right]\right| 
        \geq 2M^{-1}d^{-\rho},
    \end{align}
    is at most $O(M^2d^{2\rho})$.
    Thus, except for $O(KM^2d^{2\rho})$ choices of $v_m$, taking expectation of \eqref{eq:SQ-AppendiX-OneStepTaylor} yields
    \begin{align}
        \eqref{eq:SQ-AppendiX-OneStepTaylor}
        &= \mathbb{E}_{\eps}\left[g\left(x,\frac{1}{\sqrt{M}}\sum_{m'=1}^{m-1}f({v_{m'}'}^\top x)+\frac{1}{\sqrt{M}}\sum_{m'=m+1}^{M}f(v_{m'}^\top x)+\varepsilon\right)\right]
        \\ &\quad + \sum_{k=1}^K\mathbb{E}\left[ a_{k}\left(x,\frac{1}{\sqrt{M}}\sum_{m'=1}^{m-1}f({v_{m'}'}^\top x)+\frac{1}{\sqrt{M}}\sum_{m'=m+1}^{M}f(v_{m'}^\top x)\right)\right]\mathbb{E} \frac{\left[f^k(v_m^\top x)\right]}{M^{\frac{i}{2}}} + o(M^{-\frac{K}{2}}).
        \label{eq:SQ-AppendiX-OneStepTaylor-2}
    \end{align}
    Thus in similar fashion, 
    \begin{align}
        \eqref{eq:SQ-AppendiX-OneStepTaylor-2}
        = \mathbb{E}\left[g\left(x,\frac{1}{\sqrt{M}}\sum_{m'=1}^{m}f(v_{m'}^\top x)+\frac{1}{\sqrt{M}}\sum_{m'=m+1}^{M}f({v_{m'}'}^\top x)+\varepsilon\right)\right]+ o(M^{-1}d^{-\rho}),
    \end{align}
    which yields
    \eqref{eq:SQ-AppendiX-OneStepTaylor-10}. 
    By recursively applying \eqref{eq:SQ-AppendiX-OneStepTaylor-10}, we have
    \begin{align}
        \mathbb{E}\left[q\left(x,\frac{1}{\sqrt{M}}\sum_{m=1}^{M}f(v_m^\top x)+\varepsilon\right)\right]
        =
       \mathbb{E}\left[q\left(x,\frac{1}{\sqrt{M}}\sum_{m=1}^{M}f({v_m'}^\top x)+\varepsilon\right)\right]+ o(d^{-\rho})
    \end{align}
    for $(e^{\sqrt{d}}-\tilde{O}(M^2d^{2\rho}))^M\geq e^{\sqrt{d}M}(1-e^{-\Omega(\sqrt{d})})$ choices of $v_1,\cdots,v_M$.

    Therefore, we may return the value of $\mathbb{E}\left[g(x,\frac{1}{\sqrt{M}}\sum_{m=1}^{M}f(\bar{v}_m^\top x)+\varepsilon)\right]$ for a specific choice of $\{\bar{v}_m\}_{m=1}^M$ fixed \textit{a priori}, so that each query only removes at most $e^{-\Omega(\sqrt{d})}$ fraction of the possible choices of $v_1,\cdots,v_M$, but gives no information about the remaining directions. 
    Thus, with probability at least $1-Qe^{-\Omega(\sqrt{d})}$, $e^{\sqrt{d}M}(1-Qe^{-\Omega(\sqrt{d})})$ possible choices of $v_1,\cdots,v_M$ are equally likely.
    On the other hand, for each choice of $\frac{1}{\sqrt{M}}\sum_{m=1}^{M}f(v_m^\top x)$, there are at most $e^{\sqrt{d}M/2}$ possible choices of $f({v_1'}^\top),\cdots,f({v_M'}^\top)$ if the $L^2$-error between $\frac{1}{\sqrt{M}}\sum_{m=1}^{M}f(v_m^\top x)$ and $\frac{1}{\sqrt{M}}\sum_{m=1}^{M}f({v_m'}^\top x)$ is less than $1$; so in order to output a function with small $O(1)$ error, we need to isolate one of the $O(e^{\sqrt{d}M})$ possible hypotheses. 
    This completes the proof of \Cref{theorem:SQ-Appendix}.
\end{proofof}

\subsection{Reparameterization of Polynomials}
\label{subsection:Reparameterization}
The proof of Proposition~\ref{prop:superorthogonality} requires several new techniques. 
We begin by introducing an auxiliary class of polynomials $\{h_a(x)\}$ parameterized by $a=(a_{k,l})\in [-1,1]^{L\times K}$ that satisfies the three properties below.
This is to avoid adjusting coefficients of $f(x)$ directly, because solving $\int (f(x))^k \He_l(x) e^{-x^2/2}\mathrm{d}x=0\ (1\leq k\leq K, 1\leq l\leq L)$ as simultaneous high-order equations of the coefficients would be difficult.
\begin{itemize}
    \item[(P1)] $h_a(x) \not\equiv 0$.

    \item[(P2)] For every $1\leq k\leq K,1\leq l\leq L$, $\int (h_a(x))^k \He_l(x) e^{-x^2/2}\mathrm{d}x$ is continuous with respect to $a$.

    \item[(P3)] 
    $\int (h_a(x))^k \He_l(x) e^{-x^2/2}\mathrm{d}x>0$ holds if $a_{k,l}=1$, and $\int (h_a(x))^k \He_l(x) e^{-x^2/2}\mathrm{d}x<0$ holds if $a_{k,l}=-1$.
\end{itemize}
The following lemma shows that these three properties entails the existence of a desired superorthogonal polynomial. 

\begin{lemma}\label{lemma:ExistenceOfSuperOrthogonalPoly}
    If $\{h_a(x)\}$ satisfies (P1)-(P3), there exists some coefficient $a$ such that
    $\int (h_a(x))^k \He_l(x) e^{-x^2/2}\mathrm{d}x = 0$ holds for every $1\leq k\leq K$ and $1\leq l\leq L$ but $\int (h_a(x))^{2} e^{-x^2/2}\mathrm{d}x > 0$.
\end{lemma}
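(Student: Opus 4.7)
The plan is to recast the lemma as a zero-finding problem on the cube $[-1,1]^{K\times L}$ and invoke the Poincar\'e--Miranda theorem. Define the map
\begin{align}
F\colon[-1,1]^{K\times L}\to\mathbb{R}^{K\times L},\qquad
F_{k,l}(a)\;:=\;\int (h_a(x))^k\,\He_l(x)\,e^{-x^2/2}\,\mathrm{d}x,
\end{align}
for $1\le k\le K$ and $1\le l\le L$. Property~(P2) says precisely that each coordinate $F_{k,l}$ is continuous in $a$, so $F$ is a continuous map from the cube into $\mathbb{R}^{K\times L}$. Property~(P3) gives the crucial boundary sign condition: on the face $\{a:a_{k,l}=+1\}$ we have $F_{k,l}(a)>0$, while on the opposite face $\{a:a_{k,l}=-1\}$ we have $F_{k,l}(a)<0$. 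Thus $F$ satisfies the hypotheses of the Poincar\'e--Miranda theorem (equivalently, a componentwise strengthening of the Brouwer fixed point theorem) with respect to the natural indexing of coordinates by the pairs $(k,l)$.

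Applying Poincar\'e--Miranda, there exists some $a^\ast\in[-1,1]^{K\times L}$ such that $F(a^\ast)=0$, i.e.\ $\int (h_{a^\ast}(x))^k\He_l(x)e^{-x^2/2}\mathrm{d}x=0$ for all $1\le k\le K$ and $1\le l\le L$. By property~(P1), $h_{a^\ast}$ is not identically zero; since it is a polynomial (and hence a continuous non-zero function), its squared $L^2$-norm against the Gaussian weight is strictly positive,
\begin{align}
\int (h_{a^\ast}(x))^2\,e^{-x^2/2}\,\mathrm{d}x\;>\;0.
\end{align}
Setting $f=h_{a^\ast}$ then yields the desired superorthogonal polynomial.

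The only step that really carries content is the verification that the hypotheses of Poincar\'e--Miranda are met; this is immediate from (P2) and (P3). The main obstacle in the broader argument, which is handled separately in Section~\ref{subsection:PolynomialApproximation}, is to \emph{construct} a family $\{h_a\}$ satisfying (P1)--(P3) in the first place---in particular, ensuring the sign control in (P3) while keeping $h_a$ nonzero and depending continuously on $a$. Once such a family is available, the present lemma reduces cleanly to the topological fixed-point argument sketched above.
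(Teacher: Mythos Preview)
Your proposal is correct and takes essentially the same approach as the paper: both reduce the lemma to finding a common zero of a continuous map on the cube using the sign conditions (P3) on opposite faces, and then invoke (P1) for the final positivity claim. The paper argues this via an informal inductive intersection-of-zero-sets picture (Figure~\ref{fig:3DCubeIllustration}), whereas you name the underlying result---the Poincar\'e--Miranda theorem---directly, which is cleaner and avoids the heuristic step in the paper where each zero set $\pi_i$ is asserted to be homeomorphic to $[-1,1]^{I-1}$.
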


\begin{figure}[t]
\begin{center}
\centerline{\includegraphics[width=\columnwidth]{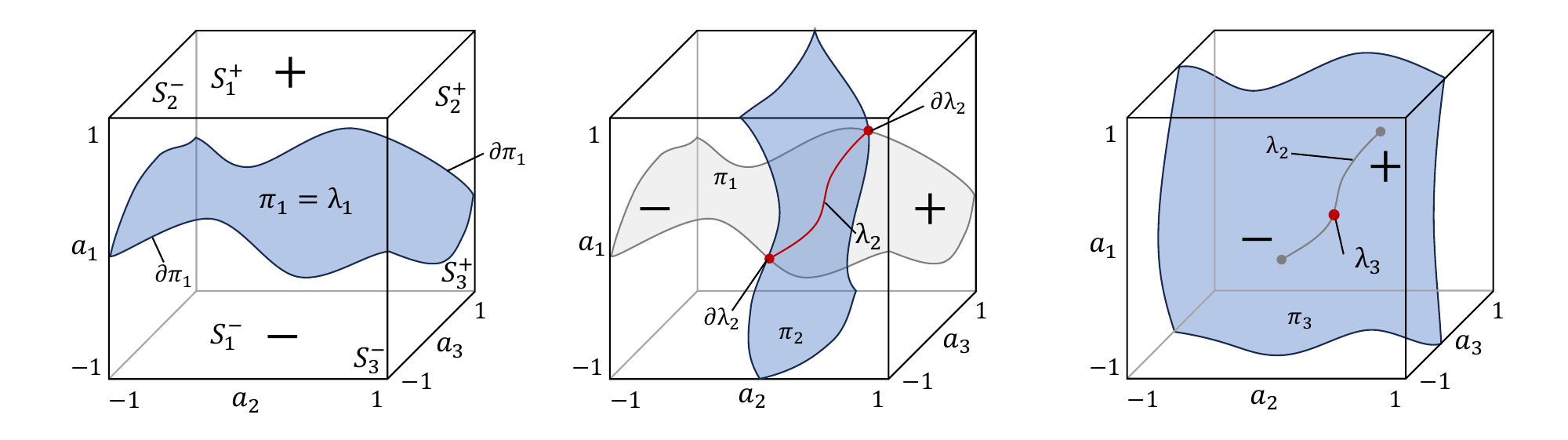}}
\caption{\small 
Illustration of the proof for $I=3$.
For each $i=1,2,3$, a $2$-dimensional curved surface $\pi_{i}$ on which $A_i(a)=0$ divides the hypercube.
First, we take $\lambda_1=\pi_1$.
Then, we take the intersection between $\lambda_{1}$ and $\pi_{2}$, which is a curved line and connects one of its boundary on $S_3^+$ and the other in $S_3^-$.
Finally, we consider the intersection of $\lambda_{2}$ and $\pi_{3}$.
Because $\lambda_2$ connects the points in $S_3^+$ and $S_3^-$ while $\pi_3$ divides the hypercube into the part containing $S_3^+$ and the one containing $S_3^-$, $\lambda_3=\lambda_2\cap \pi_3$ is not an empty set and $A_1(a)=A_2(a)=A_3(a)=0$ holds on $\lambda_3$.
}
\label{fig:3DCubeIllustration}
\end{center}
\vspace{-3mm}
\end{figure}

\begin{proof} 
    As handling two subscripts $k,l$ can be notation-heavy, we prove the following restated claim.
    For $I\in \N$, we consider a vector $a\in [-1,1]^I$ and functions $A_{i}\colon \R^I \to \R$, satisfying
\begin{itemize}
    \item[(P2)'] For every $i$, $A_i(a)$ is continuous with respect to $a$, and
    \item[(P3)'] 
    $A_i(a)>0$ holds if $a_i=1$, and  $A_i(a)<0$ holds if $a_i=-1$,
\end{itemize}  
and prove that there exists some $a\in [-1,1]^I$ such that $A_i(a)=0$ for all $i$.
Regarding as $(k,l)$ and $A_i$ as $\int (h_a(x))^k \He_l(x) e^{-x^2/2}\mathrm{d}x$, this is equivalent to the assertion in the above lemma.

We name each surface of the hypercube $[-1,1]^I$ as follows: 
the surface with $a_i=1$ is denoted as $S_i^+$ and the surface with $a_i=-1$ as $S_i^-$.
Because of (P2)' and (P3)', the hypercube $[-1,1]^I$ is divided into two parts by
the curved surface $\pi_{i}\subseteq [-1,1]^{I}$, on which $A_i(a)=0$ holds, and the surface is homeomorphic to $[-1,1]^{I-1}$.
Clearly, one part contains $S_i^+$ and the other contains $S_i^-$.

Now, we inductively see that there exists $\lambda_i\subset [-1,1]^I$ homeomorphic to $[-1,1]^{I-i}$ on which $A_j(a)=0$ holds.
This is true for $i=1$, by taking $\lambda_1=\pi_1$.
When this is true for $i$, and the boundary of $\lambda_i$ is on all surfaces of the hypercube except for $S_1^+,\cdots,S_i^+$ and $S_1^-,\cdots,S_i^-$, we can take $\lambda_{i+1}\subseteq \lambda_{i} \cap \pi_{i+1}$, on which $A_j(a)=0$ holds for $j=1,\cdots,i+1$, which is homeomorphic to $[-1,1]^{I-(i+1)}$, and the boundary of which is on all surfaces of the hypercube except for $S_1^+,\cdots,S_{i+1}^+$ and $S_1^-,\cdots,S_{i+1}^-$.
Therefore by induction, we obtain $\lambda_I\subseteq \lambda_{I-1} \cap \pi_{I}$, which contains at least one point and on which $A_j(a)=0$ holds for $j=1,\cdots,I$. See Figure~\ref{fig:3DCubeIllustration} for illustration of the case where $I=3$. 
\end{proof}

Our next goal is to construct $\{h_a(x)\}$ that satisfy the above properties.
(P1) and (P2) are easily checked in the following construction.
To meet (P3), we introduce an auxiliary class of functions $\{h_a^*(x)\}$ and will later approximate it using polynomials in Section~\ref{subsection:PolynomialApproximation}.

We first provide a high-level sketch on the construction of the auxiliary class $\{h_a^*(x)\}$ using \Cref{fig:PiecewiseConstant}.
In particular, we need to adjust the following value to satisfy (P3)
\begin{align}
    \int (f(x))^k \He_l(x) e^{-x^2/2}\mathrm{d}x.
    \label{eq:FixIConsiderIntegral}
\end{align}

First, we fix the exponent $k$ and consider to make the value \eqref{eq:FixIConsiderIntegral} all $0$ for $l=1,2,\cdots,L$.
Directly adjusting coefficients of $f(x)$ by regarding \eqref{eq:FixIConsiderIntegral} as  simultaneous high-order polynomials of the coefficients of $f(x)$ would be difficult.
Instead, we re-parameterize the problem as an almost linear simultaneous equation with respect to the new parameters. 

Specifically, we consider a piecewise linear function defined as follows.
First, we focus on \eqref{eq:FixIConsiderIntegral} for a specific $k$ by considering a
$\{0,1\}$-valued function (See Figure~3(a)).
We divide (an interval of) the real line into equal intervals, which are indexed by $i_1$ and $i_2$, with width $\varepsilon$, and for each interval, we assign a value of $1$ to the left half and assign the remaining portion a value of $0$.
Then, we consider the value of \eqref{eq:FixIConsiderIntegral}, which is the expectation of a multiplication of this function and a Hermite polynomial $\He_l(x)$. 
If the interval gets small, this is approximately equal to taking expectations of a multiplication between a constant function and the Hermite polynomial.
In other words, the value of \eqref{eq:FixIConsiderIntegral} gets closer to $0$ as the interval gets small smaller.

To modify the integral value for a specific $l$, we move the right end of each interval beginning from $x_i$, proportionally to the value of $\He_l(x_i)$.
If we move each right ends $O(a_l\varepsilon \He_l(x_i))$ ($a_{l}$ is a scalar), then \eqref{eq:FixIConsiderIntegral} for the specific $l$ changes almost \textit{linearly} with respect to $a_l$, while \eqref{eq:FixIConsiderIntegral} for the other $l$ remains (approximately) the same.
In this way, we fine-tune the value of \eqref{eq:FixIConsiderIntegral} for each $l$ separately by changing the parameter $a_l$.

Next, we consider how to simultaneously address different exponentiations $k=1,\dots,K$.
We divide each interval of width $\varepsilon$ into $K$ different sub-intervals indexed by $1,\dots,K$ (see Figure~3(b)).
For the $j$-th sub-interval, we let the height of the indicator function to be $\frac{j}{K}$.
If we change the length of the $j$-th sub-interval proportionally to $\varepsilon \alpha_{i_1,i_2,j}$ $(j=1,2,\cdots,K)$, this is approximately equivalent to setting the value of each $(f(x))^k$ around $x_{i_1,i_2}$ proportionally to
\begin{align}
\label{eq:HowMuchChangeAffects}
    \left[V \cdot \begin{pmatrix}
        \alpha_{i_1,i_2,1}
        \\
        \alpha_{i_1,i_2,2}
        \\
        \vdots
        \\
        \alpha_{i_1,i_2,K}
    \end{pmatrix}\right]_k,
\end{align}
where $V\in \mathbb{R}^{K\times K}$ is a variant of the Vandermonde matrix defined as follows:
\begin{align}
    V := \begin{bmatrix}
        \frac{1}{K} & \frac{2}{K} & \cdots & 1 \\
        (\frac{1}{K})^2 & (\frac{2}{K})^2 & \cdots & 1 \\
        \vdots &\vdots &   & \vdots\\
        (\frac{1}{K})^K & (\frac{2}{K})^K & \cdots & 1 
    \end{bmatrix}.
\end{align}

The following lemma implies that we can change each coordinate of $V\cdot (
        \alpha_{i_1,i_2,1}
        ,
        \alpha_{i_1,i_2,2}
        ,
        \vdots
        ,
        \alpha_{i_1,i_2,K}
   )^\top$ in \eqref{eq:HowMuchChangeAffects} arbitrarily by adjusting the values of $\alpha_{i_1,i_2,1},\cdots,\alpha_{i_1,i_2,K}$.
   In this way, we can control the contribution of $(f(x))^k$ to the integral values separately for different $k$ at each interval.

\begin{figure}[t]
\begin{center}
\centerline{\includegraphics[width=\columnwidth]{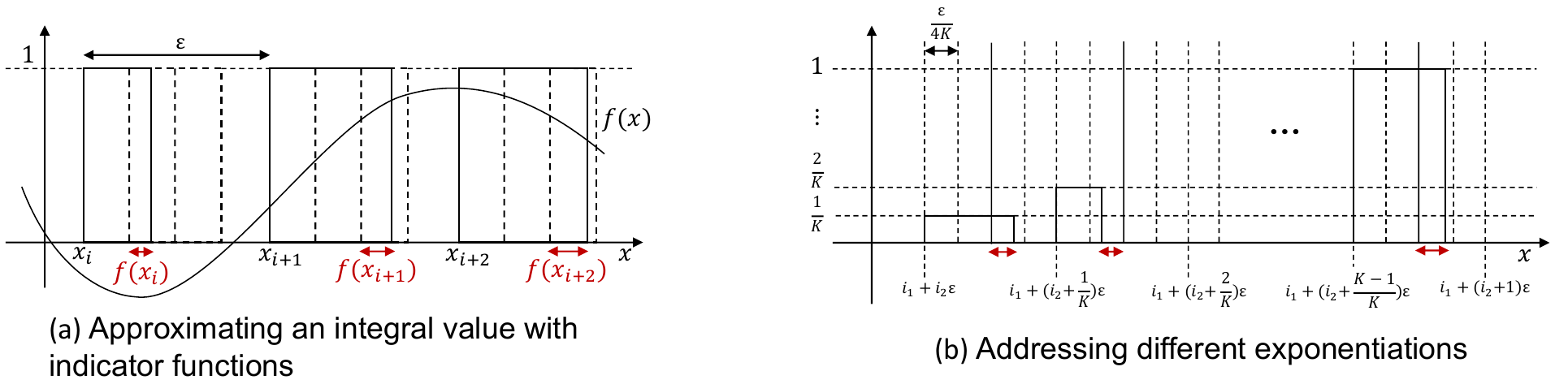}}
    \caption{\small  Approximation via piecewise constant function.
    Figure~3(a): 
    shifting the right end of each indicator function proportionally to $f(x_i)$ is approximately equivalent to subtracting $O(f(x))$ from $(h^*_a(x))^k$ in the sense of integral value.
    Figure~3(b): By considering the staircase function, we can simultaneously modify the contribution of the different exponents of $h^*_a(x)$ to the integral.}
    \label{fig:PiecewiseConstant}
\end{center}
\vspace{-3mm}
\end{figure} 
   
Putting it all together, we can adjust the integral values of \eqref{eq:FixIConsiderIntegral} separately for each $k$ and $l$ to satisfy (P3).
After obtaining a function class $\{h^*_a(x)\}$ sketched above, approximating indicator functions by  polynomials yields the desired class $\{h_a(x)\}$ that satisfies (P1)-(P3) in Section~\ref{subsection:PolynomialApproximation}. 
\begin{lemma}\label{lemma:RegularityOfVandermonde}
    The matrix $V$ is invertible.
\end{lemma}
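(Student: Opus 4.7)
The plan is to show that $V$ factors as a product of a standard Vandermonde matrix and a nonsingular diagonal matrix, each of which is clearly invertible. Concretely, note that the $(i,j)$-entry of $V$ is $(j/K)^i$ for $1 \le i,j \le K$, which we can rewrite as $(j/K) \cdot (j/K)^{i-1}$. Setting $D = \mathrm{diag}(1/K,\,2/K,\,\ldots,\,K/K)$ and letting $W \in \mathbb{R}^{K\times K}$ be the matrix with entries $W_{ij} = (j/K)^{i-1}$, we obtain the factorization $V = W D$.

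The factor $D$ is a diagonal matrix with strictly positive entries $j/K > 0$ for $j=1,\ldots,K$, hence invertible with $\det D = K!/K^K \neq 0$. The factor $W$ is the classical Vandermonde matrix associated with the nodes $x_j = j/K$, $j=1,\ldots,K$. Since these nodes are pairwise distinct (they are $K$ evenly spaced points in $(0,1]$), the standard Vandermonde determinant formula gives
\begin{align}
\det W = \prod_{1 \le j < k \le K}\!\!\left(\frac{k}{K} - \frac{j}{K}\right) = \frac{1}{K^{\binom{K}{2}}}\prod_{1 \le j < k \le K}(k-j) \neq 0.
\end{align}
Therefore $\det V = \det W \cdot \det D \neq 0$, so $V$ is invertible.

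There is no real obstacle here: the only slightly subtle step is recognizing the column-by-column factor of $j/K$ that pulls out to leave a clean Vandermonde structure. After that, the result is immediate from the classical Vandermonde determinant identity combined with the distinctness of the sample points $1/K,\,2/K,\,\ldots,\,1$.
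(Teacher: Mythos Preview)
Your proof is correct. Both you and the paper reduce to the classical Vandermonde determinant, but the reductions differ. You pull out the factor $j/K$ from each column to write $V=WD$ with $W$ a standard $K\times K$ Vandermonde at the distinct nodes $1/K,\ldots,1$ and $D$ diagonal; invertibility is then immediate. The paper instead embeds $V$ into the $(K{+}1)\times(K{+}1)$ Vandermonde $\tilde V_x$ at the nodes $1/K,\ldots,1,0$ and observes (via cofactor expansion along the last column, which becomes $(1,0,\ldots,0)^\top$ when $x_{K+1}=0$) that $\det \tilde V_x=(-1)^K\det V$; since all $K{+}1$ nodes are distinct, $\det \tilde V_x\neq 0$ forces $\det V\neq 0$. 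Your column-factor argument is shorter and avoids the auxiliary $(K{+}1)$-dimensional matrix entirely; the paper's route is a bit more indirect but leads to the same conclusion.
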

\begin{proof}
    Let 
    \begin{align}\label{eq:RegularityOfVandermonde-1}
        \tilde{V}_x := \begin{bmatrix}
        1 & 1 & \cdots & 1 \\
        x_1 & x_2 & \cdots & x_{K+1} \\
        (x_1)^2 & (x_2)^2 & \cdots & (x_{K+1})^2 \\
        \vdots &\vdots &   & \vdots\\
        (x_1)^K & (x_2)^K & \cdots & (x_{K+1})^K 
    \end{bmatrix}.
    \end{align}
    Because of the formula for the determinant of the block matrix, we have
    \begin{align}
        \mathrm{det}(\tilde{V}_x)
        = (-1)^n \cdot \mathrm{det}\left(\begin{bmatrix}
        1 & 1 & \cdots & 1 \\
        x_1 & x_2 & \cdots & x_{K} \\
        (x_1)^2 & (x_2)^2 & \cdots & (x_{K})^2 \\
        \vdots &\vdots &   & \vdots\\
        (x_1)^K & (x_2)^K & \cdots & (x_{K})^K 
    \end{bmatrix}- 1 \cdot \begin{pmatrix}x_{K+1}\\\vdots\\x_{K+1}^{K}\end{pmatrix}(1,\cdots,1)\right).
    \end{align}
    It is a well-known fact on the Vandermonde matrix that $\mathrm{det}(\tilde{V}_x)\ne 0$ when $x_i\ne x_j$ for all $i\ne j$.
    Let $x_1=\frac{1}{K},x_2=\frac{2}{K},\cdots,x_K=1,x_{K+1}=0$.
    Then, the LHS of \eqref{eq:RegularityOfVandermonde-1} is $\mathrm{det}(\tilde{V}_x)\ne 0$ and the RHS of \eqref{eq:RegularityOfVandermonde-1} is equal to $(-1)^n\mathrm{det}(V)$. Therefore, we have obtained that $\mathrm{det}(V)\ne 0$.
\end{proof}

Now we formalize the above proof sketch.
We let 
\begin{align}
    M_1 = \max_{a\in [-1,1]^K}\left\|V^{-1} a\right\|_\infty
\end{align}
and define the function class $\{h^*_a(x)\}$.

\begin{definition}[An auxiliary class $\{h_a^*(x)\}$]\label{definition:HStar}
    Let $\mathbbm{1}_{s,t}(x)\ (s<t)$ be an indicator function satisfying $\mathbbm{1}_{s,t}(x)=1$ for $s\leq x\leq t$ and $=0$ otherwise.
    Fix $K,L,A_1$, and $A_2$, and let $\varepsilon := \frac{1}{A_2}$.
    We define a class of functions $\{h_\alpha^\dagger(x)\}$, parameterized by $\alpha=(\alpha_{i_1,i_2,j})\in [-1,1]^{2A_1 \times A_2 \times K}$, 
    \begin{align}\label{eq:DefinitionOfHDagger}
        h_\alpha^\dagger(x) := \sum_{i_1=-A_1}^{A_1-1}\sum_{i_2=0}^{A_2-1}\sum_{j=1}^K \frac{j}{K}\cdot \mathbbm{1}_{i_1+i_2\varepsilon + \frac{4(j-1)}{4K}\varepsilon,i_1+i_2\varepsilon + \frac{4(j-1)+2+\alpha_{i_1,i_2,j}}{4K}\varepsilon}(x).
    \end{align}
    Then, we construct a map from $a\in \mathbb{R}^{K\times L}$ to $\alpha$ as follows. For each $i_1,i_2$, we define $(\alpha(a)_{i_1,i_2,j})_{j=1}^K$ as
    \begin{align}
    \begin{pmatrix}
    \alpha(a)_{i_1,i_2,1}\\
    \alpha(a)_{i_1,i_2,2}\\ 
    \vdots \\ 
    \alpha(a)_{i_1,i_2,K}
    \end{pmatrix}
    := \frac{1}{M_1M_2}V^{-1}
    \begin{pmatrix}
    \sum_{l=1}^L a_{1,l}\He_{l}(i_1+i_2\varepsilon)\\
    \sum_{l=1}^L a_{2,l}\He_{l}(i_1+i_2\varepsilon)\\
    \vdots\\
    \sum_{l=1}^L a_{K,l}\He_{l}(i_1+i_2\varepsilon)
    \end{pmatrix},
    \end{align}
   where $M_2:=\max_{1\leq l \leq L} \max_{-A_1\leq x\leq A_1}|\He_l(x)|$.
    Based on this, we define $h^*_a(x)$ by
    \begin{align}
        h^*_a(x) := h^\dagger_{\alpha(a)}(x)
    \end{align}
    From the definitions of $M_1$, $M_2$, we have $\|\alpha(a)\|_\infty\leq 1$.
    Thus in \eqref{eq:DefinitionOfHDagger}, each interval of the indicator function does not overlap with the others and the right end is contained in $[i_1+i_2\varepsilon + \frac{4(j-1)}{4K}\varepsilon,i_1+i_2\varepsilon + \frac{4(j-1)+2}{4K}\varepsilon].$
\end{definition}

\begin{lemma}\label{lemma:IndicatorP3}
    There exist constants $A_1$ and $A_2$ such that
    $h^*(a)$ defined in \Cref{definition:HStar} satisfy the property (P3). Specifically, when $a_{i,j}=1$, we have
    \begin{align}\label{eq:P3ForHStar-L1-1}
        \int (h^*_a(x))^i \He_j(x) e^{-x^2/2}\mathrm{d}x > 0,
    \end{align}
    and when $a_{i,j}=-1$, we have
    \begin{align}\label{eq:P3ForHStar-L1-2}
        \int (h^*_a(x))^i \He_j(x) e^{-x^2/2}\mathrm{d}x < 0,
    \end{align}
    for all $1\leq i\leq L$ and $1\leq j\leq K$.
\end{lemma}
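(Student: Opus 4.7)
The plan is to compute the integral $\int (h^*_a(x))^k \He_l(x) e^{-x^2/2} \dee x$ exactly by exploiting the piecewise constant structure of $h^*_a$, then Taylor expand in $\alpha$ around $\alpha = 0$ and show that the first-order term dominates, with sign controlled by $a_{k,l}$. Since $h^*_a$ equals $(j/K)$ on the $(i_1,i_2,j)$-th step and $0$ elsewhere, the integral decomposes as a finite sum
\[
\sum_{i_1,i_2,j} (j/K)^k \int_{x^-_{i_1,i_2,j}}^{x^+_{i_1,i_2,j}(\alpha)} \He_l(x) e^{-x^2/2}\dee x,
\]
where only the right endpoint depends on $\alpha$. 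Using $\frac{\dee}{\dee x}[\He_{l-1}(x) e^{-x^2/2}] = -\He_l(x) e^{-x^2/2}$, each integral on the right is the difference of boundary values, which I will Taylor expand in $\alpha_{i_1,i_2,j}$ using $\frac{\dee x^+}{\dee \alpha_{i_1,i_2,j}} = \frac{\varepsilon}{4K}$. This yields a clean decomposition $I = I_0 + I_1 + R$ where $I_0$ is the zeroth-order ($\alpha = 0$) piece, $I_1$ is linear in $\alpha$, and $R$ is a Taylor remainder.

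For $I_0$, the function $(h^*_0)^k$ is a fixed staircase on $[-A_1,A_1]$ independent of $a$; writing $(h^*_0)^k = c_k \mathbbm{1}_{[-A_1,A_1]} + g_\varepsilon$ where $c_k = \frac{1}{2K}\sum_{j=1}^K (j/K)^k$ and $g_\varepsilon$ captures the residual oscillation within each macro-interval of width $\varepsilon$, I will bound
\[
|I_0| \;\le\; |c_k|\cdot\left|\int_{-A_1}^{A_1} \He_l(x) e^{-x^2/2}\dee x\right| + \|g_\varepsilon\|_\infty \cdot \varepsilon \cdot O(A_1 M_2);
\]
the first piece is $O(A_1^{l-1} e^{-A_1^2/2})$ by Hermite antiderivatives and the tail behavior, while the second is controlled by choosing $\varepsilon$ small. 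For $I_1$, plugging in the definition of $\alpha(a)_{i_1,i_2,j}$ and using $[V\alpha(a)_{i_1,i_2,\cdot}]_k = \frac{1}{M_1M_2}\sum_{l'=1}^L a_{k,l'}\He_{l'}(i_1 + i_2\varepsilon)$ by construction of $V^{-1}$ (Lemma~\ref{lemma:RegularityOfVandermonde}), I obtain
\[
I_1 = \frac{\varepsilon}{4K M_1 M_2}\sum_{i_1,i_2}\He_l(t_{i_1,i_2}) e^{-t_{i_1,i_2}^2/2}\sum_{l'=1}^L a_{k,l'}\He_{l'}(t_{i_1,i_2}) + \text{(shift error)},
\]
which is a Riemann sum converging as $\varepsilon \to 0$ to $\frac{1}{4K M_1 M_2}\int_{-A_1}^{A_1} \He_l(x)\sum_{l'=1}^L a_{k,l'}\He_{l'}(x) e^{-x^2/2}\dee x$. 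Adding the tail back in and invoking Hermite orthogonality $\int \He_l \He_{l'} e^{-x^2/2}\dee x = l!\sqrt{2\pi}\,\delta_{l,l'}$ gives the limit $\frac{l!\sqrt{2\pi}}{4K M_1 M_2} a_{k,l}$.

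The remainder $R$ satisfies $|R| = O(\varepsilon^2 \cdot A_1 A_2 K \cdot M_2^{(l+1)}) = O(\varepsilon \cdot A_1 K M_2^{(l+1)})$ per the second-order Taylor bound applied step by step, where $M_2^{(l+1)}$ absorbs the second derivative of $\He_{l-1}(x)e^{-x^2/2}$ on $[-A_1,A_1]$. Combining these estimates, the choice of constants proceeds in order: first pick $A_1$ large enough (depending only on $K,L$) so that $|I_0| \le \frac{l!\sqrt{2\pi}}{16 K M_1 M_2}$ uniformly over $1\le k\le K$, $1\le l\le L$, and over $a \in [-1,1]^{K\times L}$; then pick $A_2$ large enough (i.e., $\varepsilon$ small enough, with the dependence on the already-fixed $A_1$ and on $K,L$) so that both the Riemann-sum error in $I_1$ and the remainder $R$ are each at most $\frac{l!\sqrt{2\pi}}{16 K M_1 M_2}$. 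With these choices, when $a_{k,l} = \pm 1$ the first-order contribution $\pm \frac{l!\sqrt{2\pi}}{4K M_1 M_2}$ dominates the sum of errors by a factor of at least $2$, so the total integral inherits the sign of $a_{k,l}$, giving \eqref{eq:P3ForHStar-L1-1}--\eqref{eq:P3ForHStar-L1-2}. The main obstacle is keeping the estimates uniform over all $(k,l)$ pairs and all values of $a$ in the hypercube, which requires that $M_2$ (the max of $|\He_l|$ on $[-A_1,A_1]$) and the constants in the remainder bound depend only on $A_1, K, L$ and not on $a$, so that a single choice of $(A_1,A_2)$ works simultaneously for every $(k,l)$ and every vertex of the hypercube.
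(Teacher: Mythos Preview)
Your proposal is correct and follows essentially the same approach as the paper: both split the integral into the $\alpha=0$ baseline piece and the $\alpha$-dependent piece, use the Vandermonde relation $[V\alpha_{i_1,i_2,\cdot}]_k = \frac{1}{M_1M_2}\sum_{l'} a_{k,l'}\He_{l'}(i_1+i_2\varepsilon)$ to extract the $a_{k,l}$-dependence, recognize the resulting sum as a Riemann approximation of $\int \He_l\He_{l'}e^{-x^2/2}\,\dee x$ so that Hermite orthogonality isolates $a_{k,l}$, and then fix $A_1$ large followed by $A_2$ large to make the error terms negligible. Your explicit Taylor decomposition $I_0+I_1+R$ is a cosmetic repackaging of the paper's direct interval-splitting, and you even track the $\tfrac{1}{4K}$ prefactor that the paper silently absorbs.
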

\begin{proof}
    First, we decompose the integral as follows:
        \begin{align}
        \int (h^*_a (x))^k \He_l e^{-x^2/2}\mathrm{d}x
       &=\sum_{i_1=-A_1}^{A_1-1}\sum_{i_2=0}^{A_2-1}\sum_{j=1}^K \left(\frac{j}{K}\right)^k\int_{i_1+i_2\varepsilon+\frac{4(j-1)\varepsilon}{4K}}^{i_1+i_2\varepsilon+\frac{(4(j-1)+2+\alpha_{i_1,i_2,j})\varepsilon}{4K}}\He_l(x) e^{-x^2/2}\mathrm{d}x
       \\ & \label{eq:P3ForHStar-L1-3} = 
       \sum_{i_1=-A_1}^{A_1-1}\sum_{i_2=0}^{A_2-1}\sum_{j=1}^K \left(\frac{j}{K}\right)^k\int_{i_1+i_2\varepsilon+\frac{4(j-1)\varepsilon}{4K}}^{i_1+i_2\varepsilon+\frac{(4(j-1)+2)\varepsilon}{4K}}\He_l(x) e^{-x^2/2}\mathrm{d}x
       \\ & \label{eq:P3ForHStar-L1-4} \quad + \sum_{i_1=-A_1}^{A_1-1}\sum_{i_2=0}^{A_2-1}\sum_{j=1}^K \left(\frac{j}{K}\right)^k\int_{i_1+i_2\varepsilon+\frac{4(j-1)+2\varepsilon}{4K}}^{i_1+i_2\varepsilon+\frac{(4(j-1)+2+\alpha_{i_1,i_2,j})\varepsilon}{4K}}\He_l(x) e^{-x^2/2}\mathrm{d}x
    \end{align}
    For the first term, we have
    \begin{align}
        \eqref{eq:P3ForHStar-L1-3}
       & = \sum_{j=1}^K \frac{2}{4K}\left(\frac{j}{K}\right)^k\sum_{i_1=-A_1}^{A_1-1}\sum_{i_2=0}^{A_2-1}\int_{i_1+i_2\varepsilon}^{i_1+(i_2+1)\varepsilon}\He_l(x) e^{-x^2/2}\mathrm{d}x
        + O(A_2^{-1})
        \\ & = \sum_{j=1}^K \frac{2}{4K}\left(\frac{j}{K}\right)^k\int_{-A_1}^{A_1}\He_l(x) e^{-x^2/2}\mathrm{d}x
        + O(A_2^{-1})
        \\ & = O(A_2^{-1};A_1) O(e^{-A_1^2/2})+ O(A_2^{-1}),
        \label{eq:P3ForHStar-L1-5}
    \end{align}
    where $O(A_2^{-1};A_1)$ is the big-$O$ notation that treats $A_1$ as a constant.
    Moreover, by the definition of $\alpha(a)$, 
    \begin{align}
        \eqref{eq:P3ForHStar-L1-4}
        &=
        \sum_{i_1=-A_1}^{A_1-1}\sum_{i_2=0}^{A_2-1}\int_{i_1+i_2\varepsilon}^{i_1+(i_2+1)\varepsilon}\sum_{j=1}^K \left(\frac{j}{K}\right)^k\frac{\alpha_{i_1,i_2,j}}{4K}\He_l(x) e^{-x^2/2}\mathrm{d}x
        + O(A_2^{-1};A_1)
        \\ & =  \sum_{i_1=-A_1}^{A_1-1}\sum_{i_2=0}^{A_2-1}\frac{1}{M_1M_2}\int_{i_1+i_2\varepsilon}^{i_1+(i_2+1)\varepsilon}\sum_{j=1}^L a_{k,j}\He_j(i_1+i_2\varepsilon)\He_l(x) e^{-x^2/2}\mathrm{d}x
        + O(A_2^{-1};A_1)
        \\ &=\sum_{i_1=-A_1}^{A_1-1}\sum_{i_2=0}^{A_2-1}\frac{1}{M_1M_2}\int_{i_1+i_2\varepsilon}^{i_1+(i_2+1)\varepsilon}\sum_{j=1}^L a_{k,j}\He_j(x)\He_l(x) e^{-x^2/2}\mathrm{d}x+ O(A_2^{-1};A_1)
        \\ & = \frac{a_{k,l}}{M_1M_2}\int_{-A_1}^{A_1}\He_l^2(x) e^{-x^2/2}\mathrm{d}x+ O(A_2^{-1};A_1)
        \\ & = \frac{a_{k,l}}{M_1M_2}\int \He_l^2(x) e^{-x^2/2}\mathrm{d}x+ O(A_2^{-1};A_1)+O( e^{-A_1^2/2})
        \label{eq:P3ForHStar-L1-6}
    \end{align}
    Now, \eqref{eq:P3ForHStar-L1-5} and \eqref{eq:P3ForHStar-L1-6} yield
    \begin{align}
        \int (h^*_a (x))^k \He_l e^{-x^2/2}\mathrm{d}x
        = \frac{a_{k,l}}{M_1M_2}\int \He_l^2(x) e^{-x^2/2}\mathrm{d}x+ O(A_2^{-1};A_1)+O( e^{-A_1^2/2}).
    \end{align}
    Since $\frac{1}{M_1M_2}\int \He_l^2(x) e^{-x^2/2}\mathrm{d}x>0$, by taking $A_1$ sufficiently large and then taking $A_2$ sufficiently large, we obtain the assertion.
\end{proof}

\subsection{Polynomial Approximation of $h^*_a$}
\label{subsection:PolynomialApproximation}
Finally, we consider the polynomial approximation of $h^*_a$, which can be reduced to polynomial approximation of each of the indicator functions. Approximation of the step/sign/indicator functions has been studied since the nineteenth century 
\citep{zolotarev1877application,akhiezer1990elements,EremenkoUniform2007}.
Among them, \citet{EremenkoUniform2007} considered the polynomial approximation of $\mathrm{sgn}(x)$ and proved the following result: the approximation error $L_m(\delta)$ with degree-$m$ polynomials, in the interval of
$[-1,-\delta]\cup [\delta,1]$, satisfies
\begin{align}
    \lim_{m\to\infty}\sqrt{m}\left(\frac{1+\delta}{1-\delta}\right)^m L_m(\delta) = \frac{1-\delta}{\sqrt{\pi \delta}}.
\end{align}
This entails that, until $m$ becomes larger than $\delta^{-1}$, the error drops proportionally to $O(m^{-1/2}\delta^{-1/2})$.
After that, the error exponentially decreases, proportionally to $e^{-\delta m}$. 
However,this result is not directly applicable to our Gaussian setting, since the error bound is for a fixed interval. 
Also, the coefficients of the polynomial are not characterized (hence higher-order polynomials could become larger outside of the interval). 
Consequently, increasing the order of polynomials may not give smaller approximation error in expectation.

Therefore, we instead make use of the following fact of Hermite expansion.
\begin{proposition}[Theorems 3 and 6 of \citet{muckenhoupt1970mean}]\label{proposition:ConvergenceOfHermiteExpansion}
    Let $U(x)=e^{-x^2/2}(1+|x|)^b$ and $V(x)=e^{-x^2/2}(1+|x|)^B$, where $b<0$, $B\geq -2/3$, and $b\leq B-1/3$.
    $s_n(x;f)$ denotes the $n$-th partial sum of a Hermite series for the target function $f$.
    If
    \begin{align}
        \int_{-\infty}^\infty |f(x)|V(x)(1+\log^+|x|+\log^+|f(x)|)<\infty,
    \end{align}
    then we have that
    \begin{align}
        \lim_{n\to\infty}\int_{-\infty}^\infty
        |s_n(x;f)-f(x)|^pU(x)^p\mathrm{d}x=0
        .
    \end{align}
\end{proposition}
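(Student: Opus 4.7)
The plan is to derive this weighted mean convergence result from a uniform operator inequality for the partial sum maps $s_n$, combined with a density argument. This follows the classical scheme developed by Pollard, Newman--Rudin, and Askey--Wainger for orthogonal polynomial expansions, specialized to the Hermite case.

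First I would reduce to proving a uniform bound of the form
\begin{align*}
\left(\int_{-\infty}^\infty |s_n(x;f)|^p U(x)^p\,dx\right)^{1/p}
\;\le\; C\int_{-\infty}^\infty |f(x)|V(x)\bigl(1+\log^+|x|+\log^+|f(x)|\bigr)\,dx,
\end{align*}
with $C$ independent of $n$. Given such a bound, the conclusion follows by a standard three-$\epsilon$ argument: on the subspace of finite Hermite polynomial combinations one has $s_n P=P$ for $n$ larger than the degree of $P$, so convergence is trivial there; density of polynomials in the class defined by the hypothesis is established by truncating $f$ to $[-R,R]$ (the tail is controlled by $V(x)$ decaying like $e^{-x^2/2}(1+|x|)^B$), mollifying the truncation against a narrow Gaussian, and finally approximating the resulting Schwartz function by its Hermite partial sums uniformly on compact sets.

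To obtain the uniform operator inequality I would represent $s_n$ through the Christoffel--Darboux kernel,
\begin{align*}
s_n(x;f) = \int K_n(x,y) f(y) \frac{e^{-y^2/2}}{\sqrt{2\pi}}\,dy,\qquad
K_n(x,y)=\frac{1}{n!}\,\frac{\He_{n+1}(x)\He_n(y)-\He_n(x)\He_{n+1}(y)}{x-y},
\end{align*}
and split the $(x,y)$-plane into three regions according to the Plancherel--Rotach asymptotics with turning points $\pm\sqrt{4n+2}$: the \emph{bulk} $\{|x|,|y|\le \sqrt{4n+2}\,(1-\delta_n)\}$, where $e^{-x^2/4}\He_n(x)$ is oscillatory of cosine type; the \emph{transition strip} of width $n^{-1/6}\sqrt{4n+2}$ near the turning points, where Airy-function asymptotics govern $\He_n$; and the \emph{forbidden region} beyond the turning points, where $\He_n(x)e^{-x^2/4}$ decays super-exponentially. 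On the bulk the oscillatory form reduces the required estimate to a weighted inequality for a Hilbert-transform-type operator on a rescaled interval, which follows from classical Muckenhoupt $A_p$ theory after absorbing the weights $(1+|x|)^b$ and $(1+|y|)^B$. In the forbidden region, crude size bounds combined with the Gaussian in $U$ suffice. The logarithmic factor in the hypothesis is exactly what is needed to handle the endpoint of the reduction, absorbing the $\log n$ arising from the harmonic sum associated with the Hilbert transform on the rescaled interval.

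The main obstacle is the transition-region analysis, which is precisely where the exponents $B\ge -2/3$ and $b\le B-1/3$ originate. There $K_n(x,y)$ does not simplify to an oscillatory kernel, and one must estimate integrals of Airy functions against $f$ while keeping track of weighted norms on both sides; the allowance $(1+|x|)^{-2/3}$ for $V$ matches the $n^{1/6}$ blow-up of $\He_n$ at the turning points under the Airy scaling, while the $1/3$ gap between $b$ and $B$ compensates for the loss incurred in transferring the transition-region estimate back to the global weight. Making these cancellations precise, uniformly in $n$, and then patching the three regional estimates into a single operator bound is the technical heart of Muckenhoupt's argument; compared with this, verifying the density of polynomials in the log-weighted class via truncation and mollification, and running the three-$\epsilon$ argument, are routine.
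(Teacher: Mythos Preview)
Your outline is a faithful sketch of Muckenhoupt's original argument, but note that the paper does not prove this proposition at all: it is quoted verbatim as Theorems~3 and~6 of \cite{muckenhoupt1970mean} and used as a black box to deduce that indicator functions can be approximated by polynomials in the relevant weighted $L^p$ sense. So there is no ``paper's own proof'' to compare against; the authors simply invoke the result and move on to the lemma that follows. Your proposal is therefore not wrong, but it goes well beyond what the paper requires or supplies.
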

    We can therefore approximate $\mathbbm{1}_{0,\infty}$ with an arbitrary accuracy with respect to the integral values.
    \begin{lemma}
        There exists a sequence of polynomials $\{p_n\}$ such that
        \begin{align}
        \lim_{n\to\infty}\int_{-\infty}^\infty
        |p_n(x)-\mathbbm{1}_{0,\infty}(x)|^k\He_l(x)e^{-x^2/2}\mathrm{d}x=0
    \end{align}
    holds for all $1\leq k\leq K$ and $1\leq l\leq L$.
    \end{lemma}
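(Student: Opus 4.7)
The plan is to reduce all $KL$ pointwise conditions to a single weighted $L^{2K}$ approximation using Hölder's inequality, and then invoke the density of polynomials in $L^r(\gamma)$ for finite $r$.

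Let $d\gamma(x) = (2\pi)^{-1/2} e^{-x^2/2}\,dx$ denote the standard Gaussian probability measure. For any polynomial $p_n$ and each pair $(k,l)$ with $1\leq k\leq K$, $1\leq l\leq L$, Hölder's inequality in $L^2(\gamma)$ applied to $|p_n - \mathbbm{1}_{0,\infty}|^k$ and $|\He_l|$ yields
\[
\int_{-\infty}^{\infty} |p_n - \mathbbm{1}_{0,\infty}|^k \,|\He_l|\, e^{-x^2/2}\,dx
\;\leq\; \sqrt{2\pi}\, \| p_n - \mathbbm{1}_{0,\infty} \|_{L^{2k}(\gamma)}^{k}\, \|\He_l\|_{L^2(\gamma)}.
\]
Because $\gamma$ is a probability measure, Jensen gives $\|\cdot\|_{L^{2k}(\gamma)} \leq \|\cdot\|_{L^{2K}(\gamma)}$ whenever $k\leq K$, while $\|\He_l\|_{L^2(\gamma)} = \sqrt{l!}$ is a finite constant that is absorbed. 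Since only finitely many pairs $(k,l)$ appear in the claim, it is enough to exhibit a single polynomial sequence $\{p_n\}$ with $\|p_n - \mathbbm{1}_{0,\infty}\|_{L^{2K}(\gamma)} \to 0$.

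Such a sequence exists by the classical density of polynomials in $L^r(\gamma)$ for every finite $r$, which follows from the Gaussian satisfying the Carleman condition $\int e^{c|x|}\,d\gamma < \infty$ for all $c>0$. Since $\mathbbm{1}_{0,\infty}$ is bounded and therefore lies in $L^{2K}(\gamma)$, the required $p_n$ can be selected, and combined with the display above this finishes the argument.

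The main (and essentially only) technical obstacle is the density step. If one prefers to bypass the Carleman reference and stay within the framework of the paper, one can instead reason as follows. First mollify $\mathbbm{1}_{0,\infty}$ by convolution with a narrow Gaussian to produce smooth bounded functions $\phi_\varepsilon$ with $|\phi_\varepsilon|\leq 1$ and $\phi_\varepsilon \to \mathbbm{1}_{0,\infty}$ in $L^{2K}(\gamma)$ (this last convergence is immediate from dominated convergence on the probability space $\gamma$). Then apply Proposition~\ref{proposition:ConvergenceOfHermiteExpansion} to each $\phi_\varepsilon$ to approximate it by Hermite-series polynomial partial sums in a suitably weighted $L^{2K}$ norm. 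The only slightly delicate point is matching this weighted norm to the target $L^{2K}(\gamma)$ norm, which is arranged by appropriately choosing the Muckenhoupt parameters $(p,b)$ and truncating the polynomials to the bounded support of $\phi_\varepsilon$ before extracting a diagonal subsequence; all of this is routine.
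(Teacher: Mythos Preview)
Your argument is correct and takes a cleaner, more elementary route than the paper. The paper invokes Muckenhoupt's weighted mean-convergence theorem for Hermite partial sums directly on $\mathbbm{1}_{0,\infty}$ (with $p=K$, $b=-1$, $B=0$), then performs a rescaling $x\mapsto x/(K+1)$ so that the resulting weight $e^{-Kx^2/2}(1+|x|)^{-K}$ dominates $\He_l(x)e^{-x^2/2}$ for every $1\le l\le L$. You instead bound all $KL$ integrals simultaneously by a single Cauchy--Schwarz step against $\|\He_l\|_{L^2(\gamma)}$, reduce via monotonicity of $L^r(\gamma)$ norms to $L^{2K}(\gamma)$ approximation of $\mathbbm{1}_{0,\infty}$, and finish with classical density of polynomials in $L^r(\gamma)$ from the exponential-moment criterion. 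Your approach avoids the somewhat delicate machinery of weighted Hermite-series convergence (which, unweighted, can fail in $L^p$ for large $p$) and the rescaling trick; the paper's approach has the virtue of staying within a single cited black box. One terminological nit: the condition $\int e^{c|x|}\,d\gamma<\infty$ is stronger than Carleman's moment criterion, though it is exactly what you need, and the density conclusion is standard under it. Your alternative paragraph via mollification and Proposition~\ref{proposition:ConvergenceOfHermiteExpansion} is superfluous given that the first argument is already complete.
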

    \begin{proof}
        Let us take $b=-1$ and $B=0$ in 
        Proposition~\ref{proposition:ConvergenceOfHermiteExpansion}. 
        Because $\mathbbm{1}_{0,\infty}(x)$ is bounded, it is easy to see that $ \int_{-\infty}^\infty \mathbbm{1}_{0,\infty}(x)V(x)(1+\log^+|x|+\log^+|\mathbbm{1}_{0,\infty}(x)|)<\infty$ holds.
        We have that
        \begin{align}
        0&=\lim_{n\to\infty}\int_{-\infty}^\infty
        |s_n(x;\mathbbm{1}_{0,\infty})-\mathbbm{1}_{0,\infty}(x)|^KU(x)^K\mathrm{d}x
        \\ &\geq
        \lim_{n\to\infty}\int_{-\infty}^\infty
        |s_n(x/(K+1);\mathbbm{1}_{0,\infty})-\mathbbm{1}_{0,\infty}(x)|^K\He_l(x)e^{-x^2/2}\mathrm{d}x
        .
    \end{align}
    Therefore, we arrive at a sequence of polynomials $g_n(x):=s_n(x/(K+1);\mathbbm{1}_{0,\infty}(\cdot \times (K+1))$ such that
    \begin{align}
        \lim_{n\to\infty}\int_{-\infty}^\infty
        |s_n(x/(K+1);\mathbbm{1}_{0,\infty})-\mathbbm{1}_{0,\infty}(x)|^k\He_l(x)e^{-x^2/2}\mathrm{d}x=0
    \end{align}
    holds for all $1\leq k\leq K$ and $1\leq l\leq L$.
    \end{proof}
    Now we have obtained some $p_n$ with which we can approximate each of the indicator functions that consists of $h_a^*$ up to arbitrary accuracy.
    Each function $h_a^*$ is not identically zero, and the integral value of $h_a$ is continuous with respect to $x$. 
    Therefore, there exists a class of polynomials $\{h_a(x)\}$ that satisfies (P1)-(P3).

\end{document}